\def\pb{\pagebreak}   
\newcommand{\ff}{f}
\newcommand{\R}{{\mathbb{R}}}
\newcommand{\z}[1]{{z^{({#1})}}}
\newcommand{\tz}[1]{{\tilde{z}^{({#1})}}}
\newcommand{\cZ}[1]{{\mathcal{Z}^{(#1)}}}
\newcommand{\Q}[1]{{\tau^{(#1)}}}
\newcommand{\tQ}[1]{{\tilde{\tau}^{(#1)}}}
\newcommand{\tu}{{\tilde{\u}}}
\newcommand{\bbu}{{\bar{u}}}
\newcommand{\bbw}{{\bar{w}}}
\numberwithin{equation}{section}
\newtheorem{lemma}{Lemma}
\newtheorem{theorem}{Theorem}
\newtheorem{definition}{Definition}
\newtheorem{corollary}[theorem]{Corollary}
\newtheorem{remark}{Remark}
\newtheorem{assumption}{Assumption}
\newcommand{\cO}{\mathcal{O}}
\newcommand{\EE}{\mathbb{E}}
\newcommand{\RR}{\mathbb{R}}
\newcommand{\PP}{\mathbb{P}}
\newcommand{\cI}{\mathcal{I}}
\newcommand{\bone}{\mathbf{1}}
\newcommand{\argmin}{\mathop{\mathrm{argmin}}}
\def\ep{\varepsilon}
\newcommand{\x}{{x}}
\newcommand{\cA}{\mathcal{A}}
\newcommand{\E}{\mathbb{E}}
\newcommand{\hf}{\hat{f}}
\renewcommand{\u}{{u}}
\newcommand{\<}{\left\langle}
\renewcommand{\>}{\right\rangle}
\newcommand{\trho}{\tilde{\rho}}
\newcommand{\rhoo}{\rho_{0}}
\newcommand{\zl}{z^{(\ell)}}
\newcommand{\zmi}{z^{(\ell-1)}}
\newcommand{\zL}{z^{(L)}}
\newcommand{\hl}{h^{(\ell)}}
\newcommand{\gl}{g^{(\ell)}}
\newcommand{\ts}{\tilde{s}}
\newcommand{\hs}{\hat{s}}
\newcommand{\tPhi}{\tilde{\Phi}}
\newcommand{\tPsi}{\tilde{\Psi}}
\newcommand{\bR}{\breve{R}}
\newcommand{\dR}{\dot{R}}
\newcommand{\hcZ}{\hat{\mathcal{Z}}}
\begin{document}
\title{Convex Formulation of Overparameterized Deep Neural Networks}

\author[$*$]{Cong Fang}

\affil[$*$]{
Shenzhen Research Institute of Big Data\thanks{This work was done when Cong Fang was an intern at Shenzhen Research Institute of Big Data. The author is now with Princeton University.}}

\author[$*$]{
Yihong Gu
}

\author[$\dag$]{
Weizhong Zhang
}

\affil[$\dag$]{Department of  Mathematics\\ HKUST}

\author[$\ddag$]{
Tong Zhang
}
\affil[$\ddag$]{Department of Computer Science and Mathematics\\ HKUST}


\date{}
\maketitle
\begin{abstract}
Analysis of over-parameterized neural networks has drawn significant attention in recent years. It was shown that such systems behave like convex systems under various restricted settings, such as for two-level neural networks, and when learning is only restricted locally in the so-called neural tangent kernel space around specialized initializations.
However, there are no theoretical techniques that can analyze fully trained deep neural networks encountered in practice. 
This paper solves this fundamental problem by investigating such overparameterized deep neural networks when fully trained. 
We generalize a new technique called \emph{neural feature repopulation}, originally introduced in \citep{fang2019over} for two-level neural networks, to analyze deep neural networks. It is shown that under suitable representations, overparameterized deep neural networks are inherently {\em convex}, and when optimized, the system can 
learn effective features suitable for the underlying learning task under mild conditions. This new analysis is consistent with empirical observations that deep neural networks are capable of learning efficient feature representations. 
Therefore, the highly unexpected result of this paper can satisfactorily explain the practical success of deep neural networks. Empirical studies confirm that predictions of our theory are consistent with results observed in practice.
\end{abstract}

\section{Introduction}
Deep Neural Networks (DNNs) have achieved great successes in  numerous real applications, such as image classification \citep{krizhevsky2012imagenet,simonyan2014very,he2016deep}, face recognition \citep{sun2014deep}, video understanding \citep{yue2015beyond}, neural language processing \citep{bahdanau2014neural, luong2015effective}, etc. However, compared to the empirical successes, the theoretical understanding of DNNs falls far behind. Part of the reasons might be the general perception that  DNNs are highly non-convex learning models.

In recent years, there has been significant breakthroughs \citep{MeiE7665, chizat2018global, du2018gradient,allen2019convergence} in  analyzing  over-parameterized  Neural Networks (NNs),  which are  NNs with massive neurons in  hidden layer(s).  It is observed from empirical studies that such NNs are easy to train \citep{zhang2016understanding}.  And it was noted that under some restrictive settings, such as two-level NNs \citep{MeiE7665, chizat2018global} and when learning is only restricted locally in the  neural tangent kernel space around certain initializations \citep{du2018gradient,allen2019convergence}, NNs behave like convex systems when  the number of the hidden neurons goes to infinity.  Unfortunately, to the best of our knowledge, existing studies failed to analyze fully trained DNNs encountered in practice. In particular, the existing analysis cannot  explain how DNNs can learn discriminative features specifically for the underlying learning task, as observed in real applications \citep{zeiler2014visualizing}.

 To remedy the gap between the existing theories and practical observations,  this paper  develops a new theory   that can be applied to fully trained DNNs. Following a similar argument in the analysis of two-level NNs in \cite{fang2019over},  we introduce a new theoretical framework called \emph{neural feature
repopulation} (NFR), to reformulate over-parameterized DNNs. Our results show that under suitable conditions,  in the limit of infinite number of hidden neurons,  DNNs are    infinity-dimensional \emph{convex} learning models with appropriate re-parameterization.  In our framework, given a DNN,   the hidden layers are regraded as features and the model output is  given by a  simple linear model using features of the top hidden layer.  The output of a DNN, in the limit of infinite number of hidden neurons, depends on the distributions of the features and the final linear model.  We show that using the NFR technique, it is possible to decouple the  distributions of the features from the loss function and their impact can be integrated into the regularizer. This largely simplifies the objective function.  Under our framework, the  feature learning process is characterized by the regularizer. When suitable regularizers are imposed, the overall objective function under special re-parameterization is convex, and it guarantees that the DNN learns useful feature representations under mild conditions.  Unlike the Neural Tangent Kernel approach, our theoretical framework for DNN does not require the variables to be confined in an infinitesimal region. Therefore it can explain the ability of fully trained DNNs to learn target feature representations. This matches the empirical observations.

More concretely,  the paper is organized as follows. 
Section~\ref{sec:relatedwork} discusses the relationship of this paper to earlier studies, especially recent works on the analysis of over-parameterized NNs. 
Section~\ref{sec:discrete-DNNs} introduces the definition of discrete DNNs, and we introduce an importance weighting formulation which eventually motivates our NFR formulation. 
Section~\ref{sec:continuous-DNNs} describes the continuous DNN when the number of hidden nodes goes to infinity in the discrete DNN. In this formulation, each hidden layer is represented by a distribution over its hidden nodes that represent functions of the input. We can further interpret a discrete DNN as a random sample of hidden nodes from a continuous DNN at each layer, and then study the variance of such random discretization. The variance formula motivates the study of a class of regularization conditions for DNNs. Using the connection between  discrete and continuous (over-parameterized) DNNs, we introduce the  process of NFR in Section \ref{sec:nfr}. In this process, an over-parameterized DNN can be reformulated as a convex system that learns effective feature representations for the underlying task.  Experiments are presented in Section~\ref{sec:experiments} to demonstrate that our theory is consistent with empirical observations. In Section~\ref{sec:algo}, we consider  a  new  optimization  procedure  inspired  by  the  NFR view  to  verify  its  effectiveness. Final remarks are given in Section~\ref{sec:conclusion}.  

The main contributions of this work can be described as follows.
\begin{itemize}
\item We propose a new framework for analyzing overparameterized deep NN called  \emph{neural feature repopulation} (NFR).  It can be used to remove the effect of learned feature distributions over hidden nodes from the loss function, and confine the effect only to the regularizer. This significantly simplifies the objective function.

\item   We  study  a class of regularizers. With these regularizers, the  over-parameterized  DNN  can be reformulated as  a \emph{convex} system  using NFR under certain conditions. The global solution of  such a convex  model guarantees useful feature representations for the underlying learning task.

\item Our theory matches empirical findings, and hence this theory can satisfactorily explain the success of fully trained overparameterized \emph{deep} NNs.  
\end{itemize}

We shall also mention that the paper focuses on presenting the intuitions and consequences of the new framework. In order to make the underlying ideas easier to understand, some of the analyses are not stated with complete mathematical rigor. A more formal treatment of the results will be left to future works. \\

\noindent\textbf{Notations.}  For a vector $x\in \R^d$, we denote $\|\cdot\|_1$ and $\|\cdot\|$ to be its $\ell_1$ and $\ell_2$ norms, respectively. We let $x^\top$ be the transpose of $x$ and let $x_k$ be the value of $k$-th dimension of $x$ with $k=1,\ldots, d$. Let  $[m]=\{1,2,...,m\}$ for a positive integer $m$. For a function $f(x): \RR^{d} \rightarrow \RR$, we denote $\nabla_{x}f$ to be the gradient of $f$ with respect to $x$. For two real valued numbers $a$ and $b$, we denote $a\vee b $ to be $\max(a,b)$. If $\mu$ and $\nu$ are two measures on the same measurable space, we denote $\mu \ll \nu$ if $\mu$ is  absolutely continuous with respect to $\nu$, and $\mu \sim \nu$ if $\mu \ll \nu$ and $\nu\ll \mu$.


\section{Related Work}
\label{sec:relatedwork}

In recent years, there have been a number of significant developments to obtain better theoretical understandings of NNs.  We review works that are most related to this paper.  The main challenge for developing such theoretical analysis is the non-convexity of the NN model, which implies that first-order algorithms such as (Stochastic) Gradient Descent may converge to local stationary points. 

In some earlier works, a number of researchers \citep{hardt2016identity, freeman2016topology, BrutzkusG17,digvijay2017,Rong2017, ainesh2018, Rong2019} studied NNs   under special  conditions either for input data or for NN architectures. By carefully  characterizing the geometric landscapes  of the objective function,  these early works showed that some special NNs  satisfy the so-called strict saddle property \citep{ge2015escaping}. One can then use some recent results in nonconvex optimization \citep{jin2017escape,fang2018spider,fang2019sharp} to show that  first-order algorithms for such NNs can efficiently escape saddle points and converge to some local mimima.
 \begin{table}[t]
	\centering
	\caption{Some Representative Works on the Three Views for Over-parameterize NNs }
	\label{table:summary-over-parameterized-NNs}
	{
		\begin{tabular}{p{3.5cm}<{\centering}|p{4.6cm}<{\centering}p{4.6cm}<{\centering}}
			\hline\hline
			Views	&Two-level & Multi-level\\ \hline\hline
					\makecell*[c]{Mean Field} & \makecell*[c]{\citealt{MeiE7665}\\ \citealt{chizat2018global}}& \makecell*[c]{\citealt{nguyen2019mean}}\\\hline
	\makecell*[c]{	Neural\\ Tangent Kernel}	&\makecell*[c]{\citealt{ du2019gradient}\\ \citealt{li2018learning}}&\makecell*[c]{\citealt{ du2018gradient}\\ \citealt{allen2018learning,allen2019convergence}}\\\hline
					\makecell*[c]{Neural Feature\\Repopulation}&\makecell*[c]{ \citealt{fang2019over}}& \makecell*[c]{this work}\\ \hline\hline
			\end{tabular}
	}
\end{table}	

A number of more recent theoretical analysis
 focused on over-parameterized NNs,  which are NNs that contain a large number of  hidden nodes. The motivation of overparameterization comes from the empirical observation that over-parameterized DNNs  are much easier to train and  often achieve better performances \citep{zhang2016understanding}.   When the number of  hidden units  goes to infinity,  the network naturally becomes a continuous NN.   In the continuous limit, the resulting networks are found to  behave like convex systems under appropriate conditions.  Our work follows  this line of the research. 
 In the following,  we review the three existing  points of views, i.e.,  the mean field view, the neural tangent kernel view, and the  NFR view. Due to the space limitation,  Table \ref{table:summary-over-parameterized-NNs} only summarizes some of the representative  studies on  these views, and additional studies are discussed in the text below.

\subsection{Mean Field View for Over-parameterized Two-level NNs}
The Mean Field approach has  been recently introduced to   analyze two-level NNs.  This point of view  models the continuous NN as a distribution over the NN's parameters, and it studies the evolution of the distribution  as a Wasserstein gradient flow during the training process \citep{MeiE7665,chizat2018global,sirignano2019mean,rotskoff2018neural,mei2019mean, dou2019training, wei2018margin}. The process can be represented by a  partial differential equation, which can be further studied  mathematically. For two-level continuous NNs, it is known that the objective function with respect to the distribution of parameters is convex in the continuous limit. And it can be shown that (noisy)  Gradient Decent can find global optimal solutions under certain conditions.

The benefit of the Mean Field View is that it mathematically characterizes the entire training process of NN. However, it relies on the special observation that a two-level continuous NN is naturally a linear model with respect to  the distribution of NN parameters, and this observation is not applicable to multi-level architectures. Consequently, it is difficult to generalize the Mean Field view to analyze DNNs without losing convexity.   In fact, in a recent attempt \cite{nguyen2019mean}, the technique of Mean Field view is applied to DNNs,  but the author could only  obtain the evolution dynamic equation for Gradient Descent.   Since DNN is no longer a linear model with respect to the distribution of the parameters, \cite{nguyen2019mean} failed to show that the system is convex. Therefore similar to the situation of Stochastic Gradient Descent for discrete DNNs, Gradient Descent for continuous DNNs can still lead to suboptimal solutions.

\subsection{Neural Tangent Kernel View for  Over-parameterized DNNs}
One remarkable direction to extend the Mean Field view to analyze DNNs is to restrict the DNN parameters in an infinitesimal region around the initial values.   With proper scaling and random initialization,  DNN can be regarded as a  linear model in this  infinitesimal region, which makes the training dynamics solvable. This point of view is referred to as the Neural Tangent Kernel (NTK) view,  since  the evolution of the trainable parameters, when restricted to the infinitesimal region, can  be characterized by a kernel in the  tangent space.  There have been a series of studies based on this point of view, which give various theoretical guarantees including  sharper convergence rates and  tighter generalization bounds with early stopping for both two-layer NNs \citep{du2019gradient,li2018learning,arora2019fine, su2019learning} and  DNNs with more complex structures \citep{du2018gradient,lee2018deep,jacot2018neural,allen2018learning,allen2019convergence,zou2018stochastic}.

However, as point out by \cite{fang2019over},  NTK is not a satisfactory mathematical model for NNs. This is because NTK essentially approximates a nonlinear NN by a linear model of the infinite dimensional random features associated with the NTK, and these features  are not learned from the underlying task. In contrast, it is well known empirically that the success of NNs largely relies on their ability to learn discriminative feature representations \citep{lecun2015deep}.  Therefore the  NTK view does not match the  empirical observations.  

A recent attempt to justify NTK  is given by \cite{arora2019exact}, who proposed an efficient exact algorithm to compute Convolutional Neural Tangent Kernel. However, the classification accuracy of $77\%$ on the CIFAR-10 dataset obtained by  kernel regression using NTK is $5\%$ less than that of the corresponding fully trained Convolutional NNs , and is at least $15\%$ less than the accuracies achieved by modern NNs such as ResNet \citep{he2016deep}.

\subsection{Neural Feature Repopulation  for Over-parameterized NNs}
More recently, \cite{fang2019over} proposed the NFR view for analyzing two-level NNs. While the Mean Filed view does not have the concept of ``layer'' in its analysis of two-level NNs, the NFR view treats the top-layer linear model and the bottom-layer feature learning separately. The dynamics of feature learning in NN is modeled by a ``repopulation'' process in NFR.  It was shown in \citep{fang2019over} that  under certain conditions, two-level  NNs  can learn a near-optimal distribution over the features in terms of efficient representation with respect to the underlying task.  

Our current work  can be regarded as a \emph{non-trivial} generalization of the NFR view from two-level NNs to \emph{deep} NNs. Specifically, we employ the NFR technique to simplify the objective function of DNN training by showing that it is possible to reparameterize a DNN as a linear model of learned features. Moreover, the feature learning process is decoupled from the loss function, and is determined by the regularizer. This reparameterization significantly  simplifies the overall objective function. We introduce a class of regularizers that can guarantee the convexity of the overall objective function, and  we show that efficient distributions over features can be obtained as  the  result of training.   Compared to NTK \citep{du2018gradient, allen2018learning,allen2019convergence}, NFR is more consistent  with empirical observations, because NN parameters are no longer restricted in  an infinitesimal region. This implies that meaningful features can be learned from training.

\section{Discrete Deep Neural Networks}
\label{sec:discrete-DNNs}

In this section, we introduce the discrete fully connected deep neural networks. 
We also introduce an importance sampling scheme for discrete DNN, which can be used to motivate the NFR technique later in the paper. 

\subsection{Standard DNN}

In machine learning, we are interested in prediction problems, where we are given an input vector $x=[x_1,\ldots,x_d] \in \R^d$, and we want to predict its corresponding output $y$. 

In general, we want to learn a function $\hat{f}(x) \in \R^K$ that can be used for prediction. The quality of prediction is measured by a loss function $\phi(\cdot,\cdot)$, which is typically convex in the first argument.
For regression, where $y \in \R^K$, we often use the least squares loss function \[
\phi(\hat{\ff}(x),y)=\|\hat{\ff}(x)-y\|^2 . 
\]
For classification, where $y \in [K]$, we often use the logistic loss, that is
\[
\phi(\hat{\ff}(x),y)= -\hat{\ff}_y(x) + \ln \sum_{j=1}^K \exp(\hat{\ff}_j(x)) .
\]
 
In this paper, we consider a deep neural network $\hat{\ff}(x)$ with $L$ hidden layers, which can be defined recursively as a function of $x$.
First we let $\hat{f}^{(0)}_j(x)=x_j$ be the input for $j\in [d]$. Let $m^{(0)}=d$.
For $\ell \in [L]$, we define the nodes in the $\ell$-th layer as
\begin{align}
\hat{f}^{(\ell)}_j(x) = h^{(\ell)}\left(\hat{g}^{(\ell)}_j(x)
\right) \mbox{ with }\hat{g}^{(\ell)}_j(x)=
\frac{1}{m^{(\ell-1)}} \sum_{k=1}^{m^{(\ell-1)}} w^{(\ell)}_{j,k} \hat{f}^{(\ell-1)}_k(x),
\quad j\in [m^{(\ell)}] ,\label{eqn:hatf-l-j}
\end{align}
where $h^{(\ell)}$ is the activation function and $w^{(\ell)} \in \RR^{m^{(\ell)}\times m^{(\ell-1)}}$ is the weight matrix of the $\ell$-th layer comprised of $m^{(\ell)}$ rows, i.e., $w^{(\ell)}_j=[w^{(\ell)}_{j,1},\ldots,w^{(\ell)}_{j,m^{(\ell-1)}}]\in \mathbb{R}^{m^{(\ell-1)}}$ with $j \in [m^{(\ell)}]$.
At the top layer, we define the output $\hat{f}(x)$ as
\begin{align}
\hat{f}(x) = \frac{1}{m^{(L)}} \sum_{j=1}^{m^{(L)}} \u_j
\hat{f}^{(L)}_j(x),\label{eqn:hatf}
\end{align}
where $\u_j \in R^K$ is a vector. This defines an $(L+1)$-level fully connected deep neural network with $m^{(\ell)}$ nodes in each layer.

The formula for training a discrete deep neural network $\hat{\ff}$ is to minimize the the following objective function:
\begin{align}
&\hat{Q}(w, \u) = J(\hat{f}) + \hat{R}(w, \u), \nonumber 
\end{align}
where $w=\{w^{(1)}, \ldots, w^{(L)}\}$ and $J(\hat{f})= \mathbb{E}_{x,y} \phi(\hat{f}(x),y)$ with $\phi(\cdot,\cdot) $ being the loss function and $\hat{R}(w, \u)$ being the regularizer that takes the form of 
\begin{align}
\hat{R}(w, \u) = \sum_{\ell = 1}^{L} 
\lambda^{(\ell)} \hat{R}^{(\ell)}(w^{(\ell)})
 + \lambda^{(u)}\hat{R}^{(u)}(\u) . \label{eq:reg}
\end{align}
The parameters $\lambda^{(1)}, ...,\lambda^{(L)}$ and $\lambda^{(u)}$ are the non-negative hyper-parameters for the regularizer.
In this paper, we are particularly interested in the following form of regularizer:
\begin{align}
&\hat{R}^{(\ell)}(w^{(\ell)})=
\frac{1}{m^{(\ell-1)}}
\sum_{k=1}^{m^{(\ell-1)}}
r_2\left(
\frac{1}{m^{(\ell)}}\sum_{j=1}^{m^{(\ell)}} r_1(w_{j,k}^{(\ell)})
\right),\label{eq:reg-w}\\
&\hat{R}^{(u)}(\u)= \frac{1}{m^{(L)}}\sum_{j=1}^{m^{(L)}} r^{(u)}(\u_j).\label{eq:reg-u}
\end{align}
This class of regularizers will be analyzed  
in Sections \ref{subsec:variance-of-discrete-approximization} and \ref{sec:global-optimum}.

\subsection{Importance Weighted DNN}\label{importance weight}

In our framework,  the hidden units of a discrete NN can be regarded as samples from a continuous distribution (refer to Section \ref{sec:continuous-DNNs}).  Instead of uniform sampling, as in \eqref{eqn:hatf-l-j} and \eqref{eqn:hatf}, we may also consider importance sampling to construct the hidden nodes $\hat{f}^{(\ell)}_j(x)$ with $j\in [m^{(\ell)}]$ and the final function $\hf(\x)$.  Specifically,  we assign the $k$-th hidden node at layer $\ell$  an importance weighting $\hat{p}^{(\ell)}_k$, with $\sum_{k=1}^{m^{(\ell)}} \hat{p}^{(\ell)}_k=m^{(\ell)}$,  and we let the  hidden nodes follow a non-uniform distribution  whose  probability mass function with index $k$ at layer $\ell$ is  $\hat{p}^{(\ell)}_k/m^{(\ell)}$.  Then  we  can  rewrite  
 \eqref{eqn:hatf-l-j} and \eqref{eqn:hatf} as  
$$
\hat{f}^{(\ell)}_j(x) = h^{(\ell)}\left(\frac{1}{m^{(\ell-1)}} 
\sum_{k=1}^{m^{(\ell-1)}} \frac{w^{(\ell)}_{j,k}}{\hat{p}^{(\ell-1)}_k} \Big(\hat{p}^{(\ell-1)}_k\hat{f}^{(\ell-1)}_k(x)\Big)
\right), 
\quad j\in [m^{(\ell)}],$$
and
\begin{align}
\hat{f}(x) = \frac{1}{m^{(L)}}\sum_{j=1}^{m^{(L)}} \frac{\u_j}{\hat{p}^{(L)}_j}
\Big(\hat{p}^{(L)}_j\hat{f}^{(L)}_j(x)\Big), \label{eq:disc-nn-output}
\end{align}
where  we have also replaced the weight $w^{(\ell)}_{j,k}$ and  $\u_j$ with  $w^{(\ell)}_{j,k}/\hat{p}^{(\ell-1)}_k$ and $\u_j/\hat{p}^{(L)}_j$, respectively. We can find that under such transformation, the functions $\hat{f}^{(\ell)}_j(x)$ and $\hf(x)$ remain unchanged.  Similarly,  for the regularizers,  by   replacing the  weight $w^{(\ell)}_{j,k}$  and $\u_j$  with  $w^{(\ell)}_{j,k}/\hat{p}^{(\ell-1)}_k$  and $\u_j/\hat{p}^{(L)}_j$, respectively,   and  by replacing the uniform  distribution over the hidden units with the corresponding non-uniform distribution, we have
\begin{align}
    \hat{R}(\hat{p};w, \u)   
    &=\sum_{\ell = 1}^{L}  \lambda^{(\ell)} \hat{R}^{(\ell)}(\hat{p};w^{(\ell)})+ \lambda^{(\u)}\hat{R}^{(u)}(\hat{p};u) ,\label{eq:reg-im}
\end{align}
where
\[
\hat{R}^{(\ell)}(\hat{p};w^{(\ell)})=
\frac{1}{m^{(\ell-1)}}
\sum_{k=1}^{m^{(\ell-1)}}
r_2\left(
\frac{1}{m^{(\ell)}}\sum_{j=1}^{m^{(\ell)}} r_1(w_{j,k}^{(\ell)}/\hat{p}^{(\ell-1)}_k) \hat{p}^{(\ell)}_j
\right) \hat{p}^{(\ell-1)}_k,
\]
and
\[
\hat{R}^{(u)}(\hat{p};u)=\frac{1}{m^{(L)}}\sum_{j=1}^{m^{(L)}}r^{(u)}(\u_j/\hat{p}^{(L)}_j)\hat{p}_j^{(L)}.
\] 

An important observation is that  under the importance weighting transformation,   function values on  all hidden nodes and the final output value  remain unchanged, while the regularization values change. 
This means that the importance weighting parameters $\hat{p}$ only appear in the regularization term.
This observation eventually leads to the NFR technique to reparameterize continuous DNNs.

 The discrete importance weighting formula presented here provides intuitions to our NFR method for continuous DNNs, and the detailed analysis will be provided in Section \ref{sec:importance-weighting}.


\section{Continuous Deep Neural Networks}\label{sec:continuous-DNNs}

When $m^{(\ell)} \to \infty$ for all $\ell\in  [L]$, we can define a continuous DNN according to the definition of discrete DNN in Section~\ref{sec:discrete-DNNs}.
In the case of continuous DNN, each hidden node  in the $\ell$-th layer is associated with a real valued function  of the input $x$. It can be characterized by  the weights connecting it to the hidden nodes in the $(\ell-1)$-th layer and therefore can be represented as a real
valued function defined on these  nodes.  The space of the hidden nodes at layer $\ell$,  i.e. all such real valued functions, is denoted as $\mathcal{Z}^{(\ell)}$ in this paper  and can be regarded as an  infinite-dimensional feature (representation) of the input data $x$.    A continuous DNN can be obtained by defining probability measures $\rho^{(\ell)}$ on hidden nodes for each hidden layer $\ell$, which is equivalent to the probability measures on real valued functions or features  of the input $x$. A discrete DNN can be obtained by sampling $m^{(\ell)}$  hidden nodes, i.e.  elements  belonging to $\mathcal{Z}^{(\ell)}$,  from $\rho^{(\ell)}$ at each layer $\ell$.  We  denote 
$\rho=\{\rho^{(0)},\ldots,\rho^{(L)}\}$ and present the details below.

\subsection{Continuous DNN Formulation}
By convention, we let  the $0$-th layer be the input layer and denote  $\cZ{0}=[d]$ to be its node space corresponding to the $d$ components of the input $x$.  And we let $\rho^{(0)}$ be a probability measure over  $\cZ{0}$. And  for each node $z^{(0)} \in \cZ{0}$, we  let
\begin{align*}
f^{(0)}(\rho,z^{(0)};x)= x_{z^{(0)}} .
\end{align*}

Now consider the $\ell$-th  layer with 
$\ell \in [L]$.  
For conceptual simplicity, 
in this paper, we let $\cZ{\ell}$ be the measurable real-valued function class on $\cZ{\ell-1}$. Given $\z{\ell} \in \cZ{\ell}$ and $\z{\ell-1} \in \cZ{\ell-1}$, we define 
\begin{align*}
w(\z{\ell},\z{\ell-1}) = \z{\ell}(\z{\ell-1}).
\end{align*}
Because $\z{\ell}$ can be regarded as a hidden node in layer $\ell$, and $\z{\ell-1}$ as a hidden node in the $(\ell-1)$-th layer, thus
$w(\z{\ell},\z{\ell-1})$ is the analogy of $w^{(\ell)}_{ij}$ in discrete DNN, which is the weight connecting node $i$ and $j$ in layer $\ell$ and $\ell-1$ respectively. 
Using this notation, we define the function associated with the node $z^{(\ell)}$ in $\ell$-th layer of the continuous NN as follows:
\begin{align}
\ff^{(\ell)}(\rho,\z{\ell};x) =& h^{(\ell)}\left( 
g^{(\ell)}(\rho,\z{\ell};x)\right), \label{eq:nn-activation}
\end{align}
where $h^{(\ell)}(\cdot)$ is the activation function of the $\ell$-th layer, and
\begin{align}
g^{(\ell)}(\rho,\z{\ell};x)=&
\int
  w(\z{\ell},\z{\ell-1}) \ff^{(\ell-1)}(\rho,\z{\ell-1};x) \; d \rho^{(\ell-1)}(\z{\ell-1}). \label{eq:nn-layer}
\end{align}
Moreover, we let $\rho^{(\ell)}$ be a probability measure over $\cZ{\ell}$.

Finally, for the output layer, let $\u(\cdot): \cZ{L} \rightarrow \R^{K}$ be a $K$-dimensional vector valued function on $\cZ{L}$, then we can define the final output of continuous DNN as 
\begin{equation}
\ff(\rho,\u;x) = \int \u(z^{(L)}) \ff^{(L)}(\rho,z^{(L)};x) \; d\rho^{(L)}(z^{(L)}). \label{eq:nn-output}
\end{equation}
The objective function in continuous NN takes the form of 
\begin{align}
Q(\rho, \u) =& J(f(\rho,\u;\cdot)) + R(\rho, \u),\label{eqn:Q-continuous}
\end{align}
where 
\begin{align}
 R(\rho, \u) =& \sum_{\ell=1}^L \lambda^{(\ell)}
R^{(\ell)}(\rho) + \lambda^{(\u)} R^{(\u)}(\rho,\u),\label{eqn:R-continuous}
\end{align}
with
\begin{align}
&R^{(\ell)}(\rho)=\int r_2\left (\int r_1\big(w (z^{(\ell)}, z^{(\ell-1)})\big)d\rho^{(\ell)}(z^{(\ell)})\right ) d\rho^{(\ell-1)}(z^{(\ell-1)}),\label{eqn:R-l-continuous}\\
&R^{(\u)}(\rho,\u) =  \int r^{(u)}(\u(z^{(L)}))d\rho^{(L)}(z^{(L)}).\label{eqn:R-u-continuous}
\end{align}

\begin{remark}
In this paper, unless otherwise specified, for any probability measure sequence 
$\rho=\{\rho^{(0)}, \ldots, \rho^{(L)}\}$,
we always let $\rho^{(0)}$ be the uniform distribution on $\cZ{0}$. 
\end{remark}
The above process defines a continuous DNN. In the following, we establish the relationship between the continuous and discrete DNNs.
\subsection{Assumptions}
Before presenting our analysis,  we specify the necessary assumptions first. We note that these assumptions are rather mild, and easy to be satisfied.

\begin{assumption}[Bounded Gradient]\label{ass:2} We assume the activation function is differentiable,   and its derivative is bounded.   That is, there exists a constant $c_0>0$, such that
\begin{align}
    |  \nabla \hl (x) |\leq c_0, \quad \ell \in [L].\nonumber
\end{align}
\end{assumption}

\begin{assumption}[Continuous Gradient]\label{ass:1}
We further assume that there exist two constants $\alpha>0$ and $c_1>0$ such that
\begin{align}
| \nabla \hl (x) -  \nabla \hl(y) |\leq c_1 | x-y |^{\alpha}, \quad \ell\in [L].\nonumber
\end{align}
\end{assumption}
Assumption \ref{ass:1} is a special type of modulus of continuity for $\nabla \hl (x)$.
When $\alpha = 1$, Assumption \ref{ass:1} is the standard L-smooth condition for the activation function $\hl (x)$. When $\alpha<1$, it  holds more generally in the local region.  When proving Theorem \ref{var}, our moment condition in Assumption \ref{ass:3} depends on $\alpha$.
We also note that the  commonly-used activation functions, e.g. sigmoid, tanh, and smooth relu, satisfy this assumption for all $0<\alpha\leq1$. 
\begin{assumption}[$(q_0,  q_1)$-Bounded Moment Condition]\label{ass:3}
 We assume for all $\ell\in \{2, \dots, L \}$, we have
\begin{align}
\E_{z^{(\ell-1)}, \dots, z^{(L)}} \Bigg| \left(\|u^{(L)}(z^{(L)})\|\vee 1]\right)&\prod_{i=\ell+1}^{L} \left( |w(z^{(i)}, z^{(i-1)})|\vee 1\right)\notag\\
&\left[ w(z^{(\ell)}, z^{(\ell-1)})f^{(\ell-1)}(\rho,z^{(\ell-1)};x) -g^{(\ell)}(\rho, z^{(\ell)};x) \right]\Bigg|^{q_0}\leq c_M.\nonumber
\end{align}
Moreover, we assume
\begin{align}
\E_{z^{(L)}} \left\| u^{(L)}f^{(L)}(\rho,z^{(L)};x) -f(\rho, u;x)\right\|^{q_1}\leq c_{M_1}.\nonumber
\end{align}
\end{assumption}
The constants $q_0$ and $q_1$ in Assumption \ref{ass:3} will be specified later based on our theorem statements.

\subsection{Relationship between Discrete and Continuous DNNs}
We can now investigate the relationship between the discrete and continuous DNNs.  Similar to the method used in \citep{fang2019over} for two-level NNs,
the discrete DNN can be constructed from a continuous one by sampling hidden nodes from the probability measure sequence $\rho$. The detailed procedure is as follows:
\begin{enumerate}
    \item Keep the input layer of the discrete DNN identical to that of the continuous DNN.
    \item For each hidden layer $\ell \in [L]$, draw $m^{(\ell)}$ i.i.d. samples $\{z^{(\ell)}_i: i\in [ m^{(\ell)}], z^{(\ell)}_i\in \mathcal{Z}^{(\ell)} \}$, which is denoted as $\hat{\mathcal{Z}}^{(\ell)}$, from $\rho^{(\ell)}$ of continuous DNN, and set the weights
\begin{align*}
w^{(\ell)}_{i,j} = w(z^{(\ell)}_i,z^{(\ell-1)}_j) \mbox{ with } i \in [m^{(\ell)}] \mbox{ and } j\in [m^{(\ell-1)}].
\end{align*}
\item For the top layer, set
\begin{align*}
\u_j= \u(z^{(L)}_j),
\end{align*}
for each sampled $z^{(L)}_j$ with $j=[m^{(L)}]$.
\end{enumerate}

The following result shows when $m^{(\ell)} \to \infty$ for all $\ell\in  [L]$,  the final output
converges to that of the  continuous DNN in $L^1$.  All the proofs in this paper are left to the Appendices.

\begin{theorem}[Consistence of Discretization]\label{theorem:convergence-discrete-NN}
Given a continuous NN. Under Assumptions \ref{ass:2} and \ref{ass:3} with  $q_0 = q_1= 1+c_{\ep} $ for any $c_{\ep} >0$, and suppose there is a discrete NN constructed from the continuous DNN using the procedure above,  then for any input $x$,  we have
\makeatletter
\newcommand{\leqnos}{\tagsleft@true\let\veqno\@@leqno}
\newcommand{\reqnos}{\tagsleft@false\let\veqno\@@eqno}
\reqnos
\makeatother
\begingroup\leqnos
\begin{align}
	&\lim\limits_{m^{(\ell)}\rightarrow \infty, \ell = 1, ...,k-1} \E_{ \hat{\mathcal{Z}}^{(1)},\ldots,\hat{\mathcal{Z}}^{(k-1)}, z^{(k)}_j}\;\left|\hat{f}^{(k)}_j(x)- f^{(k)}(\rho, z^{(k)}_j; x)\right| =0,  \quad  j\in [m^{(k)}] \mbox{ and } k\geq 2.\tag{$i$} \label{the:con1}\\
	&\lim\limits_{m^{(\ell)}\rightarrow \infty, \ell = 1, ...,L} \E\;\left\|\hat{f}(x)- f(\rho, u; x)\right\| =0.\tag{$ii$}\label{the:con2}
	\end{align}
\endgroup
\end{theorem}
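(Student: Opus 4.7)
The plan is to prove part (i) by induction on the layer index $k$, at each step decomposing the total approximation error into a fresh Monte Carlo sampling error at the current layer plus a propagated error inherited from earlier layers, then closing the recursion through the Lipschitz property of the activation supplied by Assumption~\ref{ass:2}.

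For the base case $k=2$: since the input layer is kept identical ($\hat f^{(0)}=f^{(0)}$) and $\rho^{(0)}$ is the uniform measure on the finite set $[d]$, the weight $w^{(1)}_{j,k}=z^{(1)}_j(k)$ makes $\hat g^{(1)}_k(x)$ exactly equal to $g^{(1)}(\rho,z^{(1)}_k;x)$, so conditional on the sampled $z^{(1)}_k$ one has $\hat f^{(1)}_k(x)=f^{(1)}(\rho,z^{(1)}_k;x)$. Therefore the only error entering $\hat f^{(2)}_j$ is the Monte Carlo estimate
\[
\hat g^{(2)}_j(x)=\frac{1}{m^{(1)}}\sum_{k=1}^{m^{(1)}} w(z^{(2)}_j,z^{(1)}_k)\,f^{(1)}(\rho,z^{(1)}_k;x)
\]
of $g^{(2)}(\rho,z^{(2)}_j;x)$; conditioned on $z^{(2)}_j$ the summands are i.i.d.\ in $z^{(1)}_k\sim\rho^{(1)}$ with correct mean, and the $(1+c_\ep)$-th moment bound in Assumption~\ref{ass:3} (specialized to $\ell=2$) provides the uniform integrability needed for $L^1$ convergence, e.g.\ via the von~Bahr--Esseen inequality. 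The Lipschitz constant $c_0$ from Assumption~\ref{ass:2} then transfers the bound through $h^{(2)}$.

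For the induction step from $k-1$ to $k$, introduce the partially discretized quantity
\[
\tilde g^{(k)}_j(x)=\frac{1}{m^{(k-1)}}\sum_{i=1}^{m^{(k-1)}} w(z^{(k)}_j,z^{(k-1)}_i)\,f^{(k-1)}(\rho,z^{(k-1)}_i;x)
\]
and split $\hat g^{(k)}_j-g^{(k)}(\rho,z^{(k)}_j;x)=(\hat g^{(k)}_j-\tilde g^{(k)}_j)+(\tilde g^{(k)}_j-g^{(k)}(\rho,z^{(k)}_j;x))$. The second summand is an i.i.d.\ Monte Carlo error in $z^{(k-1)}_i\sim\rho^{(k-1)}$ controlled exactly as in the base case. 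The first is the propagation term
\[
\frac{1}{m^{(k-1)}}\sum_i w(z^{(k)}_j,z^{(k-1)}_i)\bigl[\hat f^{(k-1)}_i(x)-f^{(k-1)}(\rho,z^{(k-1)}_i;x)\bigr];
\]
by exchangeability of the $z^{(k-1)}_i$ its expected absolute value is bounded by $\E\bigl[|w(z^{(k)}_j,z^{(k-1)}_1)|\cdot|\hat f^{(k-1)}_1(x)-f^{(k-1)}(\rho,z^{(k-1)}_1;x)|\bigr]$, and I invoke H\"older's inequality with exponents $1+c_\ep$ and $(1+c_\ep)/c_\ep$, pairing the moment bound on $w$ afforded by Assumption~\ref{ass:3} with a strengthened version of the induction hypothesis (see below). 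A second Lipschitz application through $h^{(k)}$ closes the induction.

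The main obstacle is that the raw induction hypothesis only gives $L^1$ convergence, while the propagation term multiplies the previous-layer error by the potentially unbounded weight $w(z^{(k)}_j,z^{(k-1)}_i)$. I would resolve this by simultaneously carrying along an auxiliary moment bound: the nested product structure $\prod_{i=\ell+1}^L(|w(z^{(i)},z^{(i-1)})|\vee 1)$ in Assumption~\ref{ass:3} yields uniform $L^{1+c_\ep}$ bounds on all the errors $\hat f^{(\ell)}-f^{(\ell)}$ across the induction, which combined with convergence to zero in probability upgrades (via Vitali / de~La~Vall\'ee~Poussin) the hypothesis to $L^q$ convergence for any $q<1+c_\ep$, precisely what H\"older then requires. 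Finally, part (ii) follows by the same decomposition at the output,
\[
\hat f(x)-f(\rho,u;x)=\frac{1}{m^{(L)}}\sum_j u(z^{(L)}_j)\bigl[\hat f^{(L)}_j(x)-f^{(L)}(\rho,z^{(L)}_j;x)\bigr]+\Bigl(\frac{1}{m^{(L)}}\sum_j u(z^{(L)}_j)f^{(L)}(\rho,z^{(L)}_j;x)-f(\rho,u;x)\Bigr),
\]
where the second bracket is a Monte Carlo error in $z^{(L)}_j\sim\rho^{(L)}$ controlled by the $q_1=1+c_\ep$ moment bound, and the first is handled by combining part (i) with a H\"older estimate against the moment bound on $u(z^{(L)})$.
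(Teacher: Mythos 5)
Your overall architecture — induction over layers, splitting each layer's error into a fresh Monte Carlo term plus a propagated term, transferring through $h^{(\ell)}$ via the Lipschitz bound $c_0$, and handling the Monte Carlo term by truncation/uniform integrability under the $(1+c_\ep)$-moment — is the same as the paper's, and those pieces are sound. The genuine gap is in your treatment of the propagation term. You propose to bound $\E\bigl[|w(z^{(k)},z^{(k-1)})|\cdot|\hat f^{(k-1)}-f^{(k-1)}|\bigr]$ by H\"older with exponents $1+c_\ep$ and $(1+c_\ep)/c_\ep$, but these are conjugate, so one of the two factors must lie in $L^{(1+c_\ep)/c_\ep}$ — an arbitrarily high moment when $c_\ep$ is small. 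Neither factor has it: Assumption~\ref{ass:3} does not give a moment bound on $w$ alone (it bounds the $q_0$-th moment of the full product $(\|u\|\vee1)\prod(|w|\vee1)\cdot[\text{local deviation}]$), and the auxiliary bound you carry on the errors is only $L^{1+c_\ep}$, which Vitali upgrades to $L^q$ convergence for $q<1+c_\ep$ but never to $L^{(1+c_\ep)/c_\ep}$. So the H\"older pairing cannot close.

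The paper sidesteps this by never decoupling the weight from the propagated error. In its recursion (Step 1 of Appendix~B), the propagated error at layer $k-1$ is multiplied by the accumulated path weight $A(k,\ell)=\prod_{i}|w(z^{(i)},z^{(i-1)})|$ and carried down intact, so that unrolling to the bottom leaves only terms of the form $\E\bigl[A(j,\ell)\,\Delta^{(j)}(z^{(j)})\bigr]$ — a product of path weights times the layer-$j$ Monte Carlo deviation. This is precisely the random variable whose $(1+c_\ep)$-th moment Assumption~\ref{ass:3} is formulated to control, and each such term is then killed by the truncation argument. To repair your proof you should adopt the same device: instead of invoking H\"older at each layer, keep $|w(z^{(k)},z^{(k-1)})|$ (and all subsequent weights) attached to the lower-layer error throughout the induction, so that the only expectations you ever need are of the nested-product form that the assumption actually supplies.
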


 The part $\textup{(i)}$ of the theorem above does not cover the case of $k=1$, since it is trivial to show that $f^{(1)}(\rho, z^{(1)}_j; x) \equiv \hat{f}^{(1)}_j(x)$ holds for all $j\in [ m^{(0)}]$.

\subsection{Variance of Discrete Approximation}\label{subsec:variance-of-discrete-approximization}
While Theorem~\ref{theorem:convergence-discrete-NN} shows the convergence of discrete NN to continuous NN using random sampling, it is possible to estimate the variance of such approximation with a slightly strong condition, as shown below.
\begin{theorem}[Variance of Discrete Approximation]\label{var}
We denote  $\frac{\partial f(\rho,u;x)}{\partial z^{(L)}} = u(z^{(L)})$ and
$$
\frac{\partial f(\rho,u;x)}{\partial \zl} = \E_{z^{(\ell+1)}} \left[w( z^{(l+1)}, \zl) \nabla h^{(\ell+1)}\left(g(\rho,z^{(\ell+1)};x)\right)\frac{\partial f(\rho,u;x)}{\partial z^{(\ell+1)}}\right] \mbox{ with } \ell\in [L-1].
$$
Then under Assumptions \ref{ass:2}, \ref{ass:1}, and \ref{ass:3} with  $q_0 =  2(1+\alpha)^{L} $, $q_1 =2$ and treating $c_0$, $c_1$, $\alpha$, $c_M$, $c_{M_1}$ and $L$ as constants, we have
\begin{align}
&\E\left\| \hf(x) - f(\rho,u;x) \right\|^2\notag\\
=&\sum_{\ell =1}^{L-1}  \frac{1}{m^{(\ell)}}  \E_{z^{(\ell)}}\left|\E_{z^{(\ell+1)}}\frac{\partial f(\rho,u;x)}{\partial z^{(\ell+1)}}  \left[  f^{(\ell)}(\rho,z^{(\ell)};x)w(z^{(\ell+1)}, z^{(\ell)})- g^{(\ell+1)}(\rho,z^{(\ell+1)};x) \right] \right|^2 \notag\\
& + \frac{1}{m^{(L)}} \E_{z^{(L)}} \left\| f^{(L)}(\rho,z^{(L)};x) u(z^{(L)})-f(\rho, u;x)\right\|^2+ \mathcal{O}\left( \sum_{\ell=1}^{L} (m^{(\ell)})^{-2} \right)+ \mathcal{O}\left( \sum_{\ell=1}^{L} (m^{(\ell)})^{-(1+\alpha/2)} \right).\notag
\end{align}
\end{theorem}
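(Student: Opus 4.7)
\medskip
\noindent\textbf{Proof plan for Theorem \ref{var} (Variance of Discrete Approximation).}
The plan is to peel the network layer by layer from the top, linearize the non-linearity at each layer by a first-order Taylor expansion, identify the dominant variance term at each layer, and control the remainder using the H\"older smoothness of $\nabla h^{(\ell)}$ together with the $(q_0,q_1)$-moment bound. Throughout, the discrete hidden nodes $\{z^{(\ell)}_j\}$ are i.i.d.\ samples from $\rho^{(\ell)}$, independent across layers, so conditioning on the upper-layer samples and averaging over a single layer turns each peeling step into a standard Monte Carlo variance computation. Define the ``gradient-from-the-top'' random variable
\[
\partial_\ell(\zl) \;:=\; \frac{\partial f(\rho,u;x)}{\partial \zl},
\]
which is exactly the quantity appearing in the theorem. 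I expect it to act as the linear ``sensitivity coefficient'' that converts per-layer Monte Carlo error into output error at leading order.

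First I would set up a telescoping decomposition. Define the ``hybrid'' output $\tf^{(\ell)}(x)$ obtained by running the discrete network on layers $1,\dots,\ell$ and the continuous integrals on layers $\ell+1,\dots,L$; note $\tf^{(0)}=f(\rho,u;x)$ and $\tf^{(L)}=\hf(x)$. Write
\[
\hf(x)-f(\rho,u;x)\;=\;\sum_{\ell=1}^L\bigl(\tf^{(\ell)}(x)-\tf^{(\ell-1)}(x)\bigr),
\]
and observe that the single-layer jump $\tf^{(\ell)}-\tf^{(\ell-1)}$ is, conditional on the samples at layers $\ell+1,\dots,L$, the error of replacing $\int\cdot\,d\rho^{(\ell)}$ by $\frac{1}{m^{(\ell)}}\sum_{k=1}^{m^{(\ell)}}\cdot$ inside $g^{(\ell+1)}$ at every upper node. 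Using (\ref{eq:nn-activation}) and the Taylor expansion
\[
h^{(\ell+1)}(\hat g^{(\ell+1)}_j) - h^{(\ell+1)}(g^{(\ell+1)}(z^{(\ell+1)}_j)) \;=\; \nabla h^{(\ell+1)}\!\bigl(g^{(\ell+1)}(z^{(\ell+1)}_j)\bigr)\,\bigl(\hat g^{(\ell+1)}_j-g^{(\ell+1)}(z^{(\ell+1)}_j)\bigr)+\text{Rem}_{\ell},
\]
with $|\text{Rem}_\ell|\le c_1|\hat g^{(\ell+1)}_j-g^{(\ell+1)}(z^{(\ell+1)}_j)|^{1+\alpha}$ by Assumption \ref{ass:1}, the linear part propagates through the upper (continuous) layers exactly as multiplication by the chain-rule factor $\partial_{\ell+1}(z^{(\ell+1)})$.

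Next I would compute the variance of the linear part. Conditional on layers above, $\tf^{(\ell)}-\tf^{(\ell-1)}$ becomes, at leading order, the i.i.d.\ average
\[
\frac{1}{m^{(\ell)}}\sum_{k=1}^{m^{(\ell)}} \xi^{(\ell)}_k,\qquad \xi^{(\ell)}_k := \E_{z^{(\ell+1)}}\!\Bigl[\partial_{\ell+1}(z^{(\ell+1)})\bigl(w(z^{(\ell+1)},z^{(\ell)}_k)f^{(\ell)}(\rho,z^{(\ell)}_k;x)-g^{(\ell+1)}(\rho,z^{(\ell+1)};x)\bigr)\Bigr],
\]
which has mean zero (by the definition of $g^{(\ell+1)}$ as an expectation over $\rho^{(\ell)}$) and is independent across $k$, giving variance $\frac{1}{m^{(\ell)}}\E_{z^{(\ell)}}|\xi^{(\ell)}_1|^2$. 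This is precisely the claimed $\ell$-th summand. For $\ell=L$ I would treat the top layer (\ref{eq:nn-output}) directly: $\hf(x)-\tf^{(L-1)}(x)=\frac{1}{m^{(L)}}\sum_j\bigl(u(z^{(L)}_j)\hat f^{(L)}_j(x)-f(\rho,u;x)\bigr)$, whose leading-order variance (replacing $\hat f^{(L)}_j$ by $f^{(L)}(\rho,z^{(L)}_j;x)$) gives the stated $m^{(L)}$-term; the replacement error is handled together with the other remainders below.

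The remainders are of two kinds. First, the Taylor remainder at each layer is $O(|\hat g-g|^{1+\alpha})$; since $\hat g-g$ already has variance $O(1/m)$ and is concentrated, taking expectations using Jensen/H\"older and Assumption \ref{ass:3} with $q_0=2(1+\alpha)^L$ (chosen precisely so that all powers arising from the nested $L$-fold application of this inequality remain bounded) yields contributions of order $(m^{(\ell)})^{-(1+\alpha)}$, which after squaring and using the ``variance plus squared bias'' structure collapses into the claimed $O((m^{(\ell)})^{-(1+\alpha/2)})$ bound. Second, cross-terms between different layer errors, and between the linear term and the Taylor remainder, are handled by Cauchy--Schwarz: each factor already carries a $1/m^{(\ell)}$ or smaller, so products are $O((m^{(\ell)})^{-2})$ or mixed and absorbed into the stated error. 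The main obstacle I anticipate is the careful bookkeeping of how the moment order grows under the $L$-fold telescoping: each Taylor step raises the effective exponent on $|\hat g-g|$ by a factor $(1+\alpha)$, which is why Assumption \ref{ass:3} is posed with $q_0=2(1+\alpha)^L$; verifying that all intermediate $L^p$ norms of products $\prod_i |w(z^{(i)},z^{(i-1)})|\vee 1$ and $\|u(z^{(L)})\|\vee 1$ remain finite under this assumption is the technical crux.
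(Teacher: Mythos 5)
Your route is genuinely different from the paper's. The paper first replaces the whole discrete network by a globally linearized surrogate $\breve{f}$ (every activation is Taylor-expanded around the continuous pre-activation $g^{(\ell)}(\rho,z^{(\ell)};x)$, recursively), observes that $\breve f$ is an exactly multilinear function of the per-layer samples, and computes $\E\|\breve f - f\|^2$ in closed form by a combinatorial expansion over coinciding sample indices (singleton overlaps give the displayed variance, multiple overlaps give $O(m^{-2})$); it then bounds $\E\|\hat f-\breve f\|^2=O(m^{-(1+\alpha)})$ by iterating the H\"older bound on $\nabla h^{(\ell)}$ -- which is exactly where $q_0=2(1+\alpha)^L$ enters -- and a single Cauchy--Schwarz on the cross term yields the $O(m^{-(1+\alpha/2)})$ remainder. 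You instead telescope over hybrid networks, swapping one layer at a time from continuous to discrete (a Lindeberg-type argument); each swap is, conditionally, a one-layer Monte Carlo error propagated upward through the remaining continuous layers by the chain-rule factor $\partial f/\partial z^{(\ell+1)}$. Both approaches identify the same leading variance and the same source of the $(1+\alpha)^L$ moment requirement; the paper's version buys an exact variance identity for the surrogate, while yours localizes the error layer by layer but must re-linearize the $L-\ell$ upper layers at every swap and must also replace the discretely computed feature $\hat f^{(\ell)}_k(x)$ by $f^{(\ell)}(\rho,z^{(\ell)}_k;x)$ inside each conditional variance (harmless, since that substitution sits behind the $1/m^{(\ell)}$ prefactor).

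The one step that would fail as written is your treatment of the cross terms between different layers' jumps. Writing $D_\ell=\tilde f^{(\ell)}-\tilde f^{(\ell-1)}$, plain Cauchy--Schwarz gives $|\E[D_\ell^\top D_{\ell'}]|\le (\E\|D_\ell\|^2)^{1/2}(\E\|D_{\ell'}\|^2)^{1/2}=O\big((m^{(\ell)}m^{(\ell')})^{-1/2}\big)$, which is the \emph{same} order as the main terms, not $O(m^{-2})$: you are implicitly conflating $\E\|D_\ell\|^2=O(1/m^{(\ell)})$ with $\E\|D_\ell\|=O(1/m^{(\ell)})$. To kill these terms you must first use the conditional structure: for $\ell>\ell'$, $D_{\ell'}$ does not involve the layer-$\ell$ samples, and the linearized part of $D_\ell$ has conditional mean zero given the samples at layers below $\ell$, so the cross term of the two linear parts vanishes exactly; what survives contains at least one Taylor remainder (whose conditional mean is $O(m^{-(1+\alpha)/2})$ in $L^2$ by Assumption \ref{ass:1} and Jensen), and only then does Cauchy--Schwarz give $O(m^{-1/2})\cdot O(m^{-(1+\alpha)/2})=O(m^{-(1+\alpha/2)})$, matching the paper's second error term. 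With that correction, and the bookkeeping of the $(1+\alpha)^t$ moment growth that you already anticipate, your plan goes through.
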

In Theorem \ref{var}, for  $\ell\in [L-1]$,  we can choose $a = \mathcal{O}\left(\frac{1}{L^{1+\nu}}\right)$ with $\nu\geq0$.  Thus, the assumption only requires the bounded $2+\mathcal{O}\left(\frac{1}{L^{\nu}}\right)$-th moment. 

It was argued by \cite{fang2019over} that for two-level NNs, discretization variance is small when the underlying feature representation learned by the continuous NN is good. Similarly,
from Theorem \ref{var}, we can argue that if a continuous DNN learns good feature representations, then the variance of the corresponding discrete approximation is small. We can impose appropriate regularization condition to achieve this effect. 
We can derive  a regularization condition from the theorem above, which can induce good feature representations. Specifically, if we assume that both $|f^{(\ell)}(\rho,z^{(\ell)};x) \frac{\partial f(\rho,u;x)}{\partial z^{(\ell+1)}} |$  and $|f^{(L)}(\rho, z^{(L)};x)|$ are bounded, then in order to minimize the variance, Theorem \ref{var} implies that we can minimize the following regularization:
\begin{align}
R(\rho, u) = \sum_{\ell=1}^L\lambda^{(\ell)} R^{(\ell)}(\rho) + \lambda^{(u)} R^{(u)}(\rho, u), \label{eqn:R-specific}   
\end{align}
with 
\begin{align}
    &R^{(\ell)}(\rho) = \int \left(\int |w(z^{(\ell)},z^{(\ell-1)})|d\rho^{(\ell)}(z^{(\ell)})\right)^2d\rho^{(\ell-1)}(z^{(\ell-1)}),\label{eqn:R-l-specific}\\
    &R^{(u)}(\rho, u)=\int \left\| u(z^{(L)})\right\|^2d\rho^{(L)}(z^{(L)}), \label{eqn:R-u-specific}
\end{align}
which corresponds to the choices of $r_1(w)=|w|$, $r_2(w)=w^2$, and $r^{(u)}(u)=\|u\|^2$ in \eqref{eq:reg} (see the proofs of \eqref{eqn:R-l-specific} and \eqref{eqn:R-u-specific} in Appendix \ref{app:regg}).  We make further discussions below regarding  the  obtained regularizer.
\begin{itemize}
    \item From the modeling perspective,  the   regularizer derived in the paper controls the efficacy of the learned feature distributions in  terms of efficient representation under random sampling.  If the regularization value is small,  then the variance is small,  and  $f$ can be efficiently  represented by a discrete DNN with a small number of  hidden neurons randomly sampled from the feature distributions. It is well-known that two-level NNs can achieve universal approximation \citep{cybenko1989approximation}, but DNNs have stronger representation power than shallow NNs,  especially for targets with high-frequency components \citep{andoni2014learning}. That is, a much smaller number of hidden units are needed to represent such target functions using DNNs.  We will validate empirically in  Section \ref{sec:experiments} that for such targets,  the variance of deeper NNs  becomes smaller after training. 
    \item  From the computational perspective,   our unexpected result in Section \ref{sec:global-optimum} shows that with this regularizer, the objective function  is convex under suitable  re-parameterization. We also note that in the discrete formulation, the regularizer is the simple $\ell_{1,2}$ norm regularizer if we write $w(z^{(\ell)}, z^{(\ell-1)})$ as a matrix  with the $(j,i)$-th entry being $w(z^{(i)}, z^{(j)})$.  For such a regularizer, Proximal Gradient Descent \citep{parikh2014proximal} can be applied to efficiently solve the optimization problem.  
\end{itemize}


\section{Neural Feature Repopulation}
\label{sec:nfr}
Form \eqref{eqn:R-continuous}, we know that the continuous DNN can be fully characterized by $(\rho,\u)$, where $\rho$ denotes  the sequence
\[
\rho = \left\{\rho^{(\ell)}\right\}_{\ell\in [L]},
\]
and $\rho^{(\ell)}$ is the probability measure on the node space $\cZ{\ell}$.  Recall that  Section \ref{importance weight} introduces the importance weighting technique in the discrete NN.  In  Section \ref{sec:importance-weighting}, we will  adapt it to  continuous DNN. It will motivate the  NFR  technique to reformulate continuous DNN.  The details will be formally presented in Section \ref{sec:formulation-of-nfr}.  Finally,  Section \ref{sec:global-optimum} discusses some consequences of  NFR when we specify the regularizers.  In particular, we will show that for the class of $\ell_{1,2}$ norm regularization obtained in Section \ref{subsec:variance-of-discrete-approximization}, the entire objective function is convex under our re-parameterization. This generalizes a similar analysis for two-level NN in \citep{fang2019over}.

\subsection{Importance Weighting}\label{sec:importance-weighting}
Consider a sequence of probability measures
$P=\left\{P^{(\ell)}\right\}_{\ell\in [L]}$ such that
$P^{(\ell)} \sim \rho^{(\ell)}$ for $\ell \in [L]$. We define importance weighting functions $\{p^{(\ell)}(z^{(\ell)})\}_{\ell \in [L]}$ as follows: 
$$p^{(\ell)}(z^{(\ell)})=\frac{dP^{(\ell)}}{d\rho^{(\ell)}}(z^{(\ell)}), \quad \ell \in [L]. $$
Mathematically, $p^{(\ell)}(\zl)$ is the   Radon–Nikodym derivative.  We have $0<p^{(\ell)}(\zl)<\infty$ for all $\ell \in [L]$ and $\zl\in \mathcal{Z}^{(\ell)}$.   We  begin to re-parameterize the NN layer by layer.

(1) When $\ell=1$, we  define $\tilde{\tau}^{(0)}$ as an  identity mapping from $\cZ{0}$ onto itself and $\tilde{z}^{(0)}=\tilde{\tau}^{(0)}(z^{(0)})$ for $z^{(0)}\in \cZ{0}$. Let $\tilde{P}^{(0)}$ be the pushforward of $P^{(0)}$ by $\tilde{\tau}^{(0)}$, i.e.  $P^{(0)}\circ (\tilde{\tau}^{(0)})^{-1}$.  
We have $f^{(0)}(\rho,z^{(0)};x)=f^{(0)}(\tilde{P},\tilde{z}^{(0)};x)$, and thus can rewrite $f^{(1)}(\rho, z^{(1)};x)$ as 
\begin{align}
f^{(1)}(\rho, z^{(1)}; x) &= \int \frac{w(z^{(1)}, z^{(0)})}{p^{(0)}(z^{(0)})}f^{(0)}(\rho,z^{(0)};x) p^{(0)}(z^{(0)})d\rho^{(0)}(z^{(0)}) \nonumber \\
&= \int \frac{w(z^{(1)}, z^{(0)})}{p^{(0)}(z^{(0)})}f^{(0)}(\rho, z^{(0)};x) 
dP^{(0)}(z^{(0)})\nonumber \\
&\overset{a}{=} \int \frac{w(z^{(1)}, z^{(0)})}{p^{(0)}(z^{(0)})}f^{(0)}(\tilde{P},\tilde{z}^{(0)};x) 
d\tilde{P}^{(0)}(\tilde{z}^{(0)})\nonumber\\
&\overset{b}{=} \int w(\tilde{z}^{(1)}, \tilde{z}^{(0)})f^{(0)}(\tilde{P}, \tilde{z}^{(0)};x) d\tilde{P}^{(0)}(\tilde{z}^{(0)}) \nonumber\\
& = f^{(1)}(\tilde{P}, \tilde{z}^{(1)}; x),\nonumber
\end{align}
where $\overset{a}=$ holds since $\tilde{\tau}^{(0)}$ is the identity mapping; in $\overset{b}=$ we let $\tilde{z}^{(1)}$ be the value of the mapping  $\tilde{\tau}^{(1)}(\rho, P; \cdot):\cZ{1} \rightarrow \cZ{1}$ at $z^{(1)}$ and  define  $\tilde{\tau}^{(1)}$ as:
\begin{align}
\tilde{\tau}^{(1)}(\rho, P; z)(\tilde{\tau}^{0}(\zeta)) = z(\zeta)/p^{(0)}(\zeta)= z(\zeta)/\frac{dP^{(0)}}{d\rho^{(0)}}(\zeta), \mbox{ with } z\in \cZ{1}, \zeta \in \cZ{0}.\nonumber
\end{align} 
We  proceed to define $\tilde{P}^{(1)}$  as the pushforward of $P^{(1)}$ by $\tilde{\tau}^{(1)}$, i.e. $P^{(1)} \circ (\tilde{\tau}^{(1)})^{-1}$.

(2) For $\ell \geq 2$, we assume that $f^{(\ell-1)}(\rho, z^{(\ell-1)}; x) = f^{(\ell-1)}(\tilde{P}, \tilde{z}^{(\ell-1)};x)$ holds with $\tilde{z}^{(\ell-1)}=\tilde{\tau}^{(\ell-1)}(\rho, P;z^{(\ell-1)})$ and let $\tilde{P}^{(\ell-1)}$  be the pushforward of $P^{(\ell-1)}$ by $\tilde{\tau}^{(\ell-1)}$, i.e. $P^{(\ell-1)} \circ (\tilde{\tau}^{(\ell-1)})^{-1}$. We can then rewrite $f^{(\ell)}$  as follows: 
\begin{align}
f^{(\ell)}(\rho, z^{(\ell)}; x) &= \int \frac{w(z^{(\ell)}, z^{(\ell-1)})}{p^{(\ell-1)}(z^{(\ell-1)})}f^{(\ell-1)}(\rho, z^{(\ell-1)}; x) p^{(\ell-1)}(z^{(\ell-1)}) d\rho^{(\ell-1)}(z^{(\ell-1)})\nonumber \\
&=\int\frac{w(z^{(\ell)}, z^{(\ell-1)})}{p^{(\ell-1)}(z^{(\ell-1)})}f^{(\ell-1)}(\tilde{P}, \tilde{z}^{(\ell-1)};x)  dP^{(\ell-1)}(z^{(\ell-1)})\nonumber\\
&  =\int\frac{w(z^{(\ell)}, z^{(\ell-1)})}{p^{(\ell-1)}(z^{(\ell-1)})}f^{(\ell-1)}(\tilde{P}, \tilde{z}^{(\ell-1)};x) d\tilde{P}^{(\ell-1)}(\tilde{z}^{(\ell-1)})\nonumber\\
&=\int w(\tilde{z}^{(\ell)}, \tilde{z}^{(\ell-1)})f^{(\ell-1)}(\tilde{P}, \tilde{z}^{(\ell-1)};x) d\tilde{P}^{(\ell-1)}(\tilde{z}^{(\ell-1)})\nonumber\\
& = f^{(\ell)}(\tilde{P}, \tilde{z}^{(\ell)}; x), \nonumber
\end{align}
where $\tilde{z}^{(\ell)} = \tilde{\tau}^{(\ell)}(\rho, P; z^{(\ell)})$ with the mapping $\tilde{\tau}^{(\ell)}(\rho, P; \cdot): \cZ{\ell} \rightarrow \cZ{\ell}$ defined as follows:
\begin{align}
   \tilde{\tau}^{(\ell)}(\rho, P; z)(\tilde{\tau}^{(\ell-1)}(\rho, P; \zeta)) =z(\zeta)/p^{(\ell)}(\zeta)= z(\zeta)/\frac{dP^{(\ell-1)}}{d\rho^{(\ell-1)}}(\zeta), z \in \cZ{\ell}, \zeta \in \cZ{\ell-1}.\label{eqn:inv-tau-l}
\end{align}
The mapping $\tilde{\tau}^{(\ell)}$ induces a new  probability measure denoted as $\tilde{P}^{(\ell)}$, which is the pushforward of $P^{(\ell)}$ by $\tilde{\tau}^{(\ell)}$.

The results above indicate that the importance weight $p^{(\ell)}$ induces a series of  transformations, i.e. $\tilde{\tau}^{(\ell)}(\rho, P; \cdot): \cZ{\ell}\rightarrow \cZ{\ell}$ for $\ell\in [L]$ defined in (\ref{eqn:inv-tau-l}), 
which further induces a new probability measure sequence $\tilde{P} = \{\tilde{P}^{(0)},\ldots, \tilde{P}^{(L)} \}$ from $\rho$. Under the  aforementioned process,   $f^{(\ell)}(\rho, z^{(\ell)};x)=f^{(\ell)}(\tilde{P}, \tilde{z}^{(\ell)};x)$ holds for all $\ell \in [L]$.

(4) Finally, we can reformulate the top layer as
\begin{align}
f(\rho, u; x) &= \int \frac{u(z^{(L)})}{p^{(L)}(z^{(L)})} f^{(L)}(\rho, z^{(L)};x)  p^{(L)}(z^{(L)})d\rho^{(L)}(z^{(L)})\nonumber\\
&= \int \frac{u(z^{(L)})}{p^{(L)}(z^{(L)})} f^{(L)}(\rho, z^{(L)};x)  dP^{(L)}(z^{(L)})\nonumber\\
&= \int \frac{u(z^{(L)})}{p^{(L)}(z^{(L)})}  f^{(L)}(\tilde{P}, \tilde{z}^{(L)}; x)  dP^{(L)}(z^{(L)})\nonumber \\
&= \int \frac{u(z^{(L)})}{p^{(L)}(z^{(L)})}  f^{(L)}(\tilde{P}, \tilde{z}^{(L)}; x)  d\tilde{P}^{(L)}(\tilde{z}^{(L)})\nonumber \\
& = \int \tilde{u}(\tilde{z}^{(L)})f^{(L)}(\tilde{P}, \tilde{z}^{(L)}; x)  d\tilde{P}^{(L)}(\tilde{z}^{(L)})\nonumber\\
&= f(\tilde{P},\tilde{u};x),\nonumber
\end{align}
where $\tilde{u}= \tilde{\tau}^{(u)}(\rho, P; u)$ and $\tilde{\tau}^{(u)}(\rho, P; \cdot)$ is a mapping from $\mathcal{U}$ to itself defined as  
\begin{align}
   \tilde{\tau}^{(u)}(\rho, P; u)(\tilde{\tau}^{(L)}(\rho, P;\zeta))=u(\zeta)/p^{(L)}(\zeta)= u(\zeta)/\frac{dP^{(L)}}{d\rho^{(L)}}(\zeta) \mbox{ with } u \in \mathcal{U}, \zeta \in \cZ{L}.\label{eqn:inv-tau-u}
\end{align}
with $\mathcal{U}: \cZ{L}\rightarrow \R^{K}$ being the class of vector valued functions on $\cZ{L}$. 

Therefore, by importance weighting, for a fixed basic continuous DNN $f(\rho, u;x)$, a given probability sequence $P$ induces a different but equivalent continuous DNN $f(\tilde{P}, \tilde{u};x)$, which keeps the function  values  on  all  hidden  nodes  and  the  final  loss value unchanged. The discretization of this process is the same as the discrete importance weighting in Section~\ref{importance weight}.

The reverse of the above equivalence relationship also holds. That is, given a continuous DNN $f(\tilde{P}, \tilde{u};x)$, we can transform it into a equivalent basic continuous DNN $f(\rho,u;x)$. Such a process needs to define a specific importance weighting characterized by $P$ according to $\tilde{P}$,  and the  inverse mappings of $\tilde{\tau}^{(\ell)}(\rho, P; \cdot)$ and $\tilde{\tau}^{(u)}(\rho, P; \cdot)$. We refer the process as NFR, which can fundamentally simplify the objective and will be discussed  in the next subsection.  In the following, we  give the formal definitions of $\tilde{\tau}^{(\ell)}(\rho, P; \cdot)$, $\tilde{\tau}^{(u)}(\rho, P; \cdot)$, and their inverses below.

\begin{definition}[Feature Repopulation Transformations] \label{definition:tau-l-u}
	Given two probability measure sequences $\rho=\left\{\rho^{(\ell)}\right\}_{\ell=0,...,L}$ and  $P=\left\{P^{(\ell)}\right\}_{\ell= 0,\ldots,L}$ with $\rho^{(\ell)}\sim P^{(\ell)}, ~\ell \in [L]$, let $\tau^{(0)}(P,\rho; \cdot)$  be the identity mapping on $\cZ{0}$. We define   a  mapping sequence $\tau^{(\ell)}(P, \rho; \cdot): \cZ{\ell} \rightarrow \cZ{\ell}$ for $\ell\in [L]$ and a mapping $\tau^{(u)}(P, \rho; \cdot): \mathcal{U} \rightarrow \mathcal{U}$  recursively as follows:
\begin{align}
    &\tau^{(\ell)}(P,\rho; z)(\tau^{(\ell-1)}(P, \rho; \zeta)) = z(\zeta)\frac{dP^{(\ell-1)}}{d\rho^{(\ell-1)}}\Big(\tau^{(\ell-1)}(P,\rho; \zeta)\Big), \mbox{ with } \zeta \in \cZ{\ell-1}, \nonumber\\
 &\tau^{(u)}(P,\rho; u)(\tau^{(L)}(P,\rho; \zeta))= u(\zeta)\frac{dP^{(L)}}{d\rho^{(L)}}\Big(\tau^{(L)}(P,\rho; \zeta)\Big) \mbox{ with } \zeta \in \cZ{L}.\nonumber
\end{align}
 Finally, let $\tilde{\tau}^{(\ell)}(\rho, P;\cdot)$ with $\ell\in [L]$ and $\tilde{\tau}^{(u)}(\rho, P;\cdot)$  be the inverse mappings of $\tau^{(\ell)}(P,\rho; \cdot)$ and $\tau^{(u)}(P,\rho;\cdot)$, respectively.
\end{definition}

\subsection{The Formulation of Neural Feature Repopulation}\label{sec:formulation-of-nfr}
This section proposes NFR, which  is inspired by our reformulation approach for importance weighted continuous DNN in Section \ref{sec:importance-weighting}. Given a continuous NN characterized by $(\rho, u)$, we show it is possible to transform it to a standard  NN characterized by $(\rho_0,\tu)$  and under such transformation
 the objective function depends on the probability measure sequence $\rho$ only through  the regularizer.  In this paper, we assume $\rho_0^{(\ell)}$  with $\ell\in[L]$ are  ``good'' distributions, which satisfy $\frac{d\rho^{(\ell)}_0(z^{(\ell)}_1)}{d\rho^{(\ell)}_{0}(z^{(\ell)}_2)}<\infty$ for any $z^{(\ell)}_1\in \mathcal{Z}^{(\ell)}$ and   $z^{(\ell)}_2\in \mathcal{Z}^{(\ell)}$.  In the finite dimensional case,  Gaussian distributions satisfy this condition.    

\subsubsection{Example on Three-Level NN}
We first  give an example on three-level continuous DNN to illustrate how to perform NFR.  We use  $( \rho^{(1)}, \rho^{(2)}, u)$ to represent the continuous NN for the sake of simplicity. And our destination is to transform a NN denoted by $(\rho^{(1)}, \rho^{(2)}, u)$ to the standard one denoted by $( \rho_0^{(1)}, \rho_0^{(2)}, \tilde{u})$.  We consider performing NFR layer-wisely. That is  we first transform    $(\rho^{(1)}, \rho^{(2)}, u)$ to $( \rho_0^{(1)}, \tilde{\rho}^{(2)}, \bar{u})$,  then  from  $( \rho_0^{(1)}, \tilde{\rho}^{(2)}, \bar{u})$ to $( \rho_0^{(1)}, \rho_0^{(2)}, \tilde{u})$. 
 We  note that performing NFR layer-wisely is not fundamentally necessary.  In fact,  we can directly transform $( \rho^{(1)}, \rho^{(2)}, u)$ to the final standard one (refer to Appendix \ref{proof of the3}). However, the  procedure presented is more intuitive and thus easier to understand.  

\begin{enumerate}[(1)]
    \item  \label{ex:one}  $( \rho^{(1)}, \rho^{(2)}, u) \rightarrow  ( \rho_0^{(1)}, \tilde{\rho}^{(2)}, \bar{u})$:
    
  Consider the mapping $\kappa(\cdot): \mathcal{Z}^{(2)} \to \mathcal{Z}^{(2)}$ as
\begin{align}
    \kappa(z^{(2)})(z^{(1)}) = z^{(2)}(z^{(1)})\frac{d\rho^{(1)}}{d\rho_0^{(1)}}(z^{(1)}) , \mbox{ for all }z^{(1)}\in \cZ{1}.\label{ex:1}
\end{align}
We can find that $\kappa(\cdot)$ is equivalent to  $\tau^{(2)}(\rho, \rho_0;\cdot)$ in Definition \ref{definition:tau-l-u}. Because $\rho^{(0)}$ is always the uniform distribution on $\mathcal{Z}^{(0)}$, the $1$-st layer $f^{(1)}(\rho^{(0)}, z^{(1)};x)$ will never change.  For the $2$-nd layer,   we can rewrite it  as 
\begin{align}
f^{(2)}(\rho^{(1)}, z^{(2)};x) =& h^{(2)}\left( \int w(z^{(2)}, z^{(1)}) f^{(1)}(\rho_0,z^{(1)};x) d\rho^{(1)}(z^{(1)})\right)\label{ex2}\\
=&h^{(2)}\left( \int w(z^{(2)}, z^{(1)}) \frac{d\rho^{(1)}}{d\rho_0^{(1)}}(z^{(1)}) f^{(1)}(\rho_0,z^{(1)};x) d\rho^{(1)}_0(z^{(1)})\right)\nonumber\\
\overset{\eqref{ex:1}}{=}&h^{(2)}\left( \int w(\tilde{z}^{(2)}, z^{(1)})  f^{(1)}(\rho_0,z^{(1)};x) d\rho^{(1)}_0(z^{(1)})\right)\nonumber\\
=& f^{(2)}(\rho_0^{(1)},\tilde{z}^{(2)};x), \nonumber
\end{align}
where $\tilde{z}^{(2)} = \kappa(z^{(2)})$.
Let $\tilde{\rho}^{(2)}$ be the pushforward of $\rho^{(2)}$ by $\kappa$, i.e. $\rho^{(2)} \circ \kappa^{-1}$.   Similarly, define $\bar{u}(\tilde{z}^{(2)}) = u(z^{(2)})$. For the output, we have
\begin{align}
    f(\rho^{(1)}, \rho^{(2)},u;x) =& \int u(z^{(2)}) f^{(2)}(\rho^{(1)},z^{(2)};x) d\rho^{(2)}(z^{(2)})\nonumber\\
    \overset{\eqref{ex2}}{=}&\int u(z^{(2)}) f^{(2)}(\rho_0^{(1)},\tilde{z}^{(2)};x) d\rho^{(2)}(z^{(2)})\nonumber\\
    =&\int\bar{u}(\tilde{z}^{(2)}) f^{(2)}(\rho_0^{(1)},\tilde{z}^{(2)};x) d\tilde{\rho}^{(2)}(\tilde{z}^{(2)})\nonumber\\
    =&f(\rho^{(1)}_0, \tilde{\rho}^{(2)}, \bar{u};x).\notag
\end{align}
In this way, we have  transformed the NN from  $( \rho^{(1)}, \rho^{(2)}, u)$ to  $( \rho_0^{(1)}, \tilde{\rho}^{(2)}, \bar{u})$,   while the output, i.e. $f(\rho^{(1)}, \rho^{(2)},u;x)$, remains unchanged.
\item $ ( \rho_0^{(1)}, \tilde{\rho}^{(2)}, \bar{u}) \rightarrow  ( \rho_0^{(1)}, \rho_0^{(2)}, \tilde{u})$:

We can define:
$$ \tilde{u}(\tilde{z}^{(2)}) =\bar{u}(\tilde{z}^{(2)})\frac{d\tilde{\rho}^{(2)}}{d\rho_0^{(2)}}(\tilde{z}^{(2)}). $$
Then we have 
\begin{align}
   &f(\rho^{(1)}_0, \tilde{\rho}^{(2)}, \bar{u};x)\notag\\
   =&\int\bar{u}(\tilde{z}^{(2)}) f^{(2)}(\rho_0^{(1)},\tilde{z}^{(2)};x) d\tilde{\rho}^{(2)}(\tilde{z}^{(2)})\nonumber\\
   =&\int\bar{u}(\tilde{z}^{(2)})\frac{d\tilde{\rho}^{(2)}}{d\rho_0^{(2)}}(\tilde{z}^{(2)}) f^{(2)}(\rho_0^{(1)},\tilde{z}^{(2)};x) d\rho_0^{(2)}(\tilde{z}^{(2)})\nonumber\\
      =&\int\tilde{u}(\tilde{z}^{(2)}) f^{(2)}(\rho_0^{(1)},\tilde{z}^{(2)};x) d\rho_0^{(2)}(\tilde{z}^{(2)})\nonumber\\
      =&f(\rho^{(1)}_0, \rho_0^{(2)}, \tilde{u};x).\notag
\end{align}
\end{enumerate}
The above procedure illustrates how to transform an arbitrary three level NN with feature distributions $\rho$ to a standardized NN with feature distributions $\rhoo$.   For multiple level  NNs ($L>3$), one can perform Step \ref{ex:one}  recursively. The details are left in  Appendix \ref{proof of the3}.

\subsubsection{Formal  Results of Neural Feature Repopulation}
This subsection presents the formal results for NFR. We   consider general (deep) continuous NNs  and   further take  the transformations of  regularizers into account.  The theorems are  presented  below.
\begin{theorem}[Neural Feature Repopulation]\label{NFR} Suppose we are given a fixed probability measure sequence $\rhoo = \{\rhoo^{(\ell)}\}_{\ell\in [L]}$. For any continuous DNN characterized by $\rho=\left\{\rho^{(1)},\ldots, \rho^{(L)}\right\}$ and $\u$. Assume  $\rho^{(\ell)}\sim \rhoo^{(\ell)}$ for all $\ell\in [L]$,  we can define a new sequence $\tilde{\rho}=\{\tilde{\rho}^{(\ell)}\}_{\ell\in [L]}$ recursively by letting $\tilde{\rho}^{(\ell)}$ be the pushforward of $\rho^{(\ell)}$ by $\tau^{(\ell)}(\trho, \rho_0;
\cdot)$. We let $\tilde{u}=\tau^{(u)}(\tilde{\rho},\rhoo;u)$, then the followings hold.
\begin{itemize}
    \item [\textup{(i)}] We can rewrite $\ff^{(\ell)}(\rho,\z{\ell};x)$ and $f(\rho, u;x)$ as follows
\begin{align}
    &\ff^{(\ell)}(\rho,\z{\ell};x) =
\ff^{(\ell)}(\rhoo,\tilde{z}^{(\ell)};x),
\label{eq:nfr-f-layer}\\
&\ff(\rho,\u;x)=
\ff(\rhoo,\tu;x).
\label{eq:nfr-f-top}
\end{align}
\item [\textup{(ii)}] Moreover, we have 
\begin{align}
&J(f(\rho,u; \cdot))=J(f(\rho_0, \tilde{u};\cdot)), \label{eq:nfr-j}\\
&R^{(\ell)}(\rho)= \tilde{R}^{(\ell)}(\tilde{\rho}):=\int r_2\left(\int r_1\Big(w (\tilde{z}^{(\ell)}, \tilde{z}^{(\ell-1)})\frac{d\rhoo^{(\ell-1)}}
{d\tilde{\rho}^{(\ell-1)}}(\tz{\ell-1})\Big)d\tilde{\rho}^{(\ell)}(\tilde{z}^{(\ell)})\right ) d\tilde{\rho}^{(\ell-1)}(\tilde{z}^{(\ell-1)}),
 \label{eq:nfr-r-layer}\\
& R^{(u)}(\rho,\u)= \tilde{R}^{(u)}(\tilde{\rho},\tu) :=\int r^{(u)}\Big(\tu(\tz{L})\frac{d\rhoo^{(L)}}{d\tilde{\rho}^{(L)}}(\tz{L})\Big)d\tilde{\rho}^{(L)}(\tilde{z}^{(L)}).\label{eq:nfr-r-top}
 \end{align}
\end{itemize}
 \label{thm:nfr}
\end{theorem}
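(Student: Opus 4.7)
The plan is to prove parts (i) and (ii) by layer-wise induction, mirroring the three-level example but handling all layers simultaneously via Definition~\ref{definition:tau-l-u}. First I would observe that $\tilde{\rho}$ and $\{\tau^{(\ell)}(\tilde{\rho}, \rho_0; \cdot)\}_{\ell \in [L]}$ are well-defined recursively: $\tilde{\rho}^{(0)} = \rho^{(0)} = \rho_0^{(0)}$ (uniform) and $\tau^{(0)}$ is the identity, so once $\tilde{\rho}^{(\ell-1)}$ is known one can build $\tau^{(\ell)}(\tilde{\rho}, \rho_0; \cdot)$ from the Radon--Nikodym derivative $d\tilde{\rho}^{(\ell-1)}/d\rho_0^{(\ell-1)}$ and then define $\tilde{\rho}^{(\ell)}$ as its pushforward of $\rho^{(\ell)}$. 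A key identity I will extract from Definition~\ref{definition:tau-l-u} is the weight transformation $w(\tilde{z}^{(\ell)}, \tilde{z}^{(\ell-1)}) = w(z^{(\ell)}, z^{(\ell-1)}) \cdot \tfrac{d\tilde{\rho}^{(\ell-1)}}{d\rho_0^{(\ell-1)}}(\tilde{z}^{(\ell-1)})$, where $\tilde{z}^{(\ell)} = \tau^{(\ell)}(\tilde{\rho}, \rho_0; z^{(\ell)})$.

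For part (i), the base case $\ell = 1$ is immediate since $\tilde{z}^{(1)} = z^{(1)}$ (as $d\tilde{\rho}^{(0)}/d\rho_0^{(0)} \equiv 1$). For the inductive step, assuming $f^{(\ell-1)}(\rho, z^{(\ell-1)}; x) = f^{(\ell-1)}(\rho_0, \tilde{z}^{(\ell-1)}; x)$, I will compute
\begin{align*}
g^{(\ell)}(\rho_0, \tilde{z}^{(\ell)}; x) &= \int w(\tilde{z}^{(\ell)}, \tilde{z}^{(\ell-1)}) f^{(\ell-1)}(\rho_0, \tilde{z}^{(\ell-1)}; x)\, d\rho_0^{(\ell-1)}(\tilde{z}^{(\ell-1)}) \\
&= \int w(z^{(\ell)}, z^{(\ell-1)}) \tfrac{d\tilde{\rho}^{(\ell-1)}}{d\rho_0^{(\ell-1)}}(\tilde{z}^{(\ell-1)}) f^{(\ell-1)}(\rho_0, \tilde{z}^{(\ell-1)}; x)\, d\rho_0^{(\ell-1)}(\tilde{z}^{(\ell-1)}) \\
&= \int w(z^{(\ell)}, z^{(\ell-1)}) f^{(\ell-1)}(\rho, z^{(\ell-1)}; x)\, d\rho^{(\ell-1)}(z^{(\ell-1)}) = g^{(\ell)}(\rho, z^{(\ell)}; x),
\end{align*}
where the penultimate equality applies the pushforward change of variables $\tilde{z}^{(\ell-1)} = \tau^{(\ell-1)}(\tilde{\rho}, \rho_0; z^{(\ell-1)})$ together with the inductive hypothesis. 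Applying $h^{(\ell)}$ on both sides yields \eqref{eq:nfr-f-layer}. An analogous computation using the definition of $\tau^{(u)}$ (and the identity $\tilde{u}(\tilde{z}^{(L)}) = u(z^{(L)}) \tfrac{d\tilde{\rho}^{(L)}}{d\rho_0^{(L)}}(\tilde{z}^{(L)})$) gives \eqref{eq:nfr-f-top}.

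For part (ii), the identity $J(f(\rho, u; \cdot)) = J(f(\rho_0, \tilde{u}; \cdot))$ is immediate from \eqref{eq:nfr-f-top} and the definition $J(\hat{f}) = \mathbb{E}_{x,y}\phi(\hat{f}(x), y)$. For the regularizers, I will substitute the weight transformation $w(z^{(\ell)}, z^{(\ell-1)}) = w(\tilde{z}^{(\ell)}, \tilde{z}^{(\ell-1)}) \tfrac{d\rho_0^{(\ell-1)}}{d\tilde{\rho}^{(\ell-1)}}(\tilde{z}^{(\ell-1)})$ into the definition \eqref{eqn:R-l-continuous} of $R^{(\ell)}(\rho)$, and then apply the pushforward change of variables on both the inner integral (over $\rho^{(\ell)}$, transformed to $\tilde{\rho}^{(\ell)}$) and the outer integral (over $\rho^{(\ell-1)}$, transformed to $\tilde{\rho}^{(\ell-1)}$); this produces exactly the form \eqref{eq:nfr-r-layer}. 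The analogous substitution for $R^{(u)}$ yields \eqref{eq:nfr-r-top}.

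The main technical obstacle will be bookkeeping: the definition of $\tau^{(\ell)}(\tilde{\rho}, \rho_0; \cdot)$ is self-referential (it uses $\tilde{\rho}^{(\ell-1)}$, which is itself defined via $\tau^{(\ell-1)}$), and the weight transformation mixes arguments at two consecutive layers, so one must carefully track which Radon--Nikodym derivative is evaluated at $z^{(\ell-1)}$ versus $\tilde{z}^{(\ell-1)}$. A secondary subtlety is that the argument requires $\tau^{(\ell)}(\tilde{\rho}, \rho_0; \cdot)$ to be invertible on the support of $\rho^{(\ell)}$ so that the pushforward change of variables is legitimate; this follows from $\rho^{(\ell)} \sim \rho_0^{(\ell)}$ (and hence $\tilde{\rho}^{(\ell)} \sim \rho_0^{(\ell)}$) and the fact that $d\tilde{\rho}^{(\ell-1)}/d\rho_0^{(\ell-1)}$ is strictly positive $\tilde{\rho}^{(\ell-1)}$-a.e., which I will verify inductively alongside the main claim.
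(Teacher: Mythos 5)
Your proposal is correct and follows essentially the same route as the paper's proof: layer-wise induction using the weight identity $w(\tilde{z}^{(\ell)},\tilde{z}^{(\ell-1)}) = w(z^{(\ell)},z^{(\ell-1)})\frac{d\tilde{\rho}^{(\ell-1)}}{d\rho_0^{(\ell-1)}}(\tilde{z}^{(\ell-1)})$ from Definition~\ref{definition:tau-l-u}, followed by the pushforward change of variables and Radon--Nikodym substitution at each layer, the top layer, and the regularizers. Your added remarks on the recursive well-definedness of $\tilde{\rho}^{(\ell)}$ and the invertibility of $\tau^{(\ell)}$ match what the paper relegates to a remark and handles implicitly.
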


\begin{remark}
Since $\tau^{(\ell)}(\tilde{\rho},\rhoo;\cdot)$ depends only on $\rhoo^{(k)}$ and $\tilde{\rho}^{(k)}$ with $k \in \{ 0,\ldots, \ell-1\}$, but not on $\tilde{\rho}^{(\ell)}$,
the probability measure $\tilde{\rho}^{(\ell)}$ in Theorem~\ref{thm:nfr}, which relies on  $\tau^{(\ell)}(\tilde{\rho},\rhoo;\cdot)$, can be properly defined recursively. The detailed definition can be found in the proof of Theorem~\ref{thm:nfr} in the appendix. 
\end{remark}

\begin{remark}
We note that in Theorem~\ref{thm:nfr}, 
the probability measure $\tilde{\rho}(\cdot)$ corresponds to $P(\cdot)$
in Section~\ref{sec:importance-weighting},
and the density
$d \tilde{\rho}^{(\ell)}/d \rhoo^{(\ell)}$
can be regarded as the importance weighting $p^{(\ell)}(\cdot)$.
\end{remark}

 Theorem \ref{NFR} shows that given $\rho_0$, an NN represented by the pair $(\rho, u)$ induces another representation $(\tilde{\rho}, \tilde{u})$ that reparameterizes the neural network. Moreover, given $\rho_0$ and $\tu$, which determines a function
$\ff(\rho_0,\tu,x)$, 
there can be many equivalent
representations $\ff(\rho,\u;x)$. The following result shows an
equivalence relationship between $(\rho, u)$ and $(\tilde{\rho},\tilde{u})$, which is the inverse of Theorem~\ref{thm:nfr}.
\begin{theorem}
Consider a fixed sequence $\rho_0=\{\rho_0^{(\ell)}\}$, and any $(\tilde{\rho},\tu)$. Assume $\tilde{\rho}^{(\ell)}\sim \rhoo^{(\ell)}$, $\ell\in [L]$.  There exists $({\rho},u)$ such that
\eqref{eq:nfr-f-layer}, \eqref{eq:nfr-f-top}, \eqref{eq:nfr-r-layer}, \eqref{eq:nfr-r-top} hold. 
\label{thm:nfr-inv}
\end{theorem}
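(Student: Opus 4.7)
My plan is to invert the NFR construction by defining $(\rho,u)$ as explicit pullbacks of the given $(\tilde{\rho},\tu)$ along the transformations of Definition~\ref{definition:tau-l-u}, and then invoking Theorem~\ref{thm:nfr} to read off all four identities at once. Since $\tilde{\rho}$ and $\rhoo$ are given, the mapping $\tau^{(\ell)}(\tilde{\rho},\rhoo;\cdot):\cZ{\ell}\to\cZ{\ell}$ is fully determined for each $\ell\in[L]$---by the remark following Theorem~\ref{thm:nfr} it depends only on $\tilde{\rho}^{(k)}$ and $\rhoo^{(k)}$ for $k\leq \ell-1$, so no circularity arises. I define $\rho^{(\ell)}$ to be the pushforward of $\tilde{\rho}^{(\ell)}$ under the inverse map $\tilde{\tau}^{(\ell)}(\rhoo,\tilde{\rho};\cdot)$; equivalently, $\tilde{\rho}^{(\ell)}$ is then the pushforward of $\rho^{(\ell)}$ under $\tau^{(\ell)}(\tilde{\rho},\rhoo;\cdot)$. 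I define $u\in\mathcal{U}$ pointwise by inverting the relation $\tu=\tau^{(u)}(\tilde{\rho},\rhoo;u)$: for every $\zeta\in\cZ{L}$,
\[
u(\zeta)\;=\;\tu\bigl(\tau^{(L)}(\tilde{\rho},\rhoo;\zeta)\bigr)\;\bigg/\;\frac{d\tilde{\rho}^{(L)}}{d\rhoo^{(L)}}\bigl(\tau^{(L)}(\tilde{\rho},\rhoo;\zeta)\bigr).
\]
Because $\tilde{\rho}^{(\ell)}\sim\rhoo^{(\ell)}$ by assumption and the Radon--Nikodym factors entering $\tau^{(\ell)}(\tilde{\rho},\rhoo;\cdot)$ are everywhere strictly positive and finite, $\tau^{(\ell)}(\tilde{\rho},\rhoo;\cdot)$ is a nonsingular bijection on $\cZ{\ell}$, so pulling back across its inverse preserves the equivalence class of $\rhoo^{(\ell)}$; hence $\rho^{(\ell)}\sim\rhoo^{(\ell)}$ for every $\ell$, matching the hypothesis needed by Theorem~\ref{thm:nfr}.

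I then apply Theorem~\ref{thm:nfr} to the constructed $(\rho,u)$, producing some pair $(\tilde{\rho}',\tu')$ satisfying \eqref{eq:nfr-f-layer}--\eqref{eq:nfr-r-top} between $(\rho,u)$ and $(\tilde{\rho}',\tu')$. A layer-by-layer induction in $\ell$ shows $\tilde{\rho}'=\tilde{\rho}$ and $\tu'=\tu$: the base case $\tilde{\rho}'^{(0)}=\rhoo^{(0)}=\tilde{\rho}^{(0)}$ is trivial, and whenever $\tilde{\rho}'^{(k)}=\tilde{\rho}^{(k)}$ for all $k<\ell$ the recursively defined maps $\tau^{(\ell)}(\tilde{\rho}',\rhoo;\cdot)$ and $\tau^{(\ell)}(\tilde{\rho},\rhoo;\cdot)$ coincide, so $\tilde{\rho}'^{(\ell)}$---which Theorem~\ref{thm:nfr} defines as the pushforward of $\rho^{(\ell)}$ under the former---equals $\tilde{\rho}^{(\ell)}$ by the very construction of $\rho^{(\ell)}$; the corresponding argument for the top layer gives $\tu'=\tu$. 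The four identities claimed in Theorem~\ref{thm:nfr-inv} now follow by simply specializing the identities of Theorem~\ref{thm:nfr}.

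The main obstacle is the recursive bookkeeping: because $\tau^{(\ell)}$ is built from $\tau^{(\ell-1)}$ and the density $d\tilde{\rho}^{(\ell-1)}/d\rhoo^{(\ell-1)}$, both the equivalence $\rho^{(\ell)}\sim\rhoo^{(\ell)}$ and the identification $\tilde{\rho}'^{(\ell)}=\tilde{\rho}^{(\ell)}$ must be propagated up the layers in lockstep with the definitions, and at each layer one must verify that the pushforward/pullback operations actually respect the recursive defining formula in Definition~\ref{definition:tau-l-u}. Once this inductive scaffolding is in place, every individual step is essentially a tautology from the inverse construction.
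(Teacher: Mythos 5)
Your proposal is correct and follows essentially the same route as the paper: define $\rho^{(\ell)}$ as the pushforward of $\tilde{\rho}^{(\ell)}$ under the inverse transformation and $u=\tilde{\tau}^{(u)}(\tilde{\rho},\rho_0;\tilde{u})$, then recover the four identities from the computation of Theorem~\ref{thm:nfr}. Your explicit layer-by-layer induction showing that the forward map returns exactly the given $(\tilde{\rho},\tu)$ is a useful elaboration of the step the paper compresses into ``the remaining of the proof is identical to that of Theorem~\ref{thm:nfr}.''
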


Based on the feature repopulated formulation of continuous DNN in Theorem \ref{thm:nfr} and the  equivalence  between $(\rho, u)$ and $(\tilde{\rho}, \tilde{u})$ shown in Theorem \ref{thm:nfr-inv}, we know that learning a continuous DNN by optimizing over $(\rho, u)$ is equivalent to minimizing the following feature repopulated objective function $\tilde{Q}(\tilde{p}, \tilde{u})$ over $(\tilde{\rho},\tilde{u})$:
\begin{align}
    \min_{\tilde{\rho}, \tilde{u}} \tilde{Q}(\tilde{\rho}, \tilde{u}) = J(f(\rho_0, \tilde{u};\cdot)) + \tilde{R}(\tilde{\rho}, \tilde{u}), \label{eqn:tQ}
\end{align}
where 
\begin{align}
\tilde{R}(\tilde{\rho}, \tilde{u}) = \sum_{\ell=1}^{L}\lambda^{(\ell)}\tilde{R}^{(\ell)}(\tilde{\rho}) + \lambda^{(u)} \tilde{R}^{u}(\tilde{\rho}, \tilde{u}),\label{eqn:tR}
\end{align}
when $\rho$ and $\trho$ are restricted to the space $\{\rho: \rho^{(\ell)}\sim \rho^{(\ell)}_0, \ell \in [L]\}$. Here, we should keep in mind that $\rho_0$ is fixed and known.  It follows that the continuous DNN $f(\rho, u; x)$ is equivalent to a linear system $f(\rhoo,\tilde{u};x)$ parameterized by $\tilde{u}$. Thus, (\ref{eqn:tQ}) demonstrates that we can decouple the probability measure sequence $\rho$ from the loss function $J$, and the effect of $\rho$ in the objective function only shows up through the regularizer $\tilde{R}$ after reparameterizing NN using $(\tilde{\rho},\tu)$. This reparameterization significantly simplifies the objective function.

Based on the new formulation (\ref{eqn:tQ}),  given $\tilde{u}$, the quality of the feature distributions $\rho$ depends on the regularizer.  In the next section, we will  discuss the  properties of continuous DNN with specific regularizers. Especially, we will show in Section \ref{ell 1 2} that the $\ell_{2,1}$ norm regularization  leads to efficient distributions over features in terms of representing a given  target  function.

Moreover, our NFR view implies a process to obtain improved feature representations  starting from any $(\rho_0,\tu)$. See Algorithms \ref{alg:continuous} and \ref{alg:discrete} in the experiment section for details.  

\subsection{ Properties of Continuous DNN with Specific Regularizers }\label{sec:global-optimum} 
In the following, we show some consequences of NFR by specifying the regularizers. We will study the class of $\ell_{1,2}$ norm regularizers proposed in Section \ref{subsec:variance-of-discrete-approximization} and the standard $\ell_{p,1}$ norm regularizers ($p\geq 1$)  commonly used in practice. We show that for the $\ell_{1,2}$ norm regularizers, the overall objective under NFR is convex. And for  the class of $\ell_{p,1}$ ($p\geq1$)  norm regularizers,   the minimization problem for $\trho$ when fixing $\tilde{u}$ is also ``nearly'' convex. 
Moreover,  $\ell_{1,2}$ norm regularizers guarantee  learning efficient feature representations for the underlying learning tasks.

\subsubsection{$\ell_{1,2}$ Norm Regularizers}\label{ell 1 2}




We consider the  regularizers defined in (\ref{eqn:R-specific}), where we pick $r_1(\omega)=|\omega|$, $r_2(\omega) = \omega^2$, and $r^{(u)}(u)=\|u\|^2$ with $\omega \in \R$ and $u \in \R^K$ in (\ref{eqn:R-continuous}). We have the following theorem about the structure of the objective function.

\begin{theorem}\label{thm:convexity}
Assume that the loss function $\phi(\cdot, \cdot)$ is convex in the the first argument. Let $r_1(\omega)= | \omega|$, $r_2 (\omega) =   \omega^2$, and $r^{(u)}(u) = \|u \|^2$ with $\omega\in \R$ and $u\in \R^K$, then $\tilde{Q}(\tilde{\rho}, \tilde{u})$ is a convex function of $(\tilde{\rho}, \tilde{u})$, where $(\tilde{\rho}, \tilde{u})$ is induced by $\{(\rho, u): \rho^{(\ell)}\sim \rho_0^{(\ell)}, \ell \in [L]\}$.
\end{theorem}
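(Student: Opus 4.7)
My plan is to exploit the fact that $\rho_0$ is fixed and switch to the density variables $p^{(\ell)}(z) := \frac{d\tilde{\rho}^{(\ell)}}{d\rho_0^{(\ell)}}(z)$ for $\ell\in[L]$, which are strictly positive since the admissible measures satisfy $\tilde{\rho}^{(\ell)}\sim \rho_0^{(\ell)}$. Because $\rho_0$ is fixed, the correspondence $\tilde{\rho}^{(\ell)} \leftrightarrow p^{(\ell)}$ is affine, so convex combinations of admissible measures correspond to convex combinations of densities, and the admissible set is itself convex. It therefore suffices to establish joint convexity of $\tilde{Q}$ viewed as a function of $(p^{(1)},\ldots,p^{(L)},\tilde{u})$.

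For the loss term $J(f(\rho_0,\tilde{u};\cdot))$, from \eqref{eq:nn-output} I have $f(\rho_0,\tilde{u};x) = \int \tilde{u}(z^{(L)}) f^{(L)}(\rho_0,z^{(L)};x)\, d\rho_0^{(L)}(z^{(L)})$, and since $\rho_0$ is fixed the feature $f^{(L)}(\rho_0,\cdot;x)$ is a known function independent of the optimization variables. Hence $\tilde{u}\mapsto f(\rho_0,\tilde{u};x)$ is linear, its composition with the convex $\phi(\cdot,y)$ is convex in $\tilde{u}$, and taking expectation over $(x,y)$ preserves convexity. Moreover $J$ has no dependence on $\tilde{\rho}$, so this piece is jointly convex in $(\tilde{\rho},\tilde{u})$ trivially.

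The heart of the argument is the regularizers, which I will handle by the perspective-function trick. Plugging $r^{(u)}(u)=\|u\|^2$ into \eqref{eq:nfr-r-top} and using $d\tilde{\rho}^{(L)} = p^{(L)}\, d\rho_0^{(L)}$ yields
\begin{equation*}
\tilde{R}^{(u)}(\tilde{\rho},\tilde{u}) = \int \frac{\|\tilde{u}(z^{(L)})\|^2}{p^{(L)}(z^{(L)})}\, d\rho_0^{(L)}(z^{(L)}),
\end{equation*}
whose integrand is the perspective of the convex $u\mapsto \|u\|^2$, hence jointly convex in $(u,p^{(L)})$ for $p^{(L)}>0$; integration against the nonnegative measure $\rho_0^{(L)}$ preserves this. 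Similarly, plugging $r_1(\omega)=|\omega|$ and $r_2(\omega)=\omega^2$ into \eqref{eq:nfr-r-layer} and simplifying gives
\begin{equation*}
\tilde{R}^{(\ell)}(\tilde{\rho}) = \int \frac{\bigl(\int |w(z^{(\ell)},z^{(\ell-1)})|\, p^{(\ell)}(z^{(\ell)})\, d\rho_0^{(\ell)}(z^{(\ell)})\bigr)^2}{p^{(\ell-1)}(z^{(\ell-1)})}\, d\rho_0^{(\ell-1)}(z^{(\ell-1)}).
\end{equation*}
The inner integral is a linear functional of $p^{(\ell)}$, and the outer integrand is the perspective of $A\mapsto A^2$ precomposed with this linear map, hence jointly convex in $(p^{(\ell)},p^{(\ell-1)})$; integration against $\rho_0^{(\ell-1)}$ preserves joint convexity. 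For $\ell=1$ one has $p^{(0)}\equiv 1$ since $\rho^{(0)}=\rho_0^{(0)}$ is fixed by convention, so the argument still applies.

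Summing the loss term and the regularizers with the nonnegative weights $\lambda^{(\ell)}$ and $\lambda^{(u)}$ preserves joint convexity, giving the claim. The main technical care needed is the infinite-dimensional justification of the perspective-convexity fact (that $(a,t)\mapsto a^2/t$ is convex on $\mathbb{R}\times \mathbb{R}_{>0}$, and that this property is preserved under pointwise composition with affine maps and integration against a nonnegative measure) together with the observation that strict positivity of densities is preserved under convex combinations, so the denominators $p^{(\ell-1)}$ remain well-defined along any interpolation. Once these points are in place, convexity is a direct consequence of the NFR formulas in Theorem \ref{thm:nfr} and the fact that, after fixing $\rho_0$, the network output $f(\rho_0,\tilde{u};\cdot)$ is genuinely linear in $\tilde{u}$.
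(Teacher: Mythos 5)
Your proposal is correct and follows essentially the same route as the paper: the paper's proof (of the more general Theorem~\ref{thm:convexity1} in Appendix~\ref{app:41}) also switches to the density variables $p^{(\ell)}=d\tilde{\rho}^{(\ell)}/d\rho_0^{(\ell)}$, decomposes $\tilde{Q}$ into the loss term (linear in $\tilde{u}$ composed with the convex $\phi$) plus the regularizers written as $\|\tilde{u}\|^{2}/p^{(L)}$ and $(\text{linear in }p^{(\ell)})^{2}/p^{(\ell-1)}$, and invokes joint convexity of $(x,y)\mapsto x^{a}/y^{b}$ — i.e.\ exactly your perspective-function argument, stated there as Lemma~\ref{property of convex}.
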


\begin{remark}
It can be shown that for some other choices such as $r_1(\omega)=|\omega|^{o_1}$ and $r_2(\omega)=|\omega|^{o_2}$ for $o_1 \in [o_2^{-1},1]$, 
the resulting regularizer is also convex. More details are given  in Appendix~\ref{app:41}.
\end{remark}

 Although Theorem~\ref{thm:convexity} is stated for a fixed $\rho_0$, we can pick $\rho_0$ to be an arbitrary probability measure sequence. In particular, if we take $\rho_0=\rho$ at the current solution, then we can use NFR to study the local behavior of the objective function around $\rho=\rho_0$.
 Since the NFR reparameterization has one-to-one correspondence with the original parameterization locally, we may conclude that a local solution of NN in the original parameterization at $\rho=\rho_0$ is also a local solution with respect to the NFR reparameterization. Since the objective function is still convex with the NFR reparameterization for this $\rho_0$, we conclude that a local solution of NN in the original parameterization is a global solution.  Note that the argument is also used in the proof of Corollary~\ref{KKT:condition} to derive the KKT conditions of such a local solution.
 We summarize the result informally as follows.
 \begin{corollary}
 Under the assumptions of Theorem~\ref{thm:convexity}. If $(\rho,u)$ is a local minimum of \eqref{eqn:Q-continuous}, and assume that $\rho^{(\ell)} \sim \rho_0^{(\ell)}$ for $\ell \in [L]$, then $(\rho,u)$ achieves the global minimum of \eqref{eqn:Q-continuous}.
 \label{cor:local-global}
 \end{corollary}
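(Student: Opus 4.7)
The plan is to exploit the freedom in choosing the reference sequence $\rho_0$ appearing in the NFR construction. Given a local minimum $(\rho,u)$ of \eqref{eqn:Q-continuous}, I would set $\rho_0 = \rho$. This is legitimate because Theorem~\ref{thm:convexity} holds for an arbitrary fixed $\rho_0$, and the hypothesis $\rho^{(\ell)}\sim\rho_0^{(\ell)}$ is then trivially satisfied. With this choice, all Radon--Nikodym derivatives $d\rho^{(\ell)}/d\rho_0^{(\ell)}$ equal $1$, so the repopulation maps $\tau^{(\ell)}(\tilde\rho,\rho_0;\cdot)$ reduce to the identity at this point. Consequently the image of $(\rho,u)$ under NFR is $(\tilde\rho,\tilde u) = (\rho_0,u)$, with $\tilde{Q}(\rho_0,u)=Q(\rho,u)$ by \eqref{eq:nfr-j}--\eqref{eq:nfr-r-top}.

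Next I would transfer local minimality across the reparameterization. Given any $(\tilde\rho',\tilde u')$ near $(\rho_0,u)$ in a suitable topology on measure sequences, Theorem~\ref{thm:nfr-inv} produces an equivalent $(\rho',u')$ with the same objective value. Since the inverse repopulation maps depend continuously on the densities and reduce to the identity at the base point, $(\rho',u')$ lies in any prescribed neighborhood of $(\rho,u)$ provided $(\tilde\rho',\tilde u')$ is sufficiently close to $(\rho_0,u)$. Local minimality of $(\rho,u)$ in the original parameterization then forces $\tilde{Q}(\tilde\rho',\tilde u') = Q(\rho',u') \geq Q(\rho,u) = \tilde{Q}(\rho_0,u)$, so $(\rho_0,u)$ is a local minimum of $\tilde{Q}$.

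Because $\tilde{Q}$ is convex in $(\tilde\rho,\tilde u)$ by Theorem~\ref{thm:convexity}, any local minimum is a global minimum over the admissible domain $\{(\tilde\rho,\tilde u):\tilde\rho^{(\ell)}\sim\rho_0^{(\ell)}\}$. To conclude, for any candidate $(\rho',u')$ with $\rho'^{(\ell)}\sim\rho_0^{(\ell)}$, I would apply Theorem~\ref{thm:nfr} to obtain its NFR image $(\tilde\rho',\tilde u')$ satisfying $Q(\rho',u')=\tilde{Q}(\tilde\rho',\tilde u')\geq\tilde{Q}(\rho_0,u)=Q(\rho,u)$, which proves global optimality of $(\rho,u)$.

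The main obstacle is the topological step in the middle paragraph: the repopulation maps involve Radon--Nikodym derivatives, which are not continuous in the usual weak topology on probability measures. A fully rigorous proof would need to specify the function-space topology on $(\rho,u)$ with respect to which ``local minimum'' is taken and to verify that the NFR map and its inverse are homeomorphisms in a neighborhood of the identity in that topology. A plausible choice is an $L^\infty$-type topology on the densities $d\rho^{(\ell)}/d\rho_0^{(\ell)}$, which equal $1$ at the base point and therefore admit perturbations bounded away from $0$ and $\infty$, making all quantities in \eqref{eq:nfr-r-layer}--\eqref{eq:nfr-r-top} vary continuously. As the paper explicitly notes, such measure-theoretic details are deferred to future work, and the informal argument above is intended to convey the essential geometric picture that convexity of the NFR reparameterization upgrades local optimality to global optimality.
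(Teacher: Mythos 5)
Your proposal follows essentially the same route as the paper: set $\rho_0=\rho$ at the local minimum so the repopulation maps reduce to the identity, use the local one-to-one correspondence between $(\rho,u)$ and $(\tilde\rho,\tilde u)$ from Theorems~\ref{thm:nfr} and~\ref{thm:nfr-inv} to transfer local minimality to the NFR parameterization, and then invoke convexity of $\tilde Q$ from Theorem~\ref{thm:convexity} to upgrade to global optimality. Your explicit discussion of the topology needed to make ``local minimum'' and the local homeomorphism rigorous is a fair identification of the gap the paper itself leaves open by stating the result informally.
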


 Theorem \ref{thm:convexity} shows that a continuous  NN can be reformulated as a convex model under the NFR re-parameterization. This result is quite unexpected, and it can be used to explain mysterious empirical observations in DNN.  For example, it is known that overparameterized DNNs are easier to optimize. 
This can be explained by Corollary~\ref{cor:local-global}.

Compared to the NTK view, our theory is also more consistent with practice observations. First, in the NTK view,  the convexity holds only  when the variables are restricted in an infinitesimal region.  In contrast, our result can be applied globally. In addition, the NTK view  essentially  treats an NN as a linear model on an infinite dimensional space of random features. The random features are not learned from the underlying task. In contrast, our results can explain that NNs learn useful features for the underlying task when  they are fully trained.  In fact, by using convexity and the NFR technique,  we can establish specific properties satisfied by the optimal solutions of DNNs.
\begin{corollary}\label{KKT:condition}
In the space $\{(\rho, u): \rho^{(\ell)}\sim \rho_0^{(\ell)}, \ell \in [L]\}$, if $(\rho_*, u_*)$  is an optimal solution of the DNN equipped with $\ell_{1,2}$ norm regularizers, then there exists a real number sequence $\{\Lambda\}_{\ell\in[L]}$, i.e.  $\Lambda^{(\ell)} \in\RR$ for all $\ell\in[L]$, so that the following equations hold:
\begin{enumerate}[(i)]
    \item For all $\ell\in[L-1]$ and $\tilde{z}^{(\ell)} \in  \mathcal{Z}^{(\ell)}$, we have
    \begin{align}
        &\lambda^{(\ell+1)}\left(\int\left|w (\tilde{z}^{(\ell+1)}, \tilde{z}^{(\ell)}) \right|  d\rho_*^{(\ell+1)}(\tilde{z}^{(\ell+1)})\right)^2\notag\\
        =&\Lambda^{(\ell)}+  2\lambda^{(\ell)}\int\int\left|w (z^{(\ell)}, \tilde{z}^{(\ell-1)}) \right|  d\rho_*^{(\ell)}(z^{(\ell)}) \left|w (\tilde{z}^{(\ell)}, \tilde{z}^{(\ell-1)})\right|d\rho_*^{(\ell-1)}(\tilde{z}^{(\ell-1)})\notag.
    \end{align}
\item   For all  $\tilde{z}^{(L)} \in  \mathcal{Z}^{(L)}$, we have     
$$ \Lambda^{(L)}+  2\lambda^{(L)}\int \int\left|w (z^{(L)}, \tilde{z}^{(L-1)}) \right|  d\rho_*^{(L)}(z^{(L)}) \left|w (\tilde{z}^{(L)}, \tilde{z}^{(L-1)})\right|d\rho_*^{(L-1)}(\tilde{z}^{(L-1)}) =\lambda^{(u)}\left\|u_* (\tilde{z}^{(L)})\right\|^2, $$
 and
 $$  \EE_{x,y} \left[J'(f(\rho_*, u_*;x)) f^{(L)}(\rho_*, \tilde{z}^{(L)};x)\right] =-2\lambda^{(u)} u_*(\tilde{z}^{(L)}). $$
\end{enumerate}
\end{corollary}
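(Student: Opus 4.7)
}
The plan is to derive the KKT conditions by choosing $\rho_0$ to equal the optimum $\rho_*$ itself, and then taking first variations of the (jointly convex) repopulated objective $\tilde Q(\tilde\rho,\tilde u)$ in the sense of \eqref{eqn:tQ}--\eqref{eqn:tR}. The key observation is that when $\rho_0=\rho_*$ the NFR map $\tau^{(\ell)}(\tilde\rho,\rho_0;\cdot)$ and $\tau^{(u)}$ reduce to the identity at the optimum, so the reparameterized optimum is simply $\tilde\rho^{(\ell)}=\rho_*^{(\ell)}$, $\tilde u=u_*$, and all Radon--Nikodym derivatives $d\rho_0^{(\ell)}/d\tilde\rho^{(\ell)}$ evaluate to $1$ there. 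By Corollary~\ref{cor:local-global} (or by the convexity statement of Theorem~\ref{thm:convexity}) this is a global minimum of $\tilde Q$ in the class $\{\tilde\rho^{(\ell)}\sim\rho_0^{(\ell)}\}$, so the KKT conditions are both necessary and sufficient.

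The second step is to set up the constrained variational problem. Parameterize $\tilde\rho^{(\ell)}$ by its density $p^{(\ell)}:=d\tilde\rho^{(\ell)}/d\rho_0^{(\ell)}>0$, subject to $\int p^{(\ell)}\,d\rho_0^{(\ell)}=1$. Introduce Lagrange multipliers $\Lambda^{(\ell)}\in\RR$ for the $L$ probability constraints. Using the $\ell_{1,2}$ choices $r_1(\omega)=|\omega|$, $r_2(\omega)=\omega^2$, $r^{(u)}(u)=\|u\|^2$, and substituting $d\tilde\rho^{(\ell)}=p^{(\ell)}d\rho_0^{(\ell)}$, each $\tilde R^{(\ell)}$ becomes
\[
\tilde R^{(\ell)}(\tilde\rho)=\int \frac{1}{p^{(\ell-1)}}\Big(\!\int |w(\tilde z^{(\ell)},\tilde z^{(\ell-1)})|\,p^{(\ell)}(\tilde z^{(\ell)})\,d\rho_0^{(\ell)}(\tilde z^{(\ell)})\Big)^{\!2} d\rho_0^{(\ell-1)}(\tilde z^{(\ell-1)}),
\]
and $\tilde R^{(u)}=\int \|\tilde u\|^2/p^{(L)}\,d\rho_0^{(L)}$. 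I would then take the Gateaux derivative of $\tilde Q$ with respect to $p^{(\ell)}(\tilde z^{(\ell)})$, noting that $p^{(\ell)}$ appears quadratically (through $(\int |w|p^{(\ell)}\,d\rho_0)^2$) in $\tilde R^{(\ell)}$ and reciprocally (through $1/p^{(\ell)}$) in $\tilde R^{(\ell+1)}$, and only reciprocally in $\tilde R^{(u)}$ when $\ell=L$.

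Evaluating these variations at $p^{(\ell)}\equiv 1$ (i.e.\ $\tilde\rho=\rho_*$) yields, for $\ell\in[L-1]$,
\[
2\lambda^{(\ell)}\!\!\int\!\!\int|w(z^{(\ell)},\tilde z^{(\ell-1)})|d\rho_*^{(\ell)}(z^{(\ell)})\,|w(\tilde z^{(\ell)},\tilde z^{(\ell-1)})|d\rho_*^{(\ell-1)}\;-\;\lambda^{(\ell+1)}\Big(\!\int|w(\tilde z^{(\ell+1)},\tilde z^{(\ell)})|d\rho_*^{(\ell+1)}\Big)^{\!2}+\Lambda^{(\ell)}=0,
\]
which after rearrangement is exactly part (i). For $\ell=L$ the same procedure, now picking up the extra $-\lambda^{(u)}\|u_*(\tilde z^{(L)})\|^2$ coming from $\tilde R^{(u)}$, yields the first equation of part (ii) with the multiplier $\Lambda^{(L)}$. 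Finally, taking the unconstrained variation of $\tilde Q$ with respect to $\tilde u(\tilde z^{(L)})$ at $\tilde u=u_*$ gives $\EE_{x,y}[J'(f(\rho_*,u_*;x))f^{(L)}(\rho_*,\tilde z^{(L)};x)]+2\lambda^{(u)}u_*(\tilde z^{(L)})=0$, which is the second equation of (ii).

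The main obstacle I anticipate is bookkeeping the variations cleanly: each $p^{(\ell)}$ appears in \emph{two} regularizers (in $\tilde R^{(\ell)}$ through the squared inner integral and in $\tilde R^{(\ell+1)}$ through the $1/p^{(\ell)}$ prefactor), with opposite signs, and the resulting KKT condition in part (i) reflects exactly this cancellation pattern. One must also justify evaluating $d\rho_0/d\tilde\rho=1$ \emph{after} differentiation (not before), and verify that the admissible variations $\delta p^{(\ell)}$ with $\int \delta p^{(\ell)}d\rho_0^{(\ell)}=0$ are rich enough, so the Euler--Lagrange identities hold pointwise $\rho_*^{(\ell)}$-a.e.\ rather than only in an integrated sense. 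The sign convention of $\Lambda^{(\ell)}$ (absorbing a factor and moving the $\lambda^{(\ell+1)}$ term to the left) is chosen to match the statement.
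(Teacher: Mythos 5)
Your proposal is correct and follows essentially the same route as the paper: set $\rho_0=\rho_*$ so that the repopulated optimum is $p^{(\ell)}\equiv 1$, $\tilde u=u_*$, write the $\ell_{1,2}$ regularizers in terms of the densities $p^{(\ell)}$, and read off the stationarity (KKT) conditions with scalar multipliers $\Lambda^{(\ell)}$ for the normalization constraints, evaluated at $p^{(\ell)}\equiv 1$. The only cosmetic difference is that the paper also carries multipliers for the constraints $p^{(\ell)}\ge 0$ and then discards them by complementary slackness, which your assumption $p^{(\ell)}>0$ handles implicitly.
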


The equations in  Corollary \ref{KKT:condition} will be validated in our experiments. They imply that the consequences of the NFR theory are consistent with empirical observations. 

Corollary \ref{KKT:condition} shows that the optimal feature distribution sequence $\rho_*$ relies on $u_*$, where $f(\rho_*,u_*;x)$ represents the target function, and it can be rewritten as $f(\rhoo,\tu;\cdot)$ with a fixed $\rho_0$.    In fact, given the desired target function $f(\rhoo,\tu;\cdot)=f(\rho_*, u_*;\cdot)$,  there can be many equivalent representations $f(\rho,u;\cdot)$ indexed by $\rho$ under NFR (refer to Section \ref{sec:importance-weighting}).
The optimal $\rho_*$ achieves the minimum $\ell_{1,2}$ norm regularization value under this equivalent class of functions
that achieve the same outputs as $f(\rhoo,\tu;\cdot)$. 
Since the  $\ell_{1,2}$ norm regularization upper bounds the variance of discrete approximation of the continuous DNN in Theorem~\ref{var}, a small $\ell_{1,2}$ norm implies that a small number of hidden units are needed to represent $f(\rho_*, u_*;\cdot)$ in the randomly sampled discrete DNN.
This means that $\ell_{1,2}$ norm regularization leads to efficient feature representations. This result generalizes a corresponding result for two-level NNs in \citep{fang2019over}.

\subsubsection{$\ell_{p,1}$ Norm Regularizers}
We also propose some results for the commonly-used $\ell_{p,1}$ ($p\geq 1$) norm regularizers. This type of regularizers can be written in \eqref{eqn:R-continuous} 
by picking $r_1(\omega) = |\omega|^{q^{(\ell)}}$,  $r_2 (\omega) =   |\omega|$, and $r^{(u)}(u) = \|u \|^{q^{(u)}}$.  We have the property below: 

\begin{theorem}\label{theorem:global-optimum}
Given $\tilde{u}$,  suppose $q^{(\ell)}\geq1$ with $\ell\in [ L]$ and $q^{(u)}\geq1$. Then in the space $\{(\rho, u): \rho^{(\ell)}\sim \rho_0^{(\ell)}, \ell \in [L]\}$, if $\tilde{\rho}_*$ is a local solution of
\begin{equation}\label{tilderho}
\min_{\tilde{\rho}} \tilde{Q}(\tilde{\rho}, \tu)   
\end{equation}
 then $Q(\tilde{\rho}_*, \tu ) = \min_{\trho} Q(\tilde{\rho}, \tu )$.
\end{theorem}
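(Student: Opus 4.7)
Since $\tu$ is held fixed, the data-fit term $J(f(\rho_0,\tu;\cdot))$ in $\tilde{Q}(\tilde{\rho},\tu)=J(f(\rho_0,\tu;\cdot))+\tilde{R}(\tilde{\rho},\tu)$ is a constant independent of $\tilde{\rho}$, so it suffices to show that every local minimizer of $\tilde{R}(\cdot,\tu)$ over $\{\tilde{\rho}:\tilde{\rho}^{(\ell)}\sim\rho_0^{(\ell)}\}$ is a global minimizer. I would first parametrize each $\tilde{\rho}^{(\ell)}$ by its Radon--Nikodym density $p^{(\ell)}:=d\tilde{\rho}^{(\ell)}/d\rho_0^{(\ell)}$ and substitute $r_1(\omega)=|\omega|^{q^{(\ell)}}$, $r_2(\omega)=|\omega|$, $r^{(u)}(u)=\|u\|^{q^{(u)}}$ into \eqref{eq:nfr-r-layer}--\eqref{eq:nfr-r-top} to obtain the explicit form
\[
\tilde{R}(\tilde{\rho},\tu)=\sum_{\ell=1}^L\lambda^{(\ell)}\iint|w|^{q^{(\ell)}}(p^{(\ell-1)})^{1-q^{(\ell)}}p^{(\ell)}\,d\rho_0^{(\ell-1)}\,d\rho_0^{(\ell)}+\lambda^{(u)}\int\|\tu\|^{q^{(u)}}(p^{(L)})^{1-q^{(u)}}\,d\rho_0^{(L)}.
\]

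The key structural observation is layerwise convexity: with all $p^{(k)}$ for $k\neq\ell$ held fixed, $\tilde{R}(\cdot,\tu)$ as a function of $p^{(\ell)}$ alone takes the form $\int[\alpha^{(\ell)}(\tz{\ell})p^{(\ell)}(\tz{\ell})+\beta^{(\ell)}(\tz{\ell})(p^{(\ell)}(\tz{\ell}))^{1-q^{(\ell+1)}}]\,d\rho_0^{(\ell)}(\tz{\ell})$ for non-negative coefficients $\alpha^{(\ell)},\beta^{(\ell)}$ (with the convention $q^{(L+1)}:=q^{(u)}$ and $\beta^{(L)}=\|\tu\|^{q^{(u)}}$ at $\ell=L$). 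Since $q^{(\ell+1)}\geq 1$ makes $x\mapsto x^{1-q^{(\ell+1)}}$ convex on $(0,\infty)$, the subproblem under the linear normalization constraint $\int p^{(\ell)}\,d\rho_0^{(\ell)}=1$ is convex. Hence at any interior local minimizer $\tilde{\rho}_*$ the layerwise KKT conditions
\[
\lambda^{(\ell)}A_*^{(\ell)}(\tz{\ell})-\lambda^{(\ell+1)}(q^{(\ell+1)}-1)B_*^{(\ell+1)}(\tz{\ell})\,p_*^{(\ell)}(\tz{\ell})^{-q^{(\ell+1)}}=\mu^{(\ell)}\qquad(\text{a.e.\ }\tz{\ell})
\]
hold for every $\ell\in[L]$, where $A_*^{(\ell)},B_*^{(\ell+1)}$ are the coefficient functionals of $p_*^{(\ell-1)},p_*^{(\ell+1)}$ respectively and $\mu^{(\ell)}$ are Lagrange multipliers.

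To lift layerwise stationarity to joint global optimality, my plan is to reapply NFR with the base measure reset to $\rho_0':=\tilde{\rho}_*$ (so that $\tilde{\rho}_*$ corresponds to the constant density $p'\equiv 1$, much as in the proof of Corollary~\ref{cor:local-global}), and then to construct a variational lower bound for $\tilde{R}$ by applying a pointwise Young/AM--GM inequality to each matched pair $\lambda^{(\ell)}\alpha^{(\ell)}p^{(\ell)}+\lambda^{(\ell+1)}\beta^{(\ell+1)}(p^{(\ell)})^{1-q^{(\ell+1)}}$, chosen so that simultaneous equality across all layers coincides exactly with the KKT system above; attainment of this bound at $\tilde{\rho}_*$ would then identify $\tilde{R}(\tilde{\rho}_*,\tu)$ as the global minimum. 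The main obstacle is that $\tilde{R}$ is not jointly convex in $\{p^{(\ell)}\}$: the cross term $|w|^{q^{(\ell)}}(p^{(\ell-1)})^{1-q^{(\ell)}}p^{(\ell)}$ has $2\times 2$ Hessian with determinant $-(q^{(\ell)}-1)^2|w|^{2q^{(\ell)}}(p^{(\ell-1)})^{-2q^{(\ell)}}$, strictly negative whenever $q^{(\ell)}>1$, so the perspective-function argument that yielded joint convexity in the $\ell_{1,2}$ case of Theorem~\ref{thm:convexity} is unavailable; the crux is therefore to engineer a variational bound simultaneously tight at all layerwise KKT fixed points and consistent along the chain of layers, which is precisely why the statement reads ``local is global'' rather than asserting full convexity.
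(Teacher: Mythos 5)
Your setup is correct and you have correctly diagnosed the central difficulty: after substituting the densities $p^{(\ell)}=d\tilde\rho^{(\ell)}/d\rho_0^{(\ell)}$, the regularizer is convex in each $p^{(\ell)}$ separately but not jointly convex, so layerwise KKT stationarity (blockwise optimality) does not by itself yield global optimality. However, your proposal stops exactly at that point: the ``variational Young/AM--GM lower bound simultaneously tight at all layerwise KKT fixed points and consistent along the chain of layers'' is announced but never constructed, and it is precisely the missing ingredient. As written, what you have established is only that a local minimizer is blockwise optimal, which is strictly weaker than the theorem.

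The paper closes this gap not with a variational bound but with a monomial change of variables: set $t^{(\ell)}=(p^{(\ell)})^{1/b_\ell}$ with $b_1=1$ and $b_{\ell+1}=b_\ell(q^{(\ell)}-1)+1$. Each cross term then becomes $|w|^{q^{(\ell)}}\,(t^{(\ell)})^{a}/(t^{(\ell-1)})^{a-1}$ with $a=b_{\ell-1}(q^{(\ell)}-1)+1$, and $x^a/y^{a-1}$ is \emph{jointly} convex (this is exactly the exponent balance $b=a-1$ in the paper's Lemma~\ref{property of convex}); the whole objective becomes jointly convex in $t=\{t^{(\ell)}\}$. Two further points you would still need even on your own route, and which the paper handles explicitly: (i) the normalization constraint $\int (t^{(\ell)})^{b_\ell}d\rho_0^{(\ell)}=1$ is a nonconvex equality constraint, so it must be relaxed to $\le 1$ and one must verify the relaxed problem is still solved at $t_*$; (ii) this requires showing the corresponding multipliers $\Lambda_2^{(\ell)}$ are nonnegative, which the paper does by evaluating the stationarity condition at the zero-weight node $z^{(\ell)}=0$. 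Your sketch addresses neither. I would recommend either carrying out the AM--GM construction in full (its feasibility is not obvious to me) or adopting the change-of-variables argument.
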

Theorem \ref{theorem:global-optimum} shows that given $\tilde{u}$,  minimization of  $\tilde{Q}$ over $\tilde{\rho}$ behaves like ``convex optimization'',  in which any   local solution of  $\tilde{\rho}_*$ is a global solution  that achieves   minimum value of $\tilde{Q}(\cdot,\tu)$. Further remarks are discussed below:
\begin{enumerate}
    \item From Theorem \ref{theorem:global-optimum}, we know that given  $\tilde{u}$,  solving $\tilde{\rho}$ is relatively simple.
    This means  given a target output function $f(\rho_0, \tu;\cdot)$, it is efficient to learn the desired distributions over features under the  $\ell_{p,1}$ norm regularization condition.   
    
    \item  We  note that in the objective $\tilde{Q}$, the loss function   $J(f(\rho_0, \tu;\cdot))$  is convex.   In real applications, the loss function value usually  dominates the regularizers because one needs to choose small regularization parameters $\lambda^{(\ell)}\approx 0$.      In such case,  the objective function is nearly convex and therefore all local minima have loss function values close to that of the global minimum. 
    This explains the empirical observation that for overparameterized NNs, there are no ``bad'' local minima when the networks are fully trained until convergence.
    \item  Theorem \ref{theorem:global-optimum} also indicates that the optimization problem of DNN, when equipped with $\ell_{p,1}$, involves special structures. Therefore solving this class of nonconvex optimization problems is potentially much easier than minimizing a general nonconvex function. A more careful analysis of this observation will be left as a future research direction. 
\end{enumerate}



\section{Experiments}
\label{sec:experiments}

The experiments are designed to qualitatively verify the following.
\begin{enumerate}
\item \emph{Optimality condition}: We demonstrate that fully trained overparameterized DNNs are consistent with the NFR theory by verifying  the optimality condition in Corollary \ref{KKT:condition}. Here we consider the relationship between
\begin{equation}u^{(\ell)}_j=\frac{\lambda^{(\ell)}}{m^{(\ell)} m^{(\ell-1)}}\sum_{k=1}^{m^{(\ell-1)}}\big(\sum_{j'=1}^{m^{(\ell)}}|w_{k,j'}^{(\ell)}|\big)|w_{k,j}^{(\ell)}|\label{exp:u}
\end{equation}
and
\begin{equation}v^{(\ell)}_j=\left\{
\begin{array}{lcl}
\lambda^{(\ell+1)}\big(\frac{1}{m^{(\ell+1)}}\sum_{i=1}^{m^{(\ell+1)}}{|w_{j,i}^{(\ell+1)}|}\big)^2  &      & \ell \in [L-1]\\
\lambda^{(u)} \|u_j\|^2 & & \ell=L
\end{array} \right.\label{exp:v}
\end{equation}
 for one neuron $j$ in layer $\ell \in [L]$, which are the estimates of  \[
 \lambda^{(\ell)}\int \left[\int |w(\tilde{z}^{(\ell)},z^{(\ell-1)})|d\rho(\tilde{z}^{\ell})\right] |w(z^{(\ell)},z^{(\ell-1)})| d\rho(z^{(\ell-1)})
 \]
 and
 \[
 \left\{
 \begin{array}{lcl}
 \lambda^{(\ell+1)}\big( \int |w(z^{(\ell+1)},z^{(\ell)})|d\rho(z^{(\ell+1)})\big)^2 & & \ell \in [L-1]\\
\lambda^{(u)}\big\| u(z^{(L)})\big\|^2
& &
\ell = L
\end{array}
\right.
\]
 respectively.
\item \emph{Deep versus Shallow Networks}: We show that by increasing $L$, the number of hidden layers, fully connected NN can learn hierarchical feature representations that can reduce the variance of approximation described in Theorem~\ref{var}. This verifies the benefit of using deeper networks for certain problems.  
\item\emph{Compactness}: We show that compared with other regularizers, the proposed regularizer can learn better (more compact) feature representations.
\item\emph{NFR process}: We show that a discrete neural feature repopulation algorithm motivated by our theory can effectively reduce the training loss, and especially the regularizer. This leads to faster convergence to better feature representations. 
\end{enumerate}

Note that similar to \citep{fang2019over},
we use the approximation variance of discretization $V(w,u)$ to measure the effectiveness of feature representation,
based on the theoretical findings of Theorem~\ref{var}:
\begin{align}
V(w, u)=&\E_x \sum_{\ell=2}^{L} \frac{1}{(m^{(\ell-1)})^2}
\sum_{j=1}^{m^{(\ell-1)}}  \left(
  \sum_{i=1}^{m^{(\ell)}} a_i (x) \big(\hat{f}^{(\ell-1)}_j(x)
  w_{i,j}^{(\ell)} -\hat{g}^{(\ell)}_i(x)\big)\right)^2 \nonumber \\
 & ~~~~+ \frac{1}{(m^{(L)})^2}\sum_{j=1}^{m^{(L)}} \left\|u_j\hat{f}^{(L)}_j(x)-\hat{f}(x)\right\|^2,\nonumber
 \end{align}
where $a_i(x) = \frac{\partial \hat{f}(x)}{\partial \hat{g}^{(\ell)}_i(x)}$.
 
 \subsection{Neural Feature Repopulation Algorithm}
\label{sec:algo}
We propose a new optimization process inspired by  our NFR view to verify its effectiveness. This process is complementary to the standard SGD procedure and can be used to accelerate the learning of feature distributions.

\begin{algorithm}  
  \caption{Continuous Optimization Procedure}   
  \begin{algorithmic}[1]  
      \STATE {\bfseries Initialization:}  $\tilde{\rho}_1^{(\ell)}=\rho_0^{(\ell)}$, $\ell=0,\ldots, L$
      \FOR{$t=1$ to T}
      \STATE \label{line33} Update $\tilde{\u}_t$ by fixing $\tilde{\rho}_t$ and solving
	\begin{align}
	\tilde{\u}_{t+1} = \arg \min_{\tilde{\u}} J(f(\rho_0, \tilde{\u};\cdot) )
	+ \tilde{R}(\tilde{\rho}_t,\tilde{\u})\nonumber
	\end{align}
	\STATE Update $\tilde{\rho}_t$ by fixing $\tilde{\u}_{t+1}$ and solving
	\begin{align}
	\tilde{\rho}_{t+1} =\arg\min_{\tilde{\rho}} \tilde{R}(\tilde{\rho},\tilde{\u}_{t+1})\nonumber
	\end{align}
      \ENDFOR
  \STATE {\bfseries Return:} $\tilde{\rho}_{T+1}$ and $\tilde{u}_{T+1}$
  \end{algorithmic}  
  \label{alg:continuous}
\end{algorithm}  

We first present our procedure for the continuous DNN in Algorithm \ref{alg:continuous}, in which  we alternatively fix  either $\tilde{\rho}$ or $\tilde{u}$ and update the other to minimize the objective function. Due to our feature repopulation procedure, the loss $J(f(\rho_0, \tilde{u};\cdot))$ would be a constant when $\tilde{u}$ is fixed. Therefore, we only need to minimize the regularizer $\tilde{R}$ when we update $\tilde{\rho}$ (see line 4).  Such process explicitly improves the quality of features in terms of efficient representation. 
Algorithm \ref{alg:discrete} is the discrete version\footnote{A more detailed discrete procedure is shown in Algorithm \ref{alo:alo} in Appendix \ref{sec:alo}. } of Algorithm \ref{alg:continuous}. We combine it with SGD in Line \ref{line33}.

\begin{algorithm}
  \caption{Discrete Optimization Procedure}
  \begin{algorithmic}[1]  
      \STATE {\bfseries Initialization:}  Start with initial $(\hat{w}_0,\hat{u}_0)$
      \FOR{$t=1$ to T}
      \STATE Run multiple steps of SGD to obtain updated $(\tilde{w}_t,\tilde{u}_t)$ from $(\hat{w}_{t-1},\hat{u}_{t-1})$
      \STATE Find $\hat{p}$ to optimize
      \begin{align*}
      \hat{p}=\arg\min_p \hat{R}(p;\tilde{w}_t,\tilde{u}_t)
      \end{align*}
      with $\hat{R}$ defined in \eqref{eq:reg-im}
      \STATE  Set initial weights as $(\hat{w}_t,\hat{u}_t)$
      \FOR{$\ell=1$ to $L$}
      \STATE Resample the hidden layer $\ell$ for $m^{(\ell)}$ nodes according to $\hat{p}^{(\ell)}$. 
      \STATE Duplicate weights connected to each node before sampling to form the updated weights after sampling.
      \ENDFOR
      \ENDFOR
      \STATE {\bfseries Return:} $\hat{w}_{T}$ and $\hat{u}_{T}$
  \end{algorithmic}  
  \label{alg:discrete}
\end{algorithm}  

\begin{figure*}[htb!]
    \begin{center}
        \subfigure[synthetic, $\ell=1$]{\includegraphics[scale=0.24]{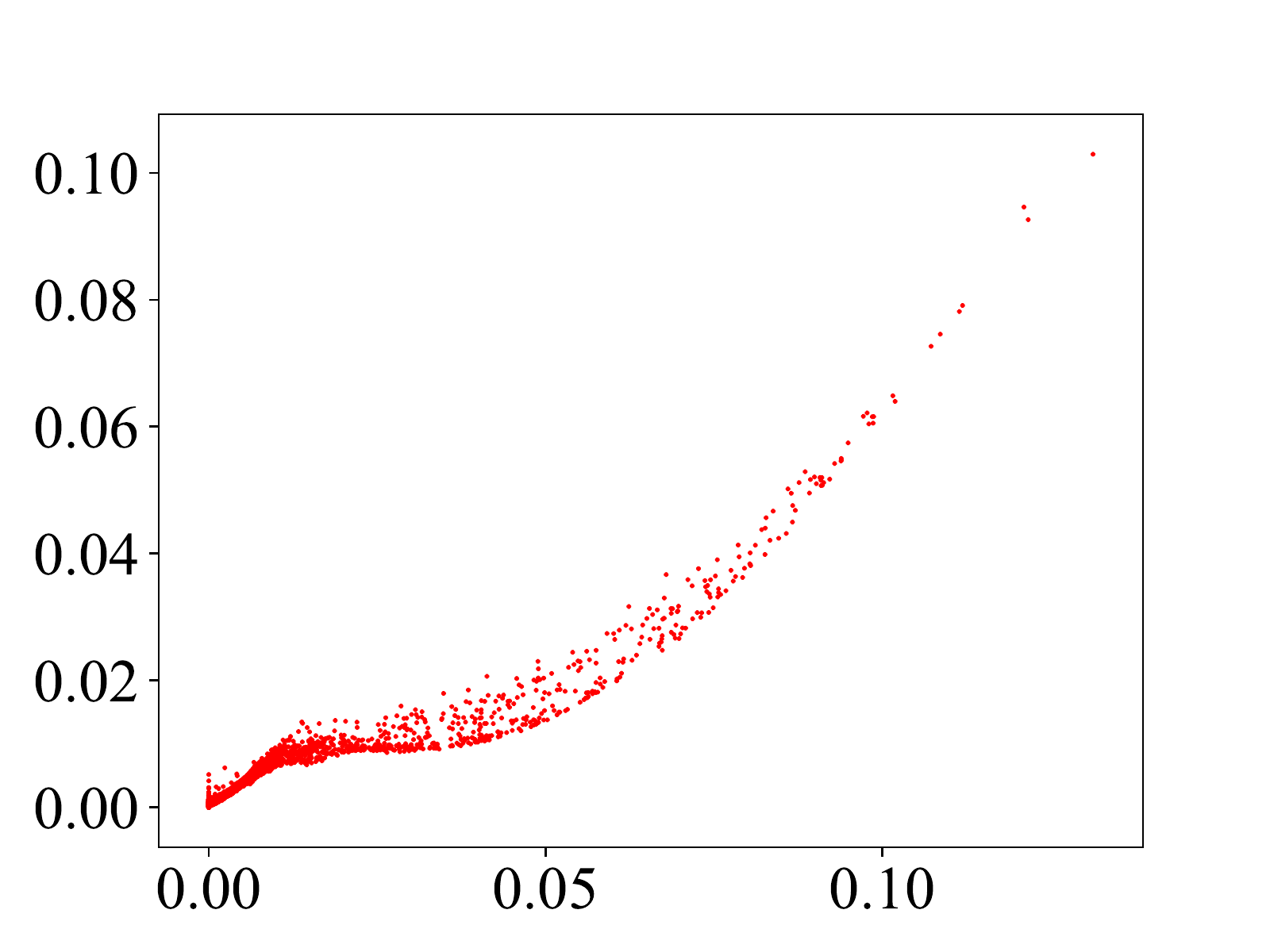}}
        \subfigure[synthetic, $\ell=2$]{\includegraphics[scale=0.24]{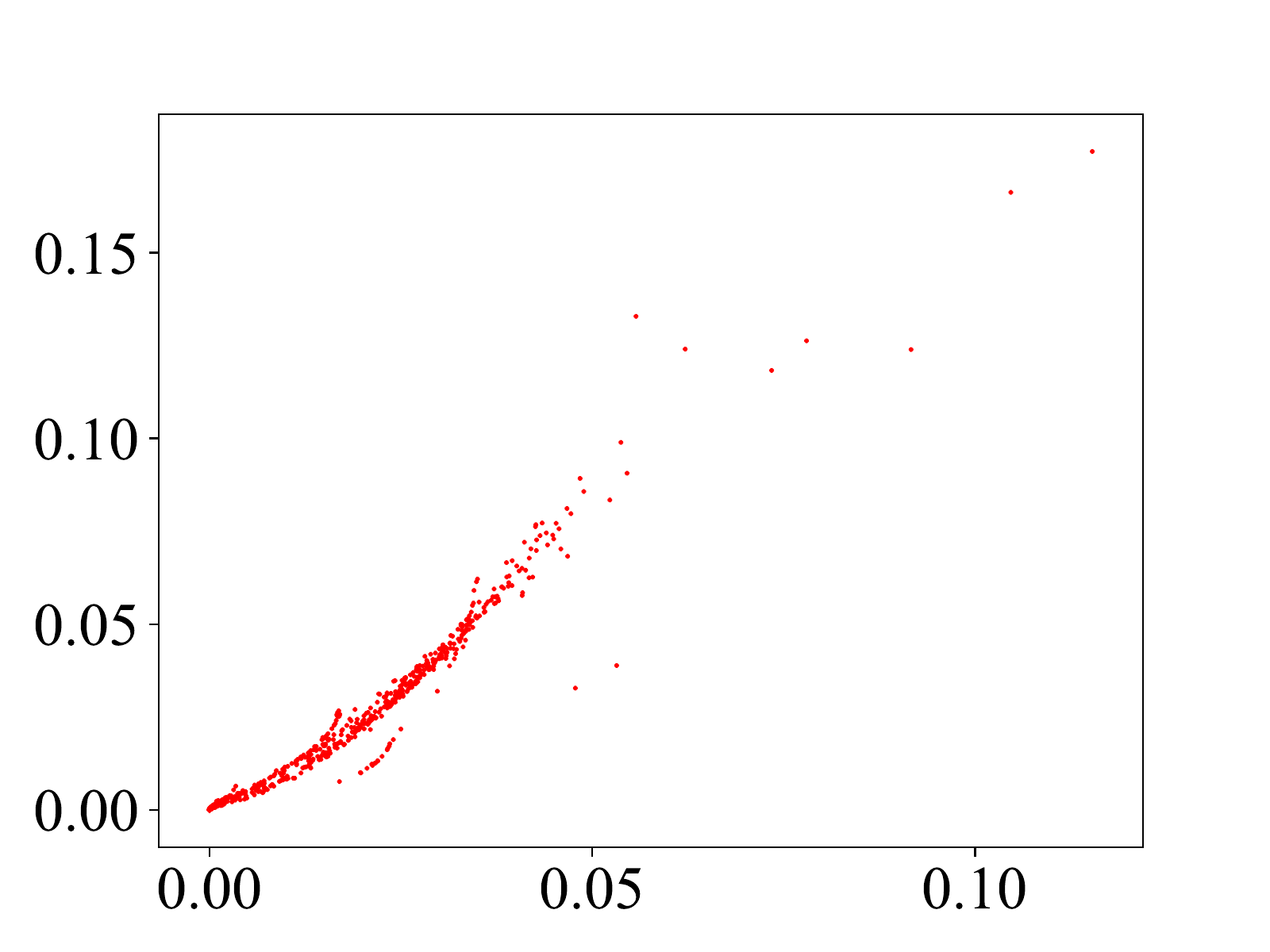}}
        \subfigure[synthetic, $\ell=3$]{\includegraphics[scale=0.24]{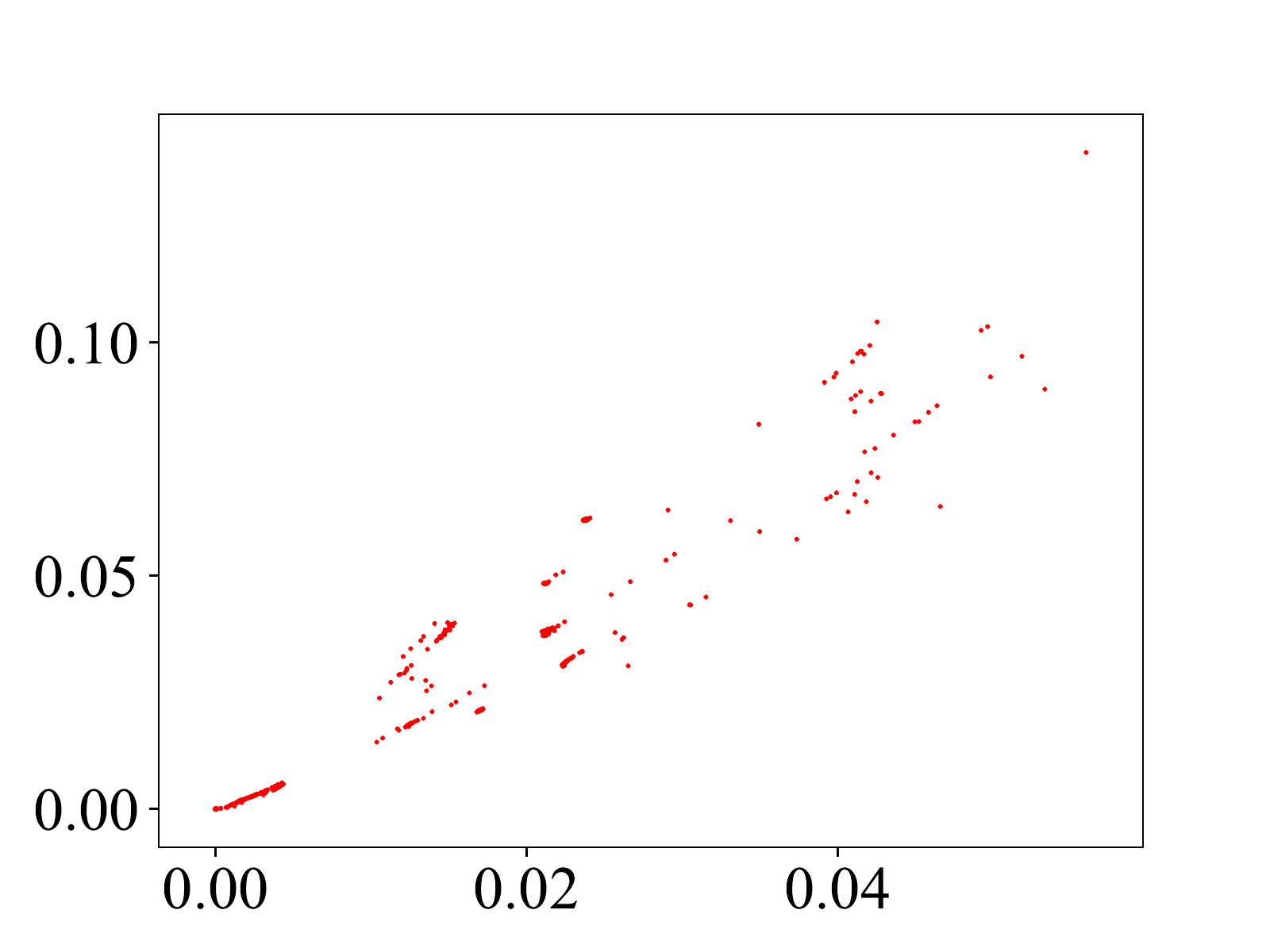}}
        \subfigure[synthetic, $\ell=4$]{\includegraphics[scale=0.24]{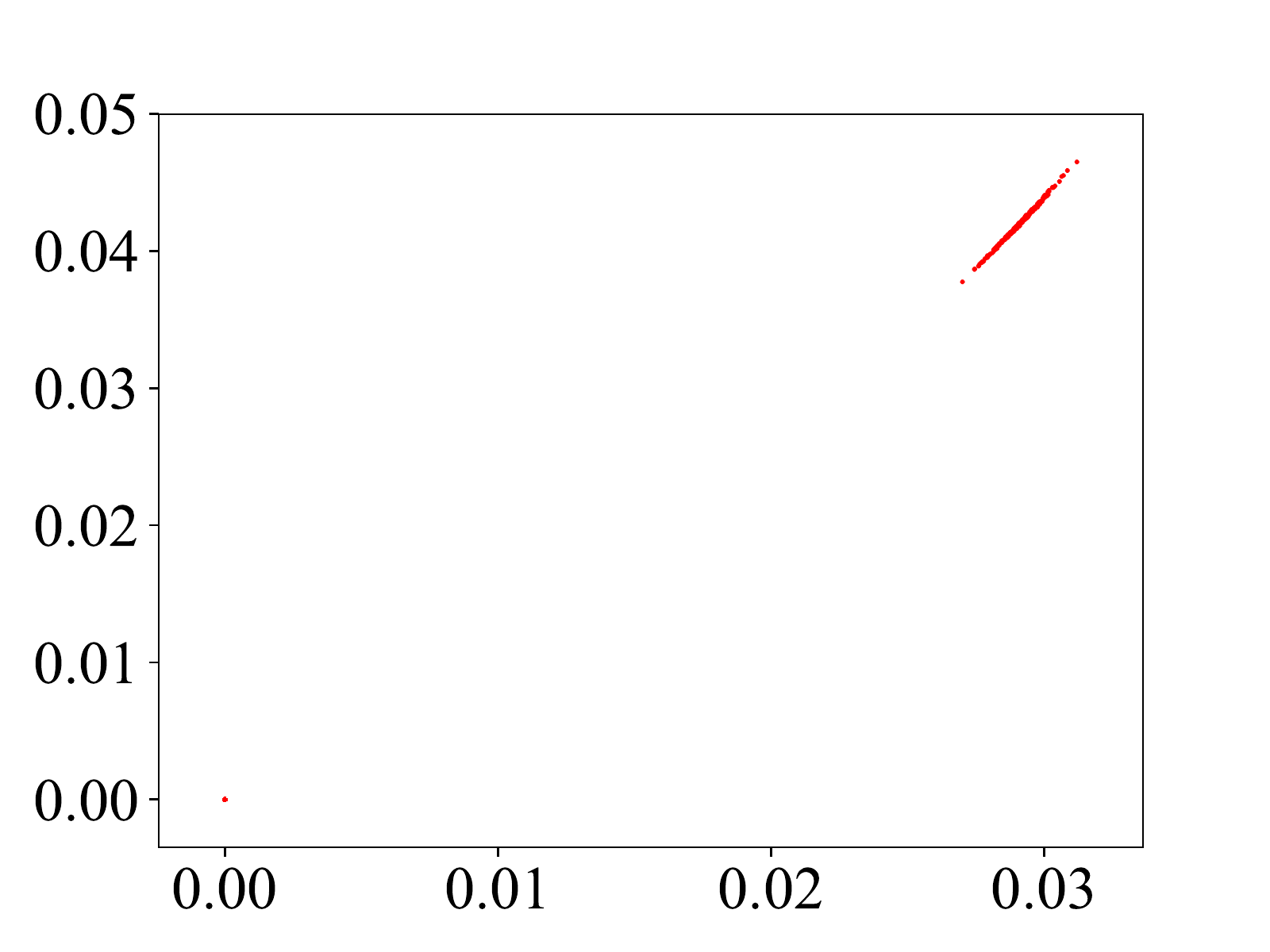}}
        \subfigure[mini-imagenet, $\ell=1$]{\includegraphics[scale=0.24]{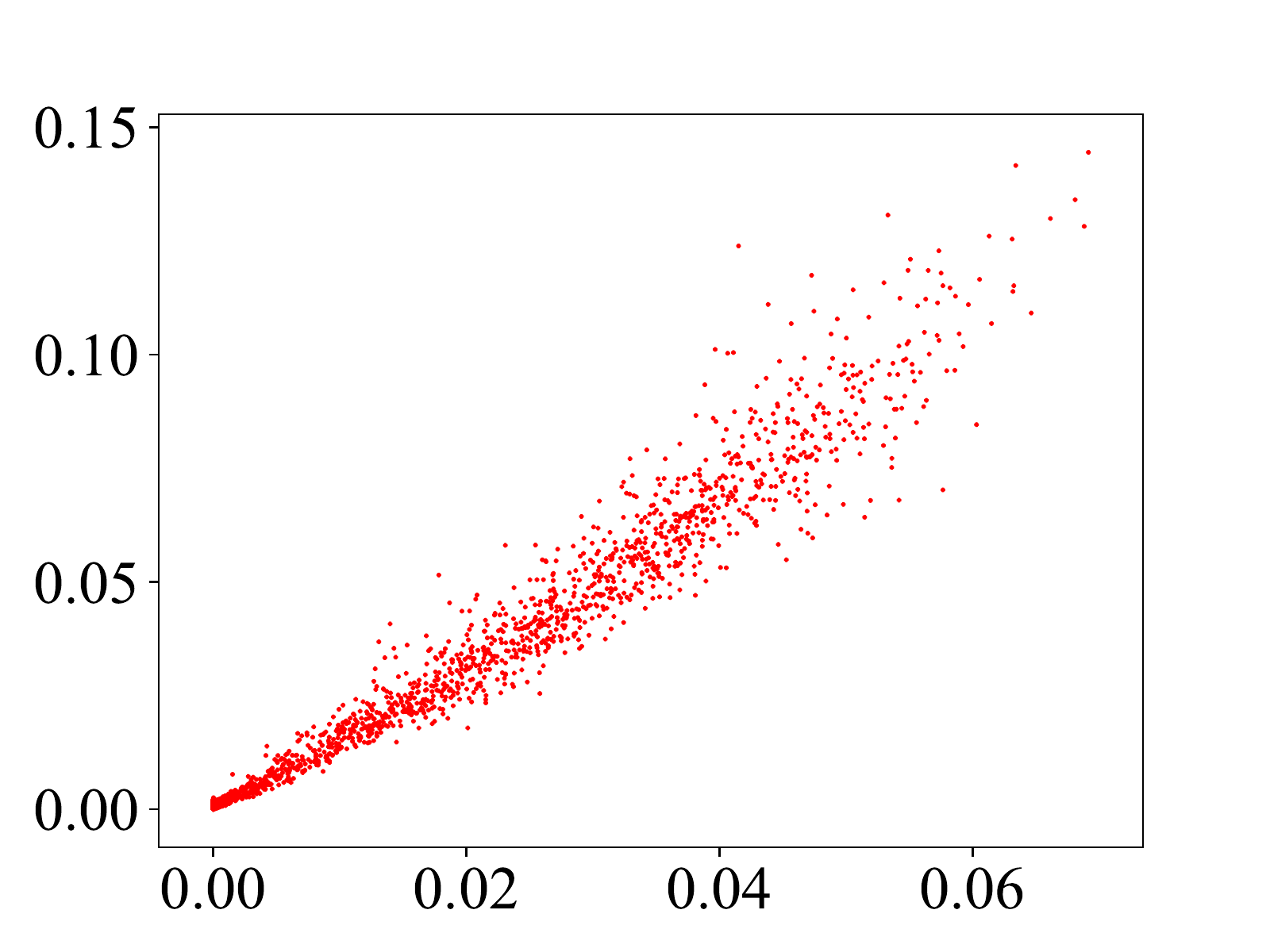}}
        \subfigure[mini-imagenet, $\ell=2$]{\includegraphics[scale=0.24]{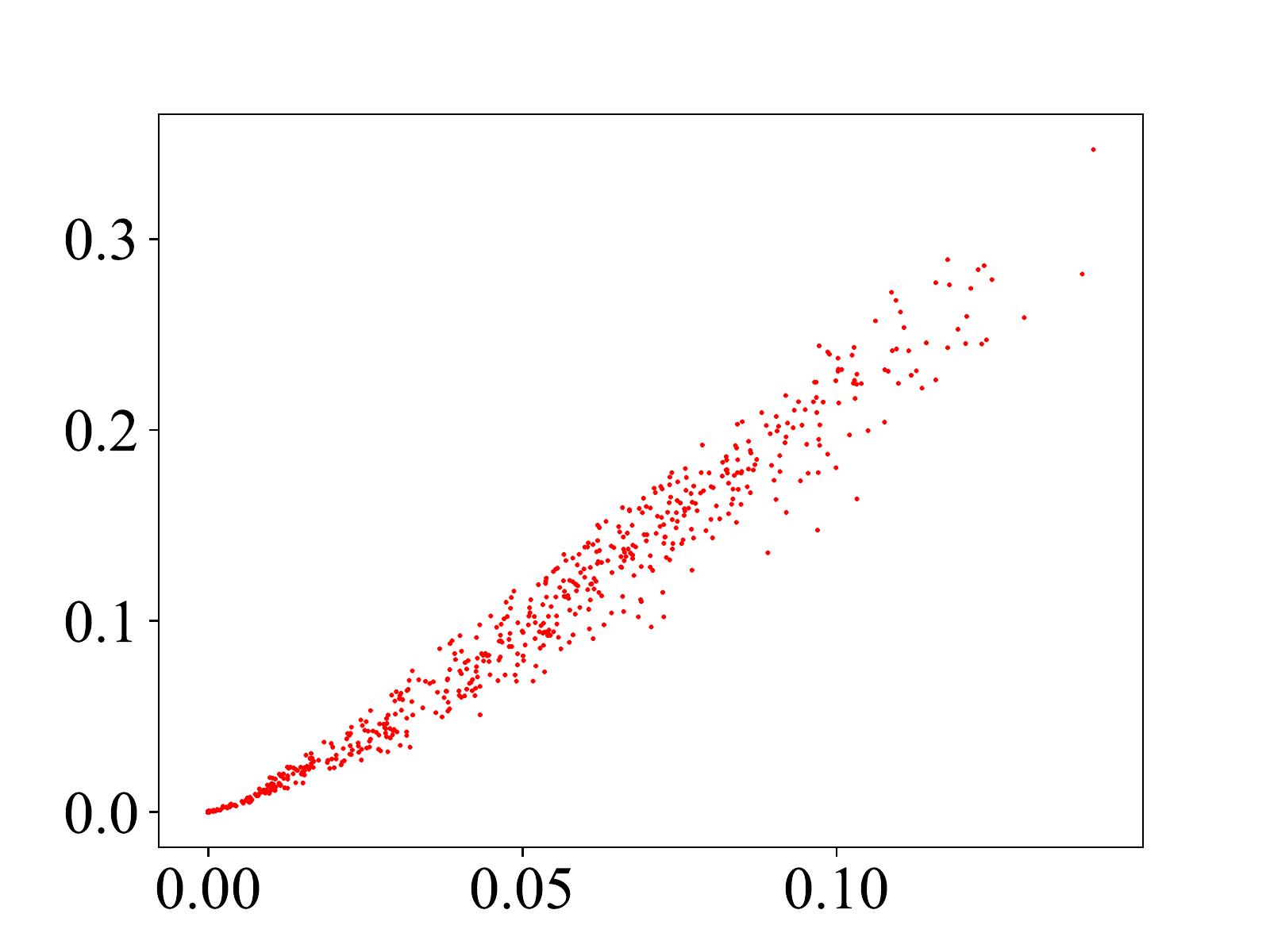}}
        \subfigure[mini-imagenet, $\ell=3$]{\includegraphics[scale=0.24]{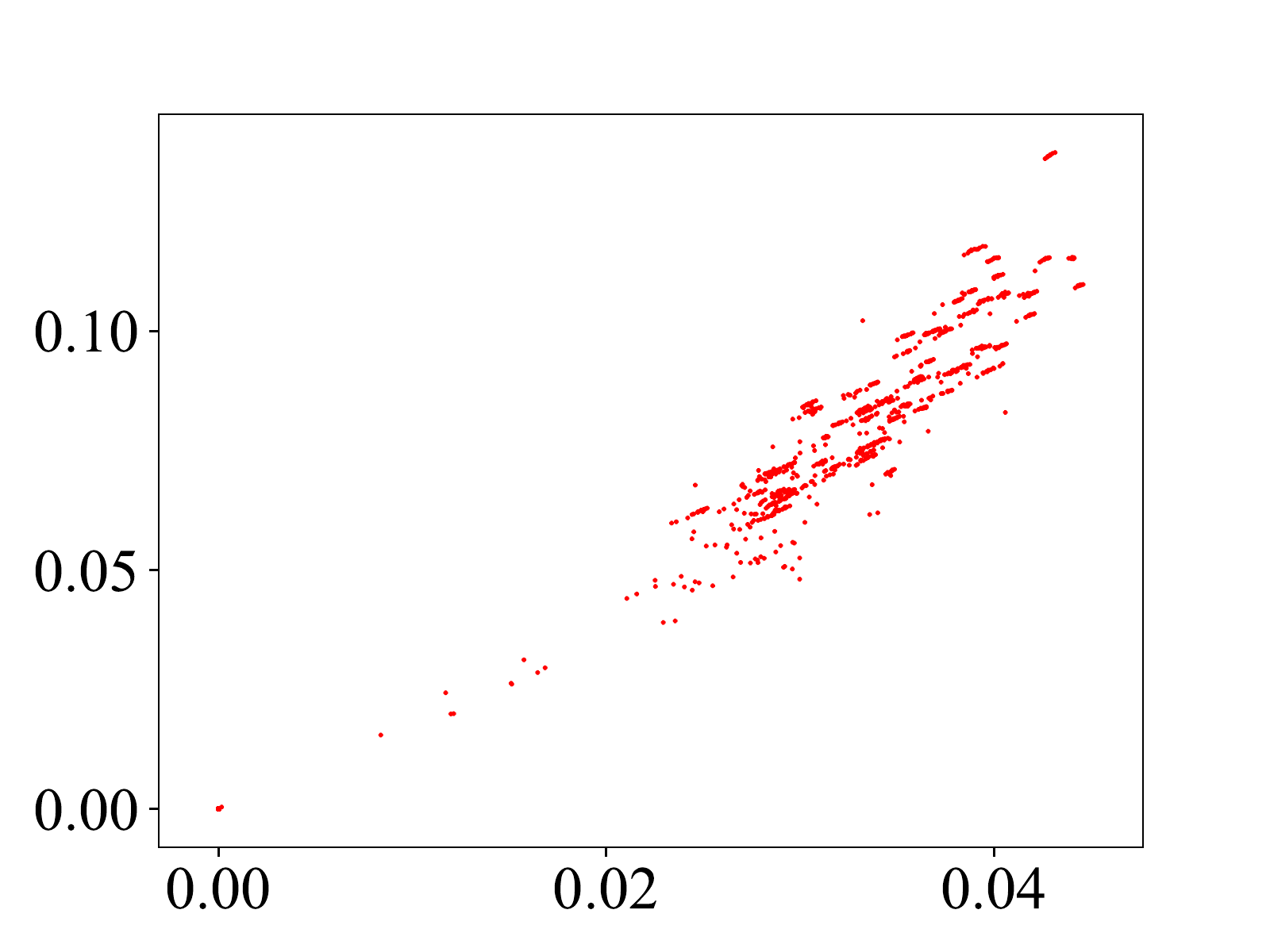}}
        \subfigure[mini-imagenet, $\ell=4$]{\includegraphics[scale=0.24]{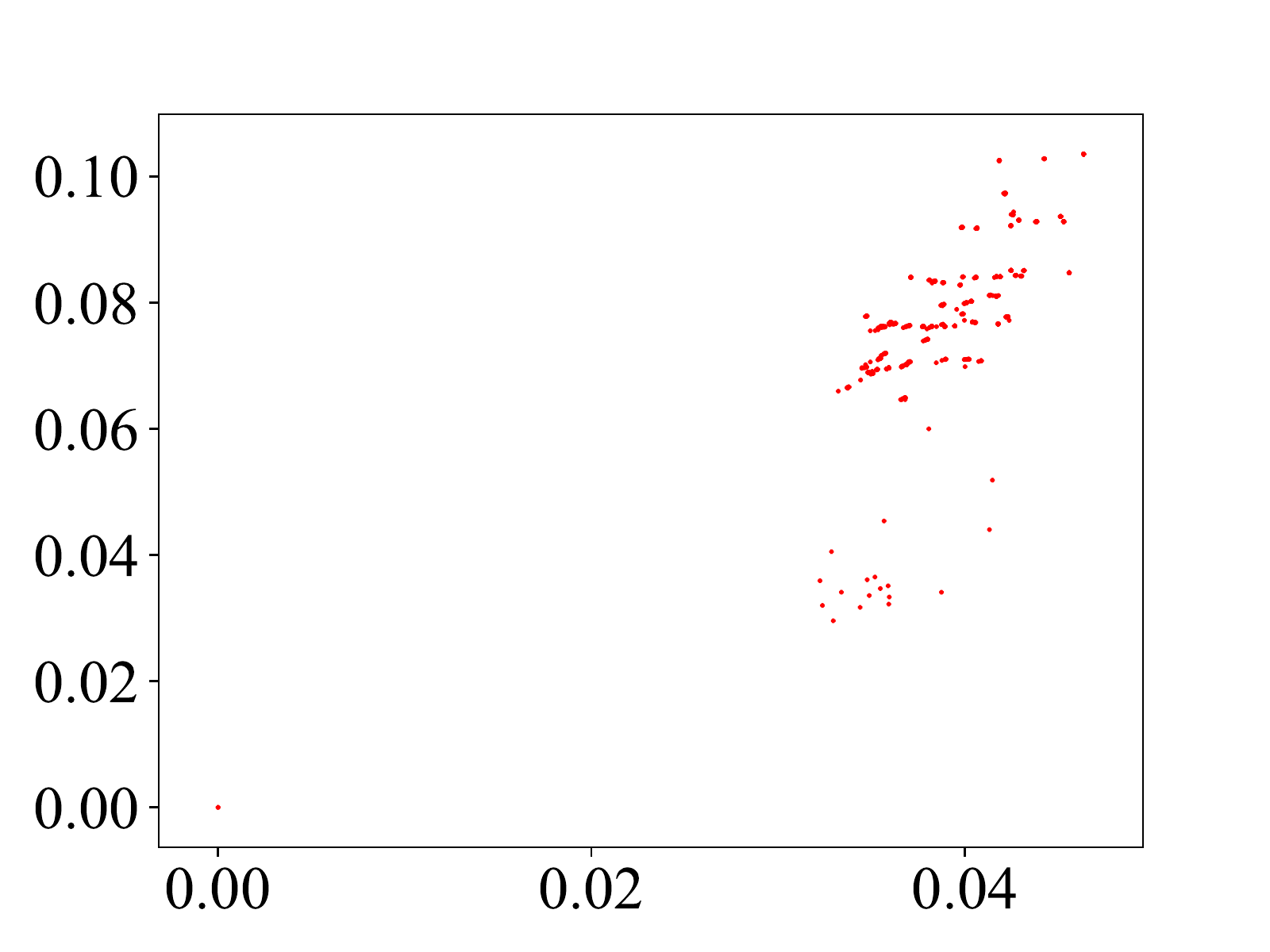}}
    \end{center}
    \vspace*{-20pt}
    \caption{Property of optimal solution for $L=4$ hidden-layer fully connected NN. 
    The top (synthetic 1-d regression) and bottom (mini-imagenet) rows are scatter-plots of the estimated quantities for each sampled neuron after convergence.
    For each subplot, one point $(x, y)$ represents one sampled neuron $j \in [m^{(\ell)}]$, where  $x=u_j^{(\ell)}$ and $y=v_j^{(\ell)}$ defined in \eqref{exp:u} and \eqref{exp:v} respectively. } 
    \label{exp:all:fig:optimal-sol}
\end{figure*}

\subsection{Synthetic 1-D regression task}

We begin to empirically validate our claims in a synthetic 1-D regression task. Since the feature representation $f^{(\ell)}_j(x)$ corresponding to each neuron in each layer $\ell \in [L]$ is a single-variable function, it can be easily visualized.

Here we consider the function $f(x)=2(2\cos^2(x)-1)^2-1$  introduced by \cite{mhaskar2017and}. We draw 60k training samples and 60k test samples uniformly from $[-2\pi, 2\pi]$ for $x$ and set $y=f(x)$. We use a fully-connected NN with $m^{(\ell)}=1000\times 2^{L-\ell}$ hidden units in each hidden layer $\ell$ to learn this target function. We take $L=\{1,\cdots, 4\}$, and use the Adam optimizer with an initial learning rate $1e$-$4$ in our experiments, and let the activation function be $\sigma(x)=\text{tanh}(x)$. For fair comparison, we tune the hyper-parameters of the weight of regularizer so that for different $L$, the NN could reach training RMSE of $1e$-$4$ when converge. This controls the representation power of the NN.

\begin{figure*}[htb!]
    \begin{center}
        \subfigure[]{\includegraphics[scale=0.25]{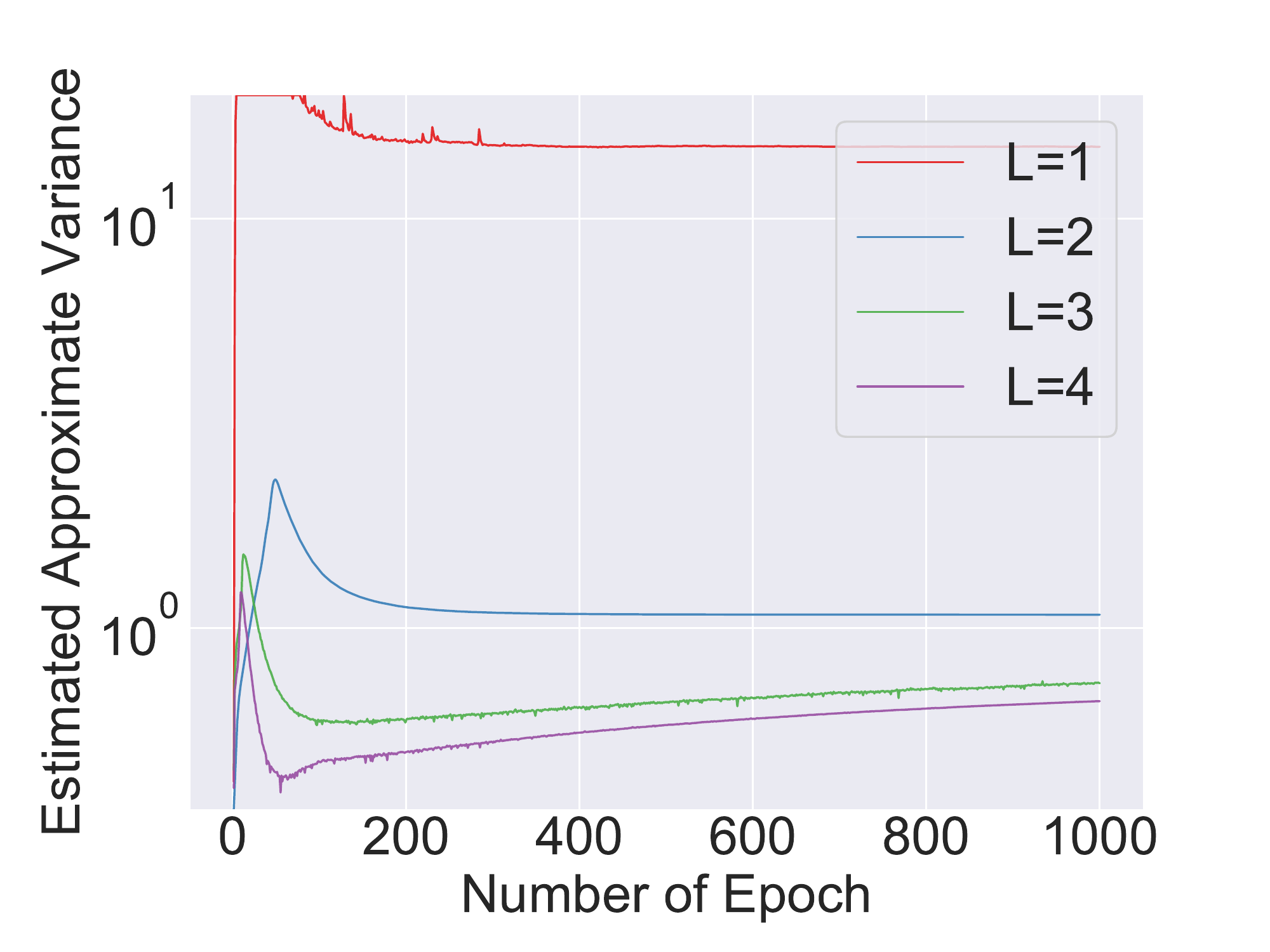}}
        \subfigure[]{\includegraphics[scale=0.25]{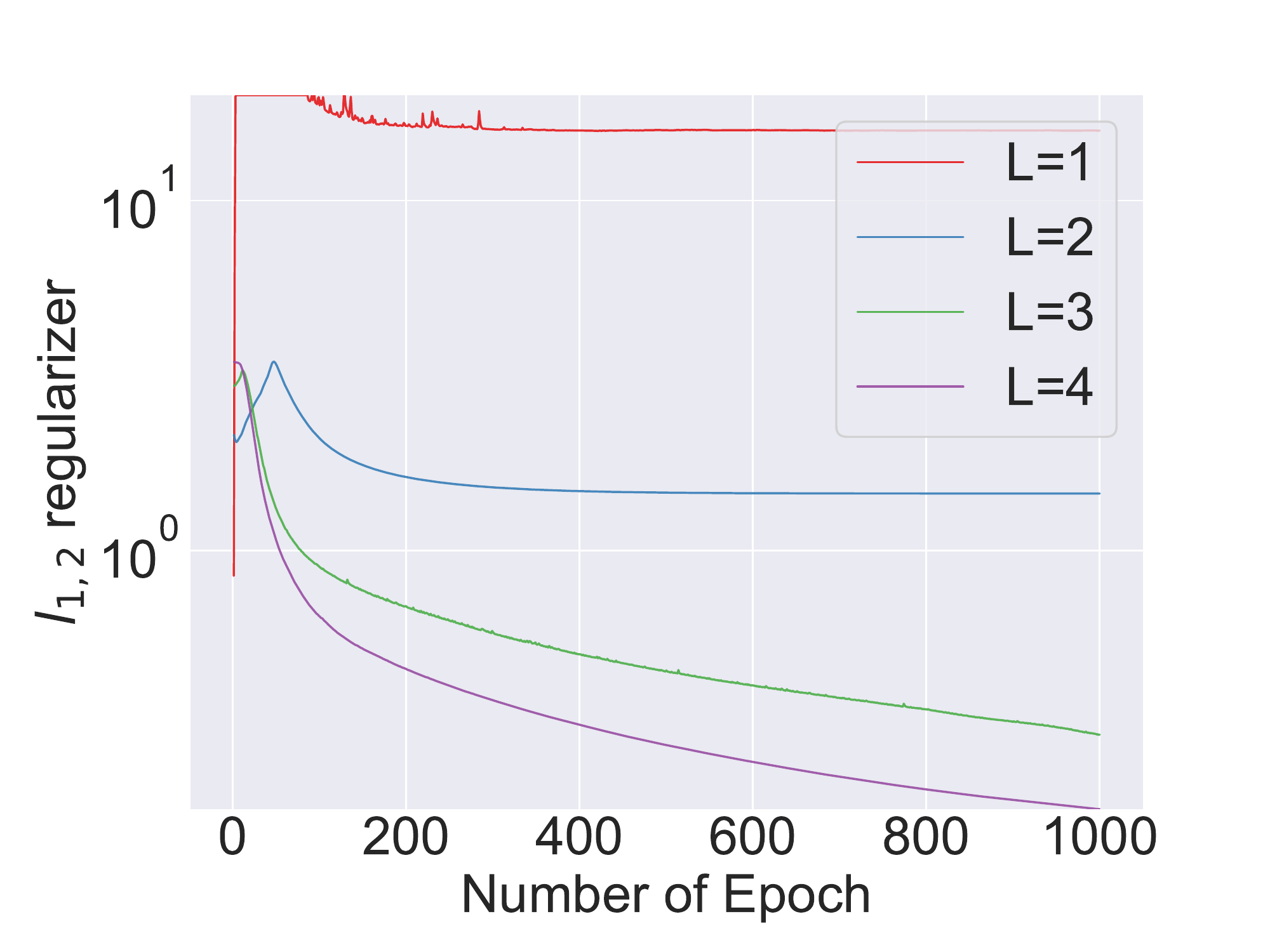}}
        \subfigure[]{\includegraphics[scale=0.25]{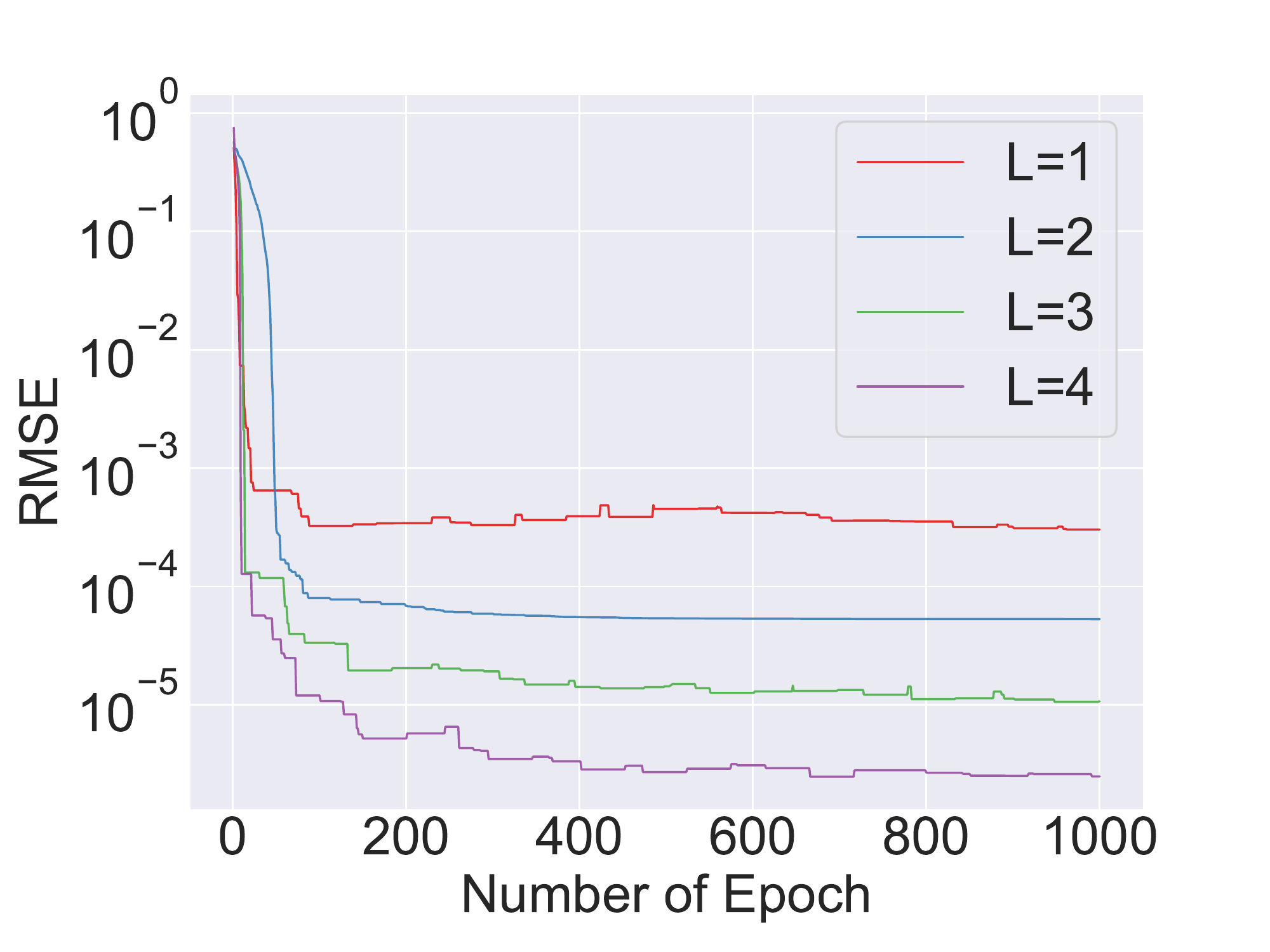}}
    \end{center}
    \vspace*{-20pt}
    \caption{The visualization of optimization process when training multilayer NNs with different $L$.}
    \label{exp:syn:fig:eav}
\end{figure*}

We first validate that fully trained overparameterized NN satisfies the optimality condition of Corollary~\ref{KKT:condition}. Here we consider the case of $L=4$, and the top row in Fig \ref{exp:all:fig:optimal-sol} plots the estimated quantities $u_j^{(\ell)}$ and $v_j^{(\ell)}$. We can see that these two quantities are approximately linearly correlated, as predicted by 
Corollary~\ref{KKT:condition}.

To compare the performance of shallow versus deep networks, Fig \ref{exp:syn:fig:eav} (a) reports how does the approximated variance change when $L$ increases. It demonstrates that the approximated variance decreases as $L$ increases. Moreover, the approximate variance gap between $L=2$ and $L=3$ is very large while that between $L=3$ and $L=4$ is small. This is consistent with the fact that the hierarchical composition of the target function $f(x)$ has depth $3$ (i.e. $f(x)=h(h(\cos x))$, where $h(u)=u^2-1$). At the same time, we can observe that for larger $L$, the regularizer in subplot (b) and training RMSE  in subplot (c) decrease much faster, which also demonstrate the effectiveness of increasing $L$ for this target function. 

\begin{figure}[htb!]
    \begin{center}
        \includegraphics[width=1.0\linewidth]{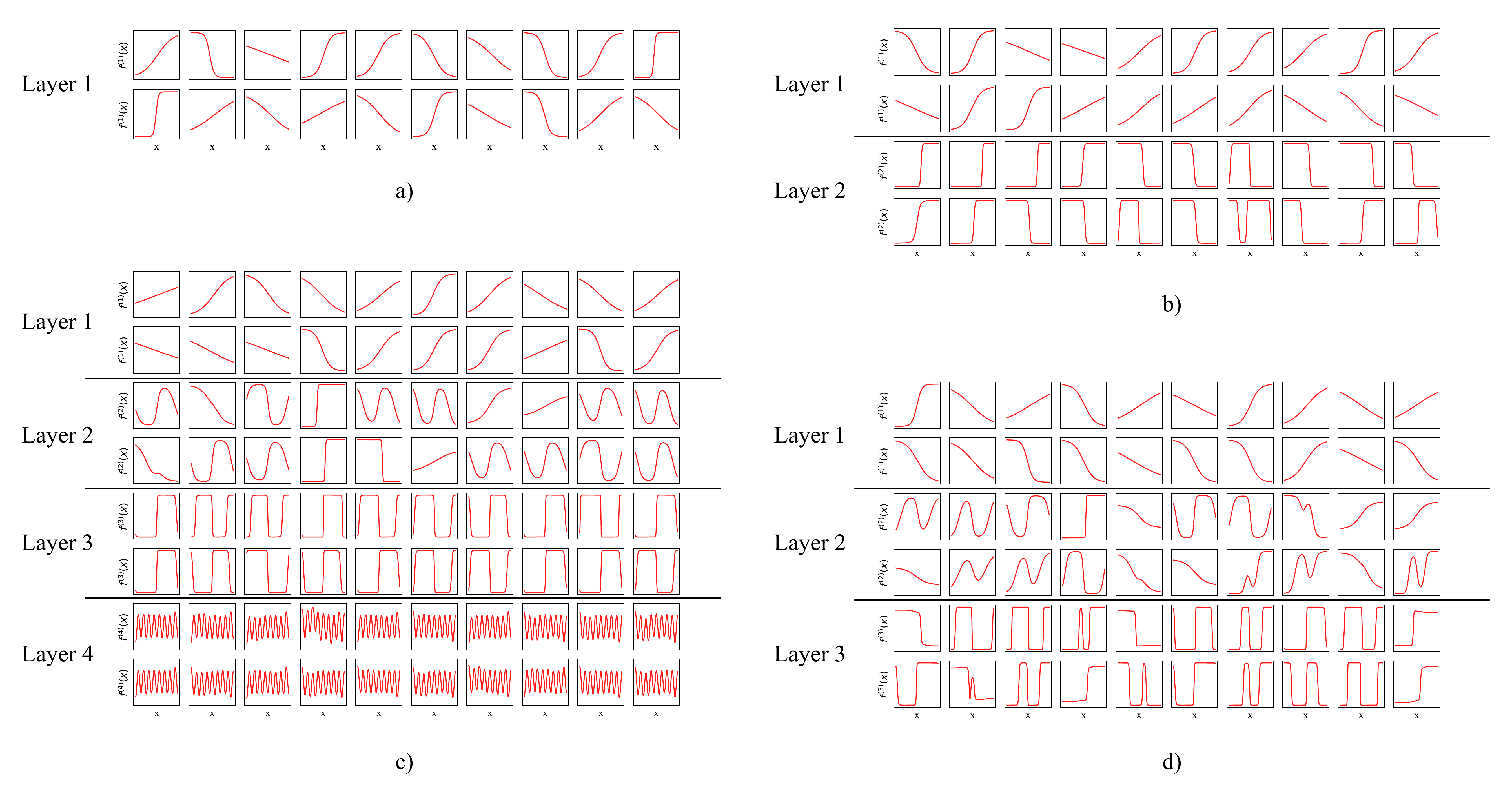}
    \end{center}
    \vspace*{-20pt}
    \caption{Representative features as 1-D functions for fully-connected NNs with different hidden-layers: a) 1-hidden-layer b) 2-hidden-layers c) 3-hidden-layers d) 4-hidden-layers. For each architecture, we sample 20 neurons from each layer and plot the single-variable feature function $f^{(\ell)}_j(x)$ of the sampled neurons.
    }
    \label{exp:syn:fig:feat-l21}
\end{figure}

Fig \ref{exp:syn:fig:feat-l21} shows representative features (as 1D-functions) at each layer after convergence. We reach the following conclusion from visualization of different $L$: DNN is able to learn hierarchical feature representations when we take optimization process into consideration.  To be more specific, the layer next to the input layer tends to learn low-frequency signals while the upper layers take these lower-frequency signals to form higher-frequency signals.

\begin{figure*}[htb!]
    \begin{center}
        \subfigure[weights of $w^{(1)}$]{\includegraphics[scale=0.32]{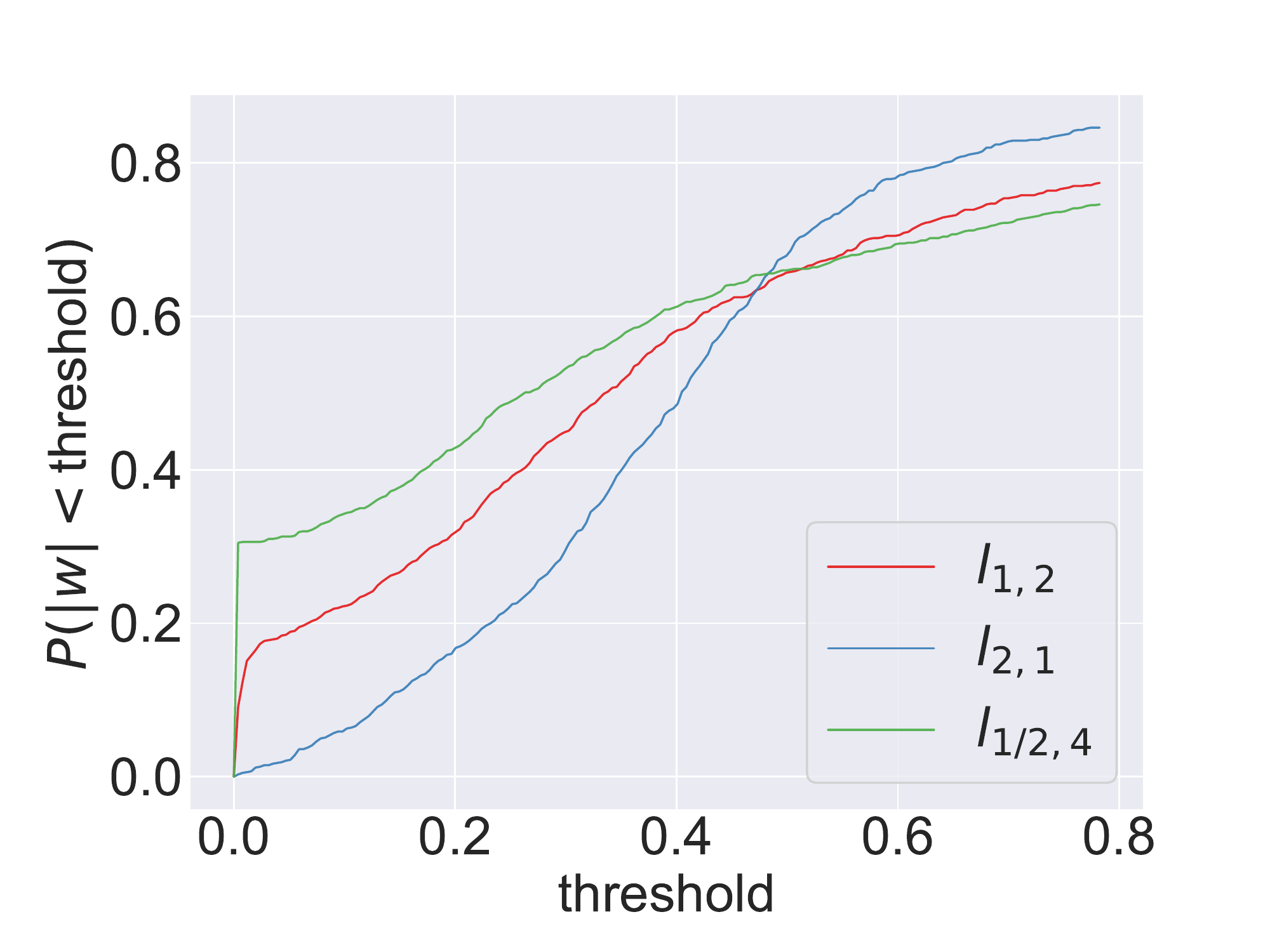}}
        \subfigure[weights of $w^{(2)}$]{\includegraphics[scale=0.32]{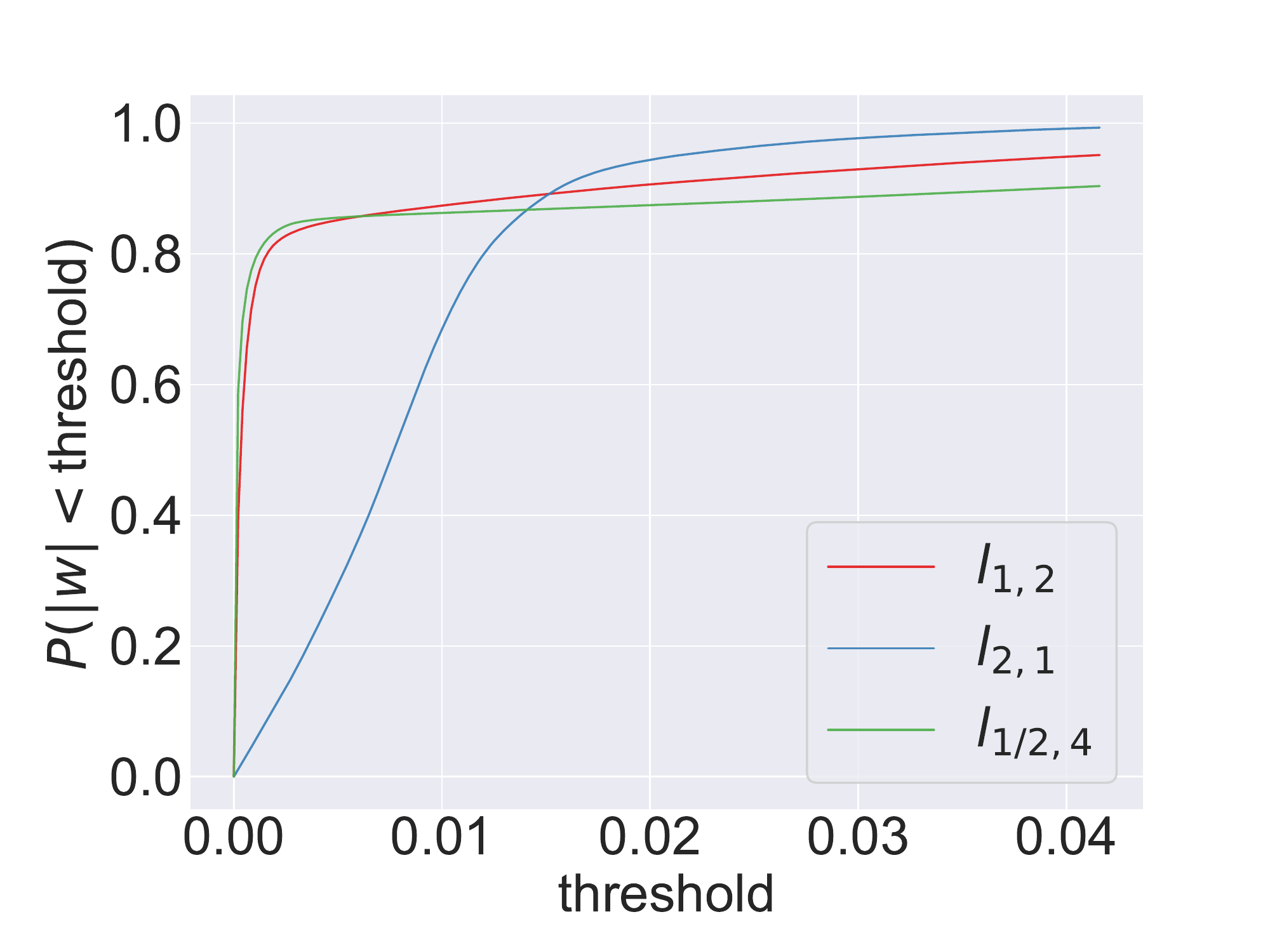}}
        \subfigure[weights of $w^{(3)}$]{\includegraphics[scale=0.32]{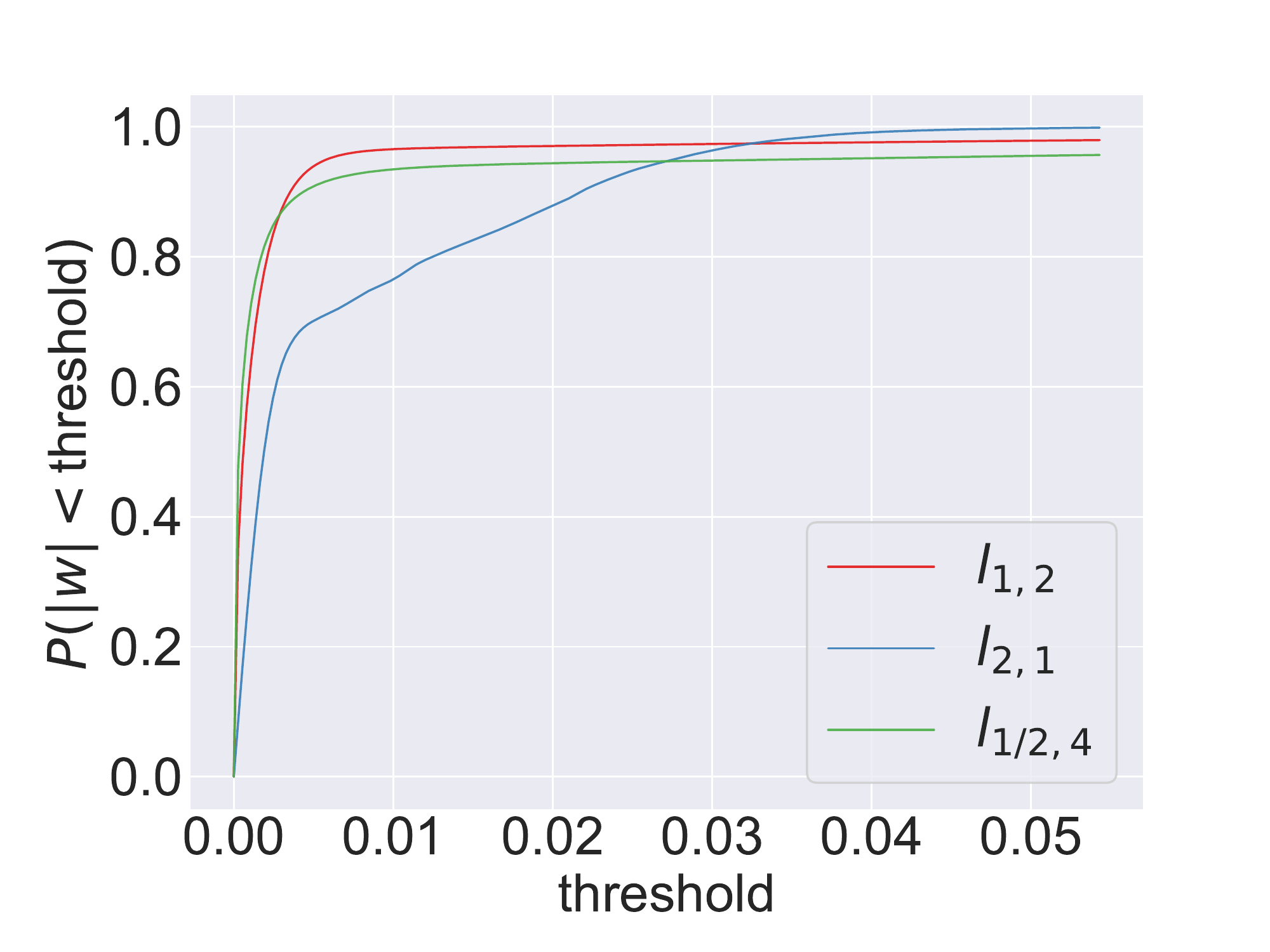}}
        \subfigure[weights of $u$]{\includegraphics[scale=0.32]{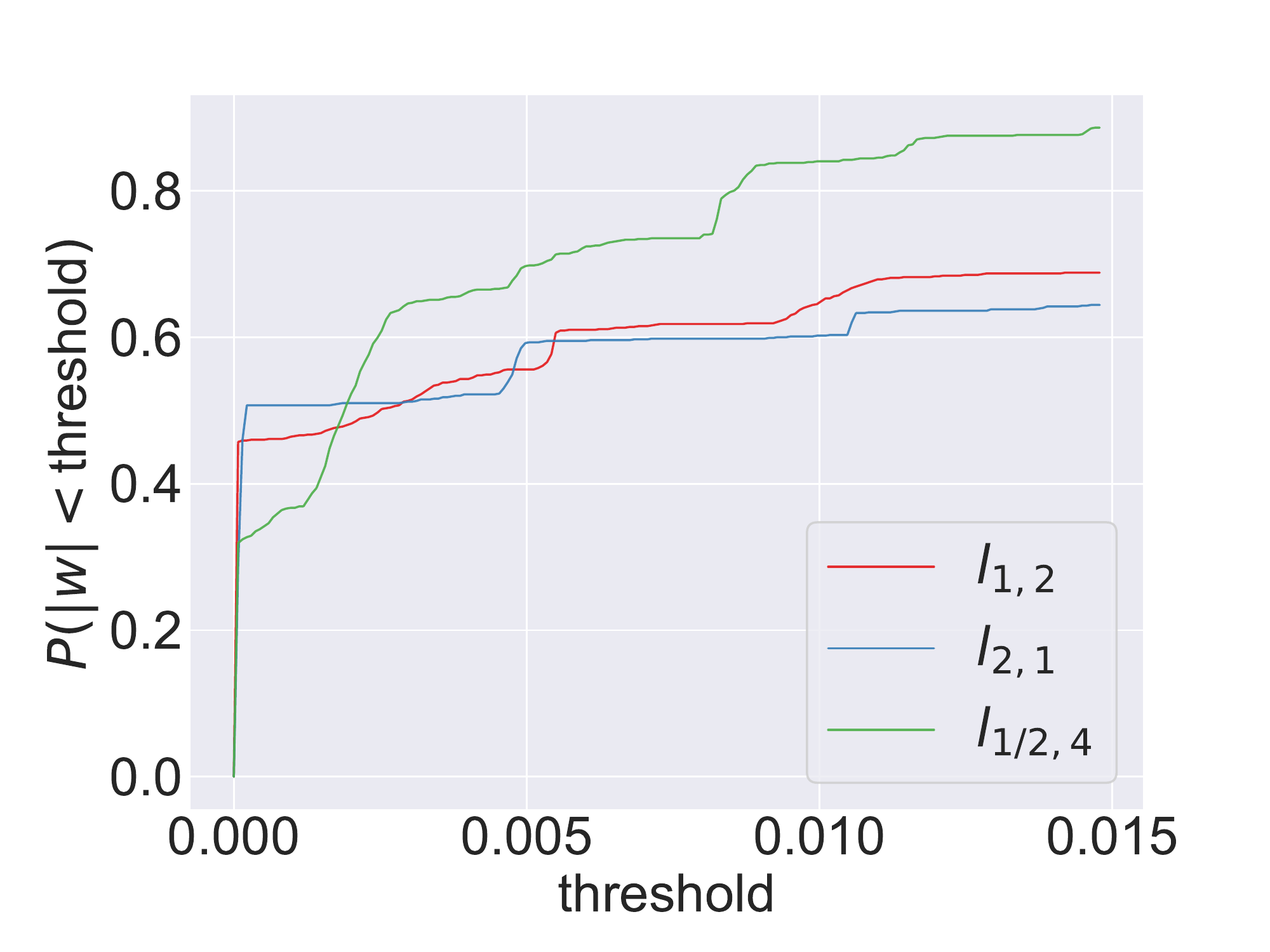}}

    \end{center}
    \vspace*{-20pt}
    \caption{The comparison of sparsity of weights in different layers between different regularizers. The $i$-th figure demonstrate the sparsity of weights in $i$-th layer for different regularizers. The point $(x, y)$ on one curve suggests that there are $100\times y\%$ weights in such layer lies in $[-x, x]$. We could see that the red curve is lower-bounded by the blue one, which suggests that our proposed regularizer will result in more sparse representations compared with traditional regularizer.}
    \label{exp:syn:fig:weight-sparsity}
\end{figure*}

We further compare the compactness of different regularizers. Here we use the notation $\ell_{a,b}$ to represent the regularizer in the form of $r_1(x)=|x|^a$ and $r_2(x)=|x|^b$. $\ell_{1,2}$ regularizer is the proposed regularizer, which is an upper-bound of the approximation variance. $\ell_{2,1}$ regularizer is the traditional $L_2$ regularizer (i.e. $\ell_2$ weight decay). From Fig \ref{exp:syn:fig:weight-sparsity}, we found that the proposed regularizer leads to sparser weights, and thus has a more compact representation. We have also tried the $\ell_{1/2,4}$ regularizer, and found that the sparsity of $\ell_{1/2,4}$ regularizer isn't significantly better than the proposed $\ell_{1,2}$ regularizer. This verifies the effectiveness of the proposed regularizer  to obtain sparse weights.

Finally, we show that the proposed discrete feature repopulation (DFR) process can reduce the training loss, especially the regularizer loss (in Fig \ref{exp:syn:fig:opt-dfr} (b)). This implies that it leads to a better feature representation. In our implementation, we first use Proximal Gradient Descent to optimize the objective \eqref{eq:reg-im} with respect to to the variables $\{\{p_j^{(\ell)}\}_{j=1}^{m^{(\ell)}}\}_{\ell=1}^L$ under the constraints that $\sum_{j=1}^{m^{(\ell)}}p_j^{(\ell)}=1$ for $\ell \in [L]$. When these repopulation weights $\{\{p_j^{(\ell)}\}_{j=1}^{m^{(\ell)}}\}_{\ell=1}^L$ are calculated, we use the algorithm described in Section \ref{sec:algo} to discretely re-sample useful features from top layer to bottom layer. Fig \ref{exp:syn:fig:feat-cp} is a comparison of learned feature functions between vanilla SGD and SGD with DFR process described above. The feature functions learned by vanilla SGD contains some useless feature functions whose variance with respect to input $x$ is near zero, while the DFR process is able to remove these bad feature functions since their importance weights are relatively low. The difference is highly visible in Layer 3.

\begin{figure}[htb!]
    \begin{center}
        \includegraphics[width=1.0\linewidth]{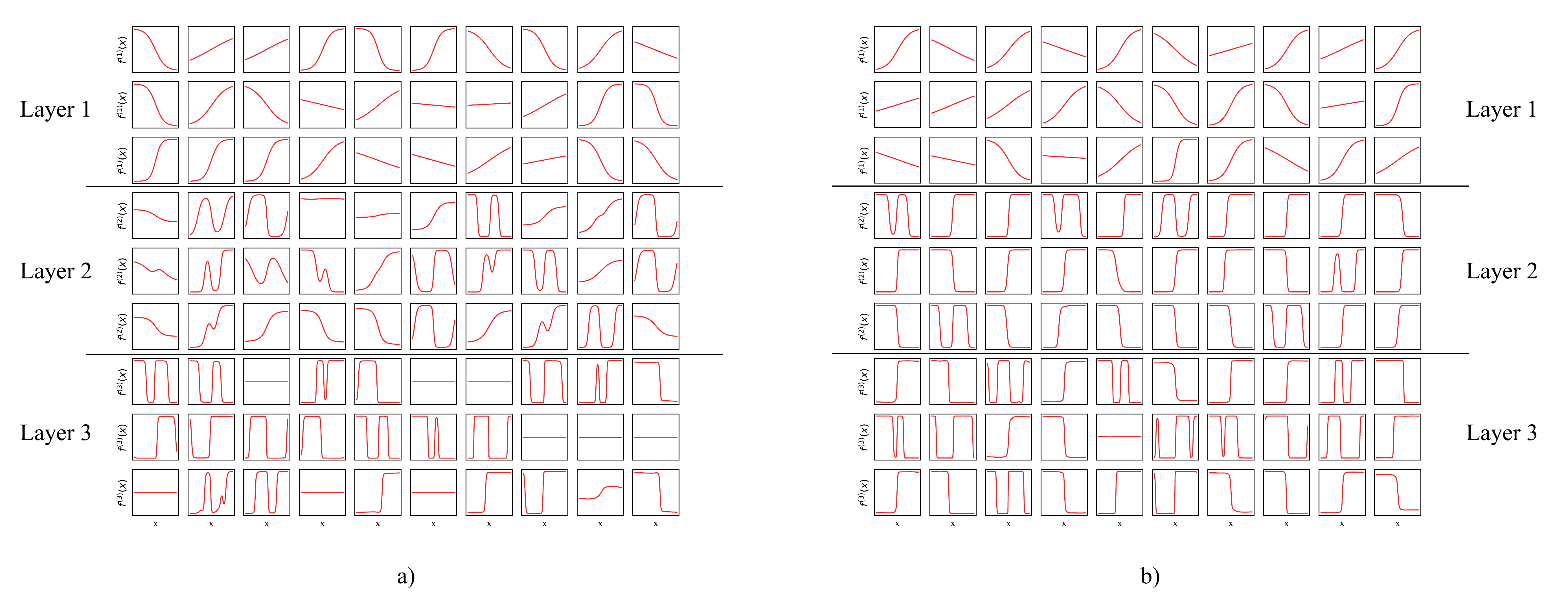}
    \end{center}
    \vspace*{-20pt}
    \caption{Representative feature functions of a) SGD without DFR b) SGD with one DFR process every 10 epochs during the first 50 epochs and every 100 epochs after epoch 50. In each subplot, we sample 30 neurons from each layer and plot the single-variable feature function $f^{(\ell)}_j(x)$ of sampled neurons.}
    \label{exp:syn:fig:feat-cp}
\end{figure}

\begin{figure*}[htb!]
    \begin{center}
        \subfigure[]{\includegraphics[scale=0.25]{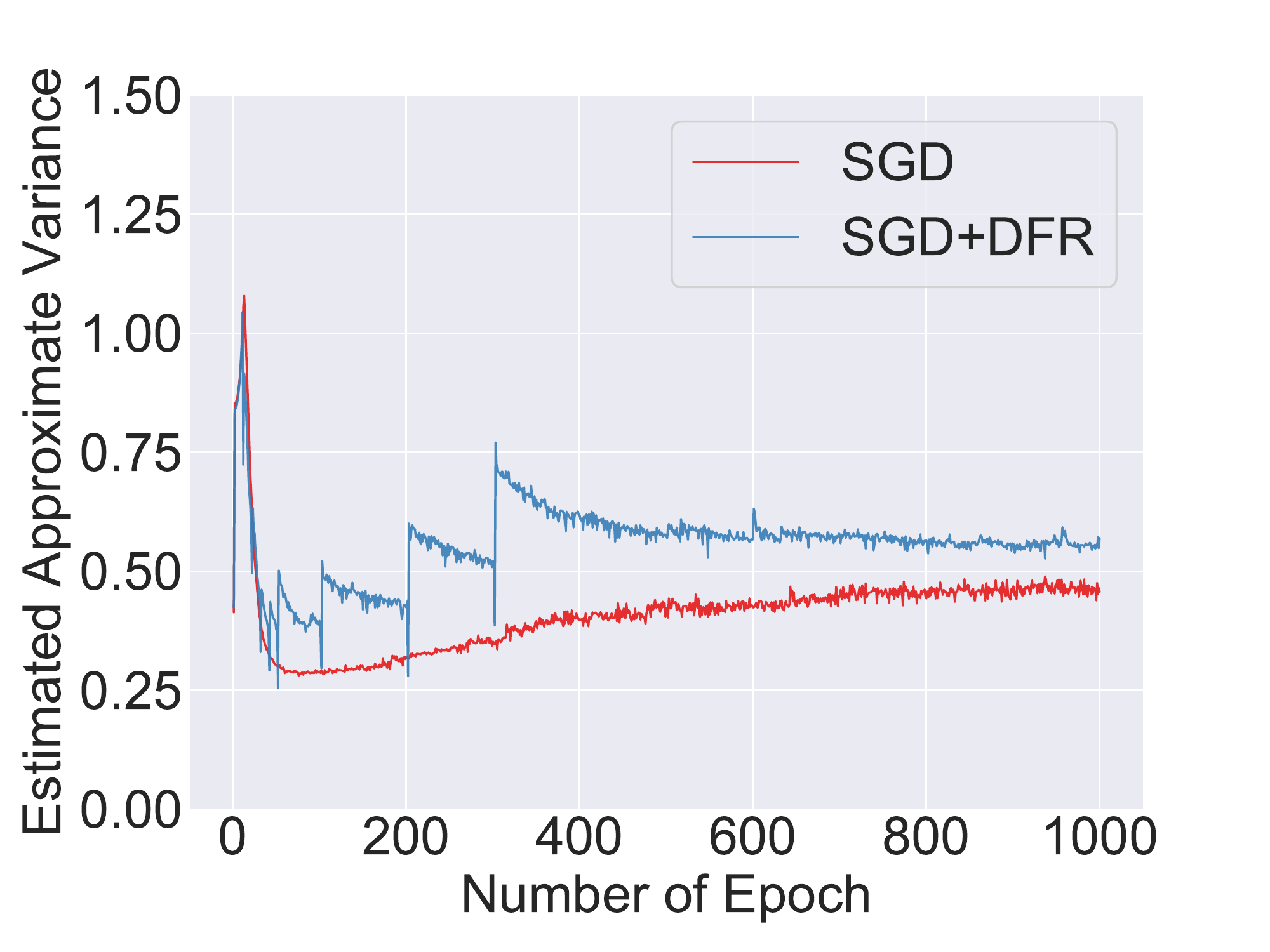}}
        \subfigure[]{\includegraphics[scale=0.25]{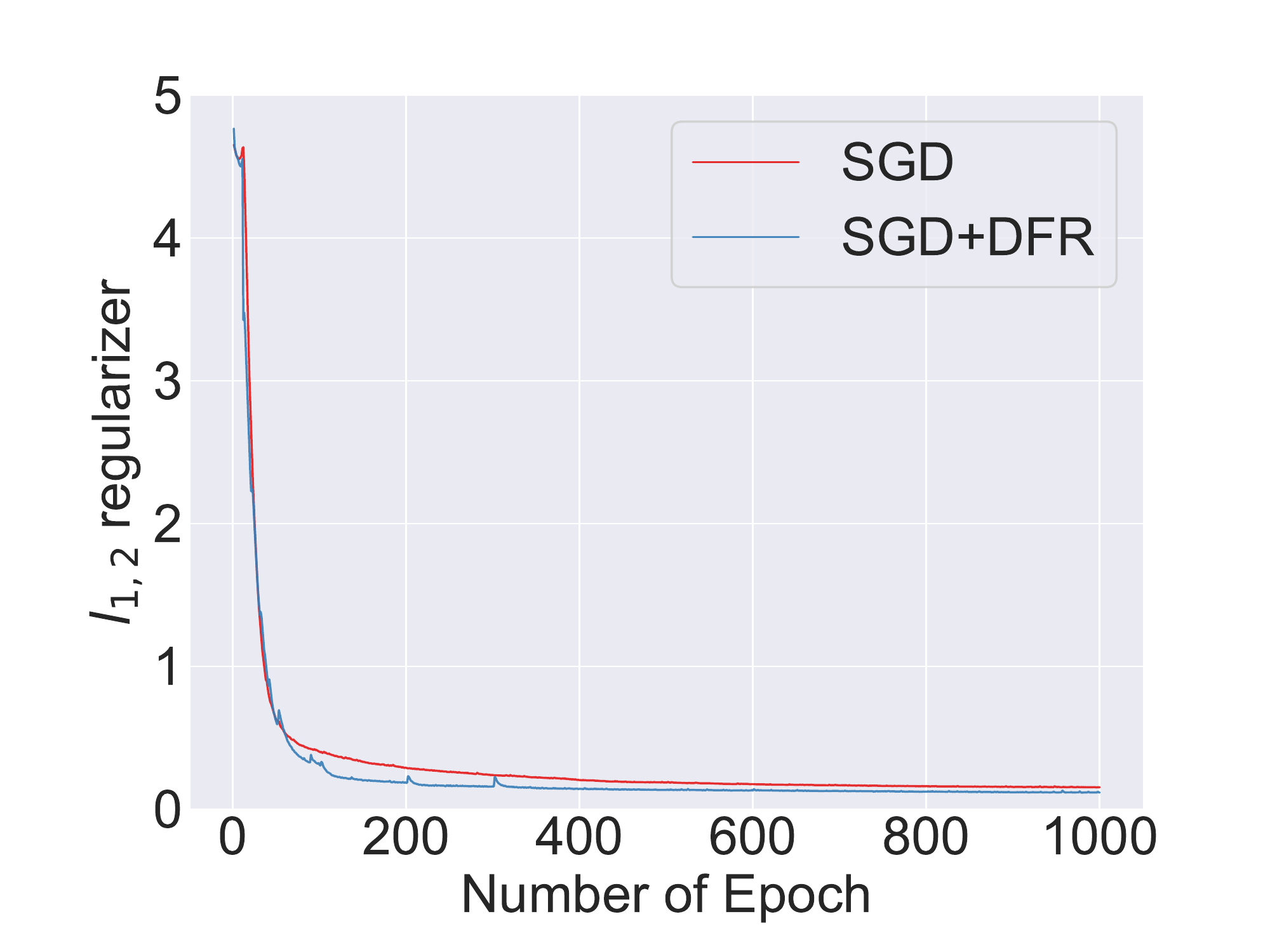}}
        \subfigure[]{\includegraphics[scale=0.25]{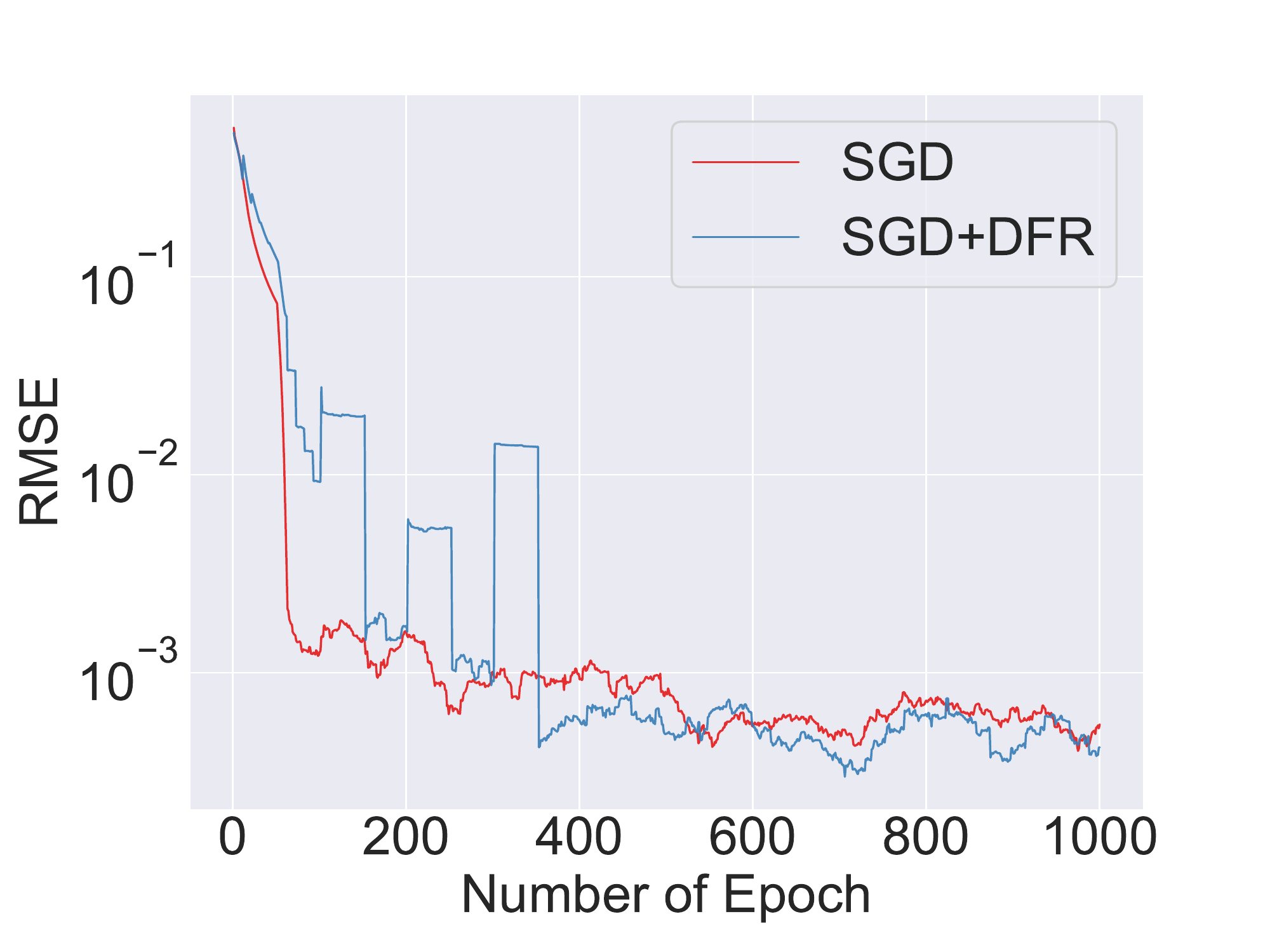}}
    \end{center}
    \vspace*{-20pt}
    \caption{The comparison of optimization process between SGD and SGD with DFR for 3-hidden-layer NN.}
    \label{exp:syn:fig:opt-dfr}
\end{figure*}

\subsection{Mini-Imagenet classification task}

We have also performed experiments on real data. Mini-Imagenet dataset is a simplified version of ILSVRC’12 dataset \citep{ILSVRC15}, which consists of 600 84$\times$84$\times$3 images sampled from 100 classes each. Here we consider the data split introduced by \citep{ravi2016optimization}, which consists of 64 classes and 38.4k images as our full dataset. We divide the dataset into train/valid/test split by 7:1:2. 

Since fully-connected NNs do not have the capacity to deal with such image data, we first train a base CNN embedding network with a four block architecture as in \citep{vinyals2016matching}. We then take the $1600$-dimensional output of the embedding layer and feed it to an $L$ layer NN for classification. The training configurations and network architectures are the same as those for the synthetic 1-D experiment, except that we tune the regularization parameters to achieve the best validation accuracy. Since the feature function of this task is hard to visualize, we only consider  the \emph{optimality condition}, \emph{shallow versus deep networks} and  \emph{compactness}. 

Similar to the results in the synthetic 1-D experiment, the sub-figures in the bottom row of Fig \ref{exp:all:fig:optimal-sol} show that the two quantities we care about are also linearly correlated in each layer, which is consistent with our theory. 
Fig \ref{exp:imagenet:fig:eav} reports how approximation variance, test RMSE, and train RMSE change during the model training procedure. We can see that the approximation variance decreases as $L$ increases, and the gap between $L=1$ and $L=2$ is very large. This demonstrates the great advantage of deeper networks. Moreover, the generalization performance also increases as $L$ increases.

\begin{figure*}[htb!]
    \begin{center}
        \subfigure[]{\includegraphics[scale=0.25]{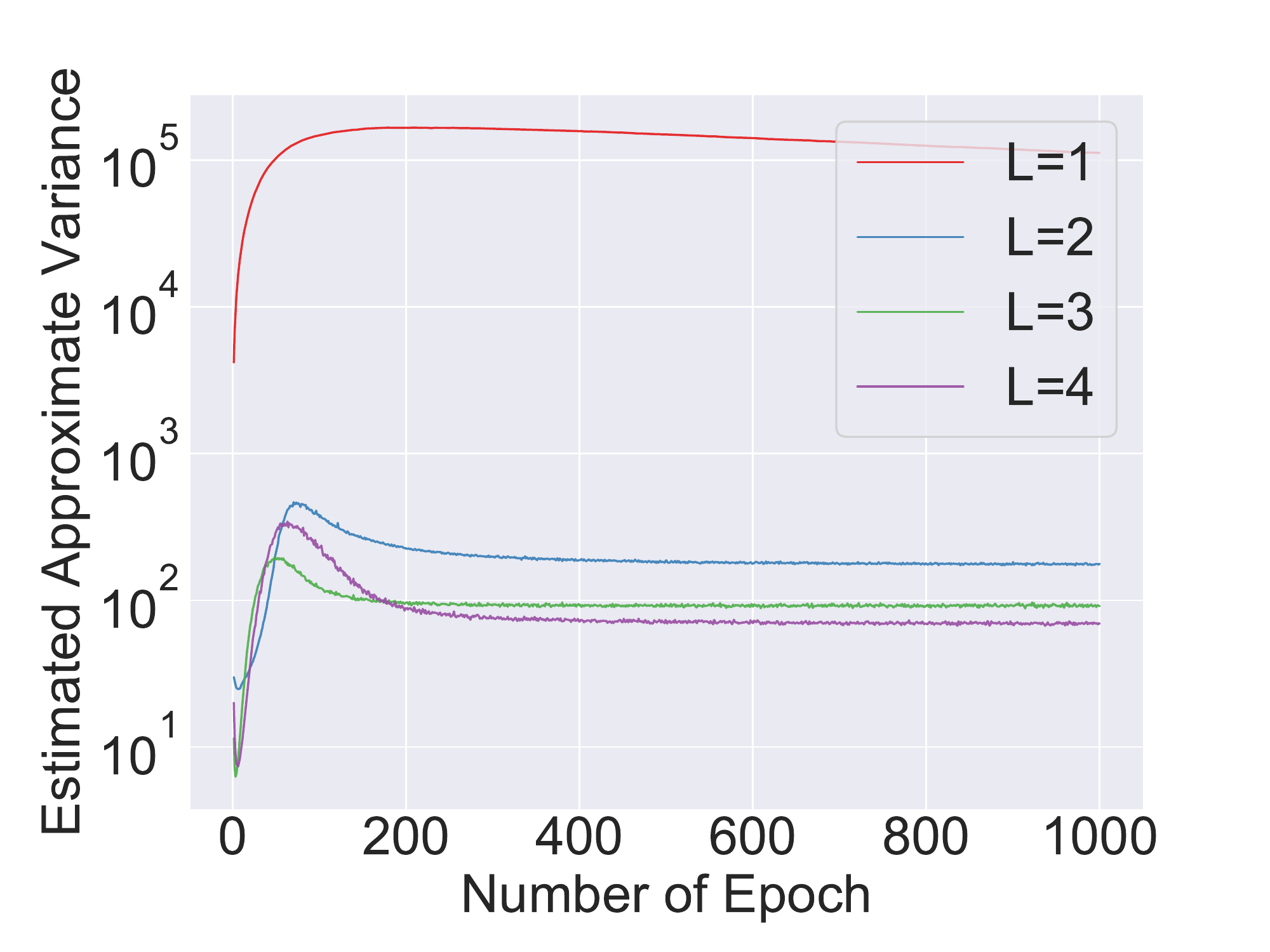}}
        \subfigure[]{\includegraphics[scale=0.25]{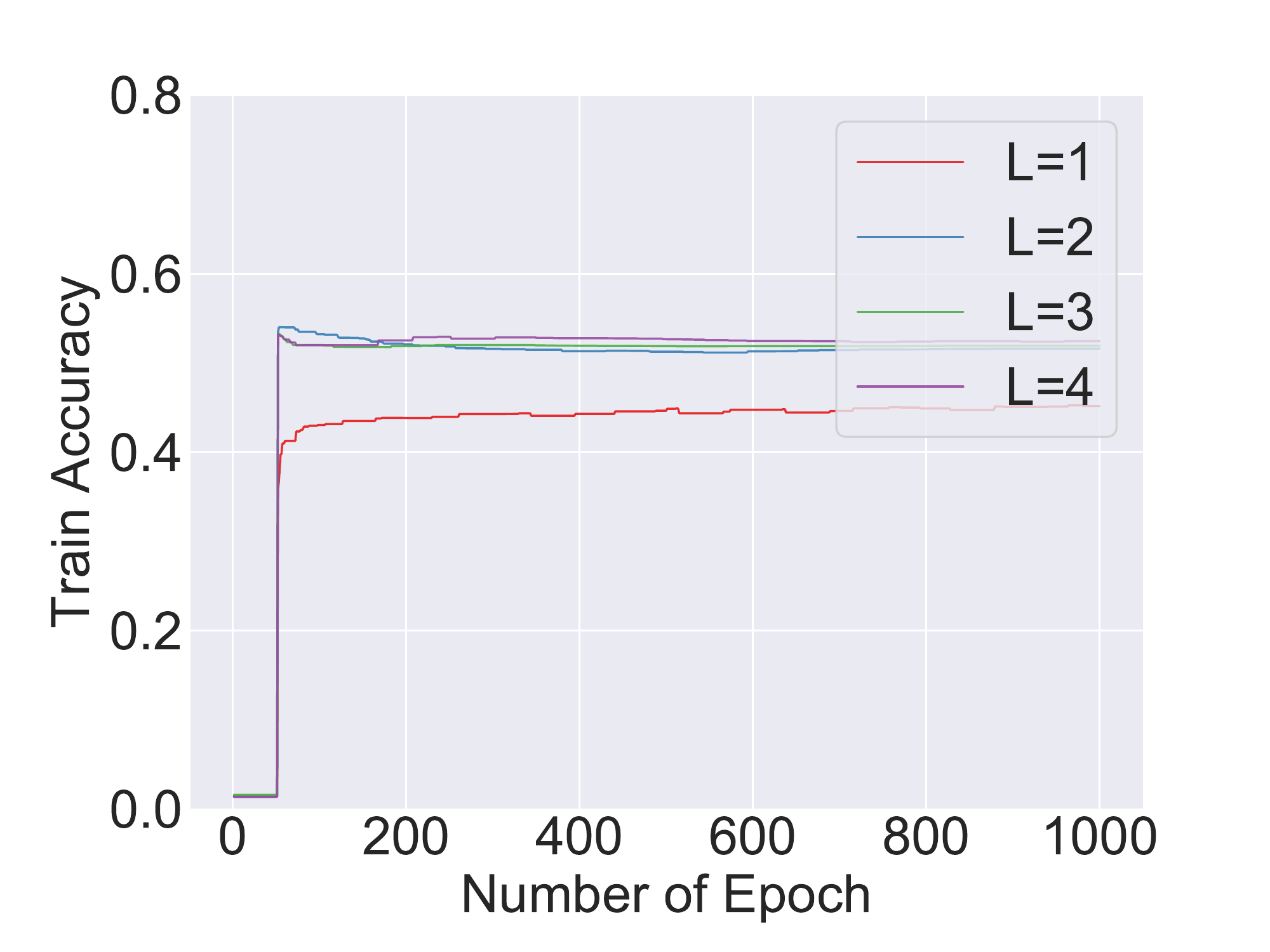}}
        \subfigure[]{\includegraphics[scale=0.25]{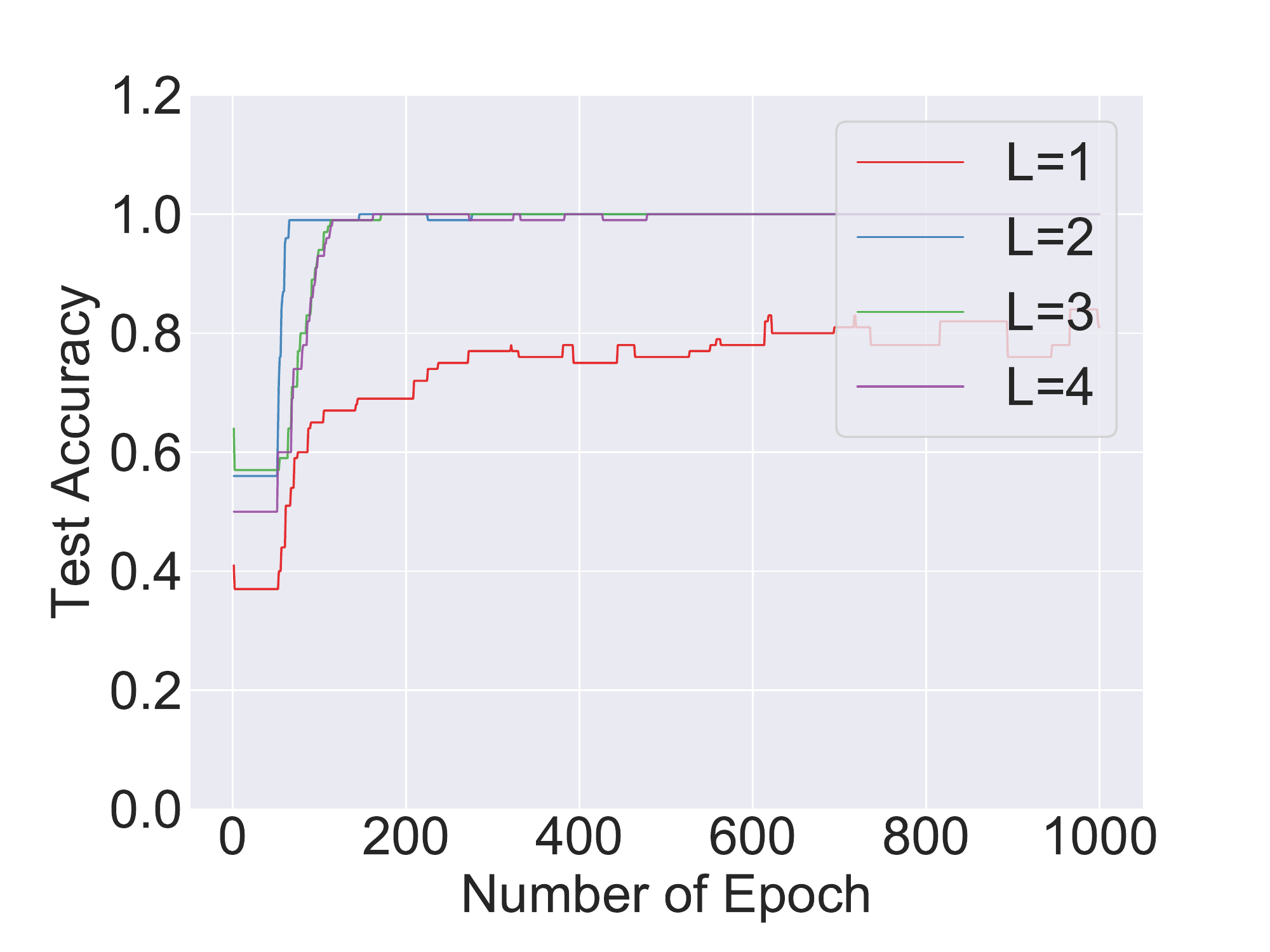}}
    \end{center}
    \vspace*{-20pt}
    \caption{Visualization of the optimization process when training multilayer NNs with different $L$.}
    \label{exp:imagenet:fig:eav}
\end{figure*}

Fig \ref{exp:imagenet:fig:weight-sparsity} shows that the proposed $\ell_{1,2}$ regularizer leads to more compact weights for each layer than the traditional $\ell_{2,1}$. .

\begin{figure*}[htb!]
    \begin{center}
        \subfigure[weights of $w^{(1)}$]{\includegraphics[scale=0.32]{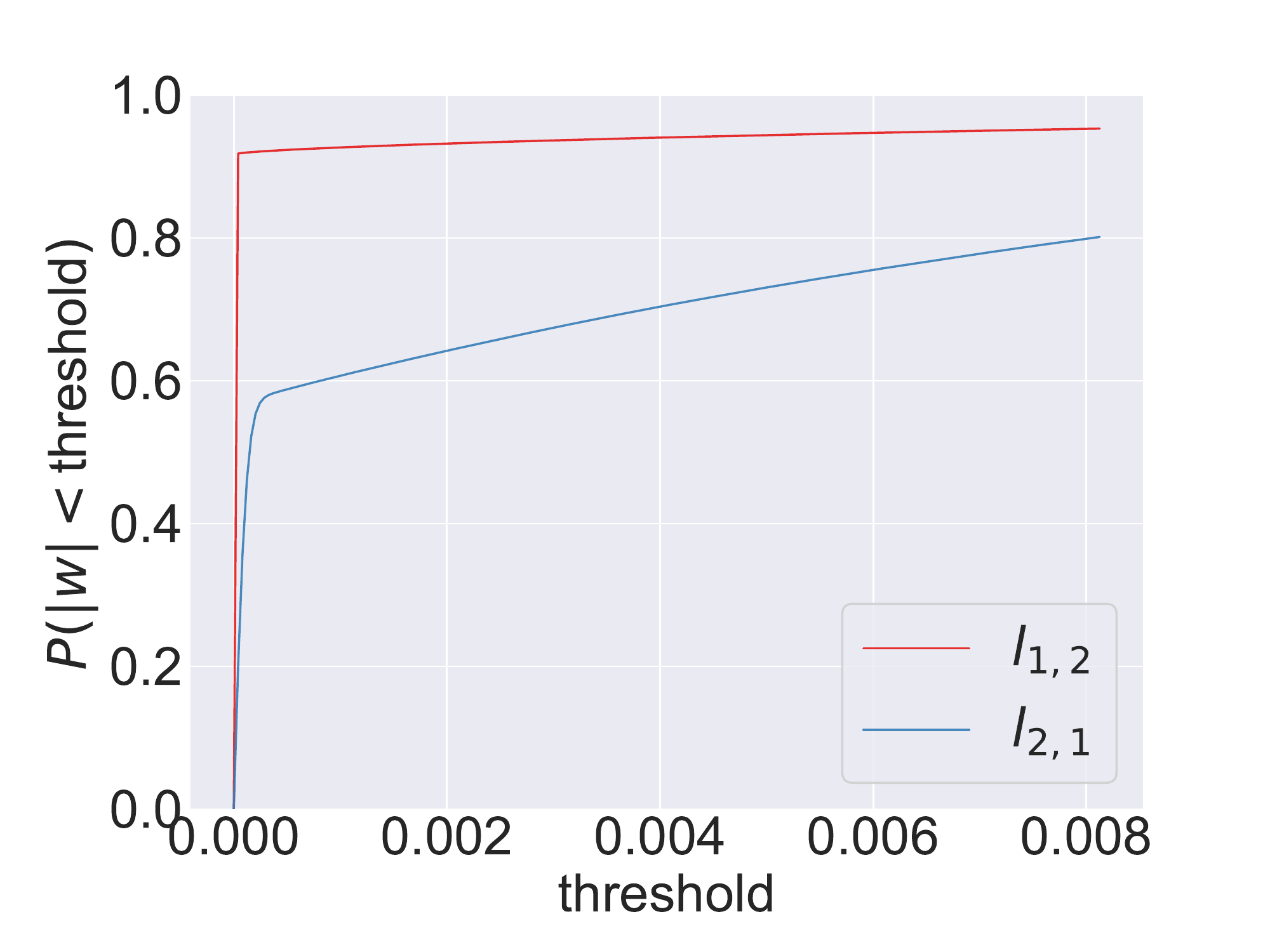}}
        \subfigure[weights of $w^{(2)}$]{\includegraphics[scale=0.32]{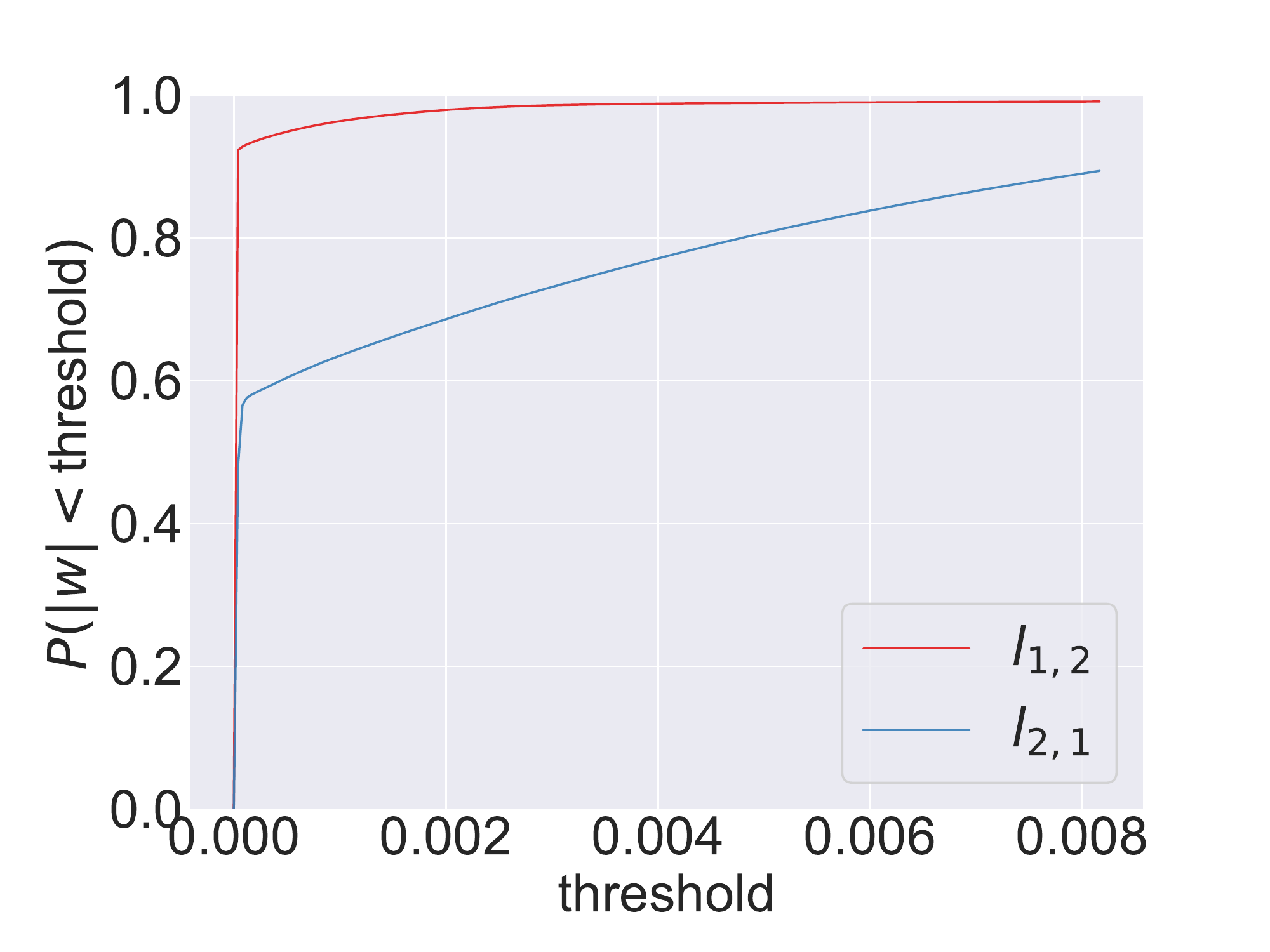}}
        \subfigure[weights of $w^{(3)}$]{\includegraphics[scale=0.32]{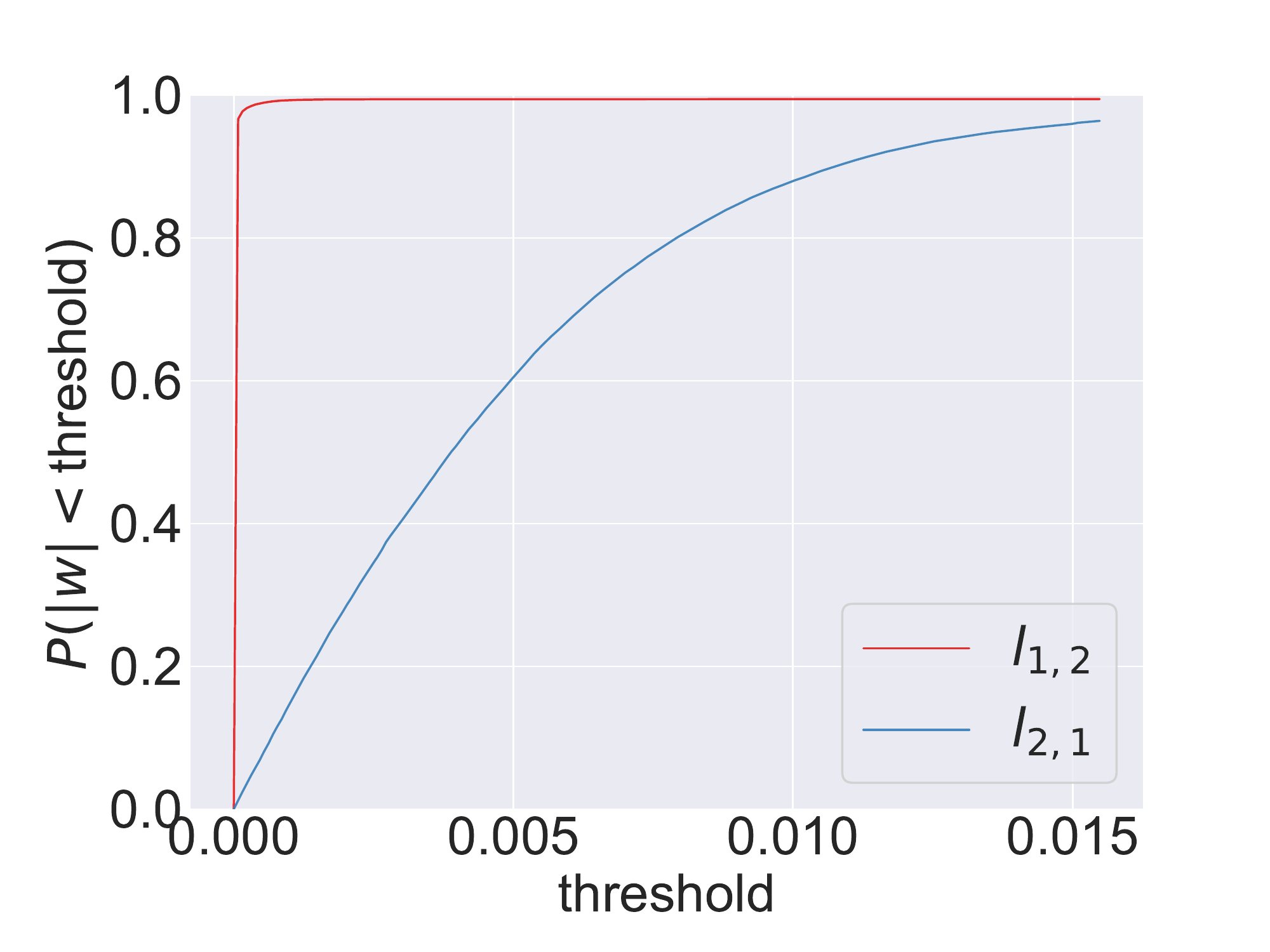}}
        \subfigure[weights of $u$]{\includegraphics[scale=0.32]{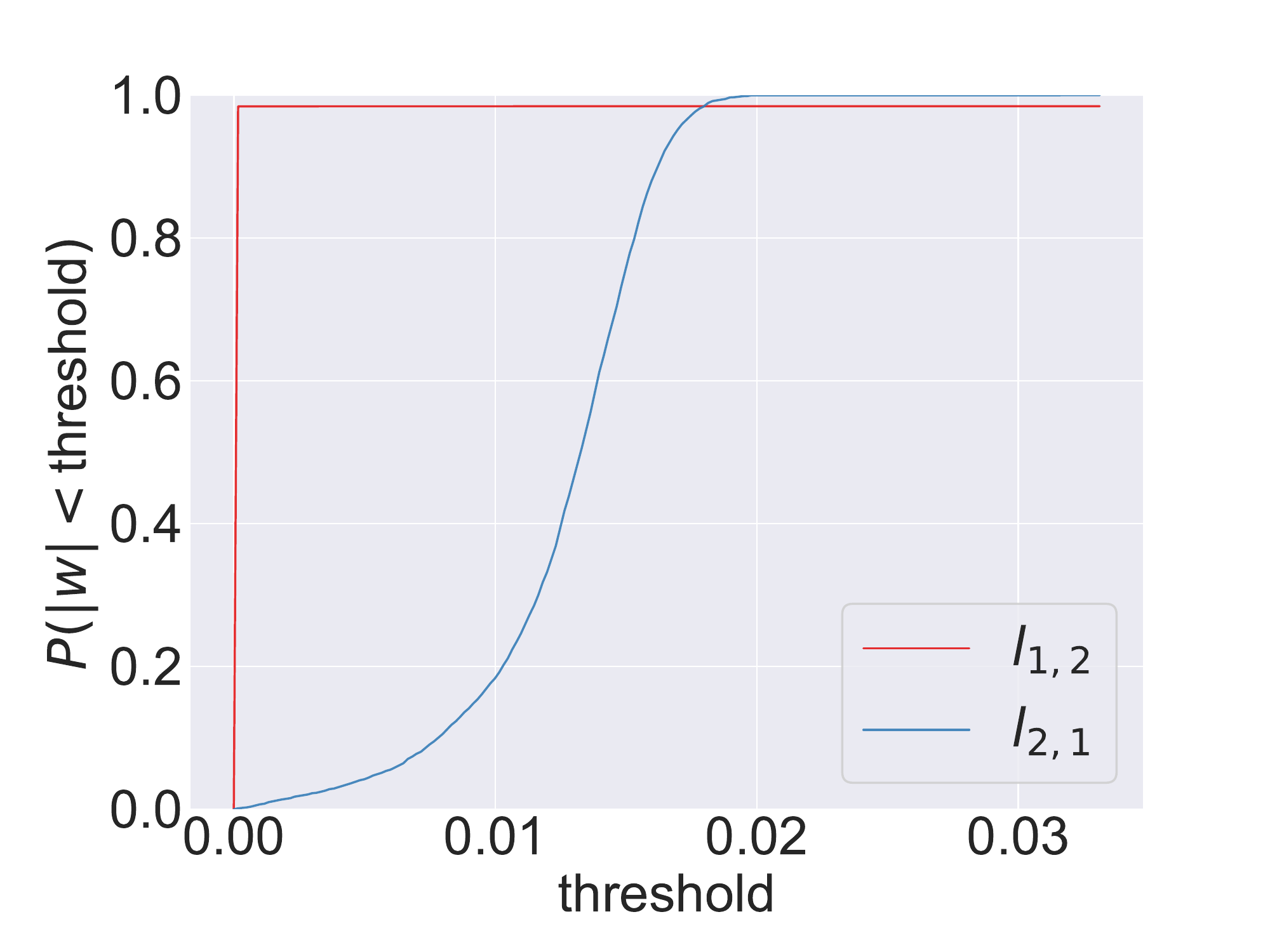}}

    \end{center}
    \vspace*{-20pt}
    \caption{Comparison of weight sparsity at different layers for different regularizers on the Mini-Imagenet classification task. The $i$-th figure demonstrates the weight sparsity at layer $i$ for different regularizers. A point $(x, y)$ indicates that there are $100\times y\%$ weight values lying in $[-x, x]$. We can see that the red curve dominates the blue one, which suggests that the proposed regularizer leads to sparser solutions than those of the traditional $\ell_2$ regularizer.}
    \label{exp:imagenet:fig:weight-sparsity}
\end{figure*}

\section{Conclusion}
\label{sec:conclusion}
 This paper  introduced the NFR technique to analyze  over-parameterized DNNs and 
showed that it is possible to reformulate overparameterized DNNs as convex systems. Moreover, when fully trained, DNNs learn effective feature representations  suitable for the underlying learning task via regularization. Our analysis is consistent with empirical observations.

Similar to the analysis of two-level NN in \citep{fang2019over}, this newly introduced NFR method paves the way for establishing global convergence results of standard optimization algorithms such as (noisy) gradient descent for overparameterized DNNs. We will leave such study as a future work.

\paragraph{Acknowledgement}
The authors would like to  thank   Jason Lee, Xiang Wang, and Pengkun Yang   for  very helpful discussions.


\bibliographystyle{apalike2}
\bibliography{overbib2}

\pagebreak\appendix
\pb
\section{Preliminary}
This section provides some useful known inequalities that will be  later used  in our proofs.
\subsection{Jensen's Inequality}
In our proof, we will frequently use the Jensen's inequality, which relates the value of a convex function of an integral to the integral of the convex function.  In probability theory, it states that   if  $\phi$ is a convex function, for a random variable $x$, we have
$$\phi \left(\EE [x]\right)\leq  \EE [\phi(x)].  $$
We are particularly interested in the case when $\phi(x)=| x|^{p}$ where $p\geq1$. We can obtain 
$$ |\EE [x]| \leq (\EE  [|x|^p])^{1/p}, \quad p\geq 1.  $$
In the finite form, Jensen's inequality takes the form of:
\begin{align}\label{pred}
   \phi\left( \frac{1}{n}\sum_{i=1}^{n} x_i  \right) \leq \frac{1}{n}\sum_{i=1}^{n}\phi(x_i).    
\end{align}
If we  further   assume  $x_i$ with $i\in[n]$ are independent random variables which follow from a same underlying  distribution, by taking expectation on \eqref{pred}, we have
\begin{align}
\EE\left[\phi\left( \frac{1}{n}\sum_{i=1}^{n} x_i  \right)\right] \leq \frac{1}{n}\sum_{i=1}^{n}\EE[\phi(x_i)]=\EE [\phi(x_1)].
\end{align}

\subsection{Rosenthal Inequality}
We will also use the Rosenthal Inequality to build relations between moments of a collection of independent random variables.   It is stated in the following:
\begin{lemma}[Rosenthal Inequality \citep{ibragimov1999analogues}] \label{rosenthal}
	Let $\xi_i$ with $i\in [n]$ be independent with $\E[\xi_i]=0$ and $\E[|\xi_i |^t]< \infty$ for some $t\geq 2$.  Then we have
	$$  \E\left| \sum_{i=1}^n\xi_i \right|^t\leq C^1_t\left(\sum_{i=1}^n \E |\xi_i|^t \right) +C^2_t\left(\sum_{i=1}^n \E [\xi_i^2] \right)^{t/2}, $$
\end{lemma}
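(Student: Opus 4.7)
Since the Rosenthal inequality is a classical result and the statement is for independent mean-zero random variables with $\E|\xi_i|^t<\infty$, my plan would follow the standard martingale / quadratic-variation route. The first step is to note that $S_k=\sum_{i=1}^k\xi_i$ is a martingale with difference sequence $\xi_i$, so by the Burkholder--Davis--Gundy inequality there is a constant $B_t$ such that
\begin{align*}
\E\Bigl|\sum_{i=1}^n\xi_i\Bigr|^t\le B_t\,\E\Bigl[\Bigl(\sum_{i=1}^n\xi_i^2\Bigr)^{t/2}\Bigr].
\end{align*}
This reduces everything to estimating the $t/2$-th moment of the quadratic variation, and handles the unavoidable mean-zero cancellation in a single clean step.

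The second step is to convert $\E[(\sum_i\xi_i^2)^{t/2}]$ into a sum of the two types of terms on the right-hand side of Rosenthal. I would write $U_i=\xi_i^2$ and $q=t/2\ge 1$ and split
\begin{align*}
\Bigl(\sum_i U_i\Bigr)^q = \Bigl(\sum_i U_i\Bigr)^q\mathbf 1_{\max_i U_i\le \tfrac12\sum_i U_i}
+\Bigl(\sum_i U_i\Bigr)^q\mathbf 1_{\max_i U_i>\tfrac12\sum_i U_i}.
\end{align*}
On the first event the sum is ``spread out'' and can be compared to $(\sum_i \E U_i)^q=(\sum_i\E\xi_i^2)^{t/2}$ via a Marcinkiewicz--Zygmund / centering argument applied to $U_i-\E U_i$; on the second event one index dominates, so $(\sum_i U_i)^q\le 2^q\max_i U_i^q\le 2^q\sum_i|\xi_i|^t$, giving the $\sum_i\E|\xi_i|^t$ piece directly. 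Combining the two pieces yields the Rosenthal bound with explicit constants $C_t^1,C_t^2$ depending only on $t$.

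An alternative route, which I would keep in reserve in case the dichotomy above becomes technically awkward, is symmetrization followed by Khintchine--Kahane. Replace $\xi_i$ by $\varepsilon_i\xi_i'$ where $\varepsilon_i$ are i.i.d.\ Rademacher and $\xi_i'$ are independent copies; standard symmetrization gives $\E|\sum\xi_i|^t\le 2^t\E|\sum\varepsilon_i\xi_i'|^t$. Conditioning on $(\xi_i')$ and applying Khintchine gives $\E_\varepsilon|\sum\varepsilon_i\xi_i'|^t\le K_t(\sum\xi_i'^2)^{t/2}$, and then one finishes by the same quadratic-variation decomposition as before.

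The main obstacle, and the place where the constants $C_t^1,C_t^2$ really get determined, is the second step: making the ``spread out vs.\ dominated'' split rigorous for non-integer $t$ and tracking how $U_i-\E U_i$ interacts with $(\sum_i U_i)^{q-1}$. For even integer $t$ one can bypass this entirely by direct multinomial expansion of $\E(\sum\xi_i)^t$, grouping terms by partition type and using independence plus $\E\xi_i=0$ to discard partitions with a singleton; the remaining partitions are bounded by either $\sum\E|\xi_i|^t$ (all indices equal) or $(\sum\E\xi_i^2)^{t/2}$ (pairings). Interpolation in $t$ via Hölder then extends this to every real $t\ge 2$, which is the cleanest way to finish and the route I would write up.
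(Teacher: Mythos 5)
The paper does not actually prove this lemma: it is quoted verbatim from Ibragimov and Sharakhmetov (1999) and used as a black box, so there is no in-paper argument to compare against. Your proposal is therefore an attempt to supply a proof of a classical result, and the question is only whether it is sound.

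Your primary route (Burkholder--Davis--Gundy to reduce $\E|\sum_i\xi_i|^t$ to $\E[(\sum_i\xi_i^2)^{t/2}]$, then a moment bound for the nonnegative sum $\sum_i U_i$ with $U_i=\xi_i^2$) is the standard strategy and does lead to Rosenthal's inequality. However, the step you describe for the ``spread out'' event is where the real work lives and, as written, it is circular: bounding $\E[(\sum_i U_i)^{q}]$ by $(\sum_i\E U_i)^{q}$ plus higher-moment corrections via a ``Marcinkiewicz--Zygmund / centering argument applied to $U_i-\E U_i$'' is itself a Rosenthal-type inequality at exponent $q=t/2$. The argument can be closed, but only by an explicit recursion on the exponent (halving $t$ at each stage until it drops to at most $2$, where orthogonality finishes), or by the cleaner identity $\E T^{q}=\sum_i\E[T^{q-1}U_i]$ with $T=\sum_iU_i$, splitting $T=T_{-i}+U_i$, using independence, and closing with Young's inequality; you supply neither. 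More seriously, the route you say you would actually write up --- multinomial expansion for even integer $t$ followed by ``interpolation in $t$ via H\"older'' --- does not work: H\"older between $t=2k$ and $t=2k+2$ bounds $\E|S_n|^t$ by a product of the endpoint bounds, whose individual-moment terms involve $\sum_i\E|\xi_i|^{2k}$ and $\sum_i\E|\xi_i|^{2k+2}$ rather than $\sum_i\E|\xi_i|^{t}$, and because the two terms on the right-hand side of Rosenthal have different homogeneity the interpolated expression does not reassemble into the stated two-term bound. You should either carry out the exponent recursion in your first route, or use the $\E[T^{q-1}U_i]$ decomposition, to obtain a complete proof for all real $t\ge 2$.
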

where  $C^1_t$ can be token as $(ct)^t$, $C^2_t = (c\sqrt{t})^t$, and $c$ is a constant.

\section{Proof of Theorem \ref{theorem:convergence-discrete-NN}: Consistency of Discretization}
\label{apx:discrete-consistency}
In this section, we prove Theorem \ref{theorem:convergence-discrete-NN} which shows the discrete DNN converges to the corresponding continuous one in $L^1$.
Before giving the detailed proof, we  give the following definitions.

\begin{itemize}
    \item[(1)] We denote $\cI^{(\ell)}$ to be the set consisting of all ordered sequences in the form of $\{k_1, k_2, \dots, k_{\ell} \}$ with   $k_{j}\in [m^{(j)}]$ and $j\in [\ell]$. That is
\begin{align}
    \cI^{(\ell)} := [m^{(1)}] \otimes [m^{(2)}] \otimes \cdots \otimes [m^{(\ell)}],\nonumber
\end{align}
where $\otimes$ means the  Cartesian product.

\item[(2)] We denote $\hat{\mathcal{Z}}^{(\ell)} := \{z^{(\ell)}_{j}:~ j\in[m^{(\ell)}]  \}$,  and let 
\begin{align}
    \{z\}_{\cI^{(\ell)}} :=  \hat{\mathcal{Z}}^{(1)}\cup \cdots \cup \hat{\mathcal{Z}}^{(\ell)}.\nonumber
\end{align}

\item[(3)] $\EE[f]$ means taking full expectation on all the random variables. $\EE_{\hcZ^{(\ell)}}[f]$ (or $\EE_{\{z\}_{\cI^{(\ell)}}}[f]$) stands for taking the (conditional) expectation  on $\hcZ^{(\ell)}$ (or $\{z\}_{\cI^{(\ell)}}$).

\item[(4)] We define 
$$\Delta^{(\ell)}\big(z^{(\ell)}\big):=\left|\frac{1}{m^{(\ell-1)}} \sum_{j=1}^{m^{(\ell-1)}} w(z^{(\ell)}, z^{(\ell-1)}_j) f^{(\ell-1)}(\rho, z^{(\ell-1)}_j;x)-g^{(\ell)}(\rho, z^{(\ell)};x)\right|, \quad \ell\in[L+1],\nonumber $$
and  
\begin{align}
    \Delta^{(L+1)}\big(z^{(L+1)}\big):=\left\|\frac{1}{m^{(L)}} \sum_{j=1}^{m^{(L)}} u( z^{(L)}_j) f^{(L)}(\rho, z^{(L)}_j;x)-f(\rho,u;x)\right\|,\nonumber
\end{align}
where we denote  $z^{(L+1)}=1$. Clearly, we have  $\Delta^{(1)}\big(z^{(1)}\big) = 0$.

\item[(5)] We define 
 $$
A(k,L+1) := 
\begin{cases}
1,&   k=L+1,\\
\left\|u(z^{(L)}) \right\|,&   k= L, \\
\left\|u(z^{(L)}) \right\| \prod_{j=k+1}^{L}|w(z^{(j)}, z^{(j-1)}) |,&   k\in [L-1], \\
\end{cases}
$$
and for $\ell\in \{2, \dots, L \}$, we define 
 $$
A(k,\ell) := 
\begin{cases}
1,&   k=\ell,\\
\prod_{j=k+1}^{\ell}|w(z^{(j)}, z^{(j-1)}) |,&   k< \ell. \\
\end{cases}
$$
\item[(6)] We define  
$$\bar{f}^{(\ell)}(z^{(\ell)};x) = h^{(\ell)}\Big(\frac{1}{m^{(\ell-1)}}\sum_{j=1}^{m^{(\ell-1)}} w(z^{(\ell)},z^{(\ell-1)}_j) \hat{f}^{(\ell-1)}_j(x)\Big), \quad \ell\in [L].\nonumber $$
 It holds that $\bar{f}^{(\ell)}(z^{(\ell)}_j;x)=\hat{f}^{(\ell)}_j(x)$ for all $\ell\in [L]$ and $j\in [m^{(\ell)}]$.  We note that $\Delta^{(\ell)}(z^{(\ell)})$ depends on  random variables in  $\hcZ^{(\ell-1)}$ and $\bar{f}^{(\ell)}(z^{(\ell)};x)$  depends on  the random variables in $\hcZ^{(1)}, \dots, \hcZ^{(\ell-1)}$.
\end{itemize}
Now we begin our proof.\\\\
\textbf{Step 1.} We prove for all $\ell\in \{2, \dots, L \}$,
\begin{align}
\!\!\!\!\EE_{z^{(\ell)}}\EE_{\{z\}_{\cI^{(\ell-1)}}}\left| \bar{f}^{(\ell)}(z^{(\ell)};x) - f^{(\ell)}(\rho, z^{(\ell)};x)\right|\!\leq\! \sum_{j=2}^{\ell} (c_0)^{\ell-j+1}\E_{z^{(j)}, \dots, z^{(\ell)}} \E_{\hcZ^{(j-1)}}  A(j, \ell)  \Delta^{(j)}(z^{(j)}).\label{theoe}
\end{align}
In the inequalities above $\E_{z^{(j)}, \dots, z^{(\ell-1)}} \E_{\hcZ^{(j-1)}}$ means $\E_{\hcZ^{(j-1)}}$ when $j=\ell$, i.e., we only take expectation on $\hcZ^{(j-1)}$.
Moreover, we have
\begin{align}
    &\EE_{\{z\}_{\cI^{(L)}}}  \left\|\hf(x) - f(\rho, u, x)  \right\|\leq \sum_{j=2}^{\ell} (c_0)^{\ell-j+1}\E_{z^{(j)}, \dots, z^{(\ell-1)}} \E_{\hcZ^{(j-1)}}  A(j, \ell)  \Delta^{(j)}(z^{(j)}).
    \label{theo:end1}
\end{align}
\begin{proof}
For all $\ell\in [L] $ and $k\in[\ell]$, we have 
\begin{align}
    &\EE_{\{z\}_{\cI^{(k-1)}}}A(k,\ell)\left| \bar{f}^{(k)}(z^{(k)};x) - f^{(k)}(\rho, z^{(k)};x) \right|\label{the1:a1}\\
    =&\EE_{\{z\}_{\cI^{(k-1)}}}A(k,\ell)\Bigg| h^{(k)} \Big(\frac{1}{m^{(k-1)}}\sum_{j=1}^{m^{(k-1)}} w(z^{(k)},z^{(k-1)}_j) \hat{f}^{(k-1)}_j(x)\Big) \notag\\
    &\quad\quad\quad\quad\quad\quad\quad\quad\quad\quad\quad\quad\quad\quad\quad\quad\quad\quad~~~ - h^{(k)} \Big( \E_{z^{(k-1)}}\left[ w(z^{(k)},z^{(k-1)})f^{(k-1)}(\rho,z^{(k-1)};x)\right] \Big) \Bigg|\notag\\
    \overset{a}\leq& c_0\EE_{\{z\}_{\cI^{(k-1)}}}A(k,\ell)\Bigg|  \underbrace{\frac{1}{m^{(k-1)}}\sum_{j=1}^{m^{(k-1)}} \left[w(z^{(k)},z^{(k-1)}_j) \hat{f}^{(k-1)}_j(x)\right]}_{=a_1}\notag\\
    &\quad\quad\quad\quad\quad\quad\quad\quad\quad\quad\quad\quad\quad\quad\quad\quad\quad\quad\quad\quad\quad~~ -\underbrace{\E_{z^{(k-1)}}\left[ w(z^{(k)},z^{(k-1)})f^{(k-1)}(\rho,z^{(k-1)};x)\right]}_{=g^k(\rho, z^{(k)};x)=b_1}\Bigg|\notag\\
    \overset{b}{\leq}& c_0\EE_{\{z\}_{\cI^{(k-1)}}}A(k,\ell)\Bigg|  \underbrace{\frac{1}{m^{(k-1)}}\sum_{j=1}^{m^{(k-1)}} w(z^{(k)},z^{(k-1)}_j) f^{(k-1)}(\rho, z^{(k-1)}_j;x)}_{=c_1}- g^{(k)}(\rho,z^{(k)};x)\Bigg|\notag\\
    &\quad\quad\quad\quad~~~ +c_0\EE_{\{z\}_{\cI^{(k-1)}}}A(k,\ell)\Bigg|  \underline{\frac{1}{m^{(k-1)}}\sum_{j=1}^{m^{(k-1)}}} w(z^{(k)},z^{(k-1)}_j) \left[\underline{\hat{f}^{(k-1)}_j(x)}- f^{(k-1)}(\rho, z^{(k-1)}_j;x)\right] \Bigg|\notag\\
    \leq&    c_0\EE_{\{z\}_{\cI^{(k-1)}}}A(k,\ell)\Bigg|  \frac{1}{m^{(k-1)}}\sum_{j=1}^{m^{(k-1)}} w(z^{(k)},z^{(k-1)}_j) f^{(k-1)}(\rho, z^{(k-1)}_j;x)- g^{(k)}(\rho,z^{(k)};x)\Bigg|\notag\\
    &\quad\quad~ +c_0\EE_{\{z\}_{\cI^{(k-1)}}}A(k,\ell)\underline{\frac{1}{m^{(k-1)}}\sum_{j=1}^{m^{(k-1)}}}\Bigg|   w(z^{(k)},z^{(k-1)}_j) \left[\underline{\bar{f}^{(k-1)}(z^{(k-1)}_j;x)}- f^{(k-1)}(\rho, z^{(k-1)}_j;x)\right] \Bigg|\notag\\
    \overset{c}{=}&   c_0 \EE_{\{z\}_{\cI^{(k-1)}}} A(k,\ell) \Delta^{(k)}(z^{(k)})\notag\\
    &\quad\quad\quad\quad\quad~ +c_0\EE_{z^{(k-1)}}\EE_{\{z\}_{\cI^{(k-2)}}}A(k,\ell)\Bigg|   w(z^{(k)},z^{(k-1)}) \left[\bar{f}^{(k-1)}(z^{(k-1)};x)- f^{(k-1)}(\rho, z^{(k-1)};x)\right] \Bigg|\notag\\
    \overset{d}=&   c_0\EE_{\hcZ^{(k-1)}}A(k,\ell) \Delta^{(k)}(z^{(k)}) +c_0\EE_{z^{(k-1)}}\EE_{\{z\}_{\cI^{(k-2)}}}A(k-1,\ell)\Bigg|  \bar{f}^{(k-1)}(z^{(k-1)};x)- f^{(k-1)}(\rho, z^{(k-1)};x) \Bigg|, \notag
\end{align}
where $\overset{a}\leq$ we use Assumption \ref{ass:2} and obtain the result by mean value theorem,  $\overset{b}\leq$ uses triangle inequality, i.e., $|a_1-b_1|\leq |c_1-b_1 |+|a_1-c_1|$, in $\overset{c}=$, we take conditional expectation on $\hat{\mathcal{Z}}^{(k-1)}$ for the second term, and in $\overset{d}=$, we use the fact that $\Delta^{(k)}(z^{(k)})$ only depends on random variables  $\hcZ^{(k-1)}$. Thus for all $\ell\in [L]$, we have
\begin{align}
      &\EE_{\{z\}_{\cI^{(\ell-1)}}}\left| \bar{f}^{(\ell)}(z^{(\ell)};x) - f^{(\ell)}(\rho, z^{(\ell)};x)\right|\label{theo1:a2}\\
      =&\EE_{\{z\}_{\cI^{(\ell-1)}}} A(\ell, \ell)\left| \bar{f}^{(\ell)}(z^{(\ell)};x) - f^{(\ell)}(\rho, z^{(\ell)};x)\right|\notag\\
      \overset{\eqref{the1:a1}}{\leq}&c_0\EE_{\hcZ^{(\ell-1)}}A(\ell,\ell) \Delta^{(\ell)}(z^{(\ell)}) +c_0\EE_{z^{(\ell-1)}}\underbrace{\EE_{\{z\}_{\cI^{(\ell-2)}}}A(\ell-1,\ell)\Bigg|  \bar{f}^{(\ell-1)}(z^{(\ell)};x)- f^{(\ell-1)}(\rho, z^{(\ell-1)};x) \Bigg|}_{\eqref{the1:a1}}\notag\\
          \leq & \ldots\nonumber\\
     &\vdots \nonumber\\
     \leq& \sum_{j=1}^{\ell} (c_0)^{\ell-j+1}\E_{z^{(j)}, \dots, z^{(\ell-1)}} \E_{\hcZ^{(j-1)}} A(j,\ell) \Delta^{(j)}(z^{(j)}) + (c_0)^{\ell}\E_{z^{1}, \dots, z^{\ell-1}} \underbrace{\E_{\hcZ^{(0)}}\Bigg|  \bar{f}^{(1)}(z^{(1)};x)- f^{(1)}(\rho, z^{(1)};x) \Bigg|}_{0}\notag\\
     \overset{a}=&\sum_{j=2}^{\ell} (c_0)^{\ell-j+1}\E_{z^{(j)}, \dots, z^{(\ell-1)}} \E_{\hcZ^{(j-1)}}  A(j, \ell)  \Delta^{(j)}(z^{(j)}),\notag
\end{align}
where in $\overset{a}=$ we use $\Delta^{(1)}\big(z^{(1)}\big) = 0$.

We consider the top layer. In fact, using the same technique in \eqref{the1:a1},  we have  

\begin{align}
    &\EE_{\{z\}_{\cI^{(L)}}}  \left\|\hf(x) - f(\rho, u, x)  \right\|\\
    =&\EE_{\{z\}_{\cI^{(L)}}}\left\| \frac{1}{m^{(L)}}\sum_{j=1}^{m^{(L)}} u^{(L)}_j\bar{f}^{(L)}(z^{(L)}_j; x) - \E_{z^{(L)}}\left[ u^{(L)} f^{(L)}(\rho, z^{(L)};x) \right]  \right\|\notag\\
    \leq&\EE_{\{z\}_{\cI^{(L)}}}\left\| \frac{1}{m^{(L)}}\sum_{j=1}^{m^{(L)}} u^{(L)}_j\left[\bar{f}^{(L)}(z^{(L)}_j; x) -f^{(L)}(\rho, z^{(L)}_j;x) \right] \right \|\notag\\
    &\quad\quad\quad\quad\quad\quad\quad\quad\quad\quad\quad~~+ \EE_{\{z\}_{\cI^{(L)}}}\left\| \frac{1}{m^{(L)}}\sum_{j=1}^{m^{(L)}} u^{(L)}_jf^{(L)}(z^{(L)}_j; x) - \E_{z^{(L)}}\left[ u^{(L)} f^{(L)}(\rho, z^{(L)};x) \right] \right \|\notag\\
    \overset{a}{\leq}& \EE_{z^{(L)}}\underbrace{\EE_{\{z\}_{\cI^{(L-1)}}} \left\| u^{(L)}\right\|\left|  \bar{f}^{(L)}(z^{(L)}; x) -f^{(L)}(\rho, z^{(L)};x) \right |}_{\eqref{theo1:a2} \text{ with } \ell=L}\notag\\
    &\quad\quad\quad\quad\quad\quad\quad\quad\quad\quad\quad~~+ \EE_{\{z\}_{\cI^{(L)}}} \left\| \frac{1}{m^{(L)}}\sum_{j=1}^{m^{(L)}} u^{(L)}_jf^{(L)}(z^{(L)}_j; x) - \E_{z^{(L)}}\left[ u^{(L)} f^{(L)}(\rho, z^{(L)};x) \right] \right \|\notag\\
    \leq&\sum_{j=2}^{L+1} (c_0)^{L-j+1}\E_{z^{(j)}, \dots, z^{(L)}}\E_{\hcZ^{(j-1)}} A(j, L+1)  \Delta^{(j)}(z^{(j)}),\notag
\end{align}
 where  $\overset{a}\leq$ is obtained by 
\begin{align}
    &\EE_{\{z\}_{\cI^{(L)}}}\left\| \frac{1}{m^{(L)}}\sum_{j=1}^{m^{(L)}} u^{(L)}_j\left[\bar{f}^{(L)}(z^{(L)}_j; x) -f^{(L)}(\rho, z^{(L)}_j;x) \right] \right \|\notag\\
    \leq&  \EE_{\{z\}_{\cI^{(L)}}}\frac{1}{m^{(L)}}\sum_{j=1}^{m^{(L)}}\left\|  u^{(L)}_j\left[\bar{f}^{(L)}(z^{(L)}_j; x) -f^{(L)}(\rho, z^{(L)}_j;x) \right] \right \|\notag\\
    =&\EE_{z^{(L)}} \EE_{\{z\}_{\cI^{(L-1)}}}\left\|  u^{(L)}\left[\bar{f}^{(L)}(z^{(L)}; x) -f^{(L)}(\rho, z^{(L)};x) \right] \right \|\notag\\
    \leq& \EE_{z^{(L)}} \EE_{\{z\}_{\cI^{(L-1)}}}\left\| u^{(L)}\right\|\left|  \bar{f}^{(L)}(z^{(L)}; x) -f^{(L)}(\rho, z^{(L)};x) \right |.\notag
\end{align}
We obtain \eqref{theo:end1}. On the other hand,  taking expectation on $z^{(\ell)}$ for \eqref{theo1:a2}, we can obtain \eqref{theoe}.
\end{proof}

\noindent\textbf{Step 2.} We prove for  all  $\ell \in \{2, \dots,L\}$ and $j\in \{2, \dots, \ell\}$, we have 
\begin{align}
  \lim_{m^{(j-1)}\to \infty}  \E_{z^{(j)}, \dots, z^{(\ell)}} \E_{\hcZ^{(j-1)}}  A(j, \ell)  \Delta^{(j)}(z^{(j)}) = 0. \label{theo1:3}
\end{align}
For all $j\in \{2, \dots, L+1\}$,
\begin{align}
  \lim_{m^{(j-1)}\to \infty}  \E_{z^{(j)}, \dots, z^{(L)}} \E_{\hcZ^{(j-1)}}  A(j, L+1)  \Delta^{(j)}(z^{(j)}) = 0. \label{theo1:31}.
\end{align}
Note that when $j=L+1$, $\Delta(z^{(L+1)})$ is vector and the expectation is only taken on $\hcZ^{(L)}$.

\begin{proof}
We denote 
$$
Q{(\ell)}:=
\begin{cases}
1,&   \ell= L+1, \\
\left(\|u^{(L)}(z^{(L)})\|\vee 1]\right)\prod_{i=\ell+1}^{L} \left( |w(z^{(i)}, z^{(i-1)})|\vee 1\right),& \ell\in [L].
\end{cases}
$$
In fact, for all $\ell \in \{2, \dots, L\}$ and $j\in \{2, \dots, \ell\}$, we have 
\begin{align}
     \E_{z^{(j)}, \dots, z^{(\ell)}} \E_{\hcZ^{(j-1)}}  A(j, \ell)  \Delta^{(j)}(z^{(j)})\leq \E_{z^{(j)}, \dots, z^{(L)}} \E_{\hcZ^{(j-1)}}  Q{(j)}\Delta^{(j)}(z^{(j)}).\label{theot}
\end{align}
It is also true that all for $j\in [L+1]$, 
\begin{align}
    \E_{z^{(j)}, \dots, z^{(L)}} \E_{\hcZ^{(j-1)}}  A(j, L+1)  \Delta^{(j)}(z^{(j)})\leq \E_{z^{(j)}, \dots, z^{(L)}} \E_{\hcZ^{(j-1)}}  Q{(j)}\Delta^{(j)}(z^{(j)}).\label{theot2}
\end{align}
In the following, we denote
 $$
\xi^{(j)}_k := 
\begin{cases}
Q(j) \left[w(z^{(j)},z^{(j-1)}_k) f^{(j-1)}(\rho, z^{(j-1)}_k;x)- g^{(j)}(\rho,z^{(j)};x)\right],&  j\in \{2, \dots, L\},~~k\in[m^{(j-1)}], \\
u( z^{(L)}_k) f^{(L)}(\rho, z^{(L)}_k;x)-f(\rho,u;x),&    j=L+1,~~k\in[m^{(L)}].
\end{cases}
$$
Note that $\xi^{(L+1)}_k$ with $k\in[m^{(L)}]$ are  $K$ dimensional vectors. Because the real numbers can also be treated as   one dimensional vectors.  Below we treat $\xi^{(j)}_k$ as a vector for the sake of simplicity.  We have for all  $j\in \{2, \dots, L+1\}$
\begin{align}
&  \E_{z^{(j)}, \dots, z^{(L)}} \E_{\hcZ^{(j-1)}}  Q{(j)}\Delta^{(j)}(z^{(j)})\label{theo:a4}\\
  = & \E_{z^{(j)}, \dots, z^{(L)}} \E_{\hcZ^{(j-1)}}  \left\|\frac{1}{m^{(j-1)}} \sum_{k=1}^{m^{(j-1)}} \xi^{(j)}_k\right\|\notag\\
    \overset{a}{\leq}&   \E_{z^{(j)}, \dots, z^{(L)}} \E_{\hcZ^{(j-1)}}  \left\|\frac{1}{m^{(j-1)}} \sum_{k=1}^{m^{(j-1)}} \xi^{(j)}_k \bone_{\|\xi^{(j)}_k\|\leq M}\right\| +  \E_{z^{(j)}, \dots, z^{(L)}}\E_{\hcZ^{(j-1)}}  \left\|\frac{1}{m^{(j-1)}} \sum_{k=1}^{m^{(j-1)}} \xi^{(j)}_k\bone_{\|\xi^{(j)}_k\|> M}\right\|\notag\\
    \overset{b}{\leq}& \E_{z^{(j)}, \dots, z^{(L)}} \sqrt{\E_{\hcZ^{(j-1)}}  \left\|\frac{1}{m^{(j-1)}} \sum_{k=1}^{m^{(j-1)}} \xi^{(j)}_k\bone_{\|\xi^{(j)}_k\|\leq M}\right\|^2}+ \E_{z^{(j)}, \dots, z^{(L)}}\E_{\hcZ^{(j-1)}} \frac{1}{m^{(j-1)}} \sum_{k=1}^{m^{(j-1)}} \left\|\xi^{(j)}_k\bone_{\|\xi^{(j)}_k\|> M} \right\|\notag\\
    \leq& \E_{z^{(j)}, \dots, z^{(L)}} \sqrt{\E_{\hcZ^{(j-1)}}  \left\|\frac{1}{m^{(j-1)}} \sum_{k=1}^{m^{(j-1)}} \xi^{(j)}_k\bone_{\|\xi^{(j)}_k\|\leq M}\right\|^2}+ \E_{z^{(j)}, \dots, z^{(L)}}\E_{z^{(j-1)}_1} \left\|\xi^{(j)}_1\bone_{\|\xi^{(j)}_1\|> M} \right\|,\notag
\end{align}
where in $\overset{a}=$, we use $\bone$ to denote indicator function, then $$\frac{1}{m^{(j-1)}} \sum_{k=1}^{m^{(j-1)}} \xi^{(j)}_k =\frac{1}{m^{(j-1)}} \sum_{k=1}^{m^{(j-1)}} \xi^{(j)}_k \bone_{\|\xi^{(j)}_k\|\leq M}+\frac{1}{m^{(j-1)}} \sum_{k=1}^{m^{(j-1)}} \xi^{(j)}_k \bone_{\|\xi^{(j)}_k\|> M},$$
and obtain the result by triangle inequality, and $\overset{b}\leq$ uses Jensen's inequality.
For the first term in the right hand side of \eqref{theo:a4}, we have 
\begin{align}
    &\E_{z^{(j)}, \dots, z^{(L)}} \sqrt{\E_{\hcZ^{(j-1)}}  \left\|\frac{1}{m^{(j-1)}} \sum_{k=1}^{m^{(j-1)}} \xi^{(j)}_k\bone_{\|\xi^{(j)}_k\|\leq M}\right\|^2}\label{theo:end4}\\
    =&\E_{z^{(j)}, \dots, z^{(L)}} \Bigg(\frac{1}{(m^{(j-1)})^2}\sum_{k_1=1}^{m^{(j-1)}}\E_{z^{(j-1)}_{k_1}}\left\| \xi^{(j)}_{k_1}\bone_{\|\xi^{(j)}_{k_1}\|\leq M }\right\|^2\notag\\
    &\quad\quad\quad\quad\quad\quad\quad\quad +\frac{1}{(m^{(j-1)})^2}\sum_{k_1=1}^{m^{(j-1)}}\sum_{k_2=1, k_2\neq k_1}^{m^{(j-1)}} \E_{z^{(j-1)}_{k_1}}\E_{z^{(j-1)}_{k_2}} \left[\xi^{(j)}_{k_1}\bone_{\|\xi^{(j)}_{k_1}\|\leq M}\right]^\top \left[\xi^{(j)}_{k_2}\bone_{\|\xi^{(j)}_{k_2}\|\leq M}\right]\Bigg)^{1/2} \notag\\
    \overset{a}=&\E_{z^{(j)}, \dots, z^{(L)}} \sqrt{\frac{1}{(m^{(j-1)})}\E_{z^{(j-1)}_{1}}\left\| \xi^{(j)}_{1}\bone_{\|\xi^{(j)}_1\|\leq M }\right\|^2+\frac{m^{(j-1)}-1}{m^{(j-1)}} \left\|\E_{z^{(j-1)}_{1}} \left[\xi^{(j)}_{1}\bone_{\|\xi^{(j)}_1\|\leq M}\right] \right\|^2}\notag\\
    =&\E_{z^{(j)}, \dots, z^{(L)}} \sqrt{\frac{1}{(m^{(j-1)})}\E_{z^{(j-1)}_{1}}\left\| \xi^{(j)}_{1}\bone_{\|\xi^{(j)}_1\|\leq M }\right\|^2+\underbrace{\frac{m^{(j-1)}-1}{m^{(j-1)}}}_{\leq 1} \underbrace{\left\| \E_{z^{(j-1)}_{1}}\left[\xi^{(j)}_1- \xi^{(j)}_1 \bone_{\|\xi^{(j)}_1\|> M}\right] \right\|^2}_{\EE_{z^{(j-1)}_{1}}  \xi_1^{j}=0}}\notag\\
        \leq&\E_{z^{(j)}, \dots, z^{(L)}} \sqrt{\frac{1}{(m^{(j-1)})}\E_{z^{(j-1)}_{1}}\left\| \xi^{(j)}_{1}\bone_{\|\xi^{(j)}_1\|\leq M }\right\|^2+ \left\| \E_{z^{(j-1)}_{1}}\left[\xi^{(j)}_1 \bone_{\|\xi^{(j)}_1\|> M}\right] \right\|^2}\notag\\
        \overset{b}\leq& \frac{1}{\sqrt{m^{(j-1)}}}\E_{z^{(j)}, \dots, z^{(L)}}\sqrt{\E_{z^{(j-1)}_{1}}\left\| \xi^{(j)}_{1}\bone_{\|\xi^{(j)}_1\|\leq M }\right\|^2  }+ \E_{z^{(j)}, \dots, z^{(L)}}\left\| \E_{z^{(j-1)}_{1}} \xi^{(j)}_1 \bone_{\|\xi^{(j)}_1\|> M} \right\|\notag\\
        \overset{c}{\leq}&\frac{1}{\sqrt{m^{(j-1)}}}\sqrt{\E_{z^{(j)}, \dots, z^{(L)}}\E_{z^{(j-1)}_{1}}\underbrace{\left[ \xi^{(j)}_{1}\bone_{\|\xi^{(j)}_1\|\leq M }\right]^2}_{\leq M^2}  }+ \E_{z^{(j)}, \dots, z^{(L)}} \E_{z^{(j-1)}_{1}} \left\|\xi^{(j)}_1 \bone_{\|\xi^{(j)}_1\|> M}\right\| \notag\\
        \leq&\frac{M}{\sqrt{m^{(j-1)}}}+ \E_{z^{(j)}, \dots, z^{(L)}} \E_{z^{(j-1)}_{1}} \left\|\xi^{(j)}_1 \bone_{\|\xi^{(j)}_1\|> M}\right\|,\notag
\end{align}
where in $\overset{a}=$, we use $\left[\xi^{(j)}_{k_1}\bone_{\|\xi^{(j)}_{k_1}\|\leq M}\right]$ and $\left[\xi^{(j)}_{k_2}\bone_{\|\xi^{(j)}_{k_2}\|\leq M}\right]$ are independent given $z^{(j)}, \dots, z^{(L)}$, and obtain the result by:
\begin{align}
    &\sum_{k_1=1}^{m^{(j-1)}}\sum_{k_2=1, k_2\neq k_1}^{m^{(j-1)}} \E_{z^{(j-1)}_{k_1}}\E_{z^{(j-1)}_{k_2}} \left[\xi^{(j)}_{k_1}\bone_{\|\xi^{(j)}_{k_1}\|\leq M}\right]^\top \left[\xi^{(j)}_{k_2}\bone_{\|\xi^{(j)}_{k_2}\|\leq M}\right]\notag\\
    =&\sum_{k_1=1}^{m^{(j-1)}}\sum_{k_2=1, k_2\neq k_1}^{m^{(j-1)}}\left[\E_{z^{(j-1)}_{k_1}} \xi^{(j)}_{k_1}\bone_{\|\xi^{(j)}_{k_1}\|\leq M}  \right]^\top\left[\E_{z^{(j-1)}_{k_2}} \xi^{(j)}_{k_2}\bone_{\|\xi^{(j)}_{k_2}\|\leq M}  \right]\notag\\
    =&m^{(j-1)}(m^{(j-1)}-1) \left\|\E_{z^{(j-1)}_{1}} \xi^{(j)}_{1}\bone_{\|\xi^{(j)}_{1}\|\leq M}  \right\|^2,\notag
\end{align}
$\overset{b}\leq$ uses the fact that
$$\sqrt{a^2+b^2}\leq a+b, ~~~~ a\geq0,~ b\geq0,   $$
and  $\overset{c}\leq$ uses Jensen's inequality.

The rest to do is to  bound the term  $\E_{z^{(j)}, \dots, z^{(L)}}\E_{z^{(j-1)}_1} \left\|\xi^{(j)}_1\bone_{\|\xi^{(j)}_1\|> M} \right\|$. We have 
\begin{align}
    &\E_{z^{(j)}, \dots, z^{(L)}}\E_{z^{(j-1)}_1} \left\|\xi^{(j)}_1\bone_{\|\xi^{(j)}_1\|> M} \right\|\notag\\
    \leq&   \E_{z^{(j)}, \dots, z^{(L)}}\E_{z^{(j-1)}_1} \left\|\xi^{(j)}_1\right\|\bone_{\|\xi^{(j)}_1\|> M}\label{theo:e3}\\
    \overset{a}{\leq}& \E_{z^{(j)}, \dots, z^{(L)}}\E_{z^{(j-1)}_1} \frac{1}{1+c_{\ep}}\left\|\frac{\xi^{(j)}_1}{M^{\frac{c_{\ep}}{(1+c_{\ep})}}}\right\|^{1+c_{\ep}} + \E_{z^{(j)}, \dots, z^{(L)}}\E_{z^{(j-1)}_1} \tilde{c}_{\ep}\left[\bone_{\|\xi^{(j)}_1\|> M}M^{\frac{c_{\ep}}{(1+c_{\ep})}}\right]^{1/\tilde{c}_{\ep}}\notag\\
    =&\frac{\E_{z^{(j)}, \dots, z^{(L)}}\E_{z^{(j-1)}_1} \frac{1}{1+c_{\ep}}\|\xi^{(j)}_1\|^{1+c_{\ep}}}{M^{c_{\ep}}} +\tilde{c}_{\ep} M \E_{z^{j}, \dots, z^{(L)}}\E_{z^{(j-1)}_1} \left[\bone_{\|\xi^{(j)}_1\|> M}\right]\notag\\
    \overset{b}\leq&
\frac{\E_{z^{(j)}, \dots, z^{(L)}}\E_{z^{(j-1)}_1} \frac{1}{1+c_{\ep}}\|\xi^{(j)}_1\|^{1+c_\ep}}{M^{c_{\ep}}}  +\tilde{c}_{\ep}  \frac{ \E_{z^{j}, \dots, z^{(L)}}\E_{z^{(j-1)}_1} \|\xi^{(j)}_1\|^{1+c_{\ep}}}{M^{c_{\ep}}}\notag\\
        \overset{c}\leq& 
    \frac{c_M \vee c_{M_1}}{M^{c_{\ep}}},\notag
\end{align}
where in $\overset{a}\leq$, we set $\tilde{c}_{\ep} = \frac{c_{\ep}}{1+c_{\ep}}$ and obtain the result by Yong's inequality, that is 
$$ | xy |\leq \frac{|x|^{p}}{p} + \frac{|y|^{q}}{q},  \quad\quad \frac{1}{p}+\frac{1}{q}=1,$$
with $x= \left\|\frac{\xi^{(j)}_1}{M^{\frac{c_{\ep}}{(1+c_{\ep})}}}\right\|$, $y=\bone_{\|\xi^{(j)}_1\|> M}M^{\frac{c_{\ep}}{(1+c_{\ep})}}$, $p = 1 + c_{\ep} $ and $q = \frac{1+c_{\ep}}{c_{\ep}}$, in $\overset{b}\leq$, we use Markov's inequality, i.e. for random variable $\xi$,  we have
$$ \PP(|\xi|\geq M) \leq \frac{\E |\xi|^{1+c_{\ep}}}{M^{1+c_{\ep}}}, $$
$\overset{c}{\leq}$ uses Assumption \ref{ass:3}.
Finally,  by plugging \eqref{theo:end4} and \eqref{theo:e3} into  \eqref{theo:a4},  for all  $j\in \{2, \dots, L+1\}$, we have
\begin{align}
 \E_{z^{(j)}, \dots, z^{(L)}} \E_{\hcZ^{(j-1)}}  Q{(j)}\Delta^{(j)}(z^{(j)})
     \leq\frac{M}{\sqrt{m^{(j-1)}}} + \frac{2(c_M \vee c_{M_1})}{M^{c_{\ep}}}\overset{a}= \mathcal{O}\left(\frac{1}{(m^{(j-1)})^{\frac{c_{\ep}}{2(1+c_{\ep})}}}\right).
\end{align}
where in $\overset{a}=$, we set  $M =\left(m^{(j-1)}\right)^{\frac{1}{2(1+c_{\ep})}}$. 
Then with $m^{(j-1)}\to \infty$, from \eqref{theot} and \eqref{theot2}, we obtain \eqref{theo1:3} and \eqref{theo1:31}, respectively. 
\end{proof}

\noindent\textbf{Proof of Theorem \ref{theorem:convergence-discrete-NN}:}
In fact, by plugging \eqref{theo1:31} into \eqref{theo:end1}, we obtain  (\ref{the:con2}) in Theorem \ref{theorem:convergence-discrete-NN}. Similarity,  we obtain  (\ref{the:con1}) in  Theorem \ref{theorem:convergence-discrete-NN} via plugging \eqref{theo1:3} into \eqref{theoe}, which finishes the proof.

\section{Proof of Theorem \ref{var}: Variance of Discrete Approximation}\label{app:2}
In this section, we prove Theorem \ref{var} which estimates the variance of discrete approximation. The proof is more complicated than that of Theorem \ref{theorem:convergence-discrete-NN}, because we  consider the first-order approximation of $\hf^{(\ell)}_j(x)$ with $\ell\in [L]$.     Before giving the detailed proof,  we further define the following:

\begin{itemize}
\item[(1)] We let $\breve{f}^{(0)}(\rho, z^{(0)};x) := x_{z^{(0)}}$ with $z^{(0)}\in \cZ{0}$. For $z^{(\ell)}\in \cZ{\ell}$ and $\ell\in [L]$, we define 
\begin{align}
\!\!\!\! \breve{f}^{(\ell)}(\rho, z^{(\ell)};x) :=&f^{(\ell)}(\rho,z^{(\ell)};x)+\nabla \hl\left( \gl(\rho,\zl;x) \right)\!\left[\breve{g}^{(\ell)}(\rho, z^{(\ell)};x) - \gl(\rho,\zl;x) \right],\label{def bf}
\end{align}
where
\begin{align}
    \breve{g}^{(\ell)}(\rho,z^{(\ell)};x):=\frac{1}{m^{(\ell-1)}} \sum_{i=1}^{m^{(\ell-1)}} w(\zl, \zmi_i)\breve{f}^{(\ell-1)}(\rho,z^{(\ell-1)}_i;x), \mbox{ with } \ell\in [L].\nonumber
\end{align}

Moreover, we define 
\begin{align}\label{ffo}
\breve{f}(\rho,u ; x) := \frac{1}{m^{(L)}}\sum_{i=1}^{m^{(L)}}\breve{f}^{(L)}(\rho,\zL_i;x)u(\zL_i).
\end{align}
Note that $\breve{f}^{(\ell)}$ and $ \breve{g}^{(\ell)}$  depend on random variables $\hcZ^{(1)}, \dots, \hcZ^{(\ell-1)}$.  For  simplicity, we do not show them in the definitions explicitly.

\item[(2)] We define $S^{(\ell)}(\rho,z^{(1)}, \ldots, z^{(\ell)};x)$ for $\ell \in [L]$ by letting
\begin{align}
&S^{(1)}(\rho, z^{(1)};x) :=h^{(1)}\left( \frac{1}{d}\sum_{i=1}^d w(z^{(1)}, z^{(0)}_i) x_{z^{(0)}_i}\right),\nonumber\\
&S^{(\ell)}(\rho,z^{(1)}, \dots, z^{(\ell)};x):=\hl \left( \gl(\rho,\zl;x) \right) \label{def S}\\
&\quad \quad \quad  +\nabla \hl\left(\gl(\rho,\zl;x) \right)\left[ w(\zl, \zmi)S^{(\ell-1)}(\rho, z^{(1)},\ldots, \zmi;x)- \gl(\rho, \zl;x) \right],\nonumber
\end{align}
where $\ell\in\{2, 3,\dots, L\}$. Finally, we define
\begin{align*}
\tilde{S}(\rho,z^{(1)}, \dots, z^{(L)},u;x) := S^{(L)}(\rho,z^{(1)}, \dots, z^{(L)};x)u( z^{(L)}) - f(\rho,u;x). \notag
\end{align*}
We notice that the Assumption \ref{ass:3} is slight stronger than assuming the bounded $q_0$-th moment on  $\tilde{S}(\rho,z^{(1)}, \dots,  z^{(L)},u;x)$. From the bounded gradient of   Assumption \ref{ass:2}, we will prove in \eqref{eq10} that:
\begin{align}
\E_{z^{(1)}, \dots, z^{(L)}} \left\|\tilde{S}(\rho,z^{(1)}, \dots,  z^{(L)},u;x)\right\|^2\leq c_{M_2},\nonumber
\end{align}
where $c_{M_2}$ is a constant, and in \eqref{eq13} that $\E\tilde{S}(\rho,z^{(1)}, \dots, z^{(L)},u;x)=0 $.
\end{itemize}

We give the detailed proof of Theorem \ref{var} below:\\\\
\textbf{Step 1.} We prove 
\begin{align}
 \E_{\{z\}_{\cI^{(\ell-1)}}}\breve{f}^{(\ell)}(\rho,\zl_j;x) = f^{(\ell)}(\rho,\zl_j;x), ~~\ell\in [L],\label{stt1}
\end{align} 
and 
$$ \E_{\{z\}_{\cI^{(L)}}}\breve{f}(\rho,u;x) = f(\rho,u;x). $$
\begin{proof}
We prove it by induction. When $\ell=1$, the statement is true. Suppose at $\ell-1$, the statement is true.  We consider the case of $\ell$. For all $j\in [m^{(\ell)}]$, we have
\begin{align}
&\EE_{\{z\}_{\cI^{(\ell-1)}}}\breve{f}^{(\ell)}(\rho,\zl_j;x)\notag\\
=& \EE_{\hcZ^{(\ell-1)}}\left[\EE_{\{z\}_{\cI^{\ell-2}}}\breve{f}^{(\ell)}(\rho,\zl_j;x)\right]\notag\\
\overset{\eqref{def bf}}{=}& \EE_{\hcZ^{(\ell-1)}}\Bigg[\nabla \hl\left( \gl(\rho,\zl_j;x) \right)\Bigg( \frac{1}{m^{(\ell-1)}} \sum_{i=1}^{m^{(\ell-1)}} w(\zl_j, \zmi_i)\EE_{\{z\}_{\cI^{\ell-2}}}\breve{f}^{(\ell-1)}(\rho,\zmi_i;x)\nonumber\\
&\quad \quad\quad\quad\quad \quad\quad\quad\quad\quad\quad \quad\quad\quad~~~ - \gl(\rho,\zl_j;x) \Bigg)\Bigg]+\EE_{\hcZ^{(\ell-1)}}\left[\hl \left( \gl(\rho,\zl_j;x) \right)\right]\notag\\
\overset{a}=&\hl ( \gl(\rho,\zl_j;x)+ \EE_{\hcZ^{(\ell-1)}}\nabla \hl\left( \gl(\rho,\zl_j;x) \right)\Bigg[ \frac{1}{m^{(\ell-1)}} \sum_{i=1}^{m^{(\ell-1)}} \notag\\
& \quad \quad\quad\quad\quad \quad\quad\quad\quad\quad\quad \quad\quad\quad\quad~~~ w(\zl_j, \zmi_i)f^{(\ell-1)}(\rho,\zmi_i;x)- \gl(\rho,\zl_j;x) \Bigg]\notag\\
 \overset{\eqref{eq:nn-layer}}{=}&\hl \left( \gl(\rho,\zl_j;x )\right) \notag\\
\overset{\eqref{eq:nn-activation}}{=}& f^{(\ell)}(\rho,\zl_j;x),\notag
\end{align}
where  $\overset{a}=$ applies \eqref{stt1} on $\ell-1$.
Then 
\begin{align}
\E \breve{f}(\rho,u;x) =& \E_{\{z\}_{\cI^{(L)}}} \frac{1}{m^{(L)}}\sum_{j=1}^{m^{(L)}}\breve{f}^{(L)}(\rho,z_j^{(L)};x) u_j^{(L)} \nonumber\\
=&\E_{\hcZ^{(L)}} \frac{1}{m^{(L)}}\sum_{j=1}^{m^{(L)}}f^{(L)}(\rho,z_j^{(L)};x) u(z^{(L)}_j)\nonumber\\
=& f(\rho,u;x).\nonumber
\end{align}
\end{proof}

\noindent\textbf{Step 2.}  In this step, we compute 
\begin{align}
\E\left\| \breve{f}(\rho, u;x)-f(\rho,u;x)\right\|^2.\nonumber
\end{align}
\textbf{Step 2(i):}
We  prove that
 \begin{align}\label{bff1}
 \breve{f}(\rho, u;x)=  \prod_{\ell=1}^L \frac{1}{m^{(\ell)}}\left[\sum_{s\in \cI^{(L)}} S^{(L)}(\rho,z^{(1)}_{s_1}, \dots,z^{(L)}_{s_L};x) u(z^{(L)}_{s_L})\right].
 \end{align}

\begin{proof}
Clearly, we have $\breve{f}^{(1)} (\rho,z^{(1)}_j;x) =S^{(1)} (\rho,z^{(1)}_j;x) =f^{(1)} (\rho,z^{(1)}_j;x)$  for all $j\in [m^{(1)}]$.  Then we can  verify that  for all  $j\in  [m^{(2)}]$, 
\begin{align}
\breve{f}^{(2)}(\rho,z^{(2)}_{j};x) = \frac{1}{m^{(1)}} \sum_{i=1}^{m^{(1)}}S^{(2)}(\rho,z^{(1)}_{i}, z^{(2)}_{j};x).\label{ffo0}
\end{align}

We then prove for all $\ell \in \{ 3, \dots, L\}$ and  $j\in [ m^{(\ell)}]$, we have 
\begin{align}
\breve{f}^{(\ell)}(\rho,z^{(\ell)}_{j};x) =  \prod_{k=1}^{\ell-1} \frac{1}{m^{(k)}}\left[ \sum_{s\in\cI^{(\ell-1)}}S^{(\ell)}(\rho,z^{(1)}_{s_1}, \dots,z^{(\ell-1)}_{s_{\ell-1}}, z^{(\ell)}_{j};x)\right]. \label{ffo1}
\end{align}

Suppose the above statement is  true at $\ell-1$, then from \eqref{def bf}, we have
\begin{align}
&\breve{f}^{(\ell)}(\rho,\zl_{j};x)\notag\\
 \overset{\eqref{def bf}}=& f^{(\ell)} (\rho,\zl_j;x ) -\nabla \hl\left( \gl(\rho,\zl_j;x) \right) \gl(\rho,\zl_j;x)+ \nabla \hl\left( \gl(\rho,\zl_j;x) \right)\times \notag \\
 &\quad\quad\quad\quad\quad\quad\quad\quad\quad\quad\quad\quad\quad\quad\quad\quad~~~~ \frac{1}{m^{(\ell-1)}} \sum_{i=1}^{m^{(\ell-1)}} w(\zl_j, \zmi_i)\breve{f}^{(\ell-1)}(\rho,\zmi_i;x)\notag\\
 \overset{a}=&f^{(\ell)} (\rho,\zl_j;x )-\nabla \hl\left( \gl(\rho,\zl_j;x) \right) \gl(\rho,\zl_j;x) + \nabla \hl\left( \gl(\rho,\zl_j;x) \right)\times \nonumber \\
 &~~~\underline{\frac{1}{m^{(\ell-1)}} \sum_{i=1}^{m^{(\ell-1)}}}\left[\underline{\prod_{k=1}^{\ell-2} \frac{1}{m^{(k)}}}\left(\underline{\sum_{s\in\cI^{\ell-2}}} S^{(\ell-1)}(\rho,z^{(1)}_{s_1}, \dots,z^{(\ell-2)}_{s_{\ell-2}}, z^{(\ell-1)}_i;x)\right)w(\zl_{j}, \zmi_{i})\right]\notag\\
=&f^{(\ell)} (\rho,\zl_j;x )-\nabla \hl\left( \gl(\rho,\zl_j;x) \right) \gl(\rho,\zl_j;x)+\nabla \hl\left( \gl(\rho,\zl_j;x) \right) \times \notag\\ &\quad\quad\quad\quad\quad\quad\quad\quad\quad~~~~~\prod_{k=1}^{\ell-1} \frac{1}{m^{(k)}}\left[\sum_{s\in\cI^{(\ell-1)}} S^{(\ell-1)}(\rho,z^{(1)}_{s_1}, \dots,z^{(\ell-1)}_{s_{\ell-1}};x)w(\zl_{j}, \zmi_{s_{\ell-1}})\right]\notag\\
 =&  \prod_{k=1}^{\ell-1} \frac{1}{m^{(k)}}\left[\sum_{s\in\cI^{(\ell-1)}} \left(f^{(\ell)} (\rho,\zl_j;x ) -\nabla \hl\left( \gl(\rho,\zl_j;x) \right) \gl(\rho,\zl_j;x)\right)\right] \notag\\
 &+ \nabla \hl\left( \gl(\rho,\zl_j;x) \right)\prod_{k=1}^{\ell-1} \frac{1}{m^{(k)}}\left[\sum_{s\in\cI^{(\ell-1)}} S^{(\ell-1)}(\rho,z^{(1)}_{s_1}, \dots,z^{(\ell-1)}_{s_{\ell-1}};x)w(\zl_{j}, \zmi_{s_{\ell-1}})\right]\notag\\
 =&\prod_{k=1}^{\ell-1} \frac{1}{m^{(k)}}\left[ \sum_{s\in\cI^{(\ell-1)}}S^{(\ell)}(\rho,z^{(1)}_{s_1}, \dots,z^{(\ell-1)}_{s_{\ell-1}}, z^{(\ell)}_{j};x)\right],\notag
\end{align}
where in $\overset{a}=$, we use \eqref{ffo1} on $\ell-1$. So we obtain \eqref{ffo1}  on $\ell$.  Then plugging \eqref{ffo1} with $\ell =L$ into \eqref{ffo}, we can obtain \eqref{bff1}. Also by the same technique of Step 1, we can obtain 
\begin{align}
\EE_{z^{(1)}, \dots, z^{(\ell-1)}} S^{(\ell)}(\rho,z^{(1)}, \dots, z^{(\ell)};x) = f^{(\ell)}(\rho,z^{(\ell)};x), ~~\ell\in [L],\label{exp z}
\end{align}
and
\begin{align}
\EE_{z^{(1)}, \dots, z^{(L)}} S^{(L)}(\rho,z^{(1)}, \dots, z^{(L)};x)u( z^{(L)})  = f(\rho, u;x).\label{exp l}
\end{align}
Recall that
\begin{align}
\tilde{S}(\rho,z^{(1)}, \dots, z^{(L)},u;x) = S^{(L)}(\rho,z^{(1)}, \dots, z^{(L)};x)u( z^{(L)}) - f(\rho,u;x),\nonumber
\end{align}
we have 
\begin{align}
 \E_{z^{(1)}, \dots, z^{(L)}}\tilde{S}(\rho,z^{(1)}, \dots, z^{(L)},u;x) = 0. \label{eq10}  
\end{align}
\end{proof}

\noindent\textbf{Step 2(ii):} Firstly, We expand $$\E\left\|\breve{f}(\rho,u;x) - f(\rho,u;x)\right\|^2.$$
\begin{proof}
We have
\begin{align}
&\E \left\| \breve{f}(\rho,u;x) - f(\rho,u;x) \right\|^2\label{lin}\\
=&\prod_{\ell=1}^{L}\frac{1}{(m^{(\ell)})^2}\E\left\| \sum_{s\in \cI^{(L)}} \tilde{S}(\rho,z^{(1)}_{s_1}, \dots, z^{(L)}_{s_L},u;x)     \right\|^2\notag\\
=&\prod_{\ell=1}^{L}\frac{1}{(m^{(\ell)})^2}\E \left[ \sum_{s^a\in \cI^{(L)}}\sum_{s^b\in \cI^{(L)}} \tilde{S}^{\top}(\rho,z^{(1)}_{s^a_1}, \dots, z^{(L)}_{s^a_L},u;x) \tilde{S}(\rho,z^{(1)}_{s^b_1}, \dots, z^{(L)}_{s^b_L},u;x)      \right]\notag\\
=&\sum_{s^a\in \cI^{(L)}}\sum_{s^b\in \cI^{(L)}}\left[ \prod_{\ell=1}^{L}\frac{1}{(m^{(\ell)})^2} \E_{z^{(1)}_{s^a_1}, \dots, z^{(L)}_{s^a_L}}\E_{z^{(1)}_{s^b_1}, \dots, z^{(L)}_{s^b_L}}\tilde{S}^{\top}(\rho,z^{(1)}_{s^a_1}, \dots, z^{(L)}_{s^a_L},u;x) \tilde{S}(\rho,z^{(1)}_{s^b_1}, \dots, z^{L}_{s^b_L},u;x) \right]\notag\\
\overset{a}{=}&\sum_{s^a\in \cI^{(L)}}\sum_{s^b\in \cI^{(L)}}\left[ \prod_{\ell=1}^{L}\frac{1}{(m^{(\ell)})^2} \E_{\zl_{s^a_\ell}, \ell\in p^{ab}}\E_{\zl_{s^a_\ell}, \ell\in q^{ab}}\E_{\zl_{s^b_\ell}, \ell\in q^{ab}}\tilde{S}^{\top}(\rho,z^{(1)}_{s^a_1}, \dots, z^{(L)}_{s^a_L},u;x) \tilde{S}(\rho,z^{(1)}_{s^b_1}, \dots, z^{L}_{s^b_L},u;x) \right]\notag\\
\overset{b}{=}&\sum_{s^a\in \cI^{(L)}}\sum_{s^b\in \cI^{(L)}}\left[ \prod_{\ell=1}^{L}\frac{1}{(m^{(\ell)})^2} \E_{\zl_{s^a_\ell}, \ell\in p^{ab}}\left\|\E_{\zl_{s^a_\ell}, \ell\in q^{ab}}\tilde{S}(\rho,z^{(1)}_{s^a_1}, \dots, z^{(L)}_{s^a_L},u;x)\right\|^2  \right]\notag\\
\overset{c}{=}&\sum_{\ell=0}^L \sum_{\mathcal{a}\in \mathcal{C}^{(\ell)}}\left[ \Big(\prod_{k\in \mathcal{a}}\frac{1}{m^{(k)}} \prod_{k\in \mathcal{a}^c}\frac{m^{(k)}-1}{m^{(k)}}\Big) \E_{z^{(i)}, i\in \mathcal{a}}\left\|\E_{ z^{(j)}, j\in \mathcal{a}^c}\tilde{S}(\rho,z^{(1)}, \dots, z^{(L)},u;x)\right\|^2  \right]\notag\\
\overset{d}{=}&\sum_{\ell=1}^L \sum_{\mathcal{a}\in \mathcal{C}^{(\ell)}}\left[ \Big(\prod_{k\in \mathcal{a}}\frac{1}{m^{(k)}} \prod_{k\in \mathcal{a}^c}\frac{m^{(k)}-1}{m^{(k)}}\Big) \E_{z^{(i)}, i\in \mathcal{a}}\left\|\E_{ z^{(j)}, j\in \mathcal{a}^c}\tilde{S}(\rho,z^{(1)}, \dots, z^{(L)},u;x)\right\|^2  \right].\notag
\end{align}
We explain \eqref{lin}. In $\overset{a}=$, we denote $p^{ab} = \{\ell: s^a_\ell = s^b_\ell, ~ \ell \in [L]\}$ and $q^{ab} = \{\ell: s^a_\ell \neq s^b_\ell, \ell \in [L]\}$. In $\overset{b}=$,  given $\zl_{s^a_\ell}$ with $\ell\in p^{ab}$, $\tilde{S}(\rho,z^{(1)}_{s^a_1}, \dots, z^{(L)}_{s^a_L},u;x)$ and $\tilde{S}(\rho,z^{(1)}_{s^b_1}, \dots, z^{L}_{s^b_L},u;x)$ are independent, so 
\begin{align}
&\E_{\zl_{s^a_\ell}, \ell\in q^{ab}}\E_{\zl_{s^b_\ell}, \ell\in q^{ab}}\tilde{S}^{\top}(\rho,z^{(1)}_{s^a_1}, \dots, z^{(L)}_{s^a_L},u;x) \tilde{S}(\rho,z^{(1)}_{s^b_1}, \dots, z^{L}_{s^b_L},u;x)\notag\\
=& \E_{\zl_{s^a_\ell}, \ell\in q^{ab}}\tilde{S}^{\top}(\rho,z^{(1)}_{s^a_1}, \dots, z^{(L)}_{s^a_L},u;x)\E_{\zl_{s^b_\ell}, \ell\in q^{ab}} \tilde{S}(\rho,z^{(1)}_{s^b_1}, \dots, z^{L}_{s^b_L},u;x) \notag\\
=&    \left\|\E_{\zl_{s^a_\ell}, \ell\in q^{ab}} \tilde{S}(\rho,z^{(1)}_{s^a_1}, \dots, z^{(L)}_{s^a_L},u;x)\right\|^2.\nonumber
\end{align}
In $\overset{c}=$, we let $\mathcal{C}^{(\ell)}$
 be the set of all the $\ell$-combinations of $[L]$, specially $\mathcal{C}^{(0)} = \{\varnothing\}$. Then for all $\mathcal{a}\in \mathcal{C}^{(\ell)}$, we denote $\mathcal{a}^c = [L]\setminus \mathcal{a}$. We divide $p^{ab}$ into $L+1$ groups,i.e., $\mathcal{C}^{(\ell)}$ with $\ell=0,\ldots, L$,  according to their set size.  Then for each $k $, consider  we sample $2$ numbers from $m^{(k)}$ with replacement, There are $m^{(k)}$ possibilities if the two numbers are the same and  $m^{(k)}(m^{(k)}-1)$ possibilities  if the two numbers are different. In $\overset{d}{=}$, we use the fact that when $\ell = 0$,  
$$ \E_{ z^{(j)}, j\in \mathcal{a}^c}\tilde{S}(\rho,z^{(1)}, \dots, z^{(L)},u;x) = \E_{ z^{(1)}, \dots, z^{(L)}}\tilde{S}(\rho,z^{(1)}, \dots, z^{(L)},u;x) = 0. $$
\end{proof}

\noindent\textbf{Step 2(iii):}  Secondly, we  bound $$\EE_{z^{(1)},\dots, z^{(L)}}\left\| \tilde{S}(\rho, z^{(1)}, \dots, z^{(L)},u;x)\right\|^2.$$
\begin{proof}
We first prove for all $j\in[L-1]$,
\begin{align}
      &\left\| \tilde{S}(\rho, z^{(1)}, \dots, z^{(L)},u;x)\right\|\label{ttq}\\
  \leq&\left\|u(z^{(L)})f^{(L)}(\rho,z^{(L)};x) - f(\rho, u;x)\right\|+  \sum_{\ell=j+2}^{L}c_o^{L-\ell+1}\times\notag\\
&\Bigg\|u^{(L)}(z^{(L)})\prod_{i=\ell+1}^{L} \left( w(z^{(i)}, z^{(i-1)})\right) \left[ w(z^{(\ell)}, z^{(\ell-1)})f^{(\ell-1)}(\rho,z^{(\ell-1)};x) -g^{(\ell)}(\rho, z^{(\ell)};x)\right]\Bigg\| +c_0^{L-\ell}\times\notag\\
&\Bigg\|u^{(L)}(z^{(L)})\prod_{i=j+2}^{L} \left( w(z^{(i)}, z^{(i-1)})\right) \left[w(z^{(j+1)}, z^{(j)})S^{(j)}(\rho,z^{(1)},\dots, z^{(j)};x) - g^{(j+1)}(\rho, z^{(j+1)};x)\right] \Bigg\|.\notag
\end{align}
When $j=L-1$, we have
\begin{align}
      &\left\| \underline{\tilde{S}(\rho, z^{(1)}, \dots, z^{(L)},u;x)}\right\|\notag\\
  \overset{\eqref{def S}}{=}& \bigg\|u(z^{(L)})f^{(L)}(\rho,z^{(L)};x) - f(\rho, u;x) +u(z^{(L)}) \nabla h^{(L)}\left(g^{(L)}(\rho, z^{(L)};x)\right)\times\notag\\
 & \quad\quad\quad\quad\quad\quad\quad\quad~~~ \left[w(z^{(L)}, z^{(L-1) }) S^{(L-1)}(\rho,z^{(1)}, \dots, z^{(L-1)};x) - g^{(L)}(\rho, z^{(L)};x)\right]  \bigg\|\notag\\
 \leq & \left\|u(z^{(L)})f^{(L)}(\rho,z^{(L)};x) - f(\rho, u;x)\right\|+  \underbrace{\Big| \nabla h^{(L)}\left(g^{(L)}(\rho, z^{(L)};x)\right)\Big|}_{\leq c_0 \text ~~\text{from~ Assum.}~ \ref{ass:2}}\times\notag\\
 & \quad\quad\quad\quad\quad~ \bigg\|u(z^{(L)})\left[w(z^{(L)}, z^{(L-1) }) S^{(L-1)}(\rho,z^{(1)}, \dots, z^{(L-1)};x) - g^{(L)}(\rho, z^{(L)};x)\right]\bigg\|.\notag
\end{align}
We obtain \eqref{ttq} with $j=L-1$. Suppose the statement is true at $j$, we consider $j-1$. In fact,  we have
\begin{align}
    &c_0^{L-j}\Bigg\|u^{(L)}(z^{(L)})\prod_{i=j+2}^{L} \left( w(z^{(i)}, z^{(i-1)})\right) \left[w(z^{(j+1)}, z^{(j)})\underline{S^{(j)}(\rho,z^{(1)}, \dots, z^{(j)};x)} - g^{(j+1)}(\rho, z^{(j+1)};x)\right] \Bigg\|\notag\\
   &\overset{\eqref{def S}}{=}
   c_0^{L-j}\Bigg\|u^{(L)}(z^{(L)})\prod_{i=j+2}^{L} \left( w(z^{(i)}, z^{(i-1)})\right) \left[w(z^{(j+1)}, z^{(j)})f^{(j)}(\rho, z^{(j)};x) - g^{(j+1)}(\rho, z^{(j+1)};x)\right] \notag\\
   &\quad\quad\quad \quad +   u^{(L)}(z^{(L)})\prod_{i=j+1}^{L} \left( w(z^{(i)}, z^{(i-1)})\right)\nabla h^{(j)}\left(g^{(j)}(\rho, z^{(j)};x)\right)\times\notag\\
   &\quad\quad\quad \quad\quad\quad\quad\quad\quad\quad\quad \quad\quad\quad \quad\quad~~~\left[w(z^{(j)}, z^{(j-1)})f^{(j-1)}(\rho, z^{(j-1)};x) - g^{(j)}(\rho, z^{(j)};x)\right] \Bigg\|\notag\\
   &{\leq}
   c_0^{L-j}\Bigg\|u^{(L)}(z^{(L)})\prod_{i=j+2}^{L} \left( w(z^{(i)}, z^{(i-1)})\right) \left[w(z^{(j+1)}, z^{(j)})f^{(j)}(\rho, z^{(j)};x) - g^{(j+1)}(\rho, z^{(j+1)};x)\right]\Bigg\| \notag\\
   &\quad\quad~ +    c_0^{L-j} \underbrace{\left|\nabla h^{(j+1)}\left(g^{(j+1)}(\rho, z^{(j+1)};x)\right)\right|}_{\leq c_0 \text ~~\text{from~ Assum.}~ \ref{ass:2}}\Bigg\|u^{(L)}(z^{(L)})\prod_{i=j+1}^{L} \left( w(z^{(i)}, z^{(i-1)})\right)\times\notag\\
   &\quad\quad\quad \quad\quad\quad\quad\quad\quad\quad\quad \quad\quad\quad~~~\left[w(z^{(j)}, z^{(j-1)})f^{(j-1)}(\rho, z^{(j-1)};x) - g^{(j)}(\rho, z^{(j)};x)\right] \Bigg\|. \label{tt2}
\end{align}
Plugging  \eqref{tt2} into \eqref{ttq} with $\ell$, we can obtain \eqref{ttq} with $\ell-1$.  Thus when $\ell=1$, we have
\begin{align}
  &\left\| \tilde{S}(\rho, z^{(1)}, \dots, z^{(L)},u;x)\right\|\notag\\
\leq&\left\|u(z^{(L)})f^{(L)}(\rho,z^{(L)};x) - f(\rho, u;x)\right\|+  \sum_{\ell=3}^{L}c_o^{L-\ell+1}\notag\\
&\Bigg\|u^{(L)}(z^{(L)})\prod_{i=\ell+1}^{L} \left( w(z^{(i)}, z^{(i-1)})\right) \left[ w(z^{(\ell)}, z^{(\ell-1)})f^{(\ell-1)}(\rho,z^{(\ell-1)};x) -g^{(\ell)}(\rho, z^{(\ell)};x)\right]\Bigg\|\notag\\
&~+c_0^{L-1}\Bigg\|u^{(L)}(z^{(L)})\prod_{i=3}^{L} \left( w(z^{(i)}, z^{(i-1)})\right) \underbrace{\left[w(z^{(2)}, z^{(1)})S^{(1)}(\rho, z^{(1)};x) - g^{(2)}(\rho, z^{(2)};x)\right]}_{S^{(1)}(\rho, z^{(1)};x) = f^{(1)}(\rho, z^{(1)};x)}  \Bigg\|.\notag
\end{align}
It indicates that:
\begin{align}
&\E_{z^{(1)}, \dots, z^{(L)}}\left\| \tilde{S}(\rho, z^{(1)}, \dots, z^{(L)},u;x)\right\|^2\label{eq13}\\
\leq& L\E_{z^{(1)}, \dots, z^{(L)}} \left\|u(z^{(L)})f^{(L)}(\rho,z^{(L)};x) - f(\rho, u;x)\right\|^2+  L\E_{z^{(1)}, \dots, z^{(L)}} \sum_{\ell=2}^{L}c_o^{L-\ell+1}\notag\\
&\quad~~ \Bigg\|u^{(L)}(z^{(L)})\prod_{i=\ell+1}^{L} \left( w(z^{(i)}, z^{(i-1)})\right) \left[ w(z^{(\ell)}, z^{(\ell-1)})f^{(\ell-1)}(\rho,z^{(\ell-1)};x) -g^{(\ell)}(\rho, z^{(\ell)};x)\right]\Bigg\|^2\notag\\
\overset{a}{\leq}& L\E_{z^{(1)}, \dots, z^{(L)}} \left\|u(z^{(L)})f^{(L)}(\rho,z^{(L)};x) - f(\rho, u;x)\right\|^2+  L\sum_{\ell=2}^{L} c_o^{L-\ell+1} \Bigg(\E_{z^{(1)}, \dots, z^{(L)}} \Bigg\|u^{(L)}(z^{(L)})\notag\\
&\prod_{i=\ell+1}^{L} \left( w(z^{(i)}, z^{(i-1)})\right) \left[ w(z^{(\ell)}, z^{(\ell-1)})f^{(\ell-1)}(\rho,z^{(\ell-1)};x) -g^{(\ell)}(\rho, z^{(\ell)};x)\right]\Bigg\|^{2(1+\alpha)^{L}}\Bigg)^{(1+\alpha)^{-L}}\notag\\
\overset{b}{\leq}&L c_{M_1} + L (c_{M}\vee 1) \sum_{\ell=2}^{L-1}c_o^{L-\ell+1},\notag
\end{align}
where $\overset{a}\leq$ uses  Jensen's inequality and  $\overset{b}{\leq}$ uses Assumption \ref{ass:3}, $\|u(z^{(L)}) \|\leq \|u(z^{(L)}) \|\vee 1 $, and $|w(z^{(i)}, z^{(i-1)})|\leq  |w(z^{(i)}, z^{(i-1)})|\vee 1 $ with $i\in \{3, \dots, L \}$. 
\end{proof}

\noindent\textbf{Step 2(iv):} We simplify the terms in the right hand side of \eqref{lin}.
\begin{proof}
For the  terms with $\ell\in\{2,\dots, L\}$ in right hand side   of \eqref{lin}, we have 
\begin{align}
&\sum_{\ell=2}^L \sum_{\mathcal{a}\in \mathcal{C}^{(\ell)}}\left[ \Big(\prod_{k\in \mathcal{a}}\frac{1}{m^{(k)}} \prod_{k\in \mathcal{a}^c}\frac{m^{(k)}-1}{m^{(k)}}\Big) \E_{z^{(i)}, i\in \mathcal{a}}\left\|\E_{ z^{(j)}, j\in \mathcal{a}^c}\tilde{S}(\rho,z^{(1)}, \dots, z^{(L)},u;x)\right\|^2  \right]\label{ell1s}\\
\overset{a}{\leq}&\sum_{\ell=2}^L \sum_{\mathcal{a}\in \mathcal{C}^{(\ell)}}\left[ \Big(\prod_{k\in \mathcal{a}}\frac{1}{m^{(k)}} \prod_{k\in \mathcal{a}^c}\frac{m^{(k)}-1}{m^{(k)}}\Big) \E_{z^{(1)}, \dots, z^{(L)}}\left\|\tilde{S}(\rho,z^{(1)}, \dots, z^{(L)},u;x)\right\|^2  \right]\notag\\
=&\mathcal{O} \left(\frac{1}{m^2}\right),\notag
\end{align}
where $\overset{a}\leq$ uses Jensen's inequality. We analyze the terms with $\ell=1$ in the right hand side of \eqref{lin}.  We have
\begin{align}
&\E_{ z^{(L)} }\left\| \E_{z^{(1)}, \dots, z^{(L-1)} } \tilde{S}(\rho, z^{(1)}, \dots, z^{(L)},u;x)\right\|^2\label{ell3s}\\
  \overset{\eqref{def S} }{=}& E_{ z^{(L)} }\bigg\|  u(z^{(L)})f^{(L)}(\rho,z^{(L)};x) - f(\rho, u;x) +u(z^{(L)}) \nabla h^{(L)}\left(g^{(L)}(\rho, z^{(L)};x)\right)   \E_{z^{(L-1)}}\notag\\
   & \quad\quad\quad~ \left[w(z^{(L)}, z^{(L-1) }) \E_{z^{(1)}, \dots, z^{(L-2)} } \underline{S^{(L-1)}(\rho,z^{(1)}, \dots, z^{(L-1)};x)} - \underline{g^{(L)}(\rho, z^{(L)};x)}\right]  \bigg\|^2\notag\\
    \overset{\eqref{exp z}~\& ~\eqref{eq:nn-layer}}{=}& E_{ z^{(L)} }\left\|  u(z^{(L)})f^{(L)}(\rho,z^{(L)};x) - f(\rho, u;x)\right\|^2.\notag
\end{align}
For all $j\in [L-1]$,  we first use induction to prove for all $\ell\in\{j ,\dots, L-1\}$
\begin{align}
     & \E_{z^{(k)}, k\in[L]\setminus\{j\} } \underline{\tilde{S}(\rho, z^{(1)}, \dots, z^{(L)},u;x)} \label{tuse}\\ =& \E_{z^{(j+1)},\dots, z^{(L)}}\Bigg[u(z^{(L)}) \prod_{k=\ell+2}^{L}\left[\nabla h^{(k)}\left(g^{(k)}(\rho, z^{(k)};x)\right)w(z^{(k)}, z^{(k-1) }) \right]\notag\\ 
     & \quad\quad \nabla h^{(\ell+1)}\left(g^{(\ell+1)}(\rho, z^{(\ell+1)};x)\right)\left(w(z^{(\ell+1)}, z^{(\ell) }) S^{(\ell)}(\rho,z^{(1)}, \dots, z^{(\ell)};x) -g^{(\ell+1)}(\rho, z^{(\ell+1)};x)\right) \Bigg].\notag
\end{align}
When $\ell =L$, we have 
\begin{align}
        &\E_{z^{(k)}, k\in[L]\setminus\{j\} } \tilde{S}(\rho, z^{(1)}, \dots, z^{(L)},u;x) \\
  \overset{\eqref{def S}}{=}&  \underbrace{\E_{z^{(L)}} \left[u(z^{(L)})f^{(L)}(\rho,z^{(L)};x) - f(\rho, u;x)\right]}_{0} +  \E_{z^{(k)}, k\in[L]\setminus\{j\} }\Big[u(z^{(L)})   \notag\\
  & \quad\quad~~~ \nabla h^{(L)}\left(g^{(L)}(\rho, z^{(L)};x)\right)  \left(w(z^{(L)}, z^{(L-1) }) S^{(L-1)}(\rho,z^{(1)}, \dots, z^{(L-1)};x) - g^{(L)}(\rho, z^{(L)};x)\right) \Big] .\notag
\end{align}
So \eqref{tuse} is true when $\ell=L$. Suppose \eqref{tuse} is true for $\ell$ when $\ell\geq j+1$, we consider  $\ell-1$. In fact, plugging \eqref{def S} into \eqref{tuse} with $\ell$, and using the fact that 
\begin{align}
  & \E_{z^{(j+1)},\dots, z^{(L)}}\Bigg[u(z^{(L)}) \prod_{k=\ell+2}^{L}\left[\nabla h^{(k)}\left(g^{(k)}(\rho, z^{(k)};x)\right)w(z^{(k)}, z^{(k-1) }) \right]\notag\\ 
     & \quad\quad\quad\quad \quad\quad\quad\quad \quad \quad\quad\quad\quad \quad\quad~~~ \nabla h^{(\ell+1)}\left(g^{(\ell+1)}(\rho, z^{(\ell+1)};x)\right)w(z^{(\ell+1)}, z^{(\ell) }) f^{(\ell)}(\rho, z^{(\ell)};x)\Bigg]\notag\\
       \overset{\ell \geq j+1}=& \E_{z^{(\ell+1)},\dots, z^{(L)}}\Bigg[u(z^{(L)}) \prod_{k=\ell+2}^{L}\left[\nabla h^{(k)}\left(g^{(k)}(\rho, z^{(k)};x)\right)w(z^{(k)}, z^{(k-1) }) \right]\notag\\ 
     & \quad\quad\quad\quad \quad\quad\quad\quad \quad \quad\quad\quad~~ \nabla h^{(\ell+1)}\left(g^{(\ell+1)}(\rho, z^{(\ell+1)};x)\right)\EE_{z^{(\ell)}}\left(w(z^{(\ell+1)}, z^{(\ell) }) f^{(\ell)}(\rho, z^{(\ell)};x)\right)\Bigg]\notag\\
       =& \E_{z^{(j+1)},\dots, z^{(L)}}\Bigg[u(z^{(L)}) \prod_{k=\ell+2}^{L}\left[\nabla h^{(k)}\left(g^{(k)}(\rho, z^{(k)};x)\right)w(z^{(k)}, z^{(k-1) }) \right]\notag\\ 
     &  \quad\quad\quad\quad \quad\quad\quad\quad \quad \quad\quad\quad\quad \quad\quad\quad\quad\quad \quad\quad~ \nabla h^{(\ell+1)}\left(g^{(\ell+1)}(\rho, z^{(\ell+1)};x)\right)g^{(\ell+1)}(\rho, z^{(\ell+1)};x) \Bigg],\notag
\end{align}
we can obtain \eqref{tuse} with $\ell-1$.  Then using \eqref{tuse} with $\ell=j$, we have
\begin{align}
    &\E_{ z^{(j)} }\left\| \E_{z^{(k)}, k\in[L]\setminus\{j\} } \tilde{S}(\rho, z^{(1)}, \dots, z^{(L)},u;x)\right\|^2 \label{ell2s}\\
     =&\E_{ z^{(j)} } \Bigg\|\E_{z^{(k)}, k\in[L]\setminus\{j\}}\Bigg[u(z^{(L)}) \prod_{k=j+2}^{L}\left[\nabla h^{(k)}\left(g^{(k)}(\rho, z^{(k)};x)\right)w(z^{(k)}, z^{(k-1) }) \right]\notag\\ 
     & \quad~~~ \nabla h^{(j+1)}\left(g^{(j+1)}(\rho, z^{(j+1)};x)\right)\left(w(z^{(j+1)}, z^{(j) }) \underline{S^{(j)}(\rho,z^{(1)}, \dots, z^{(j)};x)} -g^{(j+1)}(\rho, z^{(j+1)};x) \right)\Bigg]\Bigg\|^2\notag\\
     \overset{\eqref{exp z}}=&\E_{ z^{(j)} } \Bigg\|\E_{z^{(j+1)},\dots, z^{(L)}}\Bigg[u(z^{(L)}) \prod_{k=j+2}^{L}\left[\nabla h^{(k)}\left(g^{(k)}(\rho, z^{(k)};x)\right)w(z^{(k)}, z^{(k-1) }) \right]\notag\\ 
     & \quad\quad\quad\quad~~~~~   \nabla h^{(j+1)}\left(g^{(j+1)}(\rho, z^{(j+1)};x)\right)\left(w(z^{(j+1)}, z^{(j) }) f^{(j)}(\rho, z^{(j)};x) -g^{(j+1)}(\rho, z^{(j+1)};x)\right) \Bigg]\Bigg\|^2\notag\\
     =&\E_{z^{(j)}}\left\| \E_{z^{(j+1)}}  \frac{\partial f(\rho,u;x)}{\partial z^{(j+1)}} \left[f^{(j)}(\rho,z^{(j)};x) w(z^{(j+1)}, z^{(j)})-f^{(j+1)}(\rho,z^{(j+1)};x)  \right] \right\|^2.\notag
\end{align}

Finally, plugging \eqref{ell1s}, \eqref{ell3s}, and \eqref{ell2s} into \eqref{lin}, we have
\begin{align}
    &\E \left\| \breve{f}(\rho,u;x) - f(\rho,u;x) \right\|^2\label{theo2:end1}\\
=&\sum_{\ell =1}^{L-1} \left[\prod_{j=1,j\neq\ell}^{L}\left(1-\frac{1}{m^{(j)}}\right)\right]\times\notag\\
&\quad\quad\quad\quad~~\frac{1}{m^{(\ell)}}  \E_{z^{(\ell)}}\left|\E_{z^{(\ell+1)}}\frac{\partial f(\rho,u;x)}{\partial z^{(\ell+1)}}  \left[  f^{(\ell)}(\rho,z^{(\ell)};x)w(z^{(\ell+1)}, z^{(\ell)})- g^{(\ell+1)}(\rho,z^{(\ell+1)};x) \right] \right|^2 \notag\\
& + \frac{1}{m^{(L)}} \left[\prod_{j=1}^{L-1}\left(1-\frac{1}{m^{(j)}}\right)\right]\E_{z^{(L)}} \left\| f^{(L)}(\rho,z^{(L)};x) u(z^{(L)})-f(\rho, u;x)\right\|^2+ \mathcal{O}\left( \sum_{\ell=1}^{L} (m^{(\ell)})^{-2} \right)\notag\\
=&\sum_{\ell =1}^{L-1} \frac{1}{m^{(\ell)}}  \E_{z^{(\ell)}}\left|\E_{z^{(\ell+1)}}\frac{\partial f(\rho,u;x)}{\partial z^{(\ell+1)}}  \left[  f^{(\ell)}(\rho,z^{(\ell)};x)w(z^{(\ell+1)}, z^{(\ell)})- g^{(\ell+1)}(\rho,z^{(\ell+1)};x) \right] \right|^2 \notag\\
& + \frac{1}{m^{(L)}}\E_{z^{(L)}} \left\| f^{(L)}(\rho,z^{(L)};x) u(z^{(L)})-f(\rho, u;x)\right\|^2+ \mathcal{O}\left( \sum_{\ell=1}^{L} (m^{(\ell)})^{-2} \right).\notag
\end{align}
\end{proof}

\noindent\textbf{Step 3.} The remaining to do is to bound
$$\E\left\| \breve{f}(\rho,u;x)-\hf(x)\right\|^2,    $$
which is carried out in this step.

Before that, we denote 
 $$
\bbu(z^{(L)}):= 
\begin{cases}
\|u(z^{(L)})\|,&   \|u(z^{(L)}) \|\geq 1,\\
1,&   \text{otherwise}, \\
\end{cases}
$$
for all $z^{(L)}\in \mathcal{Z}^{(L)}$, and 
 $$
\bbw(z^{(\ell)}, z^{(\ell-1)}):= 
\begin{cases}
|w(z^{(\ell)}, z^{(\ell-1)})|,& \mbox{ if }   |w(z^{(\ell)}, z^{(\ell-1)})|\geq 1,\\
1,&   \text{ otherwise}, \\
\end{cases},
$$
for all $z^{(\ell)}\in \mathcal{Z}^{(\ell)}$ and  $\ell \in [L]$. We  denote $\cA^{(\ell)}$ to be
\begin{align} \cA^{(\ell)} := [m^{(\ell)}] \otimes [m^{(\ell+1)}] \otimes \cdots \otimes [m^{(L)}].\nonumber
\end{align}
 Then, for $\ell \in \{2, \cdots, L \}$ and  $\ts\in \cA^{(\ell-1)}$,  we define
 \begin{align}
 \Psi^{(\ell)}(\ts) := \bbu(z^{(L)}_{\ts_L})  \left(\prod_{i = \ell+1}^{L} \bbw(z^{(i)}_{\ts_i}, z^{(i-1)}_{\ts_{i-1}})\right),\nonumber
 \end{align}
 and when $\ell=L+1$, we let
 \begin{align}
     \Psi^{(L+1)}(\ts) := \bbu(z^{(L)}_{\ts_L}), ~~ \ts\in \cA^{(L)},\nonumber
 \end{align}
 where for the sake of simplicity, $\ts_{i}$ is the value of the $(i-\ell+1)$-th dimension of $\ts$ with $i\in\{\ell,\dots,L\}$.\\\\
 \textbf{Step 3(i):} 
For  $\ell\in [L]$ and $t\in \{0, \dots, L\}$,  by setting $\beta_t= (1+\alpha)^{t}\geq 1$, we first  bound
$$
\EE\sum_{\ts\in \cA^{(\ell)}}  \Big(\prod_{i=\ell}^L \frac{1}{m^{(i)}}\Big) \left| 
\Psi^{(\ell+1)}(\ts)\frac{1}{m^{(\ell-1)}}\sum_{j=1}^{m^{(\ell-1)}}\left[ w(\zl_{\ts_{\ell}},\zmi_{j} )
\breve{f}^{(\ell-1)}(\rho,\zmi_{j};x) -  g^{(\ell)}(\rho,\zl_{\ts_{\ell}};x)\right] \right|^{2\beta_t}.
$$

\begin{proof}
When $\ell=1$, \eqref{l use} equals to $0$.  When $\ell\in \{2,\dots, L\}$, from  \eqref{ffo0} and \eqref{ffo1}, we have
\begin{align}
&\EE\sum_{\ts\in \cA^{(\ell)}}  \Big(\prod_{i=\ell}^L \frac{1}{m^{(i)}}\Big) \left| 
\Psi^{(\ell+1)}(\ts)\frac{1}{m^{(\ell-1)}}\sum_{j=1}^{m^{(\ell-1)}}\left[ w(\zl_{\ts_{\ell}},\zmi_{j} )
\underbrace{\breve{f}^{(\ell-1)}(\rho,\zmi_{j};x)}_{\eqref{ffo0}~\&~\eqref{ffo1}} -  g^{(\ell)}(\rho,\zl_{\ts_{\ell}};x)\right]     \right|^{2\beta_t}\notag\\
=&
\prod_{i=\ell}^L \frac{1}{m^{(i)}}\underline{\EE}\sum_{\ts\in \cA^{(\ell)}} \Bigg|  \underline{\prod^{\ell-1}_{j=1}\frac{1}{m^{(j)}}}\Psi^{(\ell+1)}(\ts)\sum_{\hs\in \cI^{(\ell-2)}}\sum_{j=1}^{m^{(\ell-1)}}\notag\\
&\quad\quad\quad\quad\quad\quad\quad\quad\quad\quad\quad~~~
\bigg[w(z^{(\ell)}_{\ts_{\ell}}, z^{(\ell-1)}_j)S^{(\ell-1)}(\rho,z^{(1)}_{\hs_1}, \dots, z^{(\ell-2)}_{\hs_\ell-2}, \zmi_{j};x) - g^{(\ell)}(\rho,\zl_{\ts_{\ell}};x)\bigg] \Bigg|^{2\beta_t}\notag\\
\overset{a}=&
\prod^{\ell-1}_{j=1}\frac{1}{(m^{(j)})^{2\beta_t}} 
\EE_{z^{(\ell)}, \dots, z^{(L)}}\E_{\{z\}_{\cI^{(\ell-1)}}}\Bigg|  \bbu(z^{(L)}) \prod_{i = \ell+1}^{L} \bbw(z^{(i)}, z^{(i-1)})\sum_{\hs\in \cI^{(\ell-1)}} \notag\\
&\quad\quad\quad\quad\quad\quad\quad\quad\quad\quad\quad\quad\quad\quad~~~\bigg[w(z^{(\ell)}, z^{(\ell-1)}_{\hs_{\ell-1}})S^{(\ell-1)}(\rho,z^{(1)}_{\hs_1}, \dots, z^{(\ell-1)}_{\hs_{\ell-1}};x) - g^{(\ell)}(\rho,\zl;x)\bigg] \Bigg|^{2\beta_t}\notag\\
\overset{b}{=}&\prod^{\ell-1}_{j=1}\frac{1}{(m^{(j)})^{2\beta_t}}\EE_{z^{(\ell)}, \dots, z^{(L)}} \E_{\{z\}_{\cI^{(\ell-1)}}}\left|\sum_{\hs\in \cI^{(\ell-1)}}  \left[\bar{S}^{(\ell)}(\rho,z^{(1)}_{\hs_1}, \dots, z^{(\ell-1)}_{\hs_{\ell-1}},u;x)\right] \right|^{2\beta_t},\label{ttle}
\end{align}
where in $\overset{a}=$, we take expectation on $\{z\}_{\cA^{(\ell)}}$ and in $\overset{b}=$, we denote
$$ \!\!\bar{S}^{(\ell)}(\rho,z^{(1)}_{\hs_1}, \dots, z^{(\ell-1)}_{_{\hs_{\ell-1}}};x) :=  \bbu(z^{(L)}) \!\prod_{i = \ell+1}^{L} \bbw(z^{(i)}, z^{(i-1)})\!\left[w(z^{(\ell)}, z^{(\ell-1)}_{\hs_{\ell-1}})S^{(\ell-1)}(z^{(1)}_{\hs_1}, \dots, z^{(\ell-1)}_{\hs_{\ell-1}}) - g^{(\ell)}(\zl)\right]. $$
 We then set $$J^{(t)} :=\argmin_{t\in[\ell-1]} m^{(t)}$$
and $$\mathcal{B}^{(\ell-1)} := [m^{(1)} ] \otimes \cdots \otimes [m^{(J^{(\ell)}-1)}]\otimes [m^{(J^{(\ell)}+1)}]\cdots\otimes [m^{(\ell-1)}].$$
We consider dividing $  
\sum_{\hs\in \cI^{(\ell-1)}} \bar{S}^{(\ell)}(\rho,z^{(1)}_{\hs_1}, \dots, z^{(\ell-1)}_{\hs_{\ell-1}},u;x)
$ into $|\mathcal{B}^{(\ell-1)}|$  groups. We require    all the terms in each group are independent with each other  given $\{z^{(\ell)}, \dots, z^{(L)}\}$. In fact, for  any $\dot{s}\in \mathcal{B}^{(\ell-1)}$ and $k\in [J^{(\ell)}]$, we denote the ordered list $\dot{s}^{+k}$ as
 $$
\dot{s}^{+k}_i:= 
\begin{cases}
(\dot{s}_{i}+k)\%m^{(i)}+1,&   i\in [J^{(\ell)-1}],\\
k,&   i= J^{(\ell)} , \\
(\dot{s}_{i}+k)\%m^{(i)}+1,&   i\in [L]\setminus[J^{(\ell)}],
\end{cases}
$$
then we can divide $  
\sum_{\hs\in \cI^{(\ell-1)}} \bar{S}^{(\ell)}(\rho,z^{(1)}_{\hs_1}, \dots, z^{(\ell-1)}_{\hs_{\ell-1}},u;x)
$ into $|\mathcal{B}^{(\ell-1)}|$ groups as
$$\sum_{\hs\in \cI^{(\ell-1)}} \bar{S}^{(\ell)}(\rho,z^{(1)}_{\hs_1}, \dots, z^{(\ell-1)}_{\hs_{\ell-1}},u;x) :=  \sum_{\dot{s}\in\mathcal{B}^{(\ell-1)}}\sum_{k=1}^{m^{(J^{(\ell)})}}  \left[\bar{S}^{(\ell)}(\rho,z^{(1)}_{\dot{s}^{+k}_1}, \dots,   z^{(\ell-1)}_{\dot{s}^{+k}_{\ell-1}},u;x)\right].  $$
Therefor from \eqref{ttle} we further have 
\begin{align}
&\EE\sum_{\ts\in \cA^{(\ell)}}  \Big(\prod_{i=\ell}^L \frac{1}{m^{(i)}}\Big) \left| 
\Psi^{(\ell+1)}(\ts)\frac{1}{m^{(\ell-1)}}\sum_{j=1}^{m^{(\ell-1)}}\left[ w(\zl_{\ts_{\ell}},\zmi_{j} )
\breve{f}^{(\ell-1)}(\rho,\zmi_{j};x) -  g^{(\ell)}(\rho,\zl_{\ts_{\ell}};x)\right]     \right|^{2\beta_t}\notag\\
=&\EE_{z^{(\ell)}, \dots, z^{(L)}}\E_{\{z\}_{\cI^{(\ell-1)}}}\left| \underline{\prod^{\ell-1}_{j=1}\frac{1}{(m^{(j)})}\sum_{\dot{s}\in\mathcal{B}^{(\ell-1)}}}\sum_{k=1}^{m^{(J^{(\ell)})}}  \left[\bar{S}^{(\ell)}(\rho,z^{(1)}_{\dot{s}^{+k}_1}, \dots,   z^{(\ell-1)}_{\dot{s}^{+k}_{\ell-1}},u;x)\right] \right|^{2\beta_t}\notag\\
\overset{a}{\leq}&  \EE_{z^{(\ell)}, \dots, z^{(L)}} \E_{\{z\}_{\cI^{(\ell-1)}}}  \prod^{J^{(\ell)}-1}_{j=1}\frac{1}{(m^{(j)})}\prod^{\ell-1}_{j=J^{(\ell)}+1}\frac{1}{(m^{(j)})} \sum_{\dot{s}\in\mathcal{B}^{(\ell-1)}}\notag\\
&\quad\quad\quad\quad\quad\quad\quad\quad\quad\quad\quad\quad\quad\quad\quad\quad\quad\quad\quad\quad~~~\left| \frac{1}{(m^{(J^{(\ell)})})}\sum_{k=1}^{m^{(J^{(\ell)})}}  \left[\bar{S}^{(\ell)}(\rho,z^{(1)}_{\dot{s}^{+k}_1}, \dots,   z^{(\ell-1)}_{\dot{s}^{+k}_{\ell-1}},u;x)\right] \right|^{2\beta_t} \notag\\
\overset{b}{=}&    \EE_{z^{(\ell)}, \dots, z^{(L)}} \E_{\{z\}_{s^{+k}, k\in \left[m^{(J^{(\ell)})}\right]}}  \left| \frac{1}{(m^{(J^{(\ell)})})}\sum_{k=1}^{m^{(J^{(\ell)})}}  \left[\bar{S}^{(\ell)}(\rho,z^{(1)}_{s^{+k}_1}, \dots,   z^{(\ell-1)}_{s^{+k}_{\ell-1}},u;x)\right] \right|^{2\beta_t}
\notag\\
\overset{c}{\leq}&\frac{1}{(m^{(J^{(\ell)})})^{2\beta_t}}\EE_{z^{(\ell)}, \dots, z^{(L)}} C^1_{\beta_t}\left[ m^{(J^{(\ell)})}\E_{z^{(1)}, \dots, z^{(\ell-1)}}\left| \bar{S}^{(\ell)}(\rho,z^{(1)}, \dots, z^{(\ell-1)},u;x) \right|^{2\beta_t}  \right]\notag\\
&\quad\quad\quad\quad\quad\quad\quad~+ \frac{1}{(m^{(J^{(\ell)})})^{2\beta_t}}\EE_{z^{(\ell)}, \dots, z^{(L)}} C^2_{\beta_t}\left[ m^{(J^{(\ell)})}\E_{z^{(1)}, \dots, z^{(\ell-1)}}\left| \bar{S}^{(\ell)}(\rho,z^{(1)}, \dots, z^{(\ell-1)},u;x) \right|^2  \right]^{\beta_t}\notag\\
\overset{d}{\leq}&  C^0_{\beta_t}\frac{1}{(m^{(J^{(\ell)})})^{\beta_t}}\E_{z^{(1)}, \dots, z^{(L)}}\left|\bar{S}^{(\ell)}(\rho,z^{(1)}, \dots, z^{\ell-1)},u;x)\right|^{2\beta_t}\notag\\
\overset{e}{\leq}& C^0_{\beta_t} (c_{M}\vee 1)\frac{1}{(m^{(J^{(\ell)})})^{\beta_t}}. \label{l use} 
\end{align}
We explain \eqref{l use}. 
In $\overset{a}\leq$m we use Jensen's inequality, 
$$ \left|\frac{1}{m}\sum_{i=1}^m x_i \right|^p \leq  \frac{1}{m}\sum_{i=1}^m \left|x_i\right|^p, ~~ p\geq 1.   $$
In $\overset{b}=$, we take expectation on all $\{z\}_{\mathcal{B}^{(\ell-1)}}$, i.e. $$\hat{\mathcal{Z}}^{(1)}\cup \cdots \cup \hat{\mathcal{Z}}^{(J^{(\ell)-1})}\cup\hat{\mathcal{Z}}^{(J^{(\ell)+1})}\cup \cdots \cup \hat{\mathcal{Z}}^{(\ell-1)},$$ and denote the ordered list $s^{+k}$ as
 $$
s^{+k}:= 
\begin{cases}
(1+k)\%m^{(i)}+1,&   i\in [J^{(\ell)-1}],\\
k,&   i= J^{(\ell)} , \\
(1+k)\%m^{(i)}+1,&   i\in [L]\setminus[J^{(\ell)}].
\end{cases}
$$
In $\overset{c}=$, we use
all $\bar{S}^{(\ell)}(\rho,z^{(1)}_{s^{+k}_1}, \dots, z^{(L)}_{s^{+k}_L};x)$ with $k\in m^{(J^{(\ell)})}$are independent  given $\{z^{(\ell)}, \dots, z^{(L)}\}$ (all the indexes are different) and is obtained by Rosenthal Inequality (refer to Lemma \ref{rosenthal}).  In $\overset{d}=$, we use Jensen's inequality,   given $\{z^{(\ell)}, \dots, z^{(L)}\}$,  we have
\begin{align}
\left[\E_{z^{(1)}, \dots, z^{(\ell-1)}}\left| \bar{S}^{(\ell)}(\rho,z^{(1)}, \dots, z^{(\ell-1)};x)   \right|^2\right]^{\beta_t} \leq  \E_{z^{(1)}, \dots, z^{(\ell-1)}}\left| \bar{S}^{(\ell)}(\rho,z^{(1)}, \dots, z^{(\ell-1)};x)\right|^{2\beta_t},\nonumber
\end{align}
and 
\begin{align}
    \frac{1}{(m^{(J^{(\ell)})})^{2\beta_t-1}} \leq \frac{1}{(m^{(J^{(\ell)}})^{\beta_t}},\nonumber
\end{align}
since $\beta_t\geq1$. The inequality $\overset{e}\leq$ comes from Assumption \ref{ass:3}, we have
\begin{align}
    &\E_{z^{(1)}, \dots, z^{(L)}}\left|\bar{S}^{(\ell)}(\rho,z^{(1)}, \dots, z^{\ell-1)},u;x)\right|^{2\beta_t}\notag\\ \overset{t\leq L}{\leq}& \left(\E_{z^{(1)}, \dots, z^{(L)}}\left|\bar{S}^{(\ell)}(\rho,z^{(1)}, \dots, z^{\ell-1)},u;x)\right|^{2\beta_L}\right)^{\frac{\beta_t}{\beta_L}}  \notag\\
    =&(c_{M})^{\frac{\beta_t}{\beta_L}}\notag\\
    \leq& c_{M}\vee 1. \notag
\end{align}
\end{proof}
\noindent\textbf{Step 3(ii):} Denote 
$$\tPsi_s(z^{(\ell)}) := \Psi^{(\ell+2)}(s)\bbw(z^{\ell+1}_{s_{\ell+1}}, \zl),  ~~ \ell \in \{2, \dots, L-1 \},~ s\in \cA^{(\ell+1)}.$$ We let $\cA^{(L+1)} = \{1\}$ and denote
$$\tPsi_s(z^{(L)}) := \bbu(z^{(L)}),~~ s\in \cA^{(L+1)}.$$ Secondly, for all $\ell\in \{2, \dots, L\}$ and $s\in \cA^{(\ell+1)}$, we bound
$$ \left|\frac{1}{m^{(\ell)}} \sum^{m^{(\ell)}}_{j=1}\tPsi_s(\zl_j)\left|\hf_j^{(\ell)}(x) - \breve{f}^{(\ell)}(\rho,\zl_j;x)\right|\right|^{2\beta_t} $$
by the continuity of $\nabla h^{(\ell)}(\cdot)$.
\begin{proof}
 For any $\ell\in \{2, \dots, L\}$ and $j\in [m^{(\ell)}]$, from Lemma \ref{lip} in the end of the section, we have
\begin{align}
&\left|\hf_j^{(\ell)}(x)  -  \hl\left(\gl(\rho,\zl_j;x)\right) - \nabla  \hl\left(\gl(\rho,\zl_j;x)\right)\left[\hat{g}^{(\ell)}_j(x) -  \gl(\rho,\zl_j;x) \right]    \right|\notag\\
& \quad\quad\quad\quad\quad\quad\quad\quad\quad\quad\quad~ \leq c_1\left| \frac{1}{m^{(\ell-1)}}\sum_{i=1}^{m^{(\ell-1)}} w(\zl_j, \zmi_i)\hf_i^{(\ell-1)}(x) -  \gl(\rho,\zl_j;x)     \right|^{1+\alpha}.\label{tmt}
\end{align}

So for all $\ell \in \{2, \dots, L\}$,
\begin{align}
&\left|\frac{1}{m^{(\ell)}} \sum^{m^{(\ell)}}_{j=1}\tPsi_s(\zl_j)\left|\hf^{(\ell)}_j(x) - \breve{f}^{(\ell)}(\rho,\zl_j;x)\right|\right|\label{onet}\\
\leq&\frac{1}{m^{(\ell)}}\sum^{m^{(\ell)}}_{j=1}\left|\tPsi_s(\zl_j)\right|\left| \hf^{(\ell)}_j(x) - \breve{f}^{(\ell)}(\rho,\zl_j;x)\right|\notag\notag\\
\leq& \frac{1}{m^{(\ell)}}\sum^{m^{(\ell)}}_{j=1}\left|\tPsi_s(\zl_j)\right|\times\notag\\
& \quad\quad\quad\underbrace{\left|\hf_j^{(\ell)}(x)  -  \hl(\gl(\rho,\zl_j;x)) - \nabla  \hl\left(\gl(\rho,\zl_j;x)\right)\left[  \hat{g}^{(\ell)}_j(x)-  \gl(\rho,\zl_j;x) \right]    \right|}_{\eqref{tmt}}\notag\\
&+ \frac{1}{m^{(\ell)}}\sum^{m^{(\ell)}}_{j=1}\left|\tPsi_s(\zl_j)\right|\times\notag \\
&\left|\underbrace{\breve{f}^{(\ell)}(\rho,\zl_j;x)}_{\eqref{def bf}}  -  \hl(\gl(\rho,\zl_j;x)) - \nabla  \hl\left(\gl(\rho,\zl_j;x)\right)\left[ \underbrace{\hat{g}^{(\ell)}_j(x)}_{\eqref{eqn:hatf-l-j}} -  \gl(\rho,\zl_j;x) \right]   \right|\notag\\
\leq&\frac{c_1}{m^{(\ell)}}\sum^{m^{(\ell)}}_{j=1}\left|\tPsi_s(\zl_j)\right|\left| \frac{1}{m^{(\ell-1)}}\sum_{i=1}^{m^{(\ell-1)}} w(\zl_j, \zmi_i)\hf^{(\ell-1)}_i(x) -  \gl(\rho,\zl_j;x)     \right|^{1+\alpha}\notag\\
&+\frac{1}{m^{(\ell)}}\sum^{m^{(\ell)}}_{j=1}\left|\tPsi_s(\zl_j)\right|\times \notag\\
&\quad~\left| \nabla  \hl\left(\gl(\rho,\zl_j;x)\right)  \frac{1}{m^{(\ell-1)}}\sum_{i=1}^{m^{(\ell-1)}} w(\zl_j, \zmi_i)\Big[\hf^{(\ell-1)}_i(x) -  \breve{f}^{(\ell-1)}(\rho,\zmi_i;x)\Big]   \right|\notag\\
\overset{a}{\leq}&\frac{c_1}{m^{(\ell)}}\sum^{m^{(\ell)}}_{j=1}\left|\tPsi_s(\zl_j) \frac{1}{m^{(\ell-1)}}\sum_{i=1}^{m^{(\ell-1)}}\left[ w(\zl_j, \zmi_i)\hf_i^{(\ell-1)}(x) -  \gl(\rho,\zl_j;x) \right]    \right|^{1+\alpha}\notag\\
&\quad~~~~ +\frac{c_0}{m^{(\ell)}}\sum^{m^{(\ell)}}_{j=1}\left|\tPsi_s(\zl_j) \frac{1}{m^{(\ell-1)}}\sum_{i=1}^{m^{(\ell-1)}} \underline{w(\zl_j, \zmi_i)\left[\hf^{(\ell-1)}_i(x) -  \breve{f}^{(\ell-1)}(\rho,\zmi_i;x)\right]}  \right|\notag \\
\leq&\frac{c_1}{m^{(\ell)}}\sum^{m^{(\ell)}}_{j=1}\left|\tPsi_s(\zl_j) \frac{1}{m^{(\ell-1)}}\sum_{i=1}^{m^{(\ell-1)}}\left[ w(\zl_j, \zmi_i)\hf_i^{(\ell-1)}(x) -  \gl(\rho,\zl_j;x) \right]    \right|^{1+\alpha}\notag\\
&\quad~~~~~~ +\frac{c_0}{m^{(\ell)}}\sum^{m^{(\ell)}}_{j=1}\left|\tPsi_s(\zl_j) \frac{1}{m^{(\ell-1)}}\sum_{i=1}^{m^{(\ell-1)}} \bbw(\zl_j, \zmi_i)\left|\hf^{(\ell-1)}_i(x) -  \breve{f}^{(\ell-1)}(\rho,\zmi_i;x)\right|  \right|, \notag
\end{align}
where   $\overset{a}\leq$ uses Assumption \ref{ass:2} and $\| \tPsi_s(\zl_j)\|\geq 1$.
Furthermore, for the first term on the right hand side of \eqref{onet},  we have
\begin{align}
&\left|\tPsi_s(\zl_j) \frac{1}{m^{(\ell-1)}}\sum_{i=1}^{m^{(\ell-1)}} \left[w(\zl_j, \zmi_i)\hf_i^{(\ell-1)}(x) -  \gl(\rho,\zl_j;x) \right]    \right|\notag\\
\leq&  \left| \tPsi_s(\zl_j)  \frac{1}{m^{(\ell-1)}}\sum_{i=1}^{m^{(\ell-1)}} \underline{w(\zl_j, \zmi_i)\left[\hf_i^{(\ell-1)}(x) - \breve{f}^{(\ell-1)}(\rho,\zmi_i;x) \right] }   \right|\notag\\
&\quad\quad\quad\quad~ +\left| \tPsi_s(\zl_j) \frac{1}{m^{(\ell-1)}}\sum_{i=1}^{m^{(\ell-1)}} \left[w(\zl_j, \zmi_i)\breve{f}^{(\ell-1)}(\rho,\zmi_i;x) -  \gl(\rho,\zl_j;x)   \right] \right|\notag\\
\leq&  \left| \tPsi_s(\zl_j)  \frac{1}{m^{(\ell-1)}}\sum_{i=1}^{m^{(\ell-1)}} \bbw(\zl_j, \zmi_i)\left|\hf_i^{(\ell-1)}(x) - \breve{f}^{(\ell-1)}(\rho,\zmi_i;x) \right|    \right|\notag\\
&\quad\quad\quad\quad~ +\left| \tPsi_s(\zl_j) \frac{1}{m^{(\ell-1)}}\sum_{i=1}^{m^{(\ell-1)}} \left[w(\zl_j, \zmi_i)\breve{f}^{(\ell-1)}(\rho,\zmi_i;x) -  \gl(\rho,\zl_j;x)   \right] \right|.\notag
\end{align}
From Lemma \ref{jcon} in the end of the section, we have
\begin{align}
&\left|\tPsi_s(\zl_j)  \frac{1}{m^{(\ell-1)}}\sum_{i=1}^{m^{(\ell-1)}} \left[w(\zl_j, \zmi_i)\hf_i^{(\ell-1)}(x) -  \gl(\rho,\zl_j;x) \right]    \right|^{1+\alpha}\label{onea}\\
\leq&  ~(1-q_0)^{-\alpha}\left| \tPsi_s(\zl_j) \frac{1}{m^{(\ell-1)}}\sum_{i=1}^{m^{(\ell-1)}} \bbw(\zl_j, \zmi_i)\Big|\hf_i^{(\ell-1)}(x) - \breve{f}^{(\ell-1)}(\rho,\zmi_i;x) \Big|    \right|^{1+\alpha}\notag\\
&+q_0^{-\alpha}\left| \tPsi_s(\zl_j)  \frac{1}{m^{(\ell-1)}}\sum_{i=1}^{m^{(\ell-1)}}  \left[w(\zl_j, \zmi_i)\breve{f}^{(\ell-1)}(\rho,\zmi_i;x) -  \gl(\rho,\zl_j;x)\right]     \right|^{1+\alpha},\notag
\end{align}
where we simply set  $q_0 = \frac{3}{4}$. 

Plugging \eqref{onea} into \eqref{onet}, we have  for all $\ell\in \{2, \dots, L\}$ and $s\in \cA^{(\ell+1)}$, 
\begin{align}
&\left|\frac{1}{m^{(\ell)}} \sum^{m^{(\ell)}}_{j=1}\tPsi_s(\zl_j)\left|\hf_j^{(\ell)}(x) - \breve{f}^{(\ell)}(\rho,\zl_j;x)\right|\right|\notag\\
\leq&c_1(1-q_0)^{-\alpha}\frac{1}{m^{(\ell)}}\sum_{j=1}^{m^{(\ell)}}\left| \tPsi_s(\zl_j)  \frac{1}{m^{(\ell-1)}}\sum_{i=1}^{m^{(\ell-1)}} \bbw(\zl_j, \zmi_i)\left|\hf_i^{(\ell-1)}(x) - \breve{f}^{(\ell-1)}(\rho,\zmi_i;x) \right|    \right|^{1+\alpha}\notag\\
&+c_1 q_0^{-\alpha}\frac{1}{m^{(\ell)}}\sum_{j=1}^{m^{(\ell)}}\left| \tPsi_s(\zl_j)  \frac{1}{m^{(\ell-1)}}\sum_{i=1}^{m^{(\ell-1)}} \left[w(\zl_j, \zmi_i)\breve{f}^{(\ell-1)}(\rho,\zmi_i;x) -  \gl(\rho,\zl_j;x)\right]     \right|^{1+\alpha}\notag\\
&\quad\quad\quad\quad~ + \frac{c_0}{m^{(\ell)}}\sum_{j=1}^{m^{(\ell)}}\left|\tPsi_s(\zl_j)   \frac{1}{m^{(\ell-1)}}\sum_{i=1}^{m^{(\ell-1)}} \bbw(\zl_{j}, \zmi_i)\left|\hf_i^{(\ell-1)}(x) -  \breve{f}^{(\ell-1)}(\rho,\zmi_i;x)\right|   \right|.\notag
\end{align}
Finally using  Lemma \ref{jcon} again, for all $\ell\in \{2, \dots, L\}$ and $s\in \cA^{(\ell+1)}$,  we obtain
\begin{align}
&\left|\frac{1}{m^{(\ell)}} \sum^{m^{(\ell)}}_{j=1}\tPsi_s(\zl_j)\left|\hf_j^{(\ell)}(x) - \breve{f}^{(\ell)}(\rho,\zl_j;x)\right|\right|^{2\beta_t}\label{ttend}\\
\leq&c_1^{2\beta_t}q_1^{-2\alpha\beta_t-2\beta_t +1}\times\notag\\
&\quad~~ \left(\underline{\frac{1}{m^{(\ell)}}\sum_{j=1}^{m^{(\ell)}}}\left| \tPsi_s(\zl_j)  \frac{1}{m^{(\ell-1)}}\sum_{i=1}^{m^{(\ell-1)}} \bbw(\zl_j, \zmi_i)\left|\hf_i^{(\ell-1)}(x) - \breve{f}^{(\ell-1)}(\rho,\zmi_i;x) \right|    \right|^{1+\alpha}\right)^{2\beta_t}\notag\\
&+c_1^{2\beta_t}q_0^{-2\alpha\beta_t}(1-q_1)^{-2\beta_t +1}\times\notag\\
&\left(\underline{\frac{1}{m^{(\ell)}}\sum_{j=1}^{m^{(\ell)}}}\left| \tPsi_s(\zl_j)  \frac{1}{m^{(\ell-1)}}\sum_{i=1}^{m^{(\ell-1)}} \left[w(\zl_j, \zmi_i)\breve{f}^{(\ell-1)}(\rho,\zmi_i;x) -  \gl(\rho,\zl_j;x)  \right]   \right|^{1+\alpha}\right)^{2\beta_t}\notag\\
&+ c_0^{2\beta_t}q_1^{-2\beta_t +1}\times\notag\\
&\quad\quad\quad~~\left(\underline{\frac{1}{m^{(\ell)}}\sum_{j=1}^{m^{(\ell)}}}\left|\tPsi_s(\zl_j) \frac{1}{m^{(\ell-1)}}\sum_{i=1}^{m^{(\ell-1)}} \bbw(\zl_{j}, \zmi_i)\Big|\hf_i^{(\ell-1)}(x) -  \breve{f}^{(\ell-1)}(\rho,\zmi_i;x)\Big|  \right|\right)^{2\beta_t}\notag\\
\overset{a}{\leq}&c_1^{2\beta_t}q_1^{-2\alpha\beta_t-2\beta_t +1}\times\notag\\
&\quad\quad\quad\quad~~~\frac{1}{m^{(\ell)}}\sum_{j=1}^{m^{(\ell)}}\left| \tPsi_s(\zl_j)  \frac{1}{m^{(\ell-1)}}\sum_{i=1}^{m^{(\ell-1)}} \bbw(\zl_j, \zmi_i)\left|\hf_i^{(\ell-1)}(x) - \breve{f}^{(\ell-1)}(\rho,\zmi_i;x) \right|    \right|^{2\beta_{t+1}}\notag\\
&+c_1^{2\beta_t}q_0^{-2\alpha\beta_t}(1-q_1)^{-2\beta_t +1}\times\notag\\
&\quad\quad\quad\frac{1}{m^{(\ell)}}\sum_{j=1}^{m^{(\ell)}}\left| \tPsi_s(\zl_j)  \frac{1}{m^{(\ell-1)}}\sum_{i=1}^{m^{(\ell-1)}} \left[w(\zl_j, \zmi_i)\breve{f}^{(\ell-1)}(\rho,\zmi_i;x) -  \gl(\rho,\zl_j;x) \right]    \right|^{2\beta_{t+1}}\notag\\
&+  c_0^{2\beta_t}q_1^{-2\beta_t +1}\times\notag\\
&\quad\quad\quad\quad\quad~ \frac{1}{m^{(\ell)}}\sum_{j=1}^{m^{(\ell)}}\left| \tPsi_s(\zl_j)   \frac{1}{m^{(\ell-1)}}\sum_{i=1}^{m^{(\ell-1)}} \bbw(\zl_{j}, \zmi_i)\Big|\hf_i^{(\ell-1)}(x) -  \breve{f}^{(\ell-1)}(\rho,\zmi_i;x)\Big|   \right|^{2\beta_t},\notag
\end{align}
where we simply set $q_1 = \frac{1}{4}$ and  $\overset{a}\leq$ uses Jensen's inequality which is
$$ \left|\frac{1}{m}\sum_{i=1}^m x_i \right|^p \leq  \frac{1}{m}\sum_{i=1}^m \left|x_i\right|^p, ~~ p\geq 1.   $$
\end{proof}
\noindent\textbf{Step 3(iii):} Now we are ready to bound
$$\E\left\| \breve{f}(\rho,u;x)-\hf(x)\right\|^2.    $$
 \begin{proof}
 Denote:
\begin{align*}
&\Gamma(t, \ell) := \EE \prod_{i=\ell+1}^L\frac{1}{m^{(i)}}\sum_{\ts\in \cA^{(\ell+1)}}\left|\frac{1}{m^{(\ell)}}\sum^{m^{(\ell)}}_{j=1}\tPsi_{\ts}(\zl_j) \left|\hf_j^{(\ell)}(x) - \breve{f}^{(\ell)}(\rho,\zl_j;x)\right|\right|^{2\beta_t},\\
&\Phi(t, \ell) := \EE \prod_{i=\ell+1}^L \frac{1}{m^{(i)}} \sum_{\ts\in \cA^{(\ell)}} \left| 
\Psi^{(\ell+1)}(\ts)\frac{1}{m^{(\ell-1)}}\sum_{j=1}^{m^{(\ell-1)}}\left[ w(\zl_{\ts_{\ell}},\zmi_{j} )
\breve{f}^{(\ell-1)}(\rho,\zmi_{j};x) -  g^{(\ell)}(\rho,\zl_{\ts_{\ell}};x)\right]     \right|^{2\beta_t},
\end{align*}
and
$$ d_t := c_1^{2\beta_t}q_1^{-2\alpha\beta_t-2\beta_t +1}, \quad e_t:=  c_1^{2\beta_t}q_0^{-2\alpha\beta_t}(1-q_1)^{-2\beta_t +1}, \quad  f_t:=  c_0^{2\beta_t}q_1^{-2\beta_t +1}.$$

Note that because 
\begin{align}
    &\left\| \frac{1}{m^{(L)}}\sum_{j=1}^{m^{(L)}}u^{(L)}(z_j^{(L)}) \left[\hat{f}^{(L)}_j(x)-\breve{f}(\rho, z^{(L)};x)\right]  \right\|\notag\\
    \leq&   \frac{1}{m^{(L)}}\sum_{j=1}^{m^{(L)}}\left\|u^{(L)}(z_j^{(L)}) \right\| \left|\hat{f}^{(L)}_j(x)-\breve{f}(\rho, z^{(L)};x)\right|\notag\\
    \leq& \frac{1}{m^{(L)}}\sum_{j=1}^{m^{(L)}}\bbu(z_j^{(L)})\left|\hat{f}^{(L)}_j(x)-\breve{f}(\rho, z^{(L)};x)\right|\notag\\
    =& \left|\frac{1}{m^{(L)}}\sum_{j=1}^{m^{(L)}}\bbu(z_j^{(L)})\left|\hat{f}^{(L)}_j(x)-\breve{f}(\rho, z^{(L)};x)\right|\right|,\notag
\end{align}
we have
\begin{align}
    &\E \|\breve{f}(\rho,u;x)-\hf(x) \|^2\label{endd}\\
    =&\E\left\| \frac{1}{m^{(L)}}\sum_{j=1}^{m^{(L)}}u^{(L)}(z_j^{(L)}) \left[\hat{f}^{(L)}_j(x)-\breve{f}(\rho, z^{(L)};x)\right]  \right\|^2\notag\\
    \leq & \E\left|\frac{1}{m^{(L)}}\sum_{j=1}^{m^{(L)}}\bbu(z_j^{(L)})\left|\hat{f}^{(L)}_j(x)-\breve{f}(\rho, z^{(L)};x)\right|\right|^2\notag\\
    =&\Gamma(0, L).\notag
\end{align}
Moreover, $\Phi(t,l)$ is bounded in \eqref{l use} and $d_t$, $e_t$, and $f_t$ can be treated as constants when $t\leq L$. Finally, integrating \eqref{ttend} with $s\in \cA^{(\ell+1)}$, multiplying the both sides by  $\prod_{i=\ell+1}^L\frac{1}{m^{(i)}}$, and taking expectation, for all $\ell\in \{2, \dots, L\}$ and $t\geq 0$, we have
\begin{align}
\Gamma(t, \ell) \leq d_t  \Gamma(t+1, \ell-1) + e_t  \Phi(t+1, \ell) + f_t \Gamma(t, \ell-1).
\end{align} 
For all $t\leq L$, we have
$$ d_t \leq d_{L}\vee d_{0}, \quad f_t\leq f_{L}\vee f_{0}.     $$
Let $\gamma := d_{T} \vee f_{T} \vee d_{0} \vee f_{0} \vee 1$,   for all $t\leq L$, we have
\begin{align}
\Gamma(t, \ell+1) \leq \gamma(\Gamma(t+1, \ell)+ \Gamma(t, \ell)) + \tPhi(t, \ell+1),
\end{align}
where $ \tPhi(t, \ell) = \Phi(t+1,\ell)e_t$ and $\ell\in [L-1]$.

When $\ell=1$, we have  $\Gamma(t, 1)=0$ for all $t\geq 0$. Then we show that 
\begin{align}
\Gamma(t, \ell) \leq \sum_{i=1}^{\ell}\gamma^{\ell-i}\sum_{j=0}^{\ell-i}\left(\begin{matrix}
\ell-i\\j
\end{matrix}\right)\tPhi(j+t, i).
\end{align}
We prove it by induction. When $\ell=1$, the above statement is true.  Suppose at $\ell-1$, the statement is true.  We have
\begin{align}
\Gamma(t, \ell+1)\leq& \sum_{i=1}^{\ell} \gamma^{\ell-i+1} \sum_{j=0}^{\ell-i}\left(\begin{matrix}
\ell-i\\j
\end{matrix}\right)\tPhi(j+t+1,i) +\sum_{i=1}^{\ell} \gamma^{\ell-i+1} \sum_{j=0}^{\ell-i}\left(\begin{matrix}
\ell-i\\j
\end{matrix}\right)\tPhi(j+t,i)+ \tPhi(t, {\ell+1})\notag\\
\overset{a}{=}&\sum_{i=1}^{\ell} \gamma^{\ell-i+1}\left[ \sum_{j=1}^{\ell-i}\left(\begin{matrix}
\ell-i+1\\j
\end{matrix}\right)\tPhi({j+t}, i)+ \tPhi(t,i)+ \tPhi({\ell-i+1+t},i) \right] +\tPhi(t,{\ell+1})\notag\\
=&\sum_{i=1}^{\ell+1}\gamma^{\ell+1-i}\sum_{j=0}^{\ell+1-i}\left(\begin{matrix}
\ell+1-i\\j
\end{matrix}\right)\tPhi(j+t, i), \notag
\end{align} 
where in $\overset{a}=$, we use 
$$ \left(\begin{matrix}
n\\k
\end{matrix}\right) +\left(\begin{matrix}
n\\k+1
\end{matrix}\right) = \left(\begin{matrix}
n+1\\k+1
\end{matrix}\right).$$
In all, we have
\begin{align}
&\E\left\| \breve{f}(\rho,u;x)-\hf(x)\right\|^2 \label{2end}\\
\overset{a}{\leq}&\Gamma(0, L) \notag\\
\overset{b}{\leq}&  L\gamma^{L} C^0_{B_T}(c_{M}\vee 1) (e_L\vee e_0) \mathcal{O}\left(\frac{1}{(m^{J^{(\ell)}})^{1+\alpha}}\right)\notag\\
\!=&\mathcal{O}\left(  \frac{1}{(m^{(J^{(\ell)})})^{1+\alpha}} \right).\notag
\end{align}
where in $\overset{a}{\leq}$, we use \eqref{endd} and in $\overset{b}=$, we use the fact that for all $\Phi(t+1, \ell)$ has the lowest order $\frac{1}{(m^{J^{(\ell)}})^{1+\alpha}}$  only when $t=0$ and $j=0$
and  $\gamma$, $C^0_{B_T}$, $e_L$, and $e_0$ only depend on $L$, $\alpha$, $c_0$, $c_1$,  $c_M$, and $c_{M_1}$, and can be treated as constants.

Furthermore, by setting $\alpha = \mathcal{O}(\frac{1}{L})$, we explicitly have
\begin{align}
\E\left\| \breve{f}(\rho,u;x)-\hf(x)\right\|^2\leq \mathcal{O}\left( \frac{L(cc_0\vee c_1^2)^{eL} (c_{M}\vee 1)} {(m^{(J^{(\ell)})})^{1+\mathcal{O}(\frac{1}{L})}} \right).
\end{align}
\end{proof}
\noindent\textbf{Step 4.}  We prove  Theorem \ref{var}.
\begin{proof}
We can directly obtain  Theorem \ref{var} by the facts that 
\begin{align}
&\E\left\| \hf(x)-  f(\rho,u;x)\right\|^2\notag\\
=& \E\left\| f(\rho,u;x)-\breve{f}(\rho,u;x)\right\|^2 +\underbrace{\E\left\| \breve{f}(\rho,u;x)-\hf(x)\right\|^2}_{\eqref{2end}} + 2\E \left[f(\rho,u;x)-\breve{f}(\rho,u;x)\right]^\top\left[ \breve{f}(\rho,u;x)-\hf(x)\right]\notag\\
=&\E\left\| f(\rho,u;x)-\breve{f}(\rho,u;x)\right\|^2 +\mathcal{O}\left(\left(\min_{\ell\in [L]}m^{(\ell)}\right)^{-1-\alpha}\right)+ 2\E \left[f(\rho,u;x)-\breve{f}(\rho,u;x)\right]^\top\left[ \breve{f}(\rho,u;x)-\hf(x)\right]\notag\notag\\
\overset{a}=&\underbrace{\E\left\| f(\rho,u;x)-\breve{f}(\rho,u;x)\right\|^2}_{\eqref{theo2:end1}} +\mathcal{O}\left( \sum_{\ell=1}^{L} (m^{(\ell)})^{-(1+\alpha)} \right)+ 2\E \left[f(\rho,u;x)-\breve{f}(\rho,u;x)\right]^\top\left[ \breve{f}(\rho,u;x)-\hf(x)\right]\notag
\end{align}
and
\begin{align}
    &\left|\E \left[f(\rho,u;x)-\breve{f}(\rho,u;x)\right]^\top\left[ \breve{f}(\rho,u;x)-\hf(x)\right]\right|\notag\\
    \leq&\E \left\|f(\rho,u;x)-\breve{f}(\rho,u;x)\right\|\left\|\breve{f}(\rho,u;x)-\hf(x)\right\|\notag\\
    \overset{b}{\leq}& \left( \underbrace{\E \left\|f(\rho,u;x)-\breve{f}(\rho,u;x)\right\|^2}_{\eqref{theo2:end1}} \right)^{1/2} \left( \underbrace{\E \left\|\breve{f}(\rho,u;x)-\hf(x)\right\|^2}_{\eqref{2end}} \right)^{1/2}\notag\\
    \overset{c}{=}&\mathcal{O}\left(\left(\min_{\ell\in [L]}m^{(\ell)}\right)^{-0.5}\right) \times \mathcal{O}\left(\left(\min_{\ell\in [L]}m^{(\ell)}\right)^{-0.5-\alpha/2}\right)\notag\\
        =&\mathcal{O}\left(\left(\min_{\ell\in [L]}m^{(\ell)}\right)^{-(1+\alpha/2)}\right)\notag\\
       \overset{d}{=}&\mathcal{O}\left( \sum_{\ell=1}^{L} (m^{(\ell)})^{-(1+\alpha/2)} \right),\notag
\end{align}
where $\overset{a}=$  and $\overset{d}=$ use the inequality that $ (\min_{\ell\in [L]}m^{(\ell)})^{-p}\leq \sum_{\ell=1}^{L}(m^{(\ell)})^{-p} $ with $p>0$, $\overset{b}=$ uses the Holder's inequality, and  $\overset{c}=$ uses the inequality that $ \sum_{\ell=1}^{L}(m^{(\ell)})^{-p} \leq L(\min_{\ell\in [L]}m^{(\ell)})^{-p}$ with $p>0$.
\end{proof}

\subsection{Proofs of (\ref{eqn:R-l-specific}) and (\ref{eqn:R-u-specific})}\label{app:regg}
\begin{proof}
For all $\ell\in[L-1]$, by assuming $f^{(\ell)}(\rho,z^{(\ell)};x)$ and $\frac{\partial f(\rho,u;x)}{\partial z^{(\ell+1)}}$ are bounded by $C$, we have
\begin{align}
&\E_{z^{(\ell)}}\left(\E_{z^{(\ell+1)}}\frac{\partial f(\rho,u;x)}{\partial z^{(\ell+1)}}  \left[  f^{(\ell)}(\rho,z^{(\ell)};x)w(z^{(\ell+1)}, z^{(\ell)})- g^{(\ell+1)}(\rho,z^{(\ell+1)};x) \right] \right)^2\notag\\
    \overset{a}{\leq}&\E_{z^{(\ell)}}\left(\E_{z^{(\ell+1)}} \frac{\partial f(\rho,u;x)}{\partial z^{(\ell+1)}}   f^{(\ell)}(\rho,z^{(\ell)};x) w(z^{(\ell+1)}, z^{(\ell)}) \right)^2 \nonumber\\
    \leq &C^4\E_{z^{(\ell)}}\left(\E_{z^{(\ell+1)}}\left|  w(z^{(\ell+1)}, z^{(\ell)}) \right|\right)^2 \nonumber\\
    =& C^4\int \left(\int |w(z^{(\ell+1)},z^{(\ell)})|d\rho^{(\ell+1)}(z^{(\ell+1)})\right)^2d\rho^{(\ell)}(z^{(\ell)})=C^4 R^{(\ell+1)}(\rho),\nonumber
    \end{align}
    where $\overset{a}\leq$ uses the facts that $\E\| \tilde{\xi} - \E \tilde{\xi}\|^2\leq\E \|\tilde{\xi} \|^2$ with $\tilde{\xi} =  g^{(\ell+1)}(\rho,z^{(\ell+1)};x)$ and 
    \begin{align}
        &\E_{z^{(\ell)}}\E_{z^{(\ell+1)}}\left[\frac{\partial f(\rho,u;x)}{\partial z^{(\ell+1)}}  f^{(\ell)}(\rho,z^{(\ell)};x)w(z^{(\ell+1)}, z^{(\ell)}\right]\notag\\
        =&\E_{z^{(\ell+1)}}\left[\frac{\partial f(\rho,u;x)}{\partial z^{(\ell+1)}}   \E_{z^{(\ell)}}\left(f^{(\ell)}(\rho,z^{(\ell)};x)w(z^{(\ell+1)}, z^{(\ell)})\right) \right]\notag\\
        =&\E_{z^{(\ell+1)}}\left[\frac{\partial f(\rho,u;x)}{\partial z^{(\ell+1)}}    g^{(\ell+1)}(\rho,z^{(\ell+1)};x) \right].\notag
    \end{align}
    Similarly, we have
\begin{align}
    &\E_{z^{(L)}} \left\| f^{(L)}(\rho,z^{(L)};x) u(z^{(L)})- f(\rho, u;x)\right\|^2\nonumber\\
    \leq&\E_{z^{(L)}} \left\| f^{(L)}(\rho,z^{(L)};x) u(z^{(L)})\right\|^2\nonumber\\
    \leq & C^2\E_{z^{(L)}} \left\| u(z^{(L)})\right\|^2=C^2\int \left\| u(z^{(L)})\right\|^2d\rho^{(L)}(z^{(L)})=C^2R^{(u)}(\rho, u).\nonumber
\end{align}
\end{proof}

\subsection{Additional Lemmas in Appendix \ref{app:2}}
\begin{lemma}\label{lip}
	Suppose $\tilde{h}$ satisfies
	$$| \nabla \tilde{h} (x) -  \nabla \tilde{h} (y) |\leq \tilde{c} | x-y |^{\alpha}$$
	with $\alpha> 0$, then we have
	\begin{align}
	\left|\tilde{h}(x) - \tilde{h}(y) - \<\nabla \tilde{h}(x), x-y\>   \right| \leq \frac{ \tilde{c}}{1+\alpha}|x-y |^{1+\alpha}\leq \tilde{c}| x - y |^{1+\alpha}.
	\end{align}
\end{lemma}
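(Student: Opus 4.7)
The plan is to apply the fundamental theorem of calculus along the straight line segment joining $y$ and $x$, and then estimate the resulting integrand using the prescribed Hölder modulus of continuity on $\nabla \tilde{h}$. Concretely, I would write
\[
\tilde{h}(x) - \tilde{h}(y) = \int_0^1 \langle \nabla \tilde{h}(y + t(x-y)),\, x-y \rangle\, dt,
\]
and subtract the constant term using the identity $\langle \nabla \tilde{h}(x), x-y \rangle = \int_0^1 \langle \nabla \tilde{h}(x), x-y\rangle\, dt$. This reduces the quantity of interest to
\[
\tilde{h}(x) - \tilde{h}(y) - \langle \nabla \tilde{h}(x), x-y \rangle = \int_0^1 \langle \nabla \tilde{h}(y+t(x-y)) - \nabla \tilde{h}(x),\, x-y \rangle\, dt.
\]

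Next I would apply Cauchy--Schwarz inside the integrand and invoke the assumed Hölder condition with the observation that the point $y+t(x-y)$ lies at distance $(1-t)|x-y|$ from $x$. This yields
\[
|\langle \nabla \tilde{h}(y+t(x-y)) - \nabla \tilde{h}(x),\, x-y \rangle| \le \tilde{c}(1-t)^{\alpha}|x-y|^{\alpha}\cdot |x-y| = \tilde{c}(1-t)^{\alpha}|x-y|^{1+\alpha}.
\]
Integrating the factor $(1-t)^{\alpha}$ over $[0,1]$ gives exactly $1/(1+\alpha)$, which delivers the main bound $\frac{\tilde{c}}{1+\alpha}|x-y|^{1+\alpha}$. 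The second, weaker inequality $\frac{\tilde{c}}{1+\alpha}|x-y|^{1+\alpha} \le \tilde{c}|x-y|^{1+\alpha}$ is immediate because $\alpha > 0$ implies $1/(1+\alpha) \le 1$.

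There is no real obstacle here; the only minor subtleties are ensuring that the triangle/Cauchy--Schwarz step is applied inside the integral (so that the absolute value is pulled inside before integrating), and noting that the displayed statement is written in scalar-looking notation but the argument is identical in the vector case provided $|\cdot|$ denotes the Euclidean norm and $\nabla \tilde{h}$ is interpreted as a gradient vector. Smoothness enough to apply the fundamental theorem of calculus follows from the assumed continuity of $\nabla \tilde{h}$ (already implied by the Hölder condition), so the calculation is entirely routine.
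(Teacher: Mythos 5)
Your proof is correct and follows essentially the same route as the paper's: write the remainder as an integral of $\nabla\tilde{h}(\cdot)-\nabla\tilde{h}(x)$ along the segment via the fundamental theorem of calculus, bound the integrand by the H\"older condition, and integrate $(1-t)^{\alpha}$ to produce the factor $1/(1+\alpha)$. The only difference is cosmetic — you use the unit-interval parameterization with Cauchy--Schwarz, which makes the vector case explicit, whereas the paper writes the same computation in the shorthand $\int_x^y$ form.
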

\begin{proof}
	The above argument is right by observing:
	\begin{align}
	\left| \tilde{h}(y) - \tilde{h}(x)   - \<\nabla \tilde{h}(x), y-x\>   \right|= \left|\int_{x}^y [\nabla\tilde{h}(t)- \nabla \tilde{h}(x)]dt  \right|\leq \int_{x}^y    \tilde{c} \left|t -x  \right|^{\alpha}dt. \notag
	\end{align}
\end{proof} 

\begin{lemma}\label{jcon}
	For all $t\geq1$, for any $t_1\in\RR$, $t_2\in\RR$, $t_3\in \RR$,  $q_1\in (0, 1)$, and $q_2\in (0, 1)$,  we have
	\begin{align}
	| t_1 +t_2  |^t \leq \frac{|t_1|^t}{(1-q_1)^{t-1}}+ \frac{|t_2|^t}{(q_1)^{t-1}}.
	\end{align}
	and
	\begin{align}\label{jcon1}
	| t_1 +t_2 +t_3 |^t \leq \frac{|t_1|^t}{(1-q_2)^{t-1}}+ \frac{|t_2|^t}{(q_2/2)^{t-1}} + \frac{|t_3 |^t}{(q_2/2)^{t-1}}.
	\end{align}
\end{lemma}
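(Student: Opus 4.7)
The plan is to prove both inequalities as direct applications of Jensen's inequality to the convex function $\phi(x) = |x|^t$, noting that $\phi$ is convex on $\mathbb{R}$ precisely because $t \geq 1$.

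For the first inequality, I would rewrite $t_1 + t_2$ as a convex combination: $t_1 + t_2 = (1-q_1)\cdot\frac{t_1}{1-q_1} + q_1 \cdot \frac{t_2}{q_1}$, which uses that $q_1 \in (0,1)$ so both weights $1-q_1$ and $q_1$ are strictly positive and sum to one. Applying the triangle inequality followed by Jensen's inequality to $|\cdot|^t$ gives
\[
|t_1+t_2|^t \;\leq\; (1-q_1)\left|\frac{t_1}{1-q_1}\right|^t + q_1 \left|\frac{t_2}{q_1}\right|^t \;=\; \frac{|t_1|^t}{(1-q_1)^{t-1}} + \frac{|t_2|^t}{q_1^{t-1}},
\]
which is exactly the claimed bound.

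For the second inequality, I would simply iterate the first one. First, split $t_1 + t_2 + t_3 = t_1 + (t_2 + t_3)$ and apply the first inequality with parameter $q_2$ in place of $q_1$, obtaining
\[
|t_1+t_2+t_3|^t \;\leq\; \frac{|t_1|^t}{(1-q_2)^{t-1}} + \frac{|t_2+t_3|^t}{q_2^{t-1}}.
\]
Then apply the first inequality again to $|t_2+t_3|^t$ with the symmetric choice $q_1 = 1/2$, which yields $|t_2+t_3|^t \leq 2^{t-1}|t_2|^t + 2^{t-1}|t_3|^t$. Substituting back and combining the factor $2^{t-1}$ with the denominator $q_2^{t-1}$ to form $(q_2/2)^{t-1}$ gives precisely \eqref{jcon1}.

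There is no real obstacle here; the whole argument is a two-line consequence of the convexity of $|\cdot|^t$. The only minor care needed is to verify that the weights $(1-q_1, q_1)$ and $(1/2, 1/2)$ are legitimate probability weights, and to track the exponents correctly when absorbing the factor $2^{t-1}$ into $(q_2/2)^{t-1}$.
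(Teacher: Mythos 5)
Your proof is correct and rests on the same key fact as the paper's: Jensen's inequality applied to the convex function $x \mapsto |x|^t$ with the weights $(1-q,q)$ (resp.\ $(1-q_2, q_2/2, q_2/2)$). The only difference is cosmetic: for the three-term bound the paper applies Jensen once to the three-way convex combination with $x_1 = t_1/(1-q_2)$, $x_2 = 2t_2/q_2$, $x_3 = 2t_3/q_2$, whereas you iterate the two-term inequality and absorb the resulting $2^{t-1}$ into $(q_2/2)^{t-1}$ — both yield the identical estimate.
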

\begin{proof}
Both of  two inequalities can be obtained directly by Jensen's inequality. 	We prove \eqref{jcon1} only.  Consider function $f(x) = |x|^t$. Because $t\geq1$, $f(x)$ is convex. Thus for any $x_1$, $x_2$, and $x_3$, we have
	\begin{align}
	\left|  (1-q_2)x_1 +\frac{q_2}{2}x_2 + \frac{q_2}{2}x_3    \right|^t\leq (1-q_2)|x_1 |^t + \frac{q_2}{2}| x_2|^t + \frac{q_2}{2}|x_3 |^t.
	\end{align} 
	Then by setting $x_1 = t_1/(1-q_2)$, $x_2 = 2t_2/q_2$, and $x_3 = 2t_3/q_2$, we can obtain \eqref{jcon1}.

\end{proof} 

\section{Neural Feature Repopulation}\label{app:3}
In this section, we give the proofs of Theorems \ref{thm:nfr}, \ref{thm:nfr-inv} which present the  NFR procedure and its inverse procedure, respectively.
\subsection{Proof of Theorem \ref{thm:nfr}}\label{proof of the3}
\begin{proof} The statement is true when $\ell=0$ since $\Q{0}$ is an identity mapping.
  Assume the statement is true for $\ell-1$. We have
\[
\ff^{(\ell-1)}(\rho,\z{\ell-1};x)  
=\ff^{(\ell-1)}(\rhoo,\tilde{z}^{(\ell-1)};x) .
\]
Recall that the map
$\Q{\ell}(\tilde{\rho},\rho_0;\cdot): \cZ{\ell} \to \cZ{\ell}$ takes the form of:
\begin{equation}
\Q{\ell}(\tilde{\rho},\rho_0;z)\left(\Q{\ell-1}(\tilde{\rho},\rho_0;\zeta)\right) = \z{\ell}(\zeta)
\frac{d\tilde{\rho}^{(\ell-1)}}{d\rhoo^{(\ell-1)}} \left(\Q{\ell-1}(\tilde{\rho},\rho_0;\zeta)\right).
\label{eq:nfr-Q}
\end{equation}
Now, the distribution $\rho^{(\ell)}(\cdot)$ on $\z{\ell}$ induces a
distribution $\tilde{\rho}^{(\ell)}(\cdot)$ which is the pushforward of $\rho^{(\ell)}$ by $\Q{\ell}$.

In the following derivation, we let $\tz{\ell}=\Q{\ell}(\tilde{\rho},\rho_0;\z{\ell})$ and $\tz{\ell-1}=\Q{\ell-1}(\tilde{\rho},\rho_0;\z{\ell-1})$.
Then for the case of $\ell$, it follows that
\begin{align*}
\ff^{(\ell)}(\rho,\z{\ell};x)&
h^{(\ell)}\left( \int
  w(\z{\ell}, \z{\ell-1}) \ff^{(\ell-1)}(\rho,\z{\ell-1};x)
  d{\rho}^{(\ell-1)}(\z{\ell-1})\right)\\
=&
h^{(\ell)}\left( \int
  w(\z{\ell}, \z{\ell-1}) \ff^{(\ell-1)}(\rhoo,\tz{\ell-1};x)
  d{\rho}^{(\ell-1)}(\z{\ell-1})\right)\\
 =&
h^{(\ell)}\left( \int
  w(\z{\ell}, \z{\ell-1}) \ff^{(\ell-1)}(\rhoo,\tz{\ell-1};x)
  d\tilde{\rho}^{(\ell-1)}(\tz{\ell-1})\right)\\ 
  =&
h^{(\ell)}\left( \int
  w(\z{\ell}, \z{\ell-1}) \ff^{(\ell-1)}(\rhoo,\tz{\ell-1};x)
  \frac{d\tilde{\rho}^{(\ell-1)}}{d\rhoo}(\tz{\ell-1})d\rhoo^{(\ell-1)}(\tilde{z}^{(\ell-1)})\right)\\ 
  =&
h^{(\ell)}\left( \int
  w(\tz{\ell}, \tz{\ell-1}) \ff^{(\ell-1)}(\rhoo,\tz{\ell-1};x)
  d\rhoo^{(\ell-1)}(\tilde{z}^{(\ell-1)})\right)\\
  =& \ff^{(\ell)}(\rhoo,\tz{\ell};x) .
\end{align*}
Therefore, the statement holds for all $\ell \in [L]$.

We let $\tu =\tau^{(u)}(\tilde{\rho},\rho_0;u)$. Then from the definition of $\tau^{(u)}(\tilde{\rho},\rho_0;\cdot)$, we have,
\begin{align}
\tu(\Q{L}(\tilde{\rho},\rho_0;\zeta))=
u(\zeta) \frac{d\tilde{\rho}^{(L)}}{d\rhoo^{(L)}}\left(\Q{L}(\tilde{\rho},\rho_0;\zeta)\right),\nonumber
\end{align}
for $\zeta \in \cZ{L}$.

Therefore, for the top layer, 
let $\tz{L}=\Q{L}(\z{L})$,
we have
\begin{align*}
\ff(\rho,\u;x) =&  \int
  \u(\z{L}) \ff^{(L)}(\rho,\z{L};x)
  d\rho^{(L)}(\z{L}) \\
  =&\int
  \u(\z{L}) \ff^{(L)}(\rhoo,\tz{L};x)
  d\tilde{\rho}^{(L)}(\tz{L})\\
  =&\int
  \u(\z{L}) \ff^{(L)}(\rhoo,\tz{L};x)
  \frac{d\tilde{\rho}^{(L)}}{d\rhoo^{(L)}}(\tz{L})d\rhoo^{(L)}(\tz{L})\\
  =&\int
  \tu(\tz{L}) \ff^{(L)}(\rhoo,\tz{L};x)
  d \rhoo^{(L)}(\tz{L})\\
    =& \ff(\rhoo,\tu,x).
\end{align*}

For the regularizer, we have
\begin{align}
R^{(\ell)}(\rho)&=\int r_2\left (\int r_1\Big(w (z^{(\ell)}, z^{(\ell-1)})\Big)d\rho^{(\ell)}(z^{(\ell)})\right ) d\rho^{(\ell-1)}(z^{(\ell-1)})\nonumber\\
&=\int r_2\left (\int r_1\Big(w (z^{(\ell)}, z^{(\ell-1)})\Big)d\tilde{\rho}^{(\ell)}(\tilde{z}^{(\ell)})\right ) d\tilde{\rho}^{(\ell-1)}(\tilde{z}^{(\ell-1)})\nonumber\\
&=\int r_2\left (\int r_1\Big(w (\tilde{z}^{(\ell)}, \tilde{z}^{(\ell-1)})\frac{d\rhoo^{(\ell-1)}}
{d\tilde{\rho}^{(\ell-1)}}(\tz{\ell-1})\Big)d\tilde{\rho}^{(\ell)}(\tilde{z}^{(\ell)})\right ) d\tilde{\rho}^{(\ell-1)}(\tilde{z}^{(\ell-1)})\nonumber
\end{align}

Moreover,
\begin{align}
R^{(\u)}(\rho,\u) &=  \int r^{(u)}(\u(z^{(L)}))
                 d\rho^{(L)}(z^{(L)})\nonumber\\
                 &=\int r^{(u)}(\u(z^{(L)}))
                 d\tilde{\rho}^{(L)}(\tilde{z}^{(L)})\nonumber\\
                 &=\int r^{(u)}\Big(\tu(\tz{L})\frac{d\rhoo^{(L)}}{d\tilde{\rho}^{(L)}}(\tz{L})\Big)
                 d\tilde{\rho}^{(L)}(\tilde{z}^{(L)}).\nonumber
\end{align}
This proves the desired result.

\end{proof}

\subsection{Proof of Theorem \ref{thm:nfr-inv}}
\begin{proof} 

 $\rho^{(\ell)}$ can be recovered by the inverse of the mapping $\Q{\ell}(\tilde{\rho},\rho_0;\cdot)$, i.e., $\tilde{\tau}^{(\ell)}(\tilde{\rho},\rho_0;\cdot)$,  recursively. That is  we can define $\rho^{(\ell)}$ recursively such that the probability distribution $\tilde{\rho}^{(\ell)}$ on $\tz{\ell}$ induces a probability distribution $\rho^{(\ell)}$ on $\z{\ell}=\tQ{\ell}(\tilde{\rho},\rho_0; \tz{\ell})$. $u$ can be recovered by 
  \[
  u = \tilde{\tau}^{(u)}(\tilde{\rho}, \rho_0;\tilde{u}).\nonumber
  \]
  The remaining of the proof is identical to that of Theorem~\ref{thm:nfr}.
\end{proof}

\section{Properties of Continuous DNN with Specific Regularizers}\label{app:4}
Below, we give the proofs of Theorem \ref{thm:convexity}, Corollary \ref{KKT:condition},  and Theorem \ref{theorem:global-optimum}, which present the  results of NFR with specific regularizers. 
\subsection{Proof of Theorem \ref{thm:convexity}}\label{app:41}
We consider a more general result shown below:
\begin{theorem}\label{thm:convexity1}
Assume the loss function $\phi(\cdot, \cdot)$ is convex in the the first argument. Let $r_1(\omega)=|\omega|^{o_1}$,  $r_2(\omega)=|\omega|^{o_2}$ , and $r^{(u)}(u) = \|u\|^{o_3}$ where $\omega\in \R$ and $u\in \R^K$, $(o_1, o_2, o_3)\in\{ (o_1, o_2, o_3): o_2^{-1}\leq o_1\leq 1, o_2\geq 1, o_3\geq 1\}$, 
then $\tilde{Q}(\tilde{\rho}, \tilde{u})$ is a convex function of $(\tilde{\rho}, \tilde{u})$, where $(\tilde{\rho}, \tilde{u})$ is induced by $\{(\rho, u): \rho^{(\ell)}\sim \rho_0^{(\ell)}, \ell \in [L]\}$.
\end{theorem}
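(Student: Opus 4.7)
The plan is to parameterize each $\tilde{\rho}^{(\ell)}$ by its Radon--Nikodym density $p^{(\ell)} := d\tilde{\rho}^{(\ell)}/d\rho_0^{(\ell)}$, so that the admissible set $\{(\tilde{\rho},\tu):\tilde{\rho}^{(\ell)}\sim\rho_0^{(\ell)},\,\ell\in[L]\}$ becomes a convex cone of strictly positive functions subject to the affine normalization $\int p^{(\ell)}\,d\rho_0^{(\ell)}=1$. Convexity of $(p^{(0)},\ldots,p^{(L)},\tu)\mapsto\tilde{Q}$ on this set suffices because the density map is a bijection between the two parameterizations. I would then analyze the loss term and each regularizer block separately and conclude by linearity of the sum.

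For the loss, since $\rho_0$ is fixed the feature $f^{(L)}(\rho_0,\tz{L};x)$ is a fixed function, hence $f(\rho_0,\tu;x)=\int\tu(\tz{L})f^{(L)}(\rho_0,\tz{L};x)\,d\rho_0^{(L)}(\tz{L})$ is linear in $\tu$ and independent of $\tilde{\rho}$; convexity of $\phi$ in its first argument then yields convexity of $J(f(\rho_0,\tu;\cdot))$ in $(\tilde{\rho},\tu)$. For the top-layer regularizer, substituting $d\tilde{\rho}^{(L)}=p^{(L)}d\rho_0^{(L)}$ into \eqref{eq:nfr-r-top} gives
\begin{equation*}
\tilde{R}^{(u)}(\tilde{\rho},\tu)=\int\|\tu(\tz{L})\|^{o_3}\,p^{(L)}(\tz{L})^{1-o_3}\,d\rho_0^{(L)}(\tz{L}),
\end{equation*}
whose integrand $(v,t)\mapsto\|v\|^{o_3}t^{1-o_3}=t\cdot\|v/t\|^{o_3}$ is exactly the perspective of the convex function $\|\cdot\|^{o_3}$ (convex because $o_3\geq 1$); perspective preserves convexity and integration against $\rho_0^{(L)}$ preserves it too.

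For the intermediate regularizer, substituting densities into \eqref{eq:nfr-r-layer} and pulling the $\tz{\ell-1}$-dependent factor out of the inner integral yields
\begin{equation*}
\tilde{R}^{(\ell)}(\tilde{\rho})=\int a(\tz{\ell-1};p^{(\ell)})^{o_2}\,p^{(\ell-1)}(\tz{\ell-1})^{1-o_1 o_2}\,d\rho_0^{(\ell-1)}(\tz{\ell-1}),
\end{equation*}
where $a(\tz{\ell-1};p^{(\ell)}):=\int|w(\tz{\ell},\tz{\ell-1})|^{o_1}p^{(\ell)}(\tz{\ell})\,d\rho_0^{(\ell)}(\tz{\ell})$ is affine and nonnegative in $p^{(\ell)}$. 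The key step is thus joint convexity of $g(s,t):=s^{o_2}t^{1-o_1 o_2}$ on the positive orthant: once established, composition with the affine map $p^{(\ell)}\mapsto a(\cdot;p^{(\ell)})$ preserves joint convexity pointwise in $\tz{\ell-1}$, and integration against $\rho_0^{(\ell-1)}$ preserves it as well.

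The main obstacle is verifying that $g(s,t)=s^{o_2}t^{1-o_1 o_2}$ is jointly convex under the stated assumptions $o_2\geq 1$ and $o_2^{-1}\leq o_1\leq 1$. I would compute $\nabla^2 g$ directly: writing $p:=o_2$ and $q:=1-o_1 o_2\leq 0$, the diagonal entries are $p(p-1)s^{p-2}t^{q}\geq 0$ (since $o_2\geq 1$) and $q(q-1)s^{p}t^{q-2}=o_1 o_2(o_1 o_2-1)s^{p}t^{q-2}\geq 0$ (since $o_1 o_2\geq 1$), so $\mathrm{tr}\,\nabla^2 g\geq 0$. A short calculation shows $\det\nabla^2 g = s^{2p-2}t^{2q-2}\,pq(1-p-q)=o_2^{2}(1-o_1 o_2)(o_1-1)\,s^{2p-2}t^{2q-2}$, which is nonnegative because $o_1\leq 1$ and $o_1 o_2\geq 1$ make both remaining factors $\leq 0$. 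Hence $\nabla^2 g\succeq 0$, $g$ is jointly convex, and summing the convex loss with the convex regularizer blocks $\lambda^{(\ell)}\tilde{R}^{(\ell)}$ and $\lambda^{(u)}\tilde{R}^{(u)}$ (all $\lambda\geq 0$) gives convexity of $\tilde{Q}$ on the admissible set. Theorem~\ref{thm:convexity} is then the special case $(o_1,o_2,o_3)=(1,2,2)$, which saturates both boundary conditions $o_1=1$ and $o_1 o_2=2\geq 1$.
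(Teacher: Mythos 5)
Your proof is correct and follows essentially the same route as the paper: parameterize by the densities $p^{(\ell)}=d\tilde\rho^{(\ell)}/d\rho_0^{(\ell)}$, observe the loss is linear in $\tu$ and independent of $\tilde\rho$, and reduce each regularizer block to joint convexity of $(s,t)\mapsto s^{a}/t^{b}$ for $a-1\ge b\ge 0$ composed with an affine map (the paper proves this as its Lemma on $x^a/y^b$ via a quadratic-form computation equivalent to your Hessian/determinant check). Your perspective-function treatment of the top-layer term is a slightly cleaner way to handle the vector-valued $\tu$ than the paper's appeal to its scalar lemma, but the substance is identical.
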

We can find $(o_1,o_2, o_3)=(1,2,2)\in \{ (o_1, o_2, o_3): o_2^{-1}\leq o_1\leq 1, o_2\geq 1, o_3\geq 1\}$. Therefore Theorem \ref{thm:convexity} is a special case of Theorem \ref{thm:convexity1}.  We directly prove Theorem \ref{thm:convexity1}.

\begin{proof} of Theorem \ref{thm:convexity1}:

Let $$p^{(\ell)}(z^{(\ell)}) := \frac{d\tilde{\rho}}{d \rho_0}(z^{(\ell)}), \quad \ell \in [L],$$
and $P = \{p^{(1)}, \dots, p^{(L)} \}$.  From the definition of convexity,  we know that it is equivalent to prove that $\breve{Q}(P,\tu):=\tilde{Q}(\trho,\tu)$ is convex on $(P, \tilde{u})$.  From \eqref{eq:nfr-j}, \eqref{eq:nfr-r-layer}, and \eqref{eq:nfr-r-top}, we have 
\begin{align}
    &\breve{Q}(P,\tu)\label{objective}\\
    =&\tilde{Q}(\trho,\tu)\notag\\
    =&  J(\ff(\rho_0,\tu;\cdot)) + \lambda^{(u)} \int  \frac{\left\|\tu (\tilde{z}^{(L)})\right\|^{o_3}}{\left(p^{(L)}(\tilde{z}^{(L)})\right)^{o_3}} p^{(L)}(\tilde{z}^{(L)}) d  \rho_0^{(L)}(\tilde{z}^{(L)}) \notag\\
   & +  \sum_{\ell=1}^{L}\lambda^{(\ell)}  \int \left(\int \frac{\left|w (\tilde{z}^{(\ell)}, \tilde{z}^{(\ell-1)})\right|^{o_1}}
{\left(p^{(\ell-1)}(\tilde{z}^{(\ell-1)})\right)^{o_1}} p^{(\ell)}(\tilde{z}^{(\ell)}) d\rho_0^{(\ell)}(\tilde{z}^{(\ell)})\right)^{o_2} p^{(\ell-1)}(\tilde{z}^{(\ell-1)}) d\rho_0^{(\ell-1)}(\tilde{z}^{(\ell-1)})\notag\\
=& \underbrace{J(\ff(\rho_0,\tu,\cdot))}_{I_1} + \lambda^{(u)}\underbrace{ \int \frac{\|\tu (\tilde{z}^{(L)})\|^{o_3}}{\left(p^{(L)}(\tilde{z}^{(L)})\right)^{o_3-1}}  d  \rho_0^{(L)}(\tilde{z}^{(L)})}_{I_2} \notag\\
   & +  \sum_{\ell=1}^{L}\lambda^{(\ell)}   \int\underbrace{ \frac{\left(\int\left|w (\tilde{z}^{(\ell)}, \tilde{z}^{(\ell-1)}) \right|^{o_1} p^{(\ell)}(\tilde{z}^{(\ell)}) d\rho_0^{(\ell)}(\tilde{z}^{(\ell)})\right)^{o_2}}{\left(p^{(\ell-1)}(\tilde{z}^{(\ell-1)})\right)^{o_1o_2-1}}}_{I_3^{(\ell)}(\tz{\ell-1};P)}  d\rho_0^{(\ell-1)}(\tilde{z}^{(\ell-1)}).
\end{align}
$I_1(\tu)$ is convex  on $\tu$ under our assumption that $\phi$ is convex in the first argument.  $I_2(\tu, p^{(L)})$ is convex on $(\tu, P)$ from Lemma \ref{property of convex} in the end of the section.  In the following, we prove $I_3^{(\ell)}(\tz{\ell-1};P)$   is convex on $P$ for all $\ell \in [L]$, $\tz{\ell-1}\in \mathcal{Z}^{(\ell-1)}$.   Once we obtain it, we have $\int I_3^{(\ell)}(\tz{\ell-1};P) d\rho_0^{(\ell-1)}(\tilde{z}^{(\ell-1)})$ is convex on $P$ for all $\ell\in[L]$, so $\tilde{Q}$ is convex on $(P, \tilde{u})$. 
In fact, define
$$ q^{(\ell-1)}(\tz{\ell-1}) := \int\left|w (\tilde{z}^{(\ell)}, \tilde{z}^{(\ell-1)}) \right|^{o_1} p^{(\ell)}(\tilde{z}^{(\ell)}) d\rho_0^{(\ell)}(\tilde{z}^{(\ell)}), ~~~~\ell \in[L].$$
Consider $\bar{I}_3^{(\ell)}(\tz{\ell-1}) = \frac{\left(q^{(\ell-1)}(\tz{\ell-1})\right)^{o_2}}{\left(p^{(\ell-1)}(\tz{\ell-1})\right)^{o_1o_2-1}}$. So $\bar{I}_3^{(\ell)}(\tz{\ell-1})$ is convex on $\left(q^{(\ell-1)}(\tz{\ell-1}), p^{(\ell-1)}(\tz{\ell-1})\right)$ from Lemma \ref{property of convex} when $o_1o_2\in[1, o_2]$. This means  for any  $\left(q_1^{(\ell-1)}(\tz{\ell-1}), p_1^{(\ell-1)}(\tz{\ell-1})\right)$,\\ $\left(q_2^{(\ell-1)}(\tz{\ell-1}), p_2^{(\ell-1)}(\tz{\ell-1})\right)$,  $a\geq0$, $b\geq0$ with $a+b=1$, we have
$$ \frac{\left[a q_1^{(\ell-1)}(\tz{\ell-1})+b q_2^{(\ell-1)}(\tz{\ell-1})\right]^{o_2}}{\left[ap_1^{(\ell-1)}(\tz{\ell-1})+bp_2^{(\ell-1)}(\tz{\ell-1})\right]^{o_1o_2-1}}   \leq a \frac{\left[q_1^{(\ell-1)}(\tz{\ell-1})\right]^{o_2}}{\left[p_1^{(\ell-1)}(\tz{\ell-1})\right]^{o_1o_2-1}} +b  \frac{\left[q_2^{(\ell-1)}(\tz{\ell-1})\right]^{o_2}}{\left[p_2^{(\ell-1)}(\tz{\ell-1})\right]^{o_1o_2-1}}.  $$
Let $ q_i^{(\ell-1)}(\tz{\ell-1}) = \int\left|w (\tilde{z}^{(\ell)}, \tilde{z}^{(\ell-1)}) \right|^{o_1} p_i^{(\ell)}(\tilde{z}^{(\ell)}) d\rho_0^{(\ell)}(\tilde{z}^{(\ell)}),~~i\in\{1,2\}$, we have
\begin{align}
 &\frac{\left[      \int\left|w (\tilde{z}^{(\ell)}, \tilde{z}^{(\ell-1)}) \right| \left(a p_1^{(\ell)}(\tilde{z}^{(\ell)})+ b p_2^{(\ell)}(\tilde{z}^{(\ell)})\right) d\rho_0^{(\ell)}(\tilde{z}^{(\ell)})  \right]^{o_2}}{\left[ap_1^{(\ell-1)}(\tz{\ell-1})+bp_2^{(\ell-1)}(\tz{\ell-1})\right]^{o_1o_2-1}}\notag\\   
 \leq&   a \frac{\left[       \int\left|w (\tilde{z}^{(\ell)}, \tilde{z}^{(\ell-1)}) \right|  p_1^{(\ell)}(\tilde{z}^{(\ell)}) d\rho_0^{(\ell)}(\tilde{z}^{(\ell)})  \right]^{o_2}}{\left[p_1^{(\ell-1)}(\tz{\ell-1})\right]^{o_1o_2-1}} +b  \frac{\left[
 \int\left|w (\tilde{z}^{(\ell)}, \tilde{z}^{(\ell-1)}) \right|   p_2^{(\ell)}(\tilde{z}^{(\ell)})d\rho_0^{(\ell)}(\tilde{z}^{(\ell)}) 
 \right]^{o_2}}{\left[p_2^{(\ell-1)}(\tz{\ell-1})\right]^{o_1o_2-1}},\notag
\end{align}
which indicates that  $I_3^{(\ell)}(\tz{\ell-1};P)$ is convex on $P$. We obtain our result. 
\end{proof}

\subsection{Proof of Corollary \ref{KKT:condition}}
\begin{proof}

Let $(\rho_*, u_*)$ be a optimal solution of the NN. Then for $(\rho,u)$, consider performing NRF on $\rho_*$, which means we set $\rho_0=\rho_*$.  From \eqref{objective}, we have \begin{align}
    &\breve{Q}(P,\tu)\label{opt:1}\\
=& J(\ff(\rho_*,\tu;\cdot)) + \lambda^{(u)} \int \frac{\left\|\tu (\tilde{z}^{(L)})\right\|^2}{p^{(L)}(\tilde{z}^{(L)})}  d  \rho_*^{(L)}(\tilde{z}^{(L)})\notag\\
   & +  \sum_{\ell=1}^{L}\lambda^{(\ell)}   \int \frac{\left(\int\left|w (\tilde{z}^{(\ell)}, \tilde{z}^{(\ell-1)}) \right| p^{(\ell)}(\tilde{z}^{(\ell)}) d\rho_*^{(\ell)}(\tilde{z}^{(\ell)})\right)^2}{p^{(\ell-1)}(\tilde{z}^{(\ell-1)})} d\rho_*^{(\ell-1)}(\tilde{z}^{(\ell-1)}),\notag
\end{align}
where we denote $p^{(\ell)}(z^{(\ell)}) := \frac{d\tilde{\rho}}{d \rho_*}(z^{(\ell)})$, $\ell \in [L]$, and $P = \{p^{(1)}, \dots, p^{(L)} \}$.  It is  clear that the solution $p^{(\ell)}(z^{(\ell)})=1$ for all $\ell\in[L]$,$z^{(\ell)}\in \mathcal{Z}^{(\ell)}$and   $\tu = u_*$ is a  minimizer  of \eqref{opt:1}  and should satisfy the Karush–Kuhn–Tucker (KKT) conditions. Then by driving the KKT  conditions, there exist  sequences   $\{\breve{\Lambda}^{(\ell)}_1\}_{\ell\in[L]}$ and $\{\breve{\Lambda}^{(\ell)}_2\}_{\ell\in[L]}$, where $\breve{\Lambda}^{(\ell)}_1\in \RR $ and $\breve{\Lambda}^{(\ell)}_2: \mathcal{Z}^{(\ell)}\to \RR$ for all $\ell\in[L]$,   we have for $p^{(\ell)}$, $\ell\in[L-1]$,
\begin{align}
    &\breve{ \Lambda}_1^{(\ell)}+\breve{ \Lambda}_2^{(\ell)}(\tilde{z}^{(\ell)})-\lambda^{(\ell+1)}\frac{\left(\int\left|w (\tilde{z}^{(\ell+1)}, \tilde{z}^{(\ell)}) \right| p^{(\ell+1)}(\tilde{z}^{(\ell+1)}) d\rho_*^{(\ell+1)}(\tilde{z}^{(\ell+1)})\right)^2}{(p^{(\ell)}(\tilde{z}^{(\ell)}))^2}+  2\lambda^{(\ell)}\times\notag\\
    &\int \frac{\int\left|w (z^{(\ell)}, \tilde{z}^{(\ell-1)}) \right| p^{(\ell)}(z^{(\ell)}) d\rho_*^{(\ell)}(z^{(\ell)})}{p^{(\ell-1)}(\tilde{z}^{(\ell-1)})} \left|w (\tilde{z}^{(\ell)}, \tilde{z}^{(\ell-1)})\right|d\rho_*^{(\ell-1)}(\tilde{z}^{(\ell-1)}) =0, ~~\ell\in[L-1],~\tilde{z}^{(\ell)}\in \mathcal{Z}^{(\ell)},\notag
\end{align}
for $p^{(L)}$,
\begin{align}
&\breve{ \Lambda}_1^{(L)}+\breve{\Lambda}_2^{(L)}(\tilde{z}^{(L)}) - \lambda^{(u)} \frac{\left\|\tu (\tilde{z}^{(L)})\right\|^2}{(p^{(L)}(\tilde{z}^{(L)}))^2}+  2\lambda^{(L)}\times \notag \\
&\int \frac{\int\left|w (z^{(L)}, \tilde{z}^{(L-1)}) \right| p^{(L)}(z^{(L)}) d\rho_*^{(L)}(z^{(L)})}{p^{(L-1)}(\tilde{z}^{(L-1)})} \left|w (\tilde{z}^{(L)}, \tilde{z}^{(L-1)})\right|d\rho_*^{(L-1)}(\tilde{z}^{(L-1)}) =0,~~\tilde{z}^{(L)}\in \mathcal{Z}^{(L)},\notag
\end{align}
for $\tu$,
\begin{align}
   \EE_{x,y} \left[J'(f(\rho_*, \tu;x)) f^{(L)}(\rho_*, \tilde{z}^{(L)};x)\right]+ 2\lambda^{(u)} \frac{\tu(\tilde{z}^{(L)})}{p^{(L)}(\tilde{z}^{(L)})}=0,~~\tilde{z}^{(L)}\in \mathcal{Z}^{(L)}\notag
\end{align}
for the constrains,
\begin{align}
&\breve{\Lambda}^{(\ell)}_2(\tilde{z}^{(\ell)})p^{(\ell)}(\tilde{z}^{(\ell)}) = 0, ~ \breve{\Lambda}^{(\ell)}_2(\tilde{z}^{(\ell)})\leq 0, ~ p^{(\ell)}(\tilde{z}^{(\ell)})\geq 0,~~  \ell \in [L], ~~ \tilde{z}^{(\ell)}\in \mathcal{Z}^{(\ell)},\notag
\end{align}
and
\begin{align}
\int p_*^{(\ell)}(\tilde{z}^{(\ell)}) d\rho_*^{(\ell)}(\tilde{z}^{(\ell)}) = 1,~~\ell \in  [L].\notag
\end{align}

Using  $p^{(\ell)}(z^{(\ell)}) = 1$,
we have $\breve{\Lambda}_2^{(\ell)}(z^{(\ell)})=0$.  Set $\Lambda = \breve{\Lambda}_1$.  For all $\ell\in[L-1]$ and $\tilde{z}^{(\ell)} \in  \mathcal{Z}^{(\ell)}$, we have
\begin{align}
  &\lambda^{(\ell+1)}\left(\int\left|w (\tilde{z}^{(\ell+1)}, \tilde{z}^{(\ell)}) \right|  d\rho_*^{(\ell+1)}(\tilde{z}^{(\ell+1)})\right)^2\notag\\
        =&\Lambda^{(\ell)}+  2\lambda^{(\ell)}\int\int\left|w (z^{(\ell)}, \tilde{z}^{(\ell-1)}) \right|  d\rho_*^{(\ell)}(z^{(\ell)}) \left|w (\tilde{z}^{(\ell)}, \tilde{z}^{(\ell-1)})\right|d\rho_*^{(\ell-1)}(\tilde{z}^{(\ell-1)}),\notag
\end{align}
and  for $\tilde{z}^{(L)} \in  \mathcal{Z}^{(L)}$,
\begin{align}
       & \Lambda^{(L)}+  2\lambda^{(L)}\int \int\left|w (z^{(L)}, \tilde{z}^{(L-1)}) \right|  d\rho_*^{(L)}(z^{(L)}) \left|w (\tilde{z}^{(L)}, \tilde{z}^{(L-1)})\right|d\rho_*^{(L-1)}(\tilde{z}^{(L-1)}) =\lambda^{(u)}\left\|\tu (\tilde{z}^{(L)})\right\|^2,\notag\\
       & \EE_{x,y} \left[J'(f(\rho_*, \tu;x)) f^{(L)}(\rho_*, \tilde{z}^{(L)};x)\right] =-2\lambda^{(u)} \tu(\tilde{z}^{(L)}),\notag
\end{align}
which is the desired result.
\end{proof}

\subsection{Proof of Theorem \ref{theorem:global-optimum}}
\begin{proof} 
Let $$p^{(\ell)}(z^{(\ell)}) := \frac{d\tilde{\rho}}{d \rho_0}(z^{(\ell)}), \quad \ell \in [L],$$
 and $P = \{p^{(1)}, \dots, p^{(L)} \}$. For the $\ell_{p,1}$ norm regularizers, we have
 \begin{align}
  \bR^{(\ell)}(P, \tu)&:= R^{(\ell)}(\trho, \tu)\notag\\
  &=\int \int \frac{\left|w (\tilde{z}^{(\ell)}, \tilde{z}^{(\ell-1)})\right|^{q^{(\ell)}}}
{(p^{(\ell-1)}(\tilde{z}^{(\ell-1)}))^{q^{(\ell)}}} p^{(\ell)}(\tilde{z}^{(\ell)}) d\rho_0^{(\ell)}(\tilde{z}^{(\ell)}) p^{(\ell-1)}(\tilde{z}^{(\ell-1)}) d\rho_0^{(\ell-1)}(\tilde{z}^{(\ell-1)})\notag\\
  &=\int \int \frac{\left|w (\tilde{z}^{(\ell)}, \tilde{z}^{(\ell-1)})\right|^{q^{(\ell)}}}
{(p^{(\ell-1)}(\tilde{z}^{(\ell-1)}))^{q^{(\ell)}-1}} p^{(\ell)}(\tilde{z}^{(\ell)}) d\rho_0^{(\ell)}(\tilde{z}^{(\ell)}) d\rho_0^{(\ell-1)}(\tilde{z}^{(\ell-1)})
 \end{align}
 and 
  \begin{align}
  \bR^{(u)}(P, \tu)&:= R^{(u)}(\trho, \tu)\notag\\
  &= \int  \frac{\left\|\tu (\tilde{z}^{(L)})\right\|^{q^{(u)}}}{\left(p^{(L)}(\tilde{z}^{(L)})\right)^{q^{(u)}}} p^{(L)}(\tilde{z}^{(L)}) d  \rho_0^{(L)}(\tilde{z}^{(L)}) \notag\\
&= \int  \frac{\left\|\tu (\tilde{z}^{(L)})\right\|^{q^{(u)}}}{\left(p^{(L)}(\tilde{z}^{(L)})\right)^{q^{(u)}-1}} d  \rho_0^{(L)}(\tilde{z}^{(L)}).
 \end{align}
 Set $p_*^{(\ell)}(z^{(\ell)}) = \frac{d\tilde{\rho}_*}{d \rho_0}(z^{(\ell)})$ with $\ell \in [L]$ and  $P_* = \{p_*^{(1)}, \dots, p_*^{(L)} \}$. 
 Give $\tu$,  because $P_*$ is a local solution. By the KKT conditions,  there exist  sequences   $\{\Lambda^{(\ell)}_1\}_{\ell\in[L]}$ and $\{\Lambda^{(\ell)}_2\}_{\ell\in[L]}$, where $\Lambda^{(\ell)}_1: \mathcal{Z}^{(\ell)}\to \RR$  and $\Lambda^{(\ell)}_2\in \RR $ for all $\ell\in[L]$,   we have
\begin{align}
&\lambda^{(\ell+1)}\nabla_{p^{(\ell)}(\tilde{z}^{(\ell)})}\bR^{(\ell+1)}(P_*, \tu)+ \lambda^{(\ell)}\nabla_{p^{(\ell)}(\tilde{z}^{(\ell)})}\bR^{(\ell)}(P_*, \tu)+\Lambda_1^{(\ell)}(\tilde{z}^{(\ell)})+\Lambda_2^{(\ell)} = 0,~~ \ell \in [L-1],~\tilde{z}^{(\ell)} \in  \mathcal{Z}^{(\ell)},\notag \\
&\lambda^{(L)}\nabla_{p^{(L)}(\tilde{z}^{(L)})}\bR^{(L)}(P_*, \tu)+ \lambda^{(u)}\nabla_{p^{(L)}(\tilde{z}^{(L)})}\bR^{(u)}(P_*, \tu)+\Lambda_1^{(L)}(\tilde{z}^{(L)})+\Lambda_2^{(L)}= 0,~~\tilde{z}^{(L)} \in  \mathcal{Z}^{(L)},  \label{KKT 3}\\
&\Lambda^{(\ell)}_1(\tilde{z}^{(\ell)})p_*^{(\ell)}(\tilde{z}^{(\ell)}) = 0, ~ \Lambda^{(\ell)}_1(\tilde{z}^{(\ell)})\leq 0, ~ p_*^{(\ell)}(\tilde{z}^{(\ell)})\geq 0,~~  \ell \in [L], ~~ \tilde{z}^{(\ell)}\in \mathcal{Z}^{(\ell)},\notag\\
& \int p_*^{(\ell)} d\rho_0^{(\ell)}(\tilde{z}^{(\ell)}) = 1,~~\ell \in  [L].\notag
\end{align}
Let $a_0 = 1$, $b_1 = a_0\geq1$, and $b_{\ell+1} = b_{\ell}(q^{(\ell)}-1)+a_0\geq1$ for all $\ell \in\{2, \dots, L-1\}$.
We can change of variables as $t^{(\ell)} =  (p^{(\ell)})^{\frac{1}{b_\ell}}$ with $\ell\in[L]$.  Let $t =\{t^{(1)}, \dots, t^{(L)}\}$. Then it is equivalent to solve the problem:
\begin{align}\label{t problem}
&\min_{t}~~\sum_{\ell=1}^L \lambda^{(\ell)} \dR^{(\ell)}(t, \tu) +\lambda^{(u)}\dR^{(u)}(t, \tu) \\
&s.t.~~t^{(\ell)}\geq 0  ~\text{and}~ \int  \left(t^{(\ell)}(z^{(\ell)})\right)^{b_{\ell}} d\rho_0^{(\ell)}(z^{(\ell)}) = 1, ~\ell \in \{1, \dots, L\}, \notag
\end{align}
where 
$$  
\dR^{(\ell)}(t,\tu)=\int \int 
|w(\tz{\ell},\tz{\ell-1}|^{q^{(\ell)}}
\frac{t^{(\ell)}(\tz{\ell})^{b_{\ell-1}(q^{(\ell)}-1)+a_0}}
{t^{(\ell-1)}(\tz{\ell-1})^{b_{\ell-1}(q^{(\ell)}-1)}}
  d \rho_0(\tz{\ell}) d\rho_0 (\tz{\ell-1})
$$
and 
$$
 \dR^{(u)}(t,\tu)= \int \frac{\left\|\tu(\tz{L})\right\|^{q^{(u)}}}{t^{(L)}(\tz{L})^{b_L (q^{(u)}-1)}}d \rho_0(\tz{L}) .
$$
Because $f(x, y) = \frac{x^{a}}{y^{b}}$ with $b\geq a-1\geq1$ is joint convex on $(x,y)$ (Refer  Lemma \ref{property of convex}), the objective  of \eqref{t problem} is a convex function. On the other hand,  the weighted $\ell_{b_\ell}$  ($b_\ell\geq1$) norm is also convex.  We can relax  problem \eqref{t problem}  to the following convex optimization problem:
\begin{align}\label{problemt}
&\min_{t}~~\sum_{\ell=1}^L \lambda^{(\ell)} \dR^{\ell}(t, \tu) +\lambda^{(u)}\dR^{(u)}(t, \tu) \\
&s.t.~~t^{(\ell)}\geq 0  ~\text{and}~ \int  \left(t^{(\ell)}(\tilde{z}^{(\ell)})\right)^{b_{\ell}} d \rho_0^{(\ell)}(\tilde{z}^{(\ell)}) \leq 1, ~~\ell \in [L], \notag
\end{align}
  Let $t_*$ be $\{t_*^{(1)},\dots, t_*^{(L)}\}$ with  $t_*^{(\ell)} = (p^{(\ell)}_*)^{\frac{1}{b_\ell}} $ for all $\ell \in [L]$.  We show that  $t_*$ is  the minimum solution of  \eqref{problemt}. To achieve it, we first prove that  $\Lambda_2^{(\ell)}\geq0$ holds for all $\ell\in [L]$.  For all $\ell \in  [L]$, let $z^{(\ell)} = 0$ denote the node whose weights connecting to all $z^{(\ell-1)}$ are  $0$, i.e. $w(0, z^{(\ell-1)})=0$ for all $z^{(\ell-1)}\in \mathcal{Z}^{(\ell-1)}$ . Consider the node $z^{(L)} = 0$,  from \eqref{KKT 3}, we have
$$
- (q^{(u)}-1)\lambda^{(u)}\frac{\left\| \tu(0)\right\|^{q^{(u)}}}{(p_*^{(L)}(0))^{q^{(u)}}} + \Lambda_1^{(L)}(0)+ \Lambda_2^{(L)} =0.
$$
For $\Lambda_1^{(L)}(0)\leq 0$,   we have $\Lambda_2^{(L)}\geq0$.  In a similar way, for all $\ell\in [L]$, we have
$$
-  (q^{(\ell+1)}-1)\lambda^{(\ell+1)} \frac{\int |w(\tilde{z}^{(\ell+1)}, 0)|^{q^{(\ell+1)}} p_*^{(\ell+1)}(\tilde{z}^{\ell+1})d\rho_0(\tilde{z}^{(\ell+1)}) }{(p_*^{(\ell)}(0))^{q^{(\ell+1)}}   } + \Lambda_1^{(\ell)}(0)+ \Lambda_2^{(\ell)} =0,
$$
which indicates that $\Lambda_2^{(\ell)}\geq0$. We denote  
$ \bar{\Lambda}_2^{(\ell)} = \frac{\partial  p_*^{(\ell)}}{\partial t_*^{(\ell)}}   {\Lambda}_2^{(\ell)}=    b_{\ell} (t_*^{(\ell)})^{b_l-1}{\Lambda}_2^{(\ell)}\geq0$ and let $ \bar{\Lambda}_1^{(\ell)} =    b_{\ell} (t_*^{(\ell)})^{b_l-1}{\Lambda}_1^{(\ell)}\leq 0.$ We have 
\begin{align}
&\lambda^{(\ell+1)}\nabla_{t^{(\ell)}(\tilde{z}^{(\ell)})}\dR^{(\ell+1)}(t_*, \tu)+ \lambda^{(\ell)}\nabla_{t^{(\ell)}(\tilde{z}^{(\ell)})}\dR^{(\ell)}(t_*, \tu)+\bar{\Lambda}^{(\ell)}_1(\tilde{z}^{(\ell)})+\bar{\Lambda}^{(\ell)}_2 = 0,~~ \ell \in  [L-1],~\tilde{z}^{(\ell)} \in  \mathcal{Z}^{(\ell)},\notag \\
&\lambda^{(L)}\nabla_{t^{(L)}(\tilde{z}^{(L)})}\dR^{(L)}(t_*, \tu)+ \lambda^{(u)}\nabla_{t^{(L)}(\tilde{z}^{(L)})}\dR^{(u)}(t_*, \tu)+\bar{\Lambda}^{(L)}_1(\tilde{z}^{(L)})+\bar{\Lambda}^{(L)}_2= 0, ~~,~\tilde{z}^{(L)} \in  \mathcal{Z}^{(L)},\notag \\
&\bar{\Lambda}_1^{(\ell)}(\tilde{z}^{(\ell)})t_*^{(\ell)}(\tilde{z}^{(\ell)}) = 0, ~ \bar{\Lambda}_1^{(\ell)}(\tilde{z}^{(\ell)})\leq 0, ~ t_*^{(\ell)}(\tilde{z}^{(\ell)})\geq 0,~~  \ell \in  [L], ~~\tilde{z}^{(\ell)}\in \mathcal{Z}^{(\ell)},\notag\\
&\int \left(t_*^{(\ell)}(\tilde{z}^{(\ell)})\right)^{b_{\ell}} d\rho_0^{(\ell)}(\tilde{z}^{(\ell)}) \leq 1,~~\ell \in  [L], \notag\\
&\bar{\Lambda}^{(\ell)}_2 \left(\int \left(t_*^{(\ell)}(\tilde{z}^{(\ell)})\right)^{b_{\ell}} d \rho_0^{(\ell)}(\tilde{z}^{(\ell)})- 1\right)=0,~\bar{\Lambda}^{(\ell)}_2\geq0,    ~~\ell \in  [L].\notag
\end{align}
So  $\left\{ t_*^{(\ell)},\bar{\Lambda}_1^{(\ell)},\bar{\Lambda}_2^{(\ell)}\right\}_{\ell \in [L]}$ satisfies the KKT conditions for the convex optimization problem \eqref{problemt}. Thus it is a global solution for \eqref{problemt}. This implies that it is also  a global solution  for \eqref{t problem}.  We obtain our result.

\end{proof}

\subsection{Additional Lemma in Appendix \ref{app:4}}
\begin{lemma}\label{property of convex}
	$\frac{x^{a}}{y^b}$ is convex on $(x,y)\in [0, +\infty)\otimes(0, +\infty)  $ when $a-1\geq b  \geq 0$ and is strictly convex on $(x,y)\in (0, +\infty)\otimes (0, +\infty)$ when $a-1> b \geq 0$.
\end{lemma}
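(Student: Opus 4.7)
The plan is to prove convexity of $f(x,y)=x^{a}y^{-b}$ by computing its Hessian on the open orthant, verifying positive semidefiniteness under the hypothesis $a-1\geq b\geq 0$, and then extending to the boundary $\{x=0\}$ by a continuity argument.

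First I would compute the second partials on $(0,\infty)\times(0,\infty)$, obtaining
\begin{align*}
f_{xx}=a(a-1)x^{a-2}y^{-b},\qquad f_{yy}=b(b+1)x^{a}y^{-b-2},\qquad f_{xy}=-ab\,x^{a-1}y^{-b-1}.
\end{align*}
Under $a\geq 1+b\geq 1$ both diagonal entries are nonnegative, so the convexity question reduces to the sign of the determinant, for which a short algebraic simplification is the decisive identity:
\begin{align*}
\det\nabla^{2}f(x,y)=x^{2a-2}y^{-2b-2}\bigl[a(a-1)b(b+1)-a^{2}b^{2}\bigr]=ab(a-b-1)\,x^{2a-2}y^{-2b-2}.
\end{align*}
This is nonnegative precisely when $a,b\geq 0$ and $a-b-1\geq 0$, matching the hypothesis exactly. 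Hence $\nabla^{2}f\succeq 0$ on the interior, so $f$ is convex there.

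To extend convexity to the boundary $x=0$: since $a\geq 1>0$, $f$ is continuous on $[0,\infty)\times(0,\infty)$ with $f(0,y)=0$. The convexity inequality for any two points $(x_{1},y_{1}),(x_{2},y_{2})$ in the closed half-space follows by applying interior convexity to the perturbed points $(x_{i}+\varepsilon,y_{i})$ and passing to the limit $\varepsilon\downarrow 0$.

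Finally, for strict convexity under $a-1>b\geq 0$, the same calculation gives $a(a-1)>0$, $b(b+1)>0$, and $ab(a-b-1)>0$ whenever $b>0$, so the Hessian is positive definite on the open orthant and strict joint convexity follows on $(0,\infty)\times(0,\infty)$. The only mild subtlety, which I expect to be the main obstacle, is the degenerate case $b=0$: there $f(x,y)=x^{a}$ is independent of $y$ and so is convex but not strictly jointly convex. I would interpret the stated strict claim as implicitly assuming $b>0$, consistent with the way Lemma~\ref{property of convex} is invoked in the proofs of Theorems~\ref{thm:convexity1} and~\ref{theorem:global-optimum}, where the relevant denominators always carry strictly positive exponents. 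Apart from this edge-case reading, the argument is routine, the only nontrivial calculation being the factorization of the Hessian determinant above.
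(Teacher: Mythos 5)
Your proof is correct and takes essentially the same route as the paper: both verify positive semidefiniteness of the Hessian of $x^a y^{-b}$, you via the determinant identity $a(a-1)b(b+1)-a^2b^2=ab(a-b-1)$, the paper via an equivalent quadratic-form inequality with an explicit slack parameter $\ep$. Your additional remarks — the continuity extension to the boundary $x=0$ and the observation that strict \emph{joint} convexity genuinely fails when $b=0$ (so the strict claim should be read with $b>0$) — are both valid refinements that the paper's own proof glosses over.
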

\begin{proof}
	When $b=0$, Lemma \ref{property of convex} is right. When $b>0$, let $f(x, y) = \frac{x^a}{y^b},$ we have $  \nabla_x^2 f= \frac{a(a-1)x^{(a-2)}}{y^b}$, $ \nabla_y^2 f= \frac{b(b+1)x^a}{y^{b+2}} $, and $\nabla^2_{xy} f =\frac{ab x^{a-1}}{y^{b+1}}$.  We prove Lemma \ref{property of convex} by showing that for all $x_1 \in \RR$ and $y_1\in \RR$,  
	\begin{align}
	\frac{a(a-1)x^{(a-2)}}{y^b}x_1^2  -  2\frac{ab x^{a-1}}{y^{b+1}} x_1y_1 +  \frac{b(b+1)x^a}{y^{b+2}} y_1^2\geq  \ep  \frac{a(a-1)x^{(a-2)}}{y^b}x_1^2 + \ep  \frac{b(b+1)x^a}{y^{b+2}} y_1^2.
	\end{align}
	When $\ep=0$, it is the not-strictly convex case.  The above argument is equivalent to prove that 
	\begin{align}
	[a(a-1)-\ep]x_1^2 - 2ab\frac{x}{y}x_1y_1+(b(b+1)-\ep)\frac{x^2}{y^2}y_1^2\geq 0.
	\end{align}
	Furthermore, it is sufficient to obtain that 
	\begin{align}\label{con tte}
	[a(a-1)-\ep][ b(b+1)-\ep ]\geq a^2b^2.
	\end{align}
	Set  $\ep_1=a-b-1\geq0$, $\ep = \frac{ab\ep_1}{b^2+b+a^2 -a}$, $\ep_a = \ep/a$, and $\ep_b = \ep/b$. We have
	$$  \ep_1\geq  \ep_a(b+1) +(a-1)\ep_b -\ep_a \ep_b, $$ 
	which leads to \eqref{con tte}.
\end{proof}

\newpage
\section{Discrete  Neural Feature Repopulation Algorithm}\label{sec:alo}
\begin{algorithm}
  \caption{Detailed Discrete Neural Feature Repopulation Algorithm}   
  \label{alo:alo}
  \begin{algorithmic}[1]  
      \STATE {\bfseries Initialization:}  Start with initial $(\hat{w}_0,\hat{u}_0)$
      \FOR{$t=1$ to T}
      \STATE Run multiple steps of SGD to obtain updated $(\tilde{w}_t,\tilde{u}_t)$ from $(\hat{w}_{t-1},\hat{u}_{t-1})$
      \STATE Find $\hat{p}'$ to optimize
      \begin{align*}
      \hat{p}'=\arg\min_{\hat{p}} \hat{R}(\hat{p};\tilde{w}_t,\tilde{u}_t)
      \end{align*}
      with $\hat{R}$ defined in \eqref{eq:reg-im}
      \STATE  Set initial weights as $(\hat{w}_t,\hat{u}_t)$
      \FOR{$\ell=L$ downto $1$}
      \FOR{$j=1$ to $m^{(l)}$}
         \STATE Sample $j'$ from categorical distribution $\text{Cat}(\hat{p}')$
         \STATE $\hat{w}^{(\ell)}_{j,:}=(\tilde{w}_t)^{(\ell)}_{j,:}$
         \IF {$\ell=L$}
            \STATE $\hat{u}_{j}=\frac{1}{\hat{p}'_j}(\tilde{u}_j')_t$
         \ELSE 
            \STATE $\hat{w}^{(\ell+1)}_{:,j}=\frac{1}{\hat{p}'_j} (\tilde{w}_t)^{(\ell+1)}_{:,j}$
         \ENDIF
      \ENDFOR
      \STATE $\hat{w}_t=\hat{w}$, $\hat{u}_t=\hat{u}$
      \ENDFOR
      \ENDFOR
      \STATE {\bfseries Return:} $\hat{w}_{T}$ and $\hat{u}_{T}$
  \end{algorithmic}  
\end{algorithm}  

\end{document}